\newcommand\tagthis{\addtocounter{equation}{1}\tag{\theequation}}
\newtheorem{theorem}{Theorem}[section]
\newtheorem{lemma}[theorem]{Lemma}
\newtheorem{remark}[theorem]{Remark}
\newtheorem{assumption}[theorem]{Assumption}
\newenvironment{customthm}[1]
  {\innercustomthm}
  {\endinnercustomthm}
\newenvironment{customlemma}[1]
  {\innercustomlemma}
  {\endinnercustomlemma}
\newenvironment{customassum}[1]
  {\innercustomassum}
  {\endinnercustomassum}  
\newcommand{\Exp}{\mathbb{E}}
\def\ALG{\texttt{OASIS}\xspace}
\newcommand{\D}[1]{{\color{black}{\hat D}_{#1}}}
\title{Doubly Adaptive Scaled Algorithm for\\ Machine Learning Using Second-Order Information}
\author{%
    Majid~Jahani\\
    Lehigh University, USA\\
    \texttt{majidjahani89@gmail.com} \\
    \And
    Sergey Rusakov \\
    Lehigh University, USA\\
    \texttt{ser318@lehigh.edu} \\
    \And
    Zheng Shi \\
    Lehigh University, USA\\
    \texttt{shi.zheng.tfls@gmail.com} \\
   \AND
    Peter Richt\'arik \\
    KAUST, Saudi Arabia \\
    \texttt{peter.richtarik@kaust.edu.sa} \\
   \And
   Michael W. Mahoney \\
   University of California, Berkeley, USA\\
   \texttt{mmahoney@stat.berkeley.edu} \\
   \And
   Martin Tak\'a\v{c} \\
   MBZUAI, United Arab Emirates \\
   \texttt{takac.MT@gmail.com} \\
}
\begin{document}

\maketitle

\begin{abstract}
We present a novel adaptive optimization algorithm for large-scale machine learning problems. 
Equipped with a low-cost estimate of local curvature and Lipschitz smoothness, our method dynamically adapts the search direction and step-size. 
The search direction contains gradient information preconditioned by a well-scaled diagonal preconditioning matrix that captures the local curvature information.
Our methodology does not require the tedious task of learning rate tuning, as the learning rate is updated automatically without adding an extra hyperparameter.
We provide convergence guarantees on a comprehensive collection of optimization problems, including convex, strongly convex, and nonconvex problems, in both deterministic and stochastic regimes. 
We also conduct an extensive empirical evaluation on standard machine learning problems, justifying our algorithm's versatility and demonstrating its strong performance compared to other start-of-the-art first-order and second-order methods.
\end{abstract}

\section{Introduction}\label{sec:introduction}
This paper presents an algorithm for solving empirical risk minimization problems of the form:
\begin{align}   \label{eq:prob}
    \textstyle{\min}_{w \in \mathbb{R}^d} F(w) := \tfrac{1}{n} \textstyle{\sum}_{i=1}^n f(w;x^i,y^i) = \tfrac{1}{n} \textstyle{\sum}_{i=1}^n f_i(w),
\end{align}
where $w$ is the model parameter/weight vector, $\{(x_i, y_i)\}_{i=1}^n$ are the training samples, and $f_i: \mathbb{R}^d\to \mathbb{R}$ is the loss function. Usually, the number of training samples, $n$, and dimension, $d$, are large, and the loss function $F$ is potentially nonconvex, making this minimization problem difficult to solve. 

In the past decades, significant effort has been devoted to developing optimization algorithms for machine learning. 
Due to easy implementation and low per-iteration cost, (stochastic) first-order methods \cite{robbins1951stochastic,duchi2011adaptive,schmidt2017minimizing,johnson2013accelerating,Nguyen2017,nguyen2019new,kingma2014adam,jahani2021fast, recht2011hogwild} 
have become prevalent approaches for many machine learning applications. 
However, these methods have several drawbacks: ($i$) they are highly sensitive to the choices of hyperparameters, especially learning rate; ($ii$) they suffer from ill-conditioning that often arises in large-scale machine learning; and ($iii$) they offer limited opportunities in distributed computing environments since these methods usually spend more time on ``communication'' instead of the true ``computation.'' 
The main reasons for the aforementioned issues come from the fact that first-order methods only use the gradient information for their updates.

On the other hand, going beyond first-order methods, Newton-type and quasi-Newton methods \cite{nocedal_book,dennis1977quasi,Fletcher1987} are considered to be a strong family of optimizers due to their judicious use of the curvature information in order to scale the gradient. 
By exploiting the curvature information of the objective function, these methods mitigate many of the issues inherent in first-order methods. 
In the deterministic regime, it is known that these methods are relatively insensitive to the choices of the hyperparameters, and they handle ill-conditioned problems with a fast convergence rate. 
Clearly, this does not come for free, and these methods can have memory requirements up to $\mathcal{O}(d^2)$ with computational complexity up to $\mathcal{O}(d^3)$ (e.g., with a naive use of the Newton method). 
There are, of course, efficient ways to solve the Newton system with significantly lower costs (e.g., see \cite{nocedal_book}). 
Moreover, quasi-Newton methods require lower memory and computational complexities than Newton-type methods. 
Recently, there has been shifted attention towards stochastic second-order \cite{Roosta-Khorasani2018,byrd2011use,martens2010deep,jahani2020efficient, XRM17_theory_TR,fred_newtonMR_TR,YXRM18_TR} and quasi-Newton methods \cite{curtis2016self,berahas2016multi,mokhtari2015global,jahani2020sonia,berahas2019quasi, jahani2020scaling} in order to approximately capture the local curvature information. 

These methods have shown good results for several machine learning tasks \cite{xu2020second,berahas2017investigation,YGKM19_pyhessian_TR}.  
In some cases, however, due to the noise in the Hessian approximation, their performance is still on par with the first-order variants.
One avenue for reducing the computational and memory requirements for capturing curvature information is to consider just the diagonal of the Hessian. 
Since the Hessian diagonal can be represented as a vector, it is affordable to store its moving average, which is useful for reducing the impact of noise in the stochastic regime. 
To exemplify this, AdaHessian Algorithm \cite{yao2020adahessian} uses Hutchinson's method \cite{BEKAS20071214} to approximate the Hessian diagonal,\footnote{Hutchinson's method provides a stochastic approximation of the diagonal of a matrix, and its application in our domain is to approximate the diagonal of the Hessian.} and it uses a second moment of the Hessian diagonal approximation for preconditioning the gradient. 
AdaHessian achieves impressive results on a wide range of state-of-the-art tasks.
However, its preconditioning matrix approximates the Hessian diagonal only very approximately, suggesting that improvements are possible if one can better approximate the Hessian diagonal.  

\begin{wrapfigure}{r}{0.5\textwidth}
    \centering
    \includegraphics[width=0.23\textwidth]{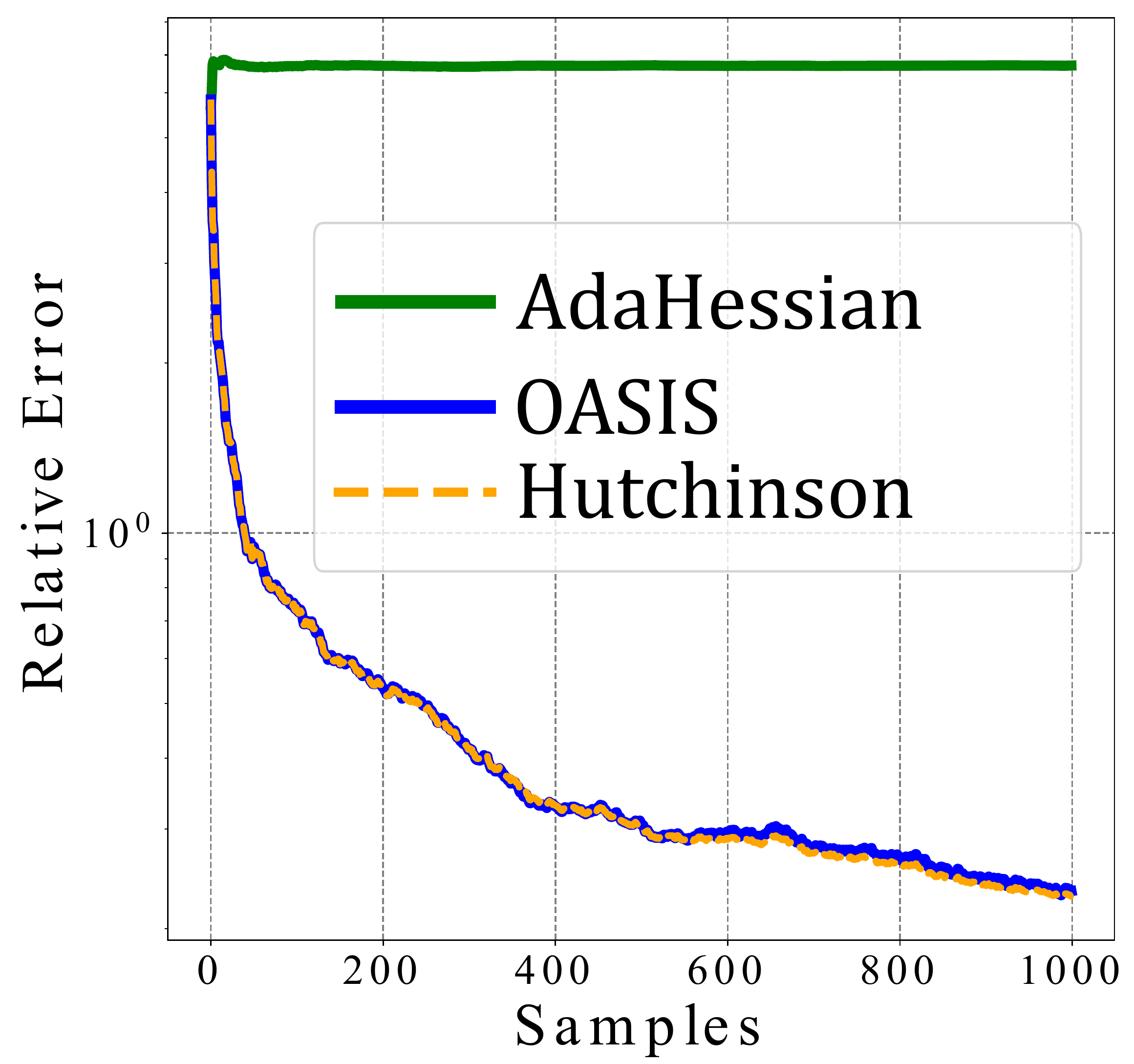}
    \quad
 \includegraphics[width=0.226\textwidth]{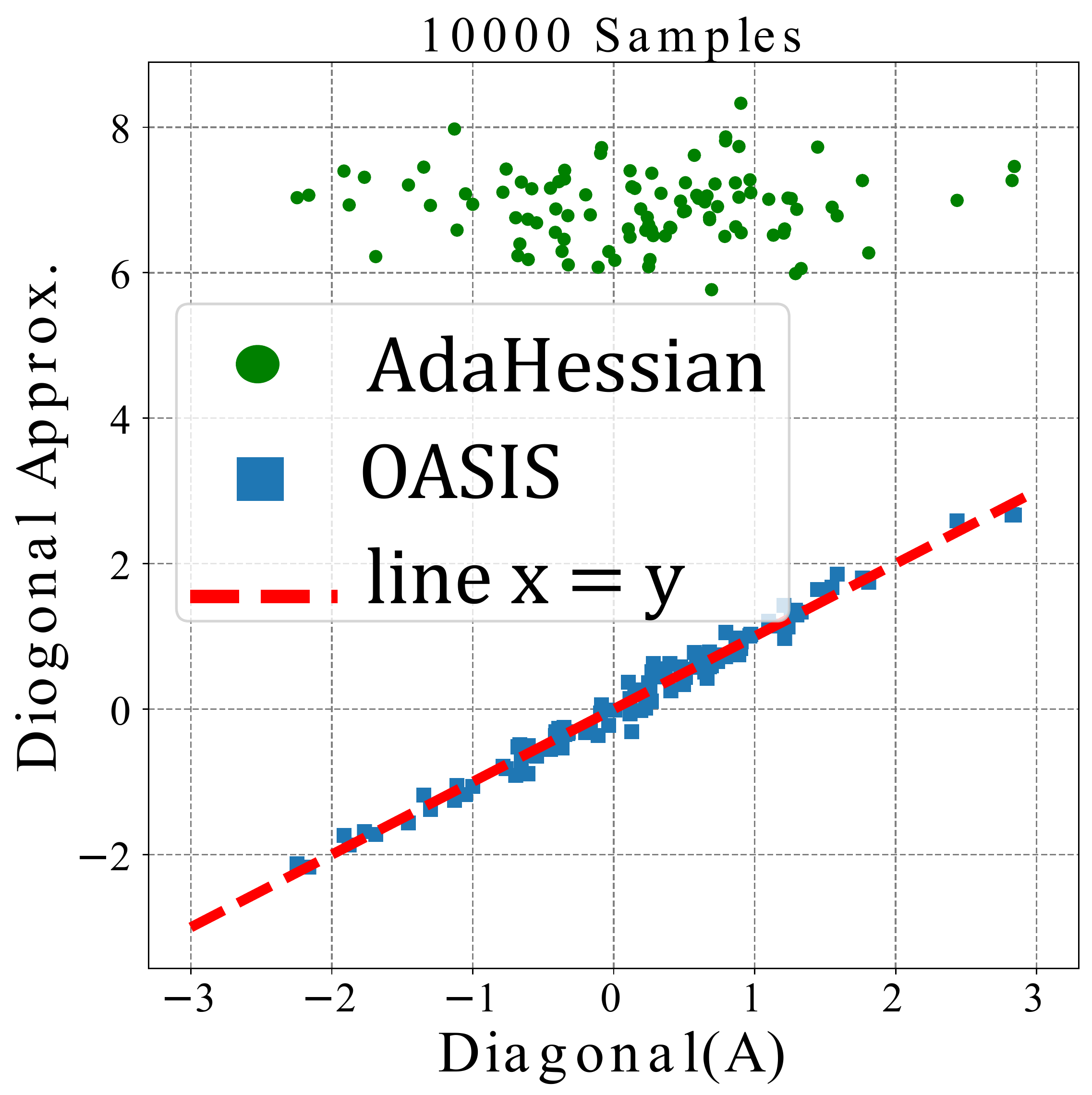}
 \vspace{-9pt}
    \caption{Comparison of the diagonal approximation by AdaHessian and \ALG over a random symmetric matrix $A$ ($100 \times 100$). \textbf{left:} Relative error (in Euclidean norm) between the true diagonal of matrix $A$ and the diagonal approximation by AdaHessian, Hutchinson's method, and \ALG{}; \textbf{right:} Diagonal approximation scale for AdaHessian and \ALG ($y$-axis), in comparison to the true diagonal of matrix $A$ ($x$-axis).
    }
    \label{fig:adaptivePrecon}
    \vskip-10pt
\end{wrapfigure}
In this paper, we propose the d\textbf{O}ubly \textbf{A}daptive \textbf{S}caled algor\textbf{I}thm for machine learning using \textbf{S}econd-order information (\ALG). 
\ALG approximates the Hessian diagonal in an efficient way, providing an estimate whose scale much more closely approximates the scale of the true Hessian diagonal (see Figure \ref{fig:adaptivePrecon}). 
Due to this improved scaling, the search direction in \ALG contains gradient information, in which the components are well-scaled by the novel preconditioning matrix. 
Therefore, every gradient component in each dimension is adaptively scaled based on the approximated curvature for that dimension. 
For this reason, there is no need to tune the learning rate, as it would be updated automatically based on a local approximation of the Lipschitz smoothness parameter (see Figure \ref{fig:adaptiveLR}). 
The well-scaled preconditioning matrix coupled with the adaptive learning rate results in a fully adaptive step for updating the parameters.

Here, we provide a brief summary of our main contributions: 
\begin{itemize}[noitemsep,nolistsep,topsep=-3pt,leftmargin=10pt]
    \item \textbf{\textit{Novel Optimization Algorithm.}} 
    We propose \ALG as a fully adaptive method that preconditions the gradient information by a well-scaled Hessian diagonal approximation. 
    The gradient component in each dimension is adaptively scaled  by the corresponding curvature approximation.
    \item \textbf{\textit{Adaptive Learning Rate.}} 
    Our methodology does not require us to tune the learning rate, as it is updated automatically via an adaptive rule. 
    The rule approximates the Lipschitz smoothness parameter, and it updates the learning rate accordingly.
    \item \textbf{\textit{Comprehensive Theoretical Analysis.}} 
    We derive convergence guarantees for \ALG with respect to different settings of learning rates, namely the case with adaptive learning rate for convex and strongly convex cases. 
    We also provide the convergence guarantees with respect to fixed learning rate and line search for both strongly convex and nonconvex settings.   
    \item \textbf{\textit{Competitive Numerical Results.}}
    We investigate the empirical performance of \ALG on a variety of standard machine learning tasks, including logistic regression, nonlinear least squares problems, and image classification. 
    Our proposed method consistently shows competitive or superior performance in comparison to many first- and second-order state-of-the-art methods.  
\end{itemize}
{\bf Notation.} 
By considering the positive definite matrix $D$, we define the weighted Euclidean norm of vector $x\in \mathbb{R}^d$ with $\|x\|_D^2 =x^T D x$.
Its corresponding dual norm is shown as $\|\cdot\|_D^*$.
The operator $\odot$  is used as a component-wise product between two vectors. Given a vector $v$, we represent the corresponding diagonal matrix of $v$ with \texttt{diag}($v$).

\begin{figure*}

    \centering
     {\includegraphics[trim= 0 90 0 90, clip, width=0.99\textwidth]{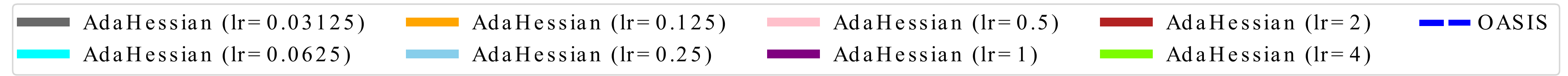}
     }
     \includegraphics[trim= 0 0 0 40, clip,height=0.27\textwidth]{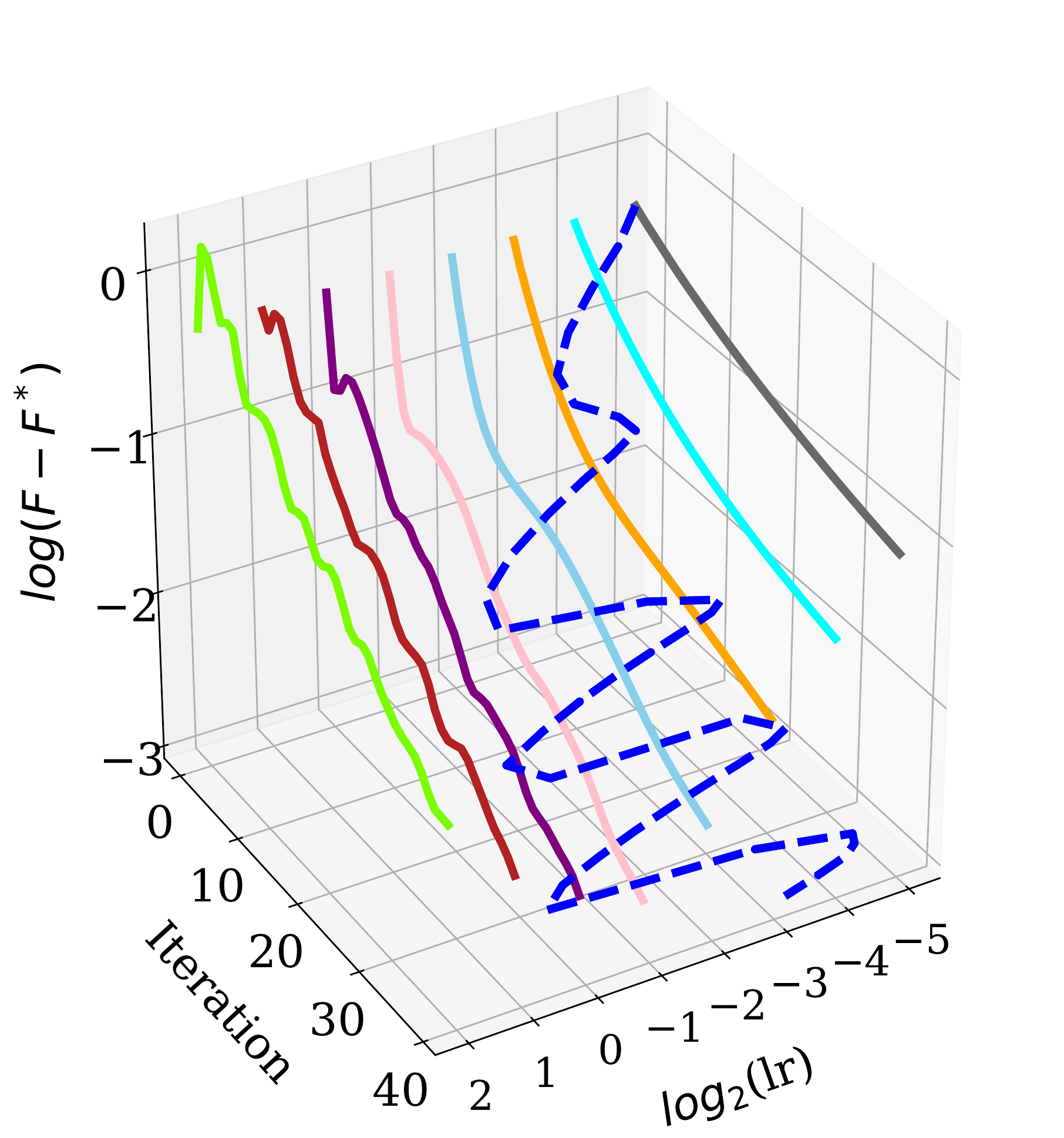}
     \hspace{5pt} 
     \includegraphics[trim= 0 0 0 0, clip,height=0.27\textwidth]{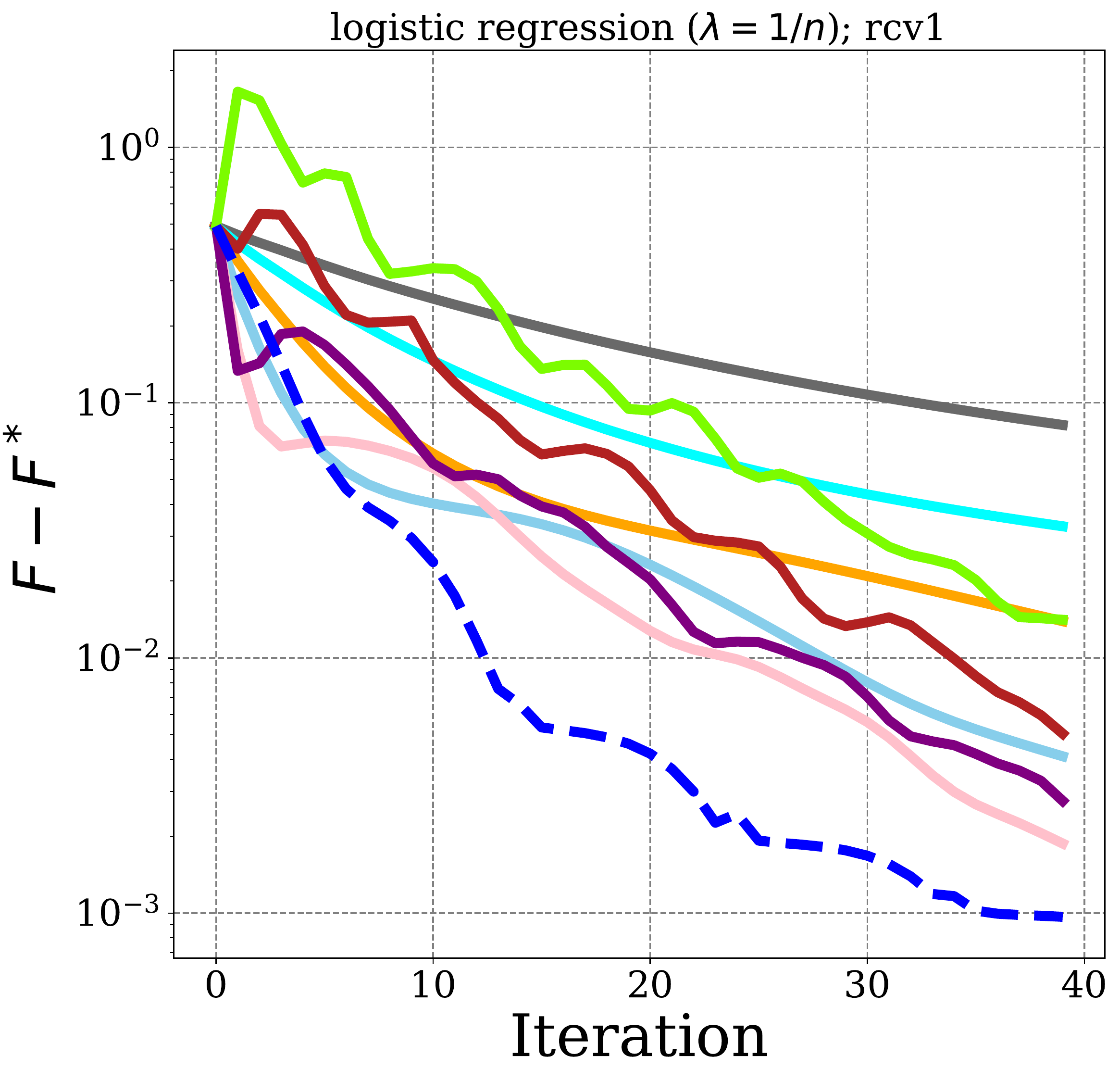}
      \hspace{20pt}
     \includegraphics[trim= 0 0 0 0, clip,height=0.27\textwidth]{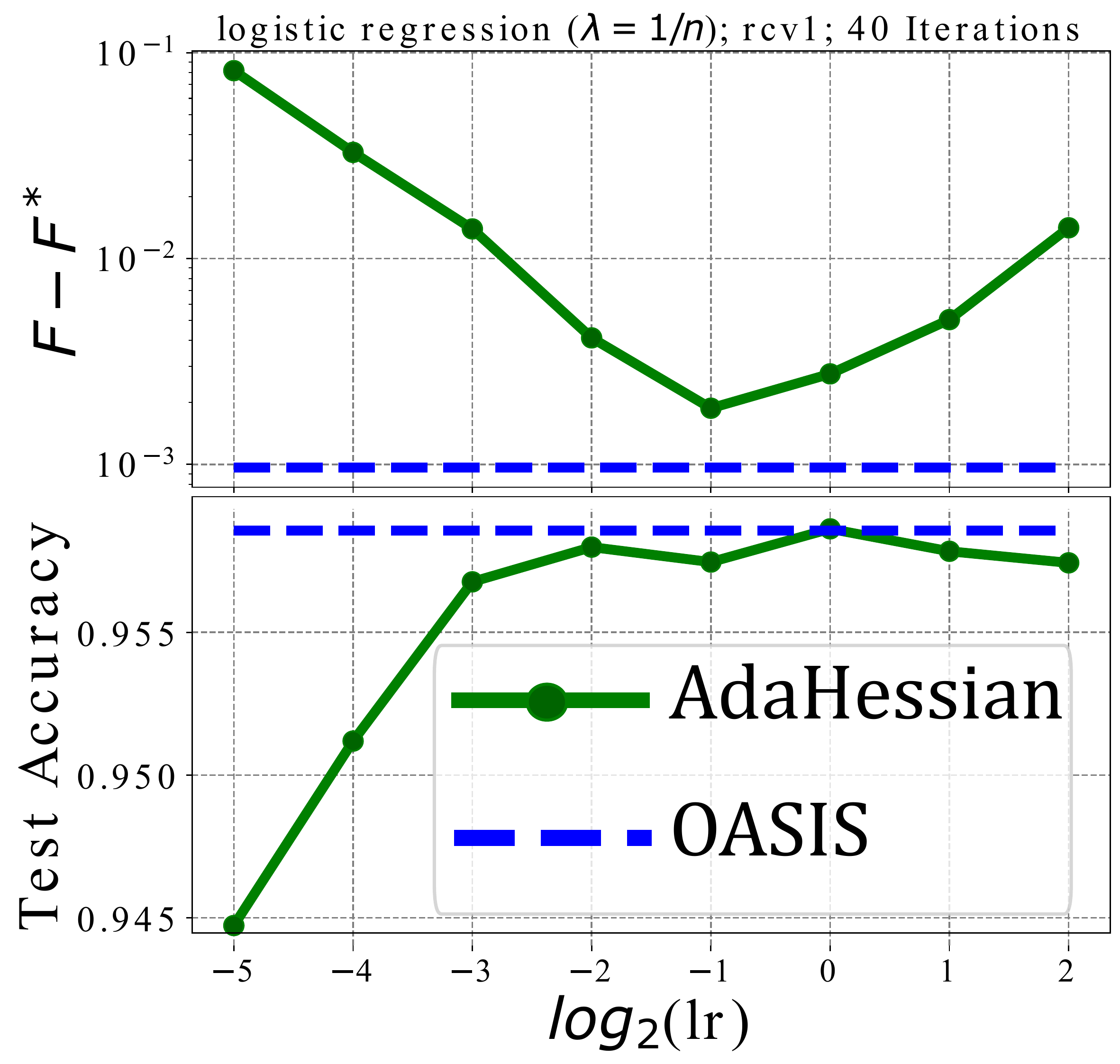}
    \caption{Adaptive Learning Rate (\texttt{Logistic Regression} with strong-convexity parameter $\lambda = \frac{1}{n}$ over \texttt{rcv1} dataset). \textbf{left and middle:} Comparison of optimality gap for AdaHessian Algorithm with multiple learning-rate choices vs. \ALG Algorithm with adaptive learning rate (dashed-blue line); \textbf{right:} Comparison of the best optimality gap and test accuracy for AdaHessian Algorithm w.r.t. each learning rate shown on $x$-axis after 40 iterations vs. the optimality gap and test accuracy for our \ALG Algorithm with adaptive learning rate after 40 iteration (dashed-blue line).}
    \label{fig:adaptiveLR}
    \vskip-10pt
\end{figure*}
\section{Related Work}\label{sec:relatedWrok}
In this paper, we analyze algorithms with the generic iterate updates:
\begin{equation}\label{eq:updGen}
    w_{k+1} = w_k - \eta_k \D{k}^{-1} m_k,
\end{equation}
where $\D{k}$ is the preconditioning matrix, $m_k$ is either $g_k$ (the true gradient or the gradient approximation) or the first moment of the gradient with momentum parameter $\beta_1$ or the bias corrected first moment of the gradient, and $\eta_k$ is the learning rate. 
The simple interpretation is that, in order to update the iterates, the vector $m_k$ would be rotated and scaled by the inverse of preconditioning matrix $\D{k}$, and the transformed information would be considered as the search direction. 
Due to limited space, here we consider only some of the related studies with a diagonal preconditioner.
For more general preconditioning, see \cite{nocedal_book}. 
Clearly, one of the benefits of a well-defined diagonal preconditioner is the easy calculation of its inverse. 
\\
There are many optimization algorithms that follow the update in \eqref{eq:updGen}. 
A well-known method is stochastic gradient descent (SGD\footnote{In the literature, this is also called SG.}) \cite{robbins1951stochastic}. 
The idea behind SGD is simple yet effective: the preconditioning matrix is set to be $\D{k} = I_d,\,\,$ for all $k \geq 0$. There are variants of SGD with and without momentum. 
The advantage of using momentum is to smooth the gradient (approximation) over the past iterations, and it can be useful in the noisy settings. 
In order to converge to the stationary point(s), the learning rate in SGD needs to decay. 
Therefore, there are many important hyperparameters that need to be tuned, e.g., learning rate, learning-rate decay, batch size, and momentum. 
Among all of them, tuning the learning rate is particularly important and cumbersome since the learning rate in SGD is considered to be the same for all dimensions. 
To address this issue, one idea is to use an adaptive diagonal preconditioning matrix, where its elements are based on the local information of the iterates. 
\\[3pt] 
One of the initial methods with a non-identity preconditioning matrix is Adagrad \cite{duchi2011adaptive}. In Adagrad, the momentum parameter is set to be zero ($m_k = g_k$), and the preconditioning matrix is defined as:
\begin{equation}\label{eq:DkAdagrad}
   \D{k} =\texttt{diag} ( \sqrt{
   \textstyle{\sum}_{i=1}^k g_k\odot g_k}  ).
\end{equation}
As is clear from the preconditioning matrix $\D{k}$ in \eqref{eq:DkAdagrad}, every gradient component is scaled with the accumulated information of all the past squared gradients. 
It is advantageous in the sense that every component is scaled adaptively.
However, a significant drawback of $\D{k}$ in \eqref{eq:DkAdagrad} has to do with the progressive increase of its elements, which leads to rapid decrease of the learning rate.  
\\
To prevent Adagrad's aggressive, monotonically decreasing learning rate, several approaches, including Adadelta \cite{zeiler2012adadelta} and RMSProp \cite{tieleman2012lecture}, have been developed. 
Specifically, in RMSProp, the momentum parameter $\beta_1$ is zero (or $m_k = g_k$) and the preconditioning matrix is as follows:
\begin{equation}\label{eq:DkAdadelta}
   \D{k} =\sqrt{\beta_2\D{k-1}^2 +(1-\beta_2)\texttt{diag}\big( g_k\odot g_k \big)},
\end{equation}
\textcolor{black}{where $\beta_2$ is the momentum parameter used in the preconditioning matrix}.
As we can see from the difference between the preconditioning matrices in \eqref{eq:DkAdagrad} and \eqref{eq:DkAdadelta}, in RMSProp an exponentially decaying average of squared gradients is used, which prevents rapid increase of preconditioning components in \eqref{eq:DkAdagrad}.
\\[2pt]
Another approach for computing the adaptive scaling for each parameter is Adam \cite{kingma2014adam}. 
Besides storing an exponentially decaying average of past squared gradients like Adadelta and RMSprop, Adam also keeps first moment estimate of gradient, similar to SGD with momentum. 
In Adam, the bias-corrected first and second moment estimates, i.e., $m_k$ and $\D{k}$ in \eqref{eq:updGen}, are as follows:
\begin{align}
m_k &=  
  \tfrac{1-\beta_1}{1-\beta_1^k}
\textstyle{\sum}_{i=1}^k\beta_1^{k-i}   g_i, & 
\D{k} &=\sqrt{
\tfrac{1-\beta_2}{1-\beta_2^k}\textstyle{\sum}_{i=1}^k\beta_2^{k-i} \texttt{diag}(g_i \odot g_i) }.\label{eq:adamD_k}
\end{align}
There have been many other first-order methods with adaptive scaling \cite{loshchilov2017decoupled,chaudhari2019entropy,loshchilov2016sgdr,shazeer2018adafactor}. 
\\[2pt]
The methods described so far have only used the information of the gradient for preconditioning $m_k$ in \eqref{eq:updGen}. 
The main difference of second-order methods is to employ higher order information for scaling and rotating the $m_k$ in \eqref{eq:updGen}. 
To be precise, besides the gradient information, the (approximated) curvature information of the objective function is also used. 
As a textbook example, in Newton's method $\D{k}= \nabla^2F(w_k)$ and $m_k = g_k$ with $\eta_k = 1$.
\\[2pt]
{\bf Diagonal Approximation.} 
Recently, using methods from randomized numerical linear algebra, the AdaHessian  method was developed \cite{yao2020adahessian}.
AdaHessian approximates the diagonal of the Hessian, and it uses the second moment of the diagonal Hessian approximation as the preconditioner $\D{k}$ in \eqref{eq:updGen}. 
In AdaHessian, Hutchinson's method\footnote{For a general symmetric matrix A, $\mathbb{E}[z \odot Az]$ equals the diagonal of $A$ \cite{BEKAS20071214}.} is  used to approximate the Hessian diagonal as follows:
\begin{equation}
    D_k \approx \texttt{diag}( \mathbb{E}[z \odot \nabla^2F(w_k)z]),
\end{equation}
where $z$ is a random vector with Rademacher distribution. 
Needless to say,%
\footnote{Actually, it needs to be said: many within the machine learning community still maintain the incorrect belief that extracting second order information ``requires inverting a matrix.'' It does not.}
the oracle $\nabla^2F(w_k)z$, or Hessian-vector product, can be efficiently calculated; in particular, for AdaHessian, it is computed with two back-propagation rounds without constructing the Hessian explicitly. 
In a nutshell, the first momentum for AdaHessian is the same as \eqref{eq:adamD_k}, and its second order momentum is:
\begin{align}
\hspace{60pt}\D{k} &=\sqrt{\tfrac{1-\beta_2}{1-\beta_2^k} \textstyle{\sum}_{i=1}^k\beta_2^{k-i} D_i^2 }.\label{eq:adaHessianD_k}
\end{align}
The intuition behind AdaHessian is to have a larger step size for the dimensions with shallow loss surfaces and smaller step size for the dimensions with sharp loss surfaces. 
The results provided by AdaHessian show its strength by using curvature information, in comparison to other adaptive first-order methods, for a range of state-of-the-art problems in computer vision, natural language processing, and recommendation systems~\cite{yao2020adahessian}. 
However, even for AdaHessian, the preconditioning matrix $\D{k}$ in \eqref{eq:adaHessianD_k} does not approximate the scale of the actual diagonal of the Hessian particularly well (see Figure~\ref{fig:adaptivePrecon}). 
One might hope that a better-scaled preconditioner would enable better use of curvature information.
This is one of the main focuses of this study. 
\\
{\bf Adaptive Learning Rate.} 
In all of the methods discussed previously, the learning rate $\eta_k$ in \eqref{eq:updGen} is still a hyperparameter which needs to be manually tuned, and it is a critical and sensitive hyperparameter. \textcolor{black}{ It is also necessary to tune the learning rate in methods that use approximation of curvature information (such as quasi-Newton methods like BFGS/LBFGS, and methods using diagonal Hessian approximation like AdaHessian).
} The studies \cite{loizou2020stochastic,vaswani2019fast,chandra2019gradient,baydin2017online,mishchenko2020adaptive} have tackled the issue regarding tuning learning rate, and have developed methodologies with adaptive learning rate, $\eta_k$, for first-order methods. Specifically, the work \cite{mishchenko2020adaptive} finds the learning rate by approximating the Lipschitz smoothness parameter in an affordable way without adding a tunable hyperparameter which is used for GD-type methods (with identity norm). \textcolor{black}{Extending the latter approach to the weighted-Euclidean norm is not straightforward.} In the next section, we describe how we can extend the work \cite{mishchenko2020adaptive} for the case with weighted Euclidean norm. This is another main focus of this study. \textcolor{black}{In fact, while we focus on AdaHessian, any method with a positive-definite preconditioning matrix and bounded eigenvalues can benefit from our approach.}
\section{\ALG}\label{sec:ourAlg}
In this section, we present our proposed methodology. 
First, we focus on the deterministic regime, and then we describe the stochastic variant of our method. 

\subsection{Deterministic \ALG}
Similar to the methods described in the previous section, our \ALG Algorithm generates iterates
\begin{wrapfigure}{r}{0.55\textwidth}
\vspace{-15pt}
\begin{minipage}{1.\linewidth}
\begin{algorithm}[H] 
  {\small 
    \caption{\ALG}\label{alg:OASIS}}
{\bf Input:}$ w_{0}$, $\eta_{0}$, $D_0$, $\theta_0=+\infty$
\begin{algorithmic}[1]
    \STATE $w_1 = w_0 - \eta_0 \D{0}^{-1} \nabla F(w_0)$
     \FOR {$k=1,2,\dots$}
         \STATE Form $D_k$ via \eqref{eq:updateD} and $\D{k}$ via \eqref{eq:DkhatOurAlg}
        \STATE Update $\eta_k$ based on \eqref{eq:adLR}
        \STATE Set $w_{k+1}=w_k - \eta_k \D{k}^{-1} \nabla F(w_k)$
        \STATE Set $\theta_k = \tfrac{\eta_k}{\eta_{k-1}}$
    \ENDFOR 
\end{algorithmic}
\end{algorithm}
\end{minipage}
\vskip-10pt
\end{wrapfigure}
according to \eqref{eq:updGen}. Motivated by AdaHessian, and by the fact that the loss surface curvature is different across different dimensions, we use the curvature information for preconditioning the gradient. 
We now describe how the preconditioning matrix $\D{k}$ can be adaptively updated at each iteration as well as how to update the learning rate $\eta_k$ automatically for performing the step. To capture the curvature information, we also use Hutchinson's method and update the diagonal approximation as follows:
\begin{equation}
\label{eq:updateD}
\hspace{0pt}D_{k} = \beta_2 D_{k-1} + (1-\beta_2)\  \texttt{diag}(\underbrace{z_k \odot \nabla^2 F(w_k) z_k}_{\coloneqq v_k}).
\end{equation}\vskip-10pt
Before we proceed, we make a few more comments about the Hessian diagonal $D_k$ in \eqref{eq:updateD}. 
As is clear from \eqref{eq:updateD}, a decaying exponential average of Hessian diagonal is used, which can be very useful in the noisy settings for smoothing out the Hessian noise over iterations. 
Moreover, it approximates the scale of the Hessian diagonal with a satisfactory precision, unlike AdaHessian Algorithm (see Figure~\ref{fig:adaptivePrecon}). 
\textcolor{black}{More importantly, the  modification  is simple  yet  very  effective. Similar simple and efficient modification happens in the evolution of adaptive first-order methods (see Section \ref{sec:relatedWrok}).} Further, motivated by \cite{paternain2019newton, jahani2020sonia}, in order to find a well-defined preconditioning matrix $\D{k}$, we truncate the elements of $D_k$ by a positive truncation value $\alpha$. To be more precise:\vskip-15pt
\begin{equation}\label{eq:DkhatOurAlg}
    (\D{k})_{i,i} = \max\{ |D_k|_{i,i}, \alpha\},\,\,\, \forall i \in[d].
\end{equation}
\textcolor{black}{The goal of the truncation described above is to have a well-defined preconditioning matrix that results in a descent search direction; note the parameter $\alpha$ is equivalent to $\epsilon$ in Adam and AdaHessian).}
Next, we discuss the adaptive strategy for updating the learning rate $\eta_k$ in \eqref{eq:updGen}. By extending the adaptive rule in \cite{mishchenko2020adaptive} and by defining $\theta_k \coloneqq \dfrac{\eta_k}{\eta_{k-1}},\,\, \forall k \geq 1$, our learning rate needs to satisfy the inequalities:
i) $\eta^2_k \leq (1+\theta_{k-1})\eta_{k-1}^2,$ and 
ii)
     $\eta_k \leq \frac{\|w_k - w_{k-1}\|_{\D{k}}}
        {2\|
         \nabla F(w_k)-\nabla F(w_{k-1}) \|^*_{\D{k}}}$.
(These inequalities come from the theoretical results.)
Thus, the learning rate can be adaptively calculated as follows:
\begin{equation}\label{eq:adLR}
\eta_{k} = \min \{\sqrt{1 +\theta_{k-1}}\eta_{k-1},\tfrac{\|w_k - w_{k-1}\|_{\D{k}}}
        {2\|
         \nabla F(w_k)-\nabla F(w_{k-1}) \|^*_{\D{k}}}\}. 
\end{equation}
As is  clear from \eqref{eq:adLR}, it is only required to store the previous iterate with its corresponding gradient and the previous learning rate (a scalar) in order to update the learning rate. 
Moreover, the learning rate in \eqref{eq:adLR} is controlled by the gradient and curvature information. 
As we will see later in the theoretical results, $\eta_k \geq \dfrac{\alpha}{2L}$ where $L$ is the Lipschitz smoothness parameter of the loss function. \textcolor{black}{It is noteworthy to highlight that due to usage of weighted-Euclidean norms the required analysis for the cases with adaptive learning rate is non-trivial (see Section \ref{sec:theoreticalAnalysis}).}
Also, by setting $\beta_2=1$, $\alpha=1$, and $D_0 = I_d$, our \ALG algorithm covers the algorithm in \cite{mishchenko2020adaptive}.
\subsection{Stochastic \ALG}
In every iteration of \ALG, as presented in the previous section, it is required to evaluate the gradient and Hessian-vector product on the whole training dataset. 
However, these computations are prohibitive in the large-scale setting, i.e., when $n$ and $d$ are large. 
To address this issue, we present a stochastic variant of \ALG\footnote{The pseudocode of the stochastic variant of \ALG can be found in Appendix \ref{sec:addAlgDet}.} that only considers a small mini-batch of training data in each iteration. 
\\[1pt]
The Stochastic \ALG chooses sets $\mathcal{I}_k,\,\mathcal{J}_k\subset [n]$ independently, and the new iterate is computed as:
\begin{equation*}
    w_{k+1}=w_k - \eta_k \D{k}^{-1} \nabla F_{\mathcal{I}_k}(w_k),
\end{equation*}
where $\nabla F_{\mathcal{I}_k}(w_k)=\frac{1}{|\mathcal{I}_k|} \sum_{i \in \mathcal{I}_k} \nabla F_i(w_k)$ and $\D{k}$ is the truncated variant of
$
D_{k} = \beta_2 D_{k-1} + (1-\beta_2)\  \texttt{diag}(z_k \odot \nabla^2 F_{\mathcal{J}_k}(w_k) z_k),$
 where $\nabla^2 F_{\mathcal{J}_k}(w_k)=\frac{1}{|\mathcal{J}_k|} \sum_{j \in \mathcal{J}_k} \nabla^2 F_j(w_k)$. \vskip-20pt
\subsection{Warmstarting and Complexity of \ALG}
Both deterministic and stochastic variants of our methodology share the need to obtain an initial estimate of $D_{0}$. 
The importance of this is illustrated by the rule \eqref{eq:adLR}, which regulates the choice of $\eta_{k}$, which is highly dependent on $D_k$. 
In order to have a better approximation of $D_0$, we propose to sample some predefined number of Hutchinson's estimates before the training process.

The main overhead of our methodology is the Hessian-vector product used in Hutchinson's method for approximating the Hessian diagonal. 
With the current advanced hardware and packages, this computation is \emph{not} a bottleneck anymore. 
To be more specific, the Hessian-vector product can be efficiently calculated by \emph{two} rounds of back-propagation. 
We also present how to calculate the Hessian-vector product efficiently for various well-known machine learning tasks in Appendix \ref{sec:addAlgDet}. 
\section{Theoretical Analysis}\label{sec:theoreticalAnalysis}
In this section, we present our theoretical results for \ALG. 
We show convergence guarantees for different settings of learning rates, i.e., $(i)$ adaptive learning rate, $(ii)$ fixed learning rate, and $(iii)$ with line search. Before the main theorems are presented, we state the following assumptions and lemmas that are used throughout this section. 
For brevity, we present the theoretical results using line search in Appendix \ref{sec:ApndxTheorem}.
The proofs and other auxiliary lemmas are in Appendix \ref{sec:ApndxTheorem}.

\begin{assumption}\label{assum:convex}(Convex). The function $F$ is convex, i.e.,
$\forall w,\,w' \in \mathbb{R}^d$,\vskip-15pt
\begin{equation}\label{eq:Convex}
    F(w) \geq F(w') + \langle \nabla F(w'),w-w'\rangle.
\end{equation}
\end{assumption}
\begin{assumption}\label{assum:smooth}
($L-$smooth). The gradients of $F$ are $L-$Lipschitz continuous for all $w \in \mathbb{R}^d$, i.e., there exists a constant $L>0$ such that $\forall w,\,w' \in \mathbb{R}^d$,\vskip-15pt
\begin{equation}\label{eq:smooth}
    F(w) \leq F(w') + \langle\nabla F(w'),w-w'\rangle+\tfrac{L}{2} \|w-w'\|^2.
\end{equation}
\end{assumption}

\begin{assumption}\label{assum:2ndDif}
    The function $F$ is twice continuously differentiable.
\end{assumption}

\begin{assumption}\label{assum:StrConvex}
($\mu-$strongly convex). The function $F$ is $\mu-$strongly convex, i.e., there exists a constant $\mu>0$ such that $\forall w,\,w' \in \mathbb{R}^d$,\vskip-15pt
\begin{equation}\label{eq:strConvex}
    F(w) \geq F(w') + \langle \nabla F(w'),w-w'\rangle+\tfrac{\mu}{2} \|w-w'\|^2.
\end{equation}
\end{assumption}
Here is a lemma regarding the bounds for Hutchinson's approximation and the diagonal differences.
\begin{lemma}\label{lem:bndVk}(Bound on change of $D_k$).
Suppose that Assumption \ref{assum:smooth} holds, then 
\begin{enumerate}[noitemsep,topsep=0pt]
    \item $|(v_k)_i| \leq \Gamma \leq \sqrt{d} L$,
    where $v_k = z_k \odot \nabla^2 F(w_k) z_k$.
    \label{asdfasfafa}
    \item $\exists\, \delta \leq 2(1-\beta_2) \Gamma$
such that
$\forall k: \| D_{k+1} - D_{k}\|_\infty \leq \delta.$
\end{enumerate}
\end{lemma}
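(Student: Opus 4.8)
The plan is to prove the two bounds in Lemma~\ref{lem:bndVk} directly from the definition of $v_k$ and the exponential-averaging recursion for $D_k$, using only the $L$-smoothness assumption (Assumption~\ref{assum:smooth}) and the fact that $z_k$ is a Rademacher vector.

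\textbf{Step 1: Bounding the entries of $v_k$.}
First I would recall that Assumption~\ref{assum:smooth} together with twice-differentiability (Assumption~\ref{assum:2ndDif}) implies $\|\nabla^2 F(w)\|_2 \le L$ for all $w$, hence every entry of the symmetric matrix $H_k := \nabla^2 F(w_k)$ satisfies $|(H_k)_{ij}| \le L$. Since $z_k$ has $\pm 1$ entries, the $i$-th component of $v_k = z_k \odot (H_k z_k)$ is $(v_k)_i = (z_k)_i \sum_{j=1}^d (H_k)_{ij} (z_k)_j$, so $|(v_k)_i| \le \sum_{j=1}^d |(H_k)_{ij}|$. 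This gives a bound $\Gamma := \max_{k,i} \sum_j |(H_k)_{ij}|$ which is finite; and by Cauchy--Schwarz $\sum_j |(H_k)_{ij}| \le \sqrt{d}\, \big(\sum_j (H_k)_{ij}^2\big)^{1/2} = \sqrt{d}\, \|H_k e_i\|_2 \le \sqrt{d}\, L$, which yields $\Gamma \le \sqrt{d} L$ and hence $|(v_k)_i| \le \Gamma \le \sqrt{d} L$, establishing part~1.

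\textbf{Step 2: Bounding the change in $D_k$.}
Second, I would use the update rule \eqref{eq:updateD}, $D_{k+1} = \beta_2 D_k + (1-\beta_2)\,\texttt{diag}(v_{k+1})$, to write $D_{k+1} - D_k = (1-\beta_2)\big(\texttt{diag}(v_{k+1}) - D_k\big)$. Thus $\|D_{k+1}-D_k\|_\infty = (1-\beta_2)\,\|\texttt{diag}(v_{k+1}) - D_k\|_\infty$. The entries of $D_k$ are themselves convex combinations (in the limit, weighted averages with weights summing to at most $1$) of past Hutchinson samples $v_j$, each of whose entries is bounded by $\Gamma$ in absolute value, so $\|D_k\|_\infty \le \Gamma$ — this should be shown by a quick induction on $k$ using $\|D_k\|_\infty \le \beta_2 \|D_{k-1}\|_\infty + (1-\beta_2)\Gamma$ and the choice of $D_0$ (or, for the adaptive-LR special case $\beta_2=1$, handled separately). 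Combining with $\|\texttt{diag}(v_{k+1})\|_\infty \le \Gamma$ and the triangle inequality gives $\|D_{k+1}-D_k\|_\infty \le (1-\beta_2)(\Gamma + \Gamma) = 2(1-\beta_2)\Gamma$, so we may take $\delta = 2(1-\beta_2)\Gamma$, proving part~2.

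\textbf{Anticipated obstacle.}
The only subtlety I expect is making the induction $\|D_k\|_\infty \le \Gamma$ fully rigorous: it requires either that $D_0$ is initialized with $\|D_0\|_\infty \le \Gamma$ (which holds if $D_0$ is built from averaged Hutchinson samples as in the warmstart, or if $D_0 = I_d$ and $\Gamma \ge 1$) or a slightly weaker constant in the final bound. If one does not want to assume anything about $D_0$, one can instead state $\delta \le 2(1-\beta_2)\Gamma$ as an asymptotic/eventual bound, or absorb the initialization into a harmless additive term that decays like $\beta_2^k$; since the lemma only asserts existence of such a $\delta$ with an inequality, the cleanest route is to assume the natural initialization $\|D_0\|_\infty \le \Gamma$ and note it holds in all cases used by the algorithm. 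Everything else is a routine chain of triangle-inequality and Cauchy--Schwarz estimates.
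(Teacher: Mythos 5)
Your proof is correct and follows essentially the same route as the paper: part~1 via $\|v_k\|_\infty \le \|\nabla^2 F(w_k)\|_\infty \le \sqrt{d}\,\|\nabla^2 F(w_k)\|_2 \le \sqrt{d}L$, and part~2 via the rearrangement $D_{k+1}-D_k=(1-\beta_2)(\texttt{diag}(v_{k+1})-D_k)$ plus the triangle inequality. The only difference is that you explicitly justify $\|D_k\|_\infty\le\Gamma$ by induction from the initialization, whereas the paper's proof uses this bound implicitly; your added care here is warranted but does not change the argument.
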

\subsection{Adaptive Learning Rate}
Here, we present theoretical convergence results for the case with adaptive learning rate using~\eqref{eq:adLR}. 

\begin{theorem}\label{thm:Adaconx}
Suppose that Assumptions \ref{assum:convex}, \ref{assum:smooth} and \ref{assum:2ndDif} hold. Let $\{w_k\}$ be the iterates generated by Algorithm \ALG, then we have:
$F(\hat{w}_k)-F^* \leq \tfrac{LC}{k} + 2L(1-\beta_2)\Gamma\tfrac{Q_k}{k},$
where \begin{align*}
    C&=\tfrac{2\|w_1-w^*\|_{\D{0}}^2+\|
     w_1 -  w_{0}  \|^2_{\D{0}}}2 +2 \eta_1 \theta_1
(F(w_{0})-F(w^*)),\\
Q_k &= \textstyle{\sum}_{i=1}^{k} ( 
\tfrac{2\eta_i\theta_i+\alpha}{2\alpha}   \|w_{i - 1} - w_{i}\|^2  +  \tfrac{L^2\eta_i\theta_i+\alpha}{\alpha} \|w_{i} - w^{*}\|^2).
\end{align*}
\end{theorem}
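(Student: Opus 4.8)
The plan is to establish a one-step descent-type inequality in the weighted norm $\|\cdot\|_{\D{k}}$, then telescope. First I would start from $L$-smoothness (Assumption~\ref{assum:smooth}) applied at $w_{k+1}=w_k-\eta_k\D{k}^{-1}\nabla F(w_k)$, together with convexity (Assumption~\ref{assum:convex}) to bring in the reference point $w^*$. The natural potential is $\phi_k := a\,\|w_k-w^*\|_{\D{k}}^2 + b\,\|w_k-w_{k-1}\|_{\D{k}}^2 + c\,\eta_k\theta_k\bigl(F(w_{k-1})-F(w^*)\bigr)$ for suitable constants $a,b,c$, mirroring the Lyapunov function in \cite{mishchenko2020adaptive} but with the identity norm replaced by $\|\cdot\|_{\D{k}}$. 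The two defining inequalities for the adaptive step-size~\eqref{eq:adLR} are exactly what is needed: property~(i), $\eta_k^2\le(1+\theta_{k-1})\eta_{k-1}^2$, controls the growth of the $\eta_k\theta_k$ term, and property~(ii) converts the inner product $\langle\nabla F(w_k)-\nabla F(w_{k-1}),\,w_k-w^*\rangle$ arising from the cross terms into something dominated by $\tfrac14\|w_k-w_{k-1}\|_{\D{k}}^2$ via Cauchy--Schwarz in the weighted norm and its dual.

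The key complication, and what the parenthetical remarks in the paper flag as ``non-trivial,'' is that the preconditioner changes from iteration to iteration, so $\|w_k-w^*\|_{\D{k}}^2$ and $\|w_k-w^*\|_{\D{k-1}}^2$ do not cancel cleanly when telescoping. Here I would use Lemma~\ref{lem:bndVk}: since $\|D_{k+1}-D_k\|_\infty\le\delta\le 2(1-\beta_2)\Gamma$, for any vector $u$ we have $\bigl|\,\|u\|_{\D{k+1}}^2 - \|u\|_{\D{k}}^2\,\bigr| \le 2(1-\beta_2)\Gamma\,\|u\|^2$ (unweighted). This produces, at each step, an error term of the form $2(1-\beta_2)\Gamma$ times a combination of $\|w_i-w^*\|^2$ and $\|w_i-w_{i-1}\|^2$; collecting these across $i=1,\dots,k$ is precisely the quantity $Q_k$ in the statement (the coefficients $\tfrac{2\eta_i\theta_i+\alpha}{2\alpha}$ and $\tfrac{L^2\eta_i\theta_i+\alpha}{\alpha}$ come from how the step-size enters the potential and from the relation $\eta_i\ge\tfrac{\alpha}{2L}$ together with the smoothness bound on $\|\nabla F(w_i)-\nabla F(w^*)\|$). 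I would also need the elementary facts that $\D{k}\succeq\alpha I$ (from the truncation~\eqref{eq:DkhatOurAlg}) to pass between weighted and unweighted norms, and the lower bound $\eta_k\ge\alpha/(2L)$, which I would verify by checking that the line-search-free choice~\eqref{eq:adLR} cannot drop below this value using $L$-Lipschitzness of $\nabla F$ and $\D{k}\succeq\alpha I$.

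Summing the one-step inequality from $i=1$ to $k$, the telescoping collapses the potential terms to their $i=1$ values — this is where the constant $C$ is born, with the $2\eta_1\theta_1(F(w_0)-F(w^*))$ piece coming from the initial potential and the norm terms from $\|w_1-w^*\|_{\D{0}}^2$ and $\|w_1-w_0\|_{\D{0}}^2$ — while the left side yields $\sum_{i=1}^k \eta_i\bigl(F(w_i)-F^*\bigr)$, again up to the $Q_k$ error. Using $\eta_i\ge\alpha/(2L)$ on the left, dividing by $k$, and invoking convexity of $F$ to replace the average $\tfrac1k\sum F(w_i)$ with $F(\hat w_k)$ for the appropriate averaged iterate $\hat w_k$ gives $F(\hat w_k)-F^*\le \tfrac{LC}{k} + 2L(1-\beta_2)\Gamma\tfrac{Q_k}{k}$ after absorbing the numerical constants. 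The main obstacle is bookkeeping the preconditioner-drift error terms with exactly the right coefficients so that they assemble into $Q_k$ as stated; everything else is a fairly standard convex-analysis telescope, but the weighted-norm cross terms and the $\D{k}$-vs-$\D{k-1}$ mismatch require care.
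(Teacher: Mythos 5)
Your proposal follows essentially the same route as the paper: the same Lyapunov function $\|w_k-w^*\|_{\D{k-1}}^2+\tfrac12\|w_k-w_{k-1}\|_{\D{k-1}}^2+2\eta_k\theta_k(F(w_{k-1})-F^*)$, the same use of the two step-size inequalities, the same treatment of the preconditioner drift via Lemma~\ref{lem:bndVk} and $\D{k}\succeq\alpha I$ to produce the $Q_k$ error terms, and the same telescoping-plus-Jensen endgame with $\eta_i\ge\alpha/(2L)$. The only cosmetic differences are that the paper's cross term is $\langle\nabla F(w_k)-\nabla F(w_{k-1}),\,w_k-w_{k+1}\rangle$ (not $w_k-w^*$) and that smoothness enters only through $\|\nabla F(w_k)\|\le L\|w_k-w^*\|$ and the step-size lower bound rather than via a descent lemma at $w_{k+1}$; neither affects the argument.
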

\begin{remark}
The following remarks are made regarding Theorem \ref{thm:Adaconx}:
\begin{enumerate}[noitemsep,topsep=0pt]
\item If $\beta_2=1$, $\alpha =1$, and $D_0=I_d$, the results in Theorem \ref{thm:Adaconx} completely match with \cite{mishchenko2020adaptive}.
\item By considering an extra assumption as in \cite{reddi2019convergence,duchi2011adaptive} regarding bounded iterates, i.e., $\|w_{k} - w^*\|^2\leq B\,\,\,\forall k\geq 0$, one can easily show the convergence of \ALG to the neighborhood of optimal solution(s). 
\end{enumerate}
\end{remark}
The following lemma provides the bounds for the adaptive learning rate for smooth and strongly-convex loss functions. The next theorem provides the linear convergence rate for the latter setting.
\begin{lemma}\label{lem:bndOnEtak} Suppose that Assumptions
\ref{assum:smooth}, \ref{assum:2ndDif}, and \ref{assum:StrConvex}  hold, then $\eta_k \in  [\tfrac{\alpha}{2L},\tfrac{\Gamma}{2\mu} ]$.
\end{lemma}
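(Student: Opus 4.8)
The plan is to prove the two-sided bound by induction on $k$, after first establishing uniform spectral control of the truncated preconditioner, namely $\alpha I \preceq \D{k} \preceq \Gamma I$ for every $k$. The lower bound $\D{k} \succeq \alpha I$ is immediate from the truncation rule \eqref{eq:DkhatOurAlg}. For the upper bound, observe from \eqref{eq:updateD} that each diagonal entry of $D_k$ is a convex combination of $(D_{k-1})_{i,i}$ and $(v_k)_i$, so $|(D_k)_{i,i}| \le \beta_2|(D_{k-1})_{i,i}| + (1-\beta_2)|(v_k)_i|$; since $|(v_k)_i| \le \Gamma$ by Lemma \ref{lem:bndVk} and the warm-started $D_0$ likewise satisfies $|(D_0)_{i,i}| \le \Gamma$ (each warm-up Hutchinson sample being bounded by $\Gamma$), a one-line induction yields $|(D_k)_{i,i}| \le \Gamma$, hence $(\D{k})_{i,i} = \max\{|(D_k)_{i,i}|,\alpha\} \le \Gamma$ (using $\alpha \le \Gamma$, which holds since $\Gamma$ can be as large as $\sqrt{d}L$).

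Next I would convert these spectral bounds into norm equivalences. Writing $s_k = w_k - w_{k-1}$ and $y_k = \nabla F(w_k) - \nabla F(w_{k-1})$, and using that the dual of $\|\cdot\|_{\D{k}}$ is $\|\cdot\|_{\D{k}^{-1}}$ together with $\tfrac{1}{\Gamma} I \preceq \D{k}^{-1} \preceq \tfrac{1}{\alpha} I$, I obtain $\sqrt{\alpha}\,\|s_k\| \le \|s_k\|_{\D{k}} \le \sqrt{\Gamma}\,\|s_k\|$ and $\tfrac{1}{\sqrt{\Gamma}}\|y_k\| \le \|y_k\|^*_{\D{k}} \le \tfrac{1}{\sqrt{\alpha}}\|y_k\|$. (If $s_k = 0$ then $\nabla F(w_{k-1}) = 0$ by the update rule, so $w_{k-1}$ is the minimizer by strong convexity and the method has terminated; hence we may assume $s_k \ne 0$, and then $y_k \ne 0$ too.) Combining these with $L$-smoothness, $\|y_k\| \le L\|s_k\|$, and with the standard consequence of $\mu$-strong convexity, $\langle y_k, s_k\rangle \ge \mu\|s_k\|^2$ and therefore $\|y_k\| \ge \mu\|s_k\|$ by Cauchy--Schwarz, the second argument of the $\min$ in \eqref{eq:adLR} obeys
\[
\frac{\alpha}{2L} \;\le\; \frac{\sqrt{\alpha}\,\|s_k\|}{2\,\tfrac{1}{\sqrt{\alpha}}\|y_k\|} \;\le\; \frac{\|s_k\|_{\D{k}}}{2\|y_k\|^*_{\D{k}}} \;\le\; \frac{\sqrt{\Gamma}\,\|s_k\|}{2\,\tfrac{1}{\sqrt{\Gamma}}\|y_k\|} \;\le\; \frac{\Gamma}{2\mu}.
\]

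The upper bound $\eta_k \le \tfrac{\Gamma}{2\mu}$ is then immediate for every $k \ge 1$, because $\eta_k$ never exceeds this second argument. For the lower bound I would induct: at $k = 1$, the choice $\theta_0 = +\infty$ makes the first argument $\sqrt{1+\theta_0}\,\eta_0$ infinite, so $\eta_1$ equals the second argument, which is $\ge \tfrac{\alpha}{2L}$ by the display above; and if $\eta_{k-1} \ge \tfrac{\alpha}{2L}$, then since $\theta_{k-1} \ge 0$ the first argument $\sqrt{1+\theta_{k-1}}\,\eta_{k-1} \ge \eta_{k-1} \ge \tfrac{\alpha}{2L}$ while the second argument is $\ge \tfrac{\alpha}{2L}$, so $\eta_k \ge \tfrac{\alpha}{2L}$, completing the induction. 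The only genuinely delicate ingredient is the uniform sandwich $\alpha I \preceq \D{k} \preceq \Gamma I$, which leans on Lemma \ref{lem:bndVk} and on the warm-start keeping $\|D_0\|_\infty \le \Gamma$; everything else is routine manipulation of the weighted norms and the defining inequalities of smoothness and strong convexity.
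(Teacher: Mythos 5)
Your proposal is correct and lands on exactly the bounds $[\tfrac{\alpha}{2L},\tfrac{\Gamma}{2\mu}]$, but it takes a somewhat more elementary route than the paper. The paper works directly in the weighted geometry: the lower bound comes from Lemma~\ref{lem:smoothDifNorm}, a weighted-norm co-coercivity/Lipschitz estimate $\|\nabla F(x)-\nabla F(y)\|_{\D{k}}^*\leq \tfrac{L}{\lambda_{\min}(\D{k})}\|x-y\|_{\D{k}}$ proved by adapting Nesterov's Theorem~2.1.5 argument, and the upper bound from a weighted strong-convexity inequality $\tilde\mu\|s_k\|_{\D{k}}^2\leq\langle y_k,s_k\rangle\leq\|y_k\|_{\D{k}}^*\|s_k\|_{\D{k}}$ with $\tilde\mu=\mu/\lambda_{\max}(\D{k})$. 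You instead sandwich both norms by Euclidean ones via $\alpha I\preceq\D{k}\preceq\Gamma I$ and then invoke only the unweighted inequalities $\mu\|s_k\|\leq\|y_k\|\leq L\|s_k\|$; since the paper ultimately bounds $\lambda_{\min}(\D{k})\geq\alpha$ and $\lambda_{\max}(\D{k})\leq\Gamma$ anyway, the two routes give identical constants, and yours avoids Lemma~\ref{lem:smoothDifNorm} entirely. You also supply three details the paper leaves implicit: a proof of the spectral sandwich asserted in Remark~\ref{rem:posDefBnd} (via Lemma~\ref{lem:bndVk} and induction on the exponential average, though note that $\alpha\leq\Gamma$ is really a standing assumption on the truncation parameter rather than something that follows from $\Gamma\leq\sqrt d L$), the degenerate case $w_k=w_{k-1}$, and the explicit induction showing that the first argument $\sqrt{1+\theta_{k-1}}\,\eta_{k-1}$ of the $\min$ in \eqref{eq:adLR} cannot pull $\eta_k$ below $\tfrac{\alpha}{2L}$ — the paper compresses all of this into ``by the update rule for $\eta_k$.'' No gaps.
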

\begin{theorem}\label{thm:strConvxAda} Suppose that Assumptions \ref{assum:smooth}, \ref{assum:2ndDif}, and \ref{assum:StrConvex}  hold, and let $w^*$ be the unique solution for \eqref{eq:prob}.  Let $\{w_k\}$ be the iterates generated by Algorithm \ref{alg:OASIS}. Then, for all $k\geq 0$ and $
    \beta_2 \geq \max\{ 1 - \tfrac{\alpha^4 \mu^4}{4 L^2 \Gamma^2 (\alpha^2 \mu^2 + L \Gamma^2)},  1 - \tfrac{\alpha^3 \mu^3}{4 L \Gamma (2 \alpha^2 \mu^2 + L^3 \Gamma^2)} \}
$ we have:
    $\Psi^{k+1}\leq (1-\tfrac{\alpha^2}{2 \Gamma^2 \kappa^{2}})\Psi^{k},$
where
$
    \Psi^{k+1} =  \|w_{k + 1} - w^{*}\|_{\D{k}}^2 + \tfrac12 \|w_{k + 1} - w_{k}\|_{\D{k}}^2  + 2\eta_{k}(1 + \theta_{k}) (F(w_{k}) - F(w^{*})).
$
\end{theorem}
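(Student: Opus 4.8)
The plan is to reproduce, in the $\hat D_k$-weighted geometry, the Lyapunov-telescoping argument used for adaptive gradient descent in \cite{mishchenko2020adaptive}, the new ingredients being (a) absorbing the per-iteration change of preconditioner $\hat D_{k-1}\to\hat D_k$ and (b) turning strong convexity into a genuine contraction factor. I would begin from $w_{k+1}-w_k=-\eta_k\hat D_k^{-1}\nabla F(w_k)$ and expand the two quadratic terms of $\Psi^{k+1}$:
\begin{align*}
\|w_{k+1}-w^*\|_{\hat D_k}^2 &= \|w_k-w^*\|_{\hat D_k}^2 - 2\eta_k\langle\nabla F(w_k),w_k-w^*\rangle + \eta_k^2\big(\|\nabla F(w_k)\|_{\hat D_k}^*\big)^2,\\
\tfrac12\|w_{k+1}-w_k\|_{\hat D_k}^2 &= \tfrac12\eta_k^2\big(\|\nabla F(w_k)\|_{\hat D_k}^*\big)^2.
\end{align*}
Applying $\mu$-strong convexity, $\langle\nabla F(w_k),w_k-w^*\rangle\ge (F(w_k)-F^*)+\tfrac\mu2\|w_k-w^*\|^2$, cancels one copy of $2\eta_k(F(w_k)-F^*)$ against the last term of $\Psi^{k+1}$ and leaves
\begin{equation*}
\Psi^{k+1}\le \|w_k-w^*\|_{\hat D_k}^2 - \mu\eta_k\|w_k-w^*\|^2 + \tfrac32\eta_k^2\big(\|\nabla F(w_k)\|_{\hat D_k}^*\big)^2 + 2\eta_k\theta_k\big(F(w_k)-F^*\big),
\end{equation*}
so that $-\mu\eta_k\|w_k-w^*\|^2$ plays the role of the ``contraction reserve''.

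Next I would rewrite the remaining right-hand terms so they match the constituents of $\Psi^k$ (measured in $\hat D_{k-1}$, evaluated at $w_{k-1}$). For the gradient term, split $\nabla F(w_k)=\nabla F(w_{k-1})+(\nabla F(w_k)-\nabla F(w_{k-1}))$, and combine the identity $\nabla F(w_{k-1})=-\eta_{k-1}^{-1}\hat D_{k-1}(w_k-w_{k-1})$ with the second inequality defining $\eta_k$, namely $\eta_k\|\nabla F(w_k)-\nabla F(w_{k-1})\|_{\hat D_k}^*\le\tfrac12\|w_k-w_{k-1}\|_{\hat D_k}$; this bounds $\tfrac32\eta_k^2(\|\nabla F(w_k)\|_{\hat D_k}^*)^2$ by a $\theta_k$-dependent constant times $\|w_k-w_{k-1}\|_{\hat D_{k-1}}^2$. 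For the function-value term, the first inequality defining $\eta_k$ gives $\eta_k\theta_k=\eta_k^2/\eta_{k-1}\le(1+\theta_{k-1})\eta_{k-1}$, while $L$-smoothness along the previous step (using $\langle\nabla F(w_{k-1}),w_k-w_{k-1}\rangle=-\eta_{k-1}^{-1}\|w_k-w_{k-1}\|_{\hat D_{k-1}}^2\le 0$) gives $F(w_k)-F^*\le F(w_{k-1})-F^*+\tfrac L2\|w_k-w_{k-1}\|^2$, so $2\eta_k\theta_k(F(w_k)-F^*)$ is dominated by $2\eta_{k-1}(1+\theta_{k-1})(F(w_{k-1})-F^*)$ plus another multiple of $\|w_k-w_{k-1}\|^2$; here Lemma~\ref{lem:bndOnEtak} keeps $\theta_k$ bounded ($\theta_k\le\Gamma\kappa/\alpha$ with $\kappa=L/\mu$). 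Since all of these manipulations repeatedly swap $\hat D_k$ and $\hat D_{k-1}$, I would control each swap by Lemma~\ref{lem:bndVk}: with $\alpha I\preceq\hat D_k\preceq\Gamma I$ and $\|\hat D_k-\hat D_{k-1}\|_\infty\le\delta\le 2(1-\beta_2)\Gamma$ one has $(1-\tfrac\delta\alpha)\|x\|_{\hat D_{k-1}}^2\le\|x\|_{\hat D_k}^2\le(1+\tfrac\delta\alpha)\|x\|_{\hat D_{k-1}}^2$, with analogous estimates for the dual norm and for products of the form $\hat D_{k-1}\hat D_k^{-1}\hat D_{k-1}$.

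Collecting, the target $\Psi^{k+1}\le(1-\rho)\Psi^k$ with $\rho=\tfrac{\alpha^2}{2\Gamma^2\kappa^2}$ then follows once two things are checked. The contraction: by Lemma~\ref{lem:bndOnEtak}, $\eta_k\ge\tfrac{\alpha}{2L}$, and $\|w_k-w^*\|^2\ge\tfrac1\Gamma\|w_k-w^*\|_{\hat D_{k-1}}^2$, so the reserve gives $\|w_k-w^*\|_{\hat D_k}^2-\mu\eta_k\|w_k-w^*\|^2\le(1+\tfrac\delta\alpha-\tfrac{\mu\alpha}{2L\Gamma})\|w_k-w^*\|_{\hat D_{k-1}}^2$, and matching $(1-\text{const})$ factors are produced in the displacement and function-value blocks as in \cite{mishchenko2020adaptive}. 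The admissibility: every leftover term of size $O(\delta/\alpha)$ — from the metric swaps and from the $\|w_k-w_{k-1}\|^2$ remainders — must fit inside the slack $\rho\,\Psi^k$, and requiring precisely this, with $\delta=2(1-\beta_2)\Gamma$, is exactly what yields the two lower bounds on $\beta_2$ in the statement. I expect this last step to be the main obstacle: because the Lyapunov function changes metric at every iteration, one has to propagate each $\delta/\alpha$ perturbation through all three blocks of $\Psi$, calibrate the Young's-inequality weights so that the perturbations collectively stay below $\rho$, and still recover the clean rate $1-\tfrac{\alpha^2}{2\Gamma^2\kappa^2}$ rather than a weaker constant — this is the non-trivial point the authors flag in passing from the Euclidean to the weighted-Euclidean norm.
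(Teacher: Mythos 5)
Your overall strategy is the right one and matches the paper's: run the Malitsky--Mishchenko energy argument on $\Psi^k$ in the $\D{k}$-metric, control the metric drift $\D{k-1}\to\D{k}$ through $\|D_{k}-D_{k-1}\|_\infty\le 2(1-\beta_2)\Gamma$ and $\alpha I\preceq\D{k}\preceq\Gamma I$, use Lemma~\ref{lem:bndOnEtak} to turn the strong-convexity reserve into the rate $\tfrac{\alpha^2}{2\Gamma^2\kappa^2}$, and let the requirement that the $O((1-\beta_2)\Gamma/\alpha)$ perturbations stay below the slack produce the two lower bounds on $\beta_2$. However, there is a concrete step that fails: your decomposition of the displacement block. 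You expand both quadratic terms exactly, obtaining $\tfrac32\eta_k^2\bigl(\|\nabla F(w_k)\|^*_{\D{k}}\bigr)^2$, and then propose to bound this via the triangle inequality $\eta_k\|\nabla F(w_k)\|^*_{\D{k}}\le\eta_k\|\nabla F(w_k)-\nabla F(w_{k-1})\|^*_{\D{k}}+\eta_k\|\nabla F(w_{k-1})\|^*_{\D{k}}\le\tfrac12\|w_k-w_{k-1}\|_{\D{k}}+\theta_k\|\D{k-1}(w_k-w_{k-1})\|^*_{\D{k}}$ and squaring. Even in the best case $\D{k}=\D{k-1}$ this yields a coefficient of at least $\tfrac32(\tfrac12+\theta_k)^2$ in front of $\|w_k-w_{k-1}\|^2_{\D{k-1}}$, whereas $\Psi^k$ only supplies $\tfrac12\|w_k-w_{k-1}\|^2_{\D{k-1}}$ and there is no negative term in that block (the reserve $-\mu\eta_k\|w_k-w^*\|^2$ lives in the other block). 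Since $\theta_k=\eta_k/\eta_{k-1}$ is only bounded above by $\Gamma L/(\alpha\mu)\ge 1$ (and indeed exceeds $1$ whenever the first branch of \eqref{eq:adLR} is active), the coefficient $\tfrac32(\tfrac12+\theta_k)^2\ge\tfrac{27}{8}$ cannot be fit under the budget $\tfrac12(1-\rho)$, so the telescoping collapses.

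The paper avoids this by never squaring the full gradient. It writes $\|w_{k+1}-w_k\|^2_{\D{k}}=2\eta_k\langle\nabla F(w_k),w_k-w_{k+1}\rangle-\|w_{k+1}-w_k\|^2_{\D{k}}$, keeping the cross term \emph{linear} in $\nabla F(w_k)$. Then (a) the difference piece $2\eta_k\langle\nabla F(w_k)-\nabla F(w_{k-1}),w_k-w_{k+1}\rangle$ is bounded by Cauchy--Schwarz, rule (ii) and Young's inequality by exactly $\tfrac12\|w_k-w_{k-1}\|^2_{\D{k}}+\tfrac12\|w_k-w_{k+1}\|^2_{\D{k}}$ --- the first summand matches the $\tfrac12$ in $\Psi^k$ with no $\theta_k$ inflation, and the second cancels against $-\|w_{k+1}-w_k\|^2_{\D{k}}$ to reproduce the $+\tfrac12\|w_{k+1}-w_k\|^2_{\D{k}}$ of $\Psi^{k+1}$; and (b) the piece $2\eta_k\langle\nabla F(w_{k-1}),w_k-w_{k+1}\rangle$ becomes $2\eta_k\theta_k\langle\nabla F(w_k),w_{k-1}-w_k\rangle$ plus the $\bigl(\D{k-1}\D{k}^{-1}-I\bigr)$ correction, and convexity converts $\langle\nabla F(w_k),w_{k-1}-w_k\rangle\le F(w_{k-1})-F(w_k)$, so the $\theta_k$-weight migrates to the function-value block where rule (i), $\eta_k^2\le(1+\theta_{k-1})\eta_{k-1}^2$, makes it telescope against $2\eta_{k-1}(1+\theta_{k-1})(F(w_{k-1})-F^*)$. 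Your alternative of pulling $F(w_k)-F^*$ back to $F(w_{k-1})-F^*$ by smoothness also dumps another $L\eta_k\theta_k\|w_k-w_{k-1}\|^2$ into the already-overloaded displacement block and only achieves the factor $1$ rather than $1-\rho$ there. So the missing idea is precisely the linear-cross-term/convexity cancellation; without it the proposed bounds are too lossy to close the recursion.
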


\subsection{Fixed Learning Rate}
Here, we provide theoretical results for fixed learning rate for deterministic and stochastic regimes.
\begin{remark}\label{rem:posDefBnd}
For any $k\geq 0$, we have $\alpha I \preceq\D{k} \preceq \Gamma I$ where $0<\alpha\leq \Gamma$.
\end{remark}


\subsubsection{Deterministic Regime}
{\bf Strongly Convex.} 
The following theorem provides the linear convergence for the smooth and strongly-convex loss functions with fixed learning rate. 

\begin{theorem}\label{thm:fixLRDetStrConv}
Suppose that Assumptions \ref{assum:smooth}, \ref{assum:2ndDif}, and \ref{assum:StrConvex}  hold, and let $F^*=F(w^*)$, where $w^*$ is the unique minimizer.  Let $\{w_k\}$ be the iterates generated by Algorithm \ref{alg:OASIS}, where $0<\eta_k = \eta \leq \dfrac{\alpha^2}{L\Gamma}$, and $w_0$ is a starting point. Then, for all $k\geq 0$ we have:
    $F(w_k) - F^* \leq (1-\tfrac{\eta \mu}{\Gamma})^k[F(w_0)-F^*].$
\end{theorem}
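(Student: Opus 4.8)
The plan is to establish a one-step contraction in function value by combining the descent lemma coming from $L$-smoothness with the Polyak--\L ojasiewicz inequality implied by $\mu$-strong convexity, while controlling the preconditioner purely through the two-sided eigenvalue bound $\alpha I \preceq \D{k} \preceq \Gamma I$ of Remark \ref{rem:posDefBnd}. First I would apply Assumption \ref{assum:smooth} with $w=w_{k+1}$, $w'=w_k$ and substitute the update $w_{k+1}-w_k = -\eta\,\D{k}^{-1}\nabla F(w_k)$, obtaining
\[
F(w_{k+1}) \le F(w_k) - \eta\, \nabla F(w_k)^{T} \D{k}^{-1}\nabla F(w_k) + \tfrac{L\eta^2}{2}\,\bigl\|\D{k}^{-1}\nabla F(w_k)\bigr\|^2 .
\]

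Next I would use the eigenvalue bounds termwise. From $\D{k}\preceq \Gamma I$ we get $\D{k}^{-1}\succeq \tfrac1\Gamma I$, hence $\nabla F(w_k)^{T}\D{k}^{-1}\nabla F(w_k) \ge \tfrac1\Gamma\|\nabla F(w_k)\|^2$; from $\D{k}\succeq \alpha I$ we get $\D{k}^{-1}\preceq \tfrac1\alpha I$, hence $\|\D{k}^{-1}\nabla F(w_k)\|^2 = \nabla F(w_k)^{T}\D{k}^{-2}\nabla F(w_k) \le \tfrac1{\alpha^2}\|\nabla F(w_k)\|^2$. Substituting these into the previous display yields
\[
F(w_{k+1}) \le F(w_k) - \eta\Bigl(\tfrac1\Gamma - \tfrac{L\eta}{2\alpha^2}\Bigr)\|\nabla F(w_k)\|^2 .
\]
Since $0<\eta\le \tfrac{\alpha^2}{L\Gamma}$ we have $\tfrac{L\eta}{2\alpha^2}\le \tfrac1{2\Gamma}$, so the parenthesized coefficient is at least $\tfrac1{2\Gamma}$, giving $F(w_{k+1}) \le F(w_k) - \tfrac{\eta}{2\Gamma}\|\nabla F(w_k)\|^2$.

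Finally I would invoke the consequence of Assumption \ref{assum:StrConvex} obtained by minimizing the strong-convexity lower bound over its free argument, namely the PL inequality $\|\nabla F(w_k)\|^2 \ge 2\mu\bigl(F(w_k)-F^*\bigr)$. Plugging this in gives $F(w_{k+1})-F^* \le \bigl(1-\tfrac{\eta\mu}{\Gamma}\bigr)\bigl(F(w_k)-F^*\bigr)$, and unrolling the recursion from $k$ down to $0$ proves the theorem. Assumption \ref{assum:2ndDif} and Lemma \ref{lem:bndVk} are used only to guarantee that $\D{k}$ is well-defined and that $\Gamma<\infty$, so that a positive admissible step-size exists.

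I do not expect a genuine obstacle here: the argument is essentially careful bookkeeping. The only point that requires attention is deciding which side of the bound $\alpha I\preceq \D{k}\preceq\Gamma I$ to apply to each of the two curvature terms, and then checking that the admissible range $\eta\le \alpha^2/(L\Gamma)$ is precisely what makes the coefficient of $\|\nabla F(w_k)\|^2$ at least $\eta/(2\Gamma)$; once that is in place, the factor $2$ from the PL inequality cancels the $\tfrac12$ and delivers exactly the advertised rate $1-\eta\mu/\Gamma$.
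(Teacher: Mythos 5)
Your proposal is correct and follows essentially the same route as the paper's proof: the descent lemma from $L$-smoothness combined with the eigenvalue bounds $\alpha I \preceq \D{k} \preceq \Gamma I$ to obtain $F(w_{k+1}) \le F(w_k) - \tfrac{\eta}{2\Gamma}\|\nabla F(w_k)\|^2$, followed by the PL inequality $\|\nabla F(w_k)\|^2 \ge 2\mu(F(w_k)-F^*)$ and unrolling. The bookkeeping, including which side of the preconditioner bound is applied to each curvature term and the role of $\eta \le \alpha^2/(L\Gamma)$, matches the paper exactly.
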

{\bf Nonconvex.} 
The following theorem provides the convergence to the stationary points for the nonconvex setting with fixed learning rate.
\begin{assumption}\label{assum:bndFun}
    The function $F(.)$ is bounded below by a scalar $\hat{F}$.
\end{assumption}
\begin{theorem}\label{thm:fixLRDetNonconv}
Suppose that Assumptions
\ref{assum:smooth}, \ref{assum:2ndDif}, and \ref{assum:bndFun} hold.  Let $\{w_k\}$ be the iterates generated by Algorithm , where $0<\eta_k = \eta \leq \dfrac{\alpha^2}{L\Gamma}$, and $w_0$ is a starting point. Then, for all $T>1$ we have:\vskip-10pt
\begin{equation}
    \tfrac{1}{T} \textstyle{\sum}_{k=1}^T \|\nabla F(w_k)\|^2 \leq \tfrac{2\Gamma [F(w_0)-\hat{F}]}{\eta T}\xrightarrow[]{T \rightarrow \infty}0.
\end{equation}
\end{theorem}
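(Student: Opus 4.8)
The plan is to run the classical descent-lemma argument, with the preconditioner handled through the two-sided spectral bound $\alpha I \preceq \D{k}\preceq \Gamma I$ from Remark~\ref{rem:posDefBnd} (which applies here: the truncation~\eqref{eq:DkhatOurAlg} forces $\D{k}\succeq \alpha I$, while Lemma~\ref{lem:bndVk} together with the convex-combination update~\eqref{eq:updateD} forces $\D{k}\preceq \Gamma I$; Assumption~\ref{assum:2ndDif} is what makes the Hessian-vector products, and hence $\Gamma$, well defined). First I would apply $L$-smoothness (Assumption~\ref{assum:smooth}) along the step $w_{k+1}-w_k=-\eta\,\D{k}^{-1}\nabla F(w_k)$:
\begin{equation*}
F(w_{k+1}) \le F(w_k) - \eta\,\nabla F(w_k)^T\D{k}^{-1}\nabla F(w_k) + \tfrac{L\eta^2}{2}\,\nabla F(w_k)^T\D{k}^{-2}\nabla F(w_k).
\end{equation*}

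Next I would convert both quadratic forms into multiples of $\|\nabla F(w_k)\|^2$ using the spectral bounds in the correct directions: from $\D{k}\preceq\Gamma I$ we get $\D{k}^{-1}\succeq\Gamma^{-1}I$, so the (beneficial) first-order term is at least $\tfrac{\eta}{\Gamma}\|\nabla F(w_k)\|^2$; from $\D{k}\succeq\alpha I$ we get $\D{k}^{-2}\preceq\alpha^{-2}I$, so the (harmful) quadratic term is at most $\tfrac{L\eta^2}{2\alpha^2}\|\nabla F(w_k)\|^2$. Hence
\begin{equation*}
F(w_{k+1}) \le F(w_k) - \eta\Big(\tfrac{1}{\Gamma}-\tfrac{L\eta}{2\alpha^2}\Big)\|\nabla F(w_k)\|^2 .
\end{equation*}
The step-size restriction $\eta\le \alpha^2/(L\Gamma)$ is precisely what yields $\tfrac1\Gamma-\tfrac{L\eta}{2\alpha^2}\ge\tfrac1{2\Gamma}$, so every iteration enjoys the sufficient-decrease inequality $F(w_{k+1})\le F(w_k)-\tfrac{\eta}{2\Gamma}\|\nabla F(w_k)\|^2$; the same inequality holds at $k=0$ since $\eta_0=\eta$.

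Finally I would telescope over $k=1,\dots,T$ and invoke the lower bound $F\ge\hat F$ from Assumption~\ref{assum:bndFun}:
\begin{equation*}
\tfrac{\eta}{2\Gamma}\sum_{k=1}^{T}\|\nabla F(w_k)\|^2 \le F(w_1)-F(w_{T+1}) \le F(w_0)-\hat F ,
\end{equation*}
where the last step also uses $F(w_1)\le F(w_0)$ (the decrease at $k=0$). Dividing by $\eta T/(2\Gamma)$ gives the stated bound, and letting $T\to\infty$ sends the right-hand side to $0$.

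I do not expect a genuine obstacle here: the proof is a standard descent argument, and the only points requiring care are (i) applying each spectral inequality in the right direction — the upper bound $\Gamma$ where a lower bound on a positive quantity is needed, the lower bound $\alpha$ where an upper bound is needed — and (ii) observing that $\eta\le\alpha^2/(L\Gamma)$ is exactly the threshold separating net decrease from possible increase. Everything else (the diagonal $\D{k}$ commuting with itself, the telescoping sum, boundedness below) is routine.
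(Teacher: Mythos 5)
Your proof is correct and follows essentially the same route as the paper: the descent lemma along the preconditioned step, the spectral bounds $\alpha I \preceq \D{k} \preceq \Gamma I$ applied in exactly the directions you describe to reach $F(w_{k+1})\le F(w_k)-\tfrac{\eta}{2\Gamma}\|\nabla F(w_k)\|^2$ under $\eta\le\alpha^2/(L\Gamma)$, and a telescoping sum closed with the lower bound $\hat F$. Your handling of the index range (telescoping from $k=1$ to $T$ and absorbing the $k=0$ decrease into $F(w_1)\le F(w_0)$) is if anything slightly more careful than the paper, which telescopes from $k=0$ to $T-1$.
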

\subsubsection{Stochastic Regime}
Here, we use $\mathbb{E}_{\mathcal{I}_k}[.]$ to denote conditional expectation given $w_k$, and $\mathbb{E}[.]$ to denote the full expectation over the full history. 
The following standard assumptions as in \cite{bollapragada2019exact,berahas2016multi} are considered for this section.
\begin{assumption} 
\label{assum:boundedStochGrad} 
There exist a  constant $\gamma$ such that $\mathbb{E}_{\mathcal{I}}[\|\nabla F_{\mathcal{I}}(w) - \nabla F(w)\|^2] \leq \gamma^2$.
\end{assumption}
\begin{assumption} 
\label{assum:boundedStochGradAtOptimum} 
There exist a  constant $\sigma^2< \infty$ such that $\mathbb{E}_{\mathcal{I}}[\|\nabla F_{\mathcal{I}}(w^*) \|^2] \leq \sigma^2$.
\end{assumption}


\begin{assumption} 
\label{assum:unbiasedEstimGrad} 
$\nabla F_{\mathcal{I}}(w)$ is an unbiased estimator of the gradient, i.e., $\mathbb{E}_{\mathcal{I}}[\nabla F_{\mathcal{I}}(w)] = \nabla F(w)$, where the samples $\mathcal{I}$ are drawn independently.
\end{assumption}
{\bf Strongly Convex.} 
The following theorem presents the convergence to the neighborhood of the optimal solution for the smooth and strongly-convex case in the stochastic setting.

\begin{theorem} \label{thm:stochSCNewV2}
Suppose that Assumptions
\ref{assum:smooth}, \ref{assum:2ndDif},  \ref{assum:StrConvex}, \ref{assum:boundedStochGradAtOptimum} and \ref{assum:unbiasedEstimGrad} hold. 
 Let $\{w_k\}$ be the iterates generated by Algorithm \ref{alg:OASIS}
 with $  \eta  \in (0, \frac{    \alpha^2\mu}{\Gamma L^2})$,
   then, for all $k \geq 0$, 
\begin{equation}
\label{eq:asfasdfas}
\Exp[F(w_{k})-F^*]
\leq 
 \left(
    1 - c
    \right)^k
 ( F(w_0) -F^* )
    + 
     \frac{\eta^2   L\sigma^2 }{  c \alpha^2},
\end{equation}
where 
$c =  \frac{2 \eta \mu}{\Gamma} 
    -\frac{2 \eta^2 L^2  }{  \alpha^2} 
     \in (0,1)$.
Moreover, if $\eta \in  (0, \frac{    \alpha^2\mu}{2 \Gamma L^2})$     
then      
\begin{equation}
\label{eq:avgarwefaewfa}
\Exp[F(w_{k})-F^*]
\leq 
 \left(
    1 - \frac{ \eta \mu}{ \Gamma}
    \right)^k
 ( F(w_0) -F^* )
    + 
     \frac{\eta  \Gamma  L\sigma^2 }{    \alpha^2 \mu}. 
\end{equation} 
\end{theorem}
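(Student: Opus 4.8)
The plan is to track the suboptimality gap $F(w_k)-F^*$ directly and to derive a one-step ``contraction plus noise'' recursion, which is then unrolled into a geometric series. Throughout, the key structural observation is that the preconditioner $\D{k}$ is formed from $\mathcal{J}_k$ and $z_k$, which are drawn independently of $\mathcal{I}_k$; hence, conditionally on $w_k$, the matrix $\D{k}$ may be treated as fixed when taking $\mathbb{E}_{\mathcal{I}_k}[\cdot]$, and by Remark~\ref{rem:posDefBnd} its eigenvalues lie in $[\alpha,\Gamma]$ deterministically, which is what lets the $\alpha,\Gamma$ factors appear outside the expectation. For Step~1 I would apply the $L$-smoothness inequality \eqref{eq:smooth} with $w'=w_k$ and $w=w_{k+1}=w_k-\eta\D{k}^{-1}g_k$, where $g_k:=\nabla F_{\mathcal{I}_k}(w_k)$, obtaining $F(w_{k+1})\le F(w_k)-\eta\langle \nabla F(w_k),\D{k}^{-1}g_k\rangle+\tfrac{L\eta^2}{2}\|\D{k}^{-1}g_k\|^2$. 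Taking $\mathbb{E}_{\mathcal{I}_k}[\cdot]$ and using Assumption~\ref{assum:unbiasedEstimGrad} ($\mathbb{E}_{\mathcal{I}_k}[g_k]=\nabla F(w_k)$), the cross term becomes $\langle \nabla F(w_k),\D{k}^{-1}\nabla F(w_k)\rangle\ge\tfrac{1}{\Gamma}\|\nabla F(w_k)\|^2$, while $\|\D{k}^{-1}g_k\|^2\le\tfrac{1}{\alpha^2}\|g_k\|^2$; both bounds are exactly Remark~\ref{rem:posDefBnd}. This yields $\mathbb{E}_{\mathcal{I}_k}[F(w_{k+1})]\le F(w_k)-\tfrac{\eta}{\Gamma}\|\nabla F(w_k)\|^2+\tfrac{L\eta^2}{2\alpha^2}\mathbb{E}_{\mathcal{I}_k}\|g_k\|^2$.

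For Step~2 I would bound the stochastic second moment via the ``expected smoothness'' estimate $\mathbb{E}_{\mathcal{I}_k}\|g_k\|^2\le 4L\,(F(w_k)-F^*)+2\sigma^2$, proved from convexity and $L$-smoothness of the individual components $f_i$ together with $\nabla F(w^*)=0$ and Assumption~\ref{assum:boundedStochGradAtOptimum} (split $g_k=\nabla F_{\mathcal{I}_k}(w_k)-\nabla F_{\mathcal{I}_k}(w^*)+\nabla F_{\mathcal{I}_k}(w^*)$, use $\|a+b\|^2\le 2\|a\|^2+2\|b\|^2$ and the standard co-coercivity bound on the first term). Substituting this and then invoking strong convexity (Assumption~\ref{assum:StrConvex}) in its PL form $\|\nabla F(w_k)\|^2\ge 2\mu\,(F(w_k)-F^*)$, and collecting terms, gives the recursion $\mathbb{E}_{\mathcal{I}_k}[F(w_{k+1})-F^*]\le (1-c)\,(F(w_k)-F^*)+\tfrac{L\eta^2\sigma^2}{\alpha^2}$ with $c=\tfrac{2\eta\mu}{\Gamma}-\tfrac{2L^2\eta^2}{\alpha^2}$; the assumed range $\eta\in(0,\tfrac{\alpha^2\mu}{\Gamma L^2})$, together with $\mu\le L$ and $\alpha\le\Gamma$, guarantees $c\in(0,1)$.

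In Step~3 I would take full expectations, iterate the recursion, and sum the geometric series $\sum_{i=0}^{k-1}(1-c)^i\le 1/c$, which produces exactly \eqref{eq:asfasdfas}. For the refined bound \eqref{eq:avgarwefaewfa}, note that when $\eta<\tfrac{\alpha^2\mu}{2\Gamma L^2}$ one has $\tfrac{2L^2\eta^2}{\alpha^2}<\tfrac{\eta\mu}{\Gamma}$, hence $c>\tfrac{\eta\mu}{\Gamma}>0$; replacing $c$ by the smaller quantity $\tfrac{\eta\mu}{\Gamma}$ in both the contraction factor $(1-c)^k$ and the noise term $\tfrac{L\eta^2\sigma^2}{c\alpha^2}$ only weakens the inequality and yields \eqref{eq:avgarwefaewfa}.

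The main obstacle is Step~2: the listed stochastic assumptions control the variance only at $w^*$, so one cannot bound $\mathbb{E}_{\mathcal{I}_k}\|g_k\|^2$ naively and must route through the expected-smoothness inequality, which implicitly relies on convexity and $L$-smoothness of the individual $f_i$ rather than merely of $F$, and then the step-size constraints must be calibrated so that $c$ stays in $(0,1)$ and so that the second regime ($c>\eta\mu/\Gamma$) holds. Everything else — the descent inequality, pulling $\D{k}^{-1}$ past $\mathbb{E}_{\mathcal{I}_k}$, and the geometric summation — is routine.
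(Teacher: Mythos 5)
Your proposal follows essentially the same route as the paper's proof: a smoothness-based descent step, unbiasedness to pass to $\nabla F(w_k)^T\D{k}^{-1}\nabla F(w_k)\ge\tfrac{1}{\Gamma}\|\nabla F(w_k)\|^2$, the expected-smoothness bound $\mathbb{E}_{\mathcal{I}}\|\nabla F_{\mathcal{I}}(w)\|^2\le 4L(F(w)-F^*)+2\sigma^2$ (which the paper imports as a cited lemma requiring convexity and $L$-smoothness of each $f_i$, exactly the implicit hypothesis you flag), the PL inequality from strong convexity, and geometric unrolling with $\sum_i(1-c)^i\le 1/c$, plus the observation that $\eta<\tfrac{\alpha^2\mu}{2\Gamma L^2}$ forces $c\ge\tfrac{\eta\mu}{\Gamma}$. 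The argument is correct and matches the paper's in all essential respects.
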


{\bf Nonconvex.} 
The next theorem provides the convergence to the stationary point for the nonconvex loss functions in the stochastic regime. 
\begin{theorem}\label{thm:stochNonConv}
Suppose that Assumptions \ref{assum:smooth},
\ref{assum:2ndDif}, \ref{assum:bndFun},  \ref{assum:boundedStochGrad} and \ref{assum:unbiasedEstimGrad} hold, and let $F^{\star} = F(w^{\star})$, where $w^{\star}$ is the minimizer of $F$. Let $\{w_k\}$ be the iterates generated by Algorithm \ref{alg:OASIS}, where $0<\eta_k = \eta \leq \tfrac{\eta^2}{L\Gamma}$, and $w_0$ is the starting point. Then, for all $k\geq 0$, 
\begin{align*}   
  \mathbb{E}\left[\tfrac{1}{T}\textstyle{\sum}_{k=0}^{T-1} \| \nabla F(w_k) \|^2 \right]  &  \leq \tfrac{2\Gamma[ F(w_0) - \widehat{F}]}{\eta  T} + \tfrac{\eta \Gamma  \gamma^2 L }{\alpha^2}  \xrightarrow[]{T \rightarrow \infty} \tfrac{\eta \Gamma \gamma^2 L }{\alpha^2}.
\end{align*}
\end{theorem}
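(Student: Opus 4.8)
The plan is to run the standard ``smoothness descent inequality plus telescoping'' argument for stochastic first-order methods, with the extra bookkeeping needed to handle the preconditioner $\D{k}$. Write $g_k := \nabla F_{\mathcal{I}_k}(w_k)$, so that the update reads $w_{k+1} = w_k - \eta\,\D{k}^{-1} g_k$. The only properties of the preconditioner I will use are the spectral bounds from Remark~\ref{rem:posDefBnd}: $\alpha I \preceq \D{k} \preceq \Gamma I$, hence $\tfrac1\Gamma I \preceq \D{k}^{-1} \preceq \tfrac1\alpha I$ and $\|\D{k}^{-1}\|_2 \le 1/\alpha$.

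First I would apply the $L$-smoothness inequality \eqref{eq:smooth} with $w = w_{k+1}$, $w' = w_k$, substitute the update, and obtain
\[
F(w_{k+1}) \le F(w_k) - \eta\,\langle \nabla F(w_k),\, \D{k}^{-1} g_k\rangle + \tfrac{L\eta^2}{2}\,\|\D{k}^{-1} g_k\|^2 .
\]
Next I would take the conditional expectation $\Exp_{\mathcal{I}_k}[\cdot]$ given $w_k$ and $\D{k}$. The key point is that $\mathcal{I}_k$ and $\mathcal{J}_k$ are drawn independently, and $\D{k}$ is built only from $\mathcal{J}_k$ (and the past), so $\D{k}$ is independent of $g_k$ given $w_k$; thus $\Exp_{\mathcal{I}_k}[g_k] = \nabla F(w_k)$ by Assumption~\ref{assum:unbiasedEstimGrad}. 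The cross term then becomes $\langle \nabla F(w_k), \D{k}^{-1}\nabla F(w_k)\rangle = \|\nabla F(w_k)\|_{\D{k}^{-1}}^2 \ge \tfrac1\Gamma \|\nabla F(w_k)\|^2$, while the quadratic term is bounded by $\tfrac{1}{\alpha^2}\Exp_{\mathcal{I}_k}[\|g_k\|^2]$, and unbiasedness together with Assumption~\ref{assum:boundedStochGrad} gives $\Exp_{\mathcal{I}_k}[\|g_k\|^2] \le \|\nabla F(w_k)\|^2 + \gamma^2$.

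Combining these bounds yields
\[
\Exp_{\mathcal{I}_k}[F(w_{k+1})] \le F(w_k) - \Big(\tfrac{\eta}{\Gamma} - \tfrac{L\eta^2}{2\alpha^2}\Big)\|\nabla F(w_k)\|^2 + \tfrac{L\eta^2\gamma^2}{2\alpha^2}.
\]
With the step-size restriction $0 < \eta \le \alpha^2/(L\Gamma)$ (the exponent printed in the statement is a typo) one has $\tfrac{L\eta^2}{2\alpha^2} \le \tfrac{\eta}{2\Gamma}$, so the coefficient of $\|\nabla F(w_k)\|^2$ is at least $\tfrac{\eta}{2\Gamma}$. Taking full expectations, rearranging, summing over $k = 0,\dots,T-1$, telescoping, and using $F \ge \widehat{F}$ (Assumption~\ref{assum:bndFun}) to discard the nonnegative term $\Exp[F(w_T)] - \widehat{F}$ gives
\[
\tfrac{\eta}{2\Gamma}\,\textstyle{\sum}_{k=0}^{T-1}\Exp[\|\nabla F(w_k)\|^2] \le F(w_0) - \widehat{F} + \tfrac{T L\eta^2\gamma^2}{2\alpha^2};
\]
dividing by $\eta T/(2\Gamma)$ produces exactly the claimed bound, and the limit as $T\to\infty$ is immediate since the first term vanishes while the second is constant in $T$.

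I do not expect a serious obstacle. The only point requiring real care is the conditioning/independence argument that preserves $\Exp_{\mathcal{I}_k}[g_k] = \nabla F(w_k)$ even though the step also contains the random matrix $\D{k}$ — this is precisely why separate mini-batches $\mathcal{I}_k,\mathcal{J}_k$ are used. A secondary, routine check is that $\D{k}$ is genuinely symmetric positive definite with eigenvalues in $[\alpha,\Gamma]$, which follows from the truncation \eqref{eq:DkhatOurAlg} and the uniform bound of Lemma~\ref{lem:bndVk} (so Remark~\ref{rem:posDefBnd} applies in the stochastic regime as well).
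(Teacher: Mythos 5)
Your proof is correct and follows essentially the same route as the paper's: the smoothness descent inequality, conditional expectation with the spectral bounds $\alpha I \preceq \D{k} \preceq \Gamma I$ to get the $-\eta(\tfrac{1}{\Gamma}-\tfrac{\eta L}{2\alpha^2})\|\nabla F(w_k)\|^2$ term, the variance decomposition $\Exp[\|g_k\|^2]\le\|\nabla F(w_k)\|^2+\gamma^2$, and telescoping against the lower bound $\widehat{F}$. You are also right that the step-size condition in the statement should read $\eta \le \alpha^2/(L\Gamma)$; your explicit treatment of the independence of $\mathcal{I}_k$ from $\D{k}$ is a point the paper leaves implicit but does not change the argument.
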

The previous two theorems provide the convergence to the neighborhood of the stationary points. 
One can easily use either a variance reduced gradient approximation or a decaying learning rate strategy to show the convergence to the stationary points (in expectation). 
\textcolor{black}{OASIS's analyses are similar to those of limited-memory quasi-Newton approaches which depends on $\lambda_{max}$ and $\lambda_{min}$ (largest and smallest eigenvalues) of the preconditioning matrix (while in (S)GD $\lambda_{max}$= $\lambda_{min}=1$). These methods in theory are not better than GD-type methods. In practice, however, they have shown their strength.}
\section{Empirical Results}\label{sec:expResults}
\begin{figure}
    \centering
    \begin{minipage}{.49\textwidth}
    \centering
     \includegraphics[height=1.0in]{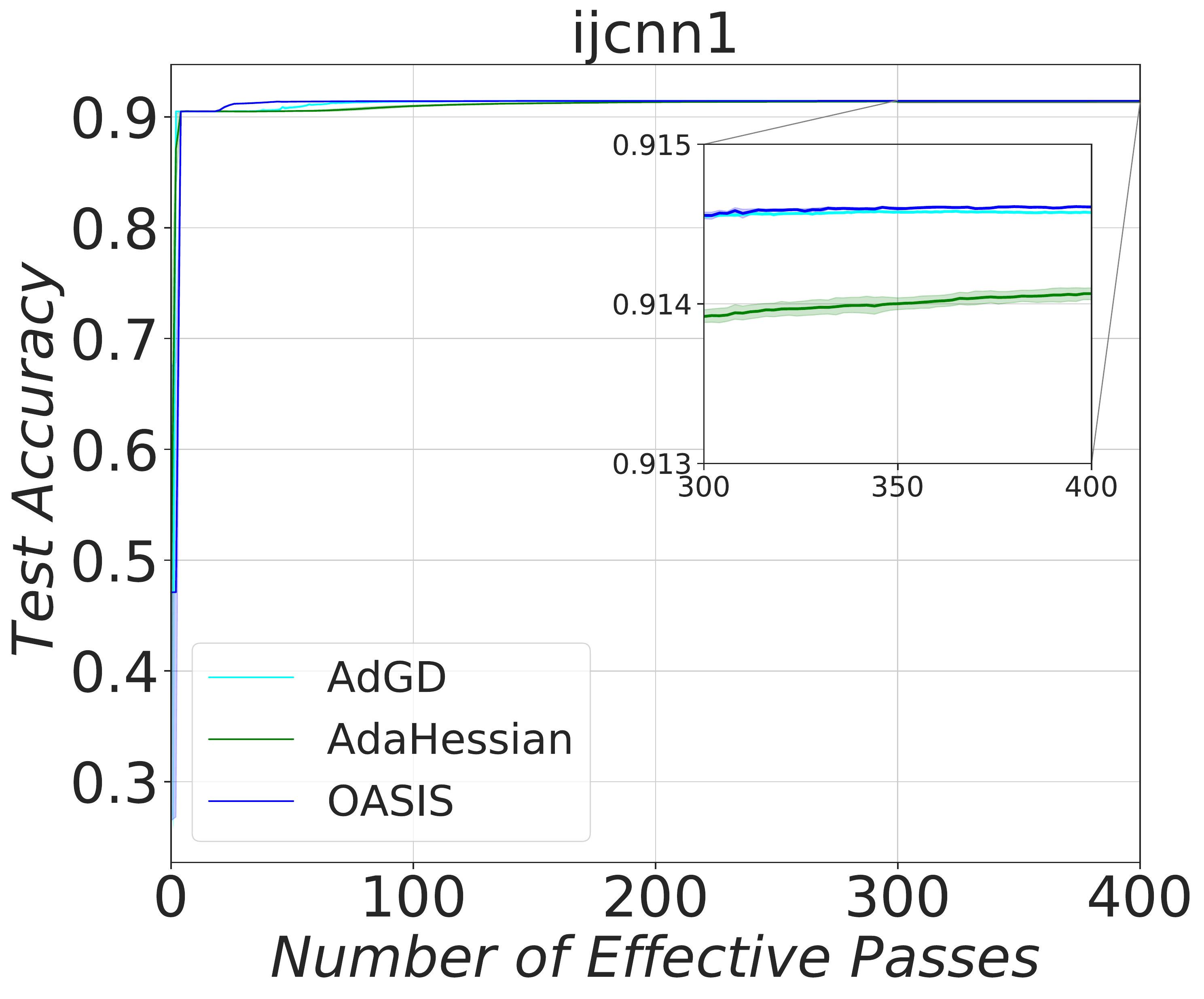}
 \includegraphics[height=1.0in]{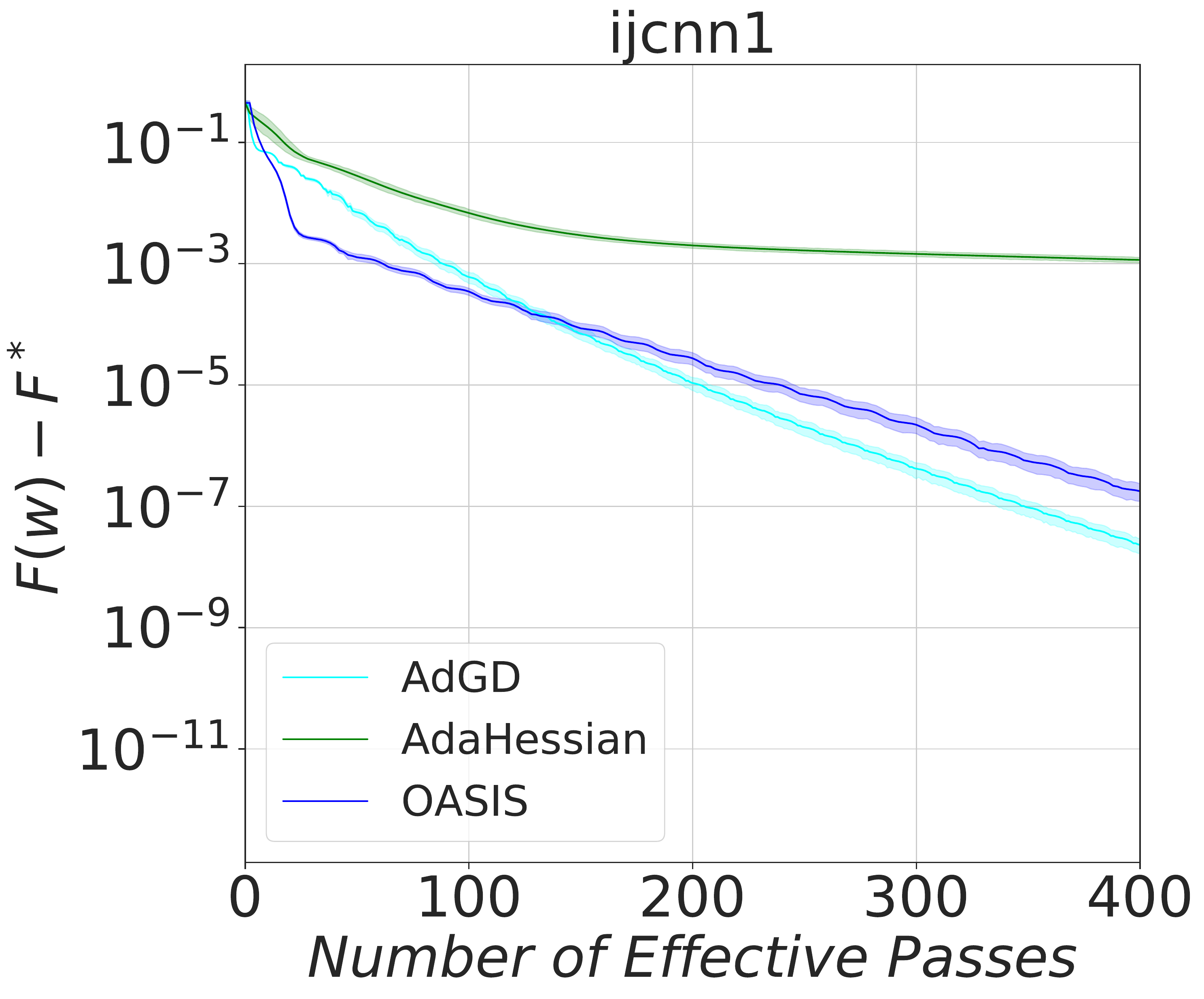}
 
 \includegraphics[height=1.0in]{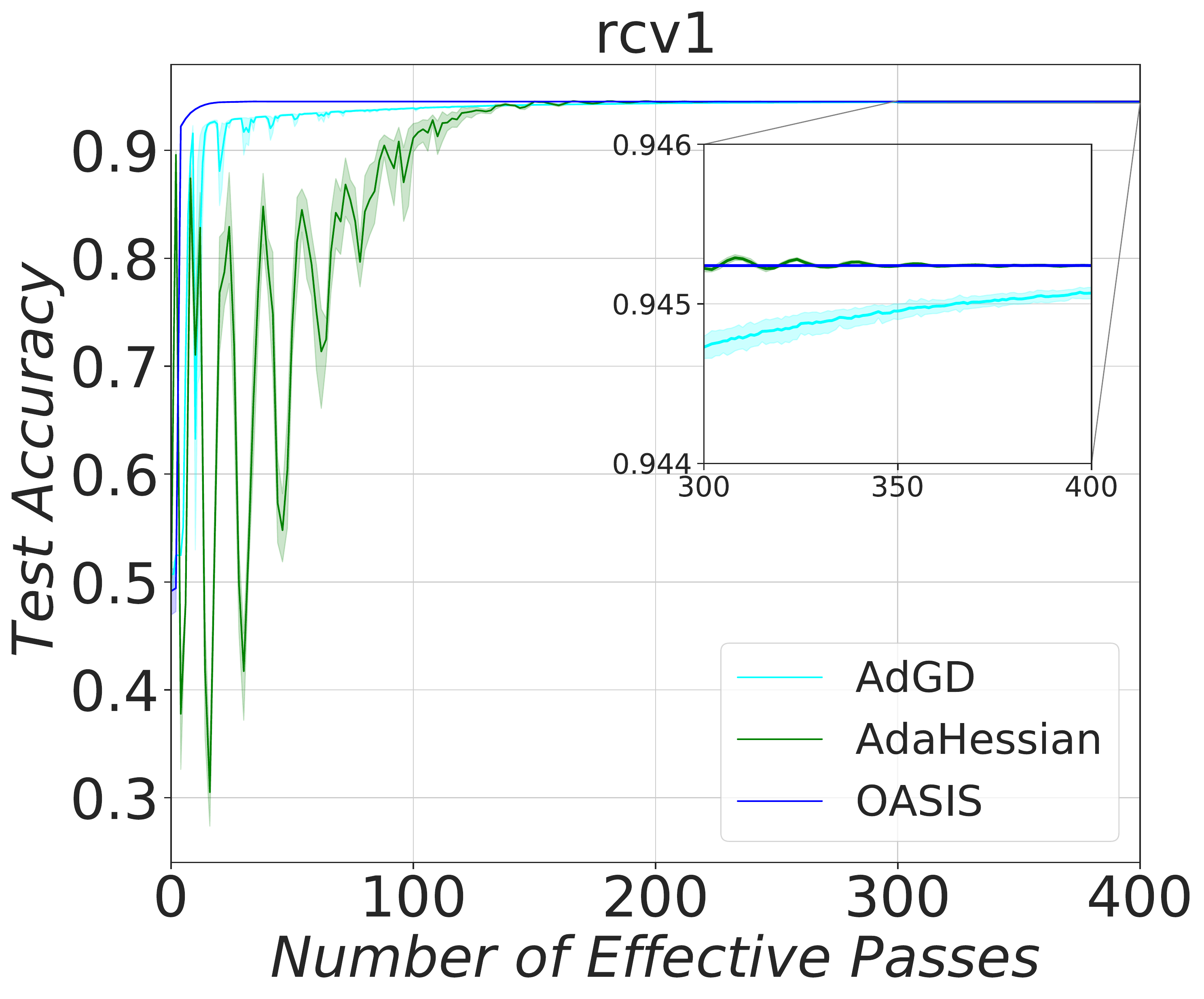}
 \includegraphics[height=1.0in]{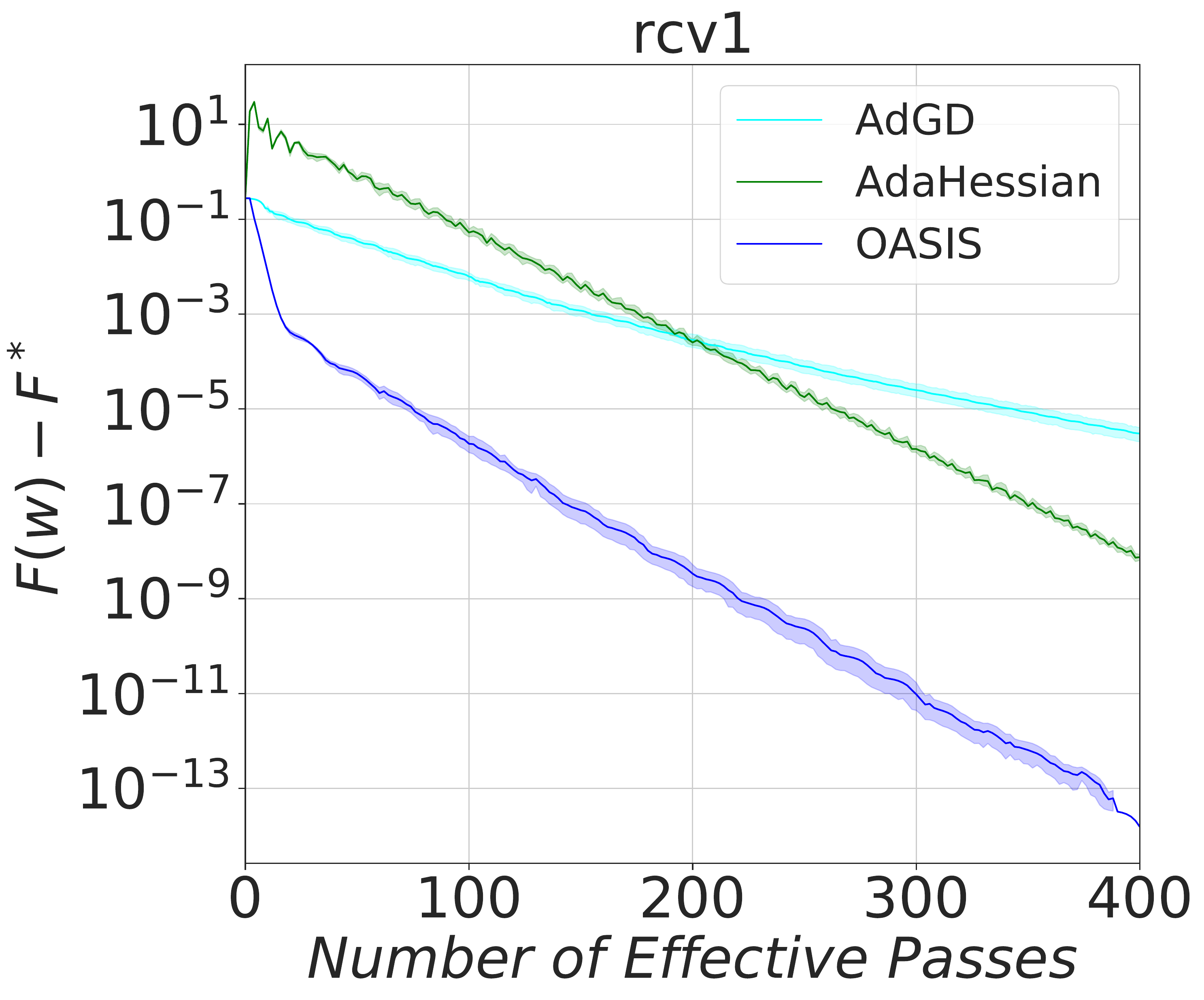}
    \caption{Comparison of optimality gap and Test Accuracy for different algorithms on Logistic Regression Problems.}
    \label{fig:detLogReg}
    \end{minipage}
    \hspace{0.1cm}
    \centering
    \begin{minipage}{.49\textwidth}
    \centering
 \includegraphics[height=1.0in]{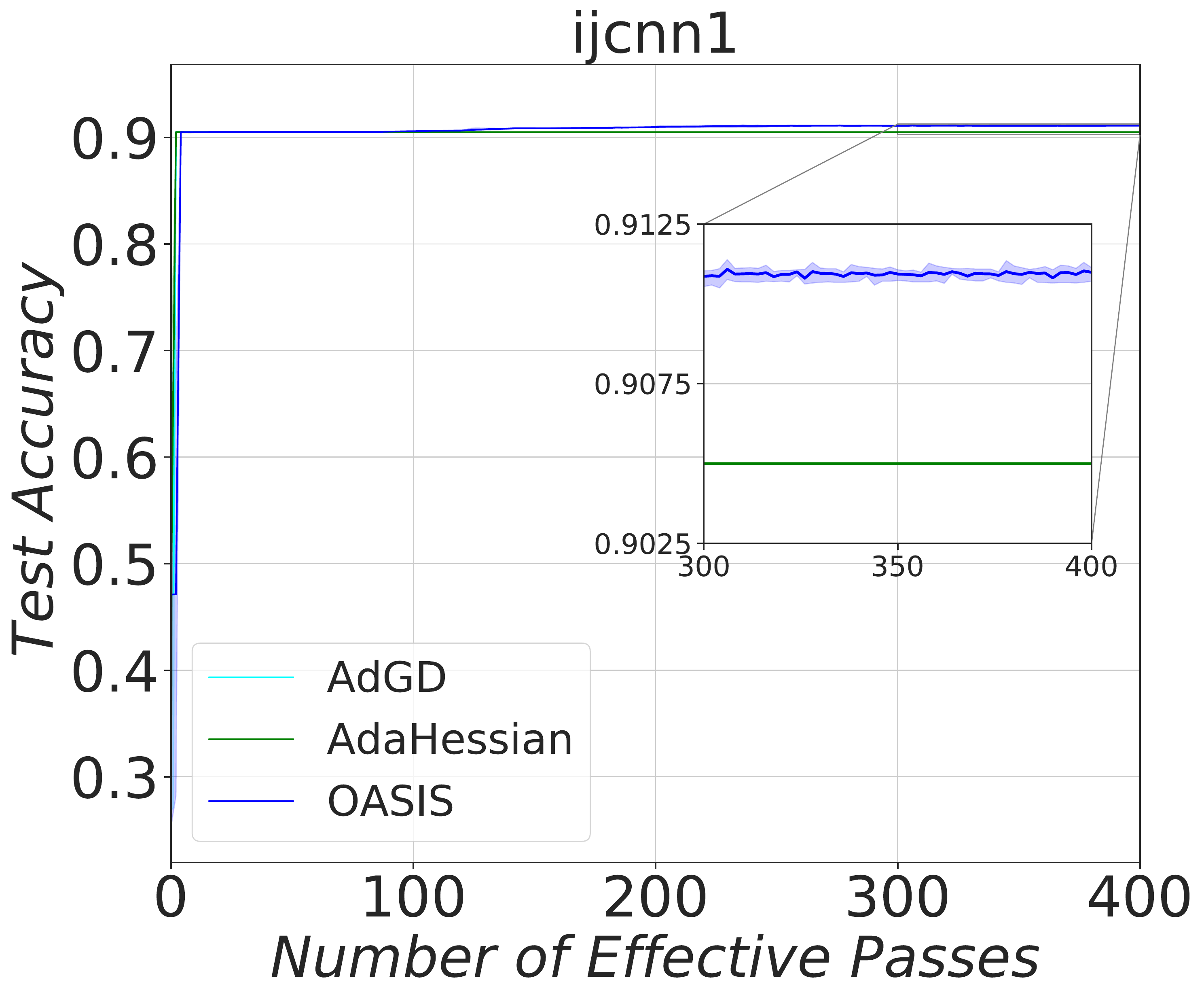}
 \includegraphics[height=1.0in]{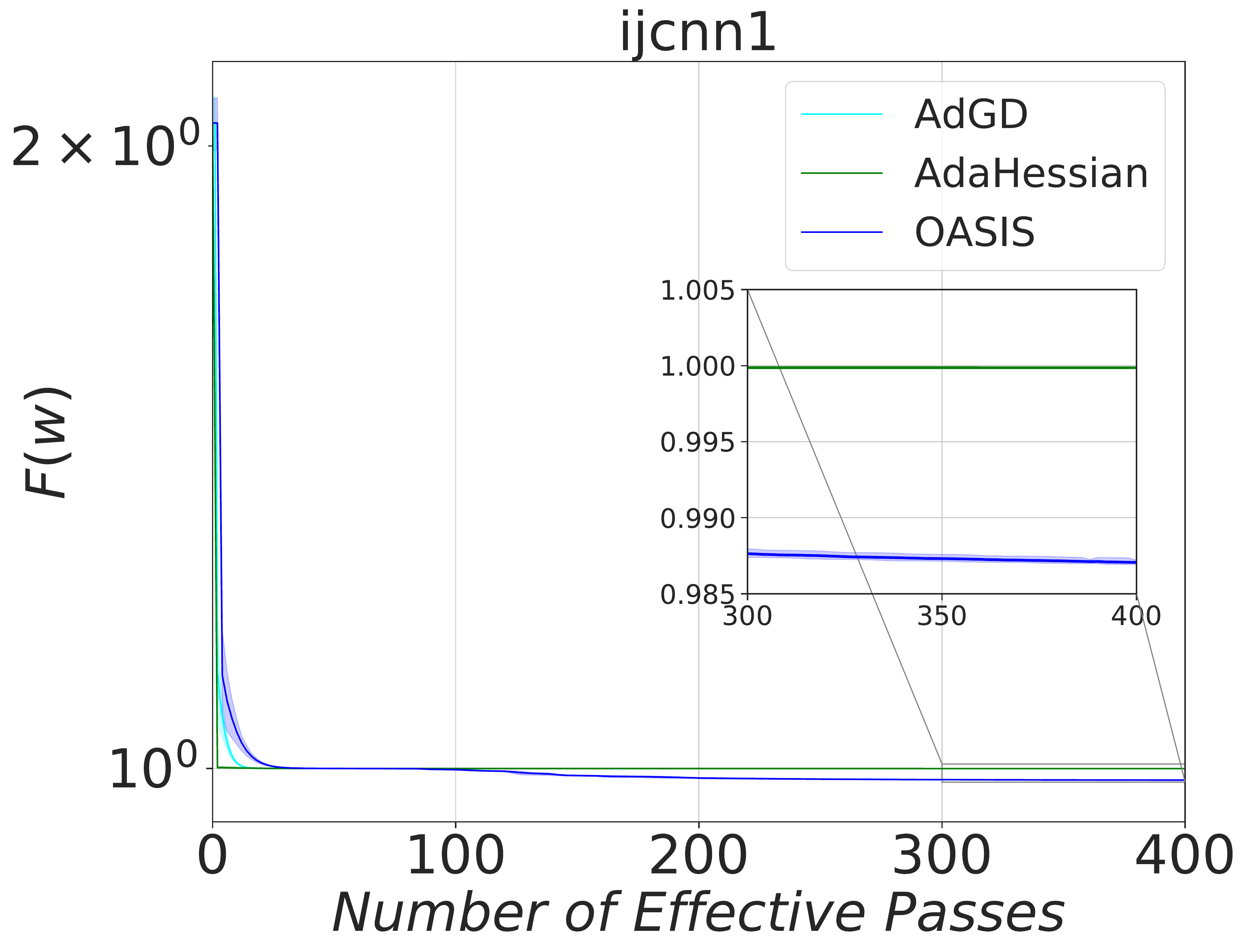}
 
 \includegraphics[height=1.0in]{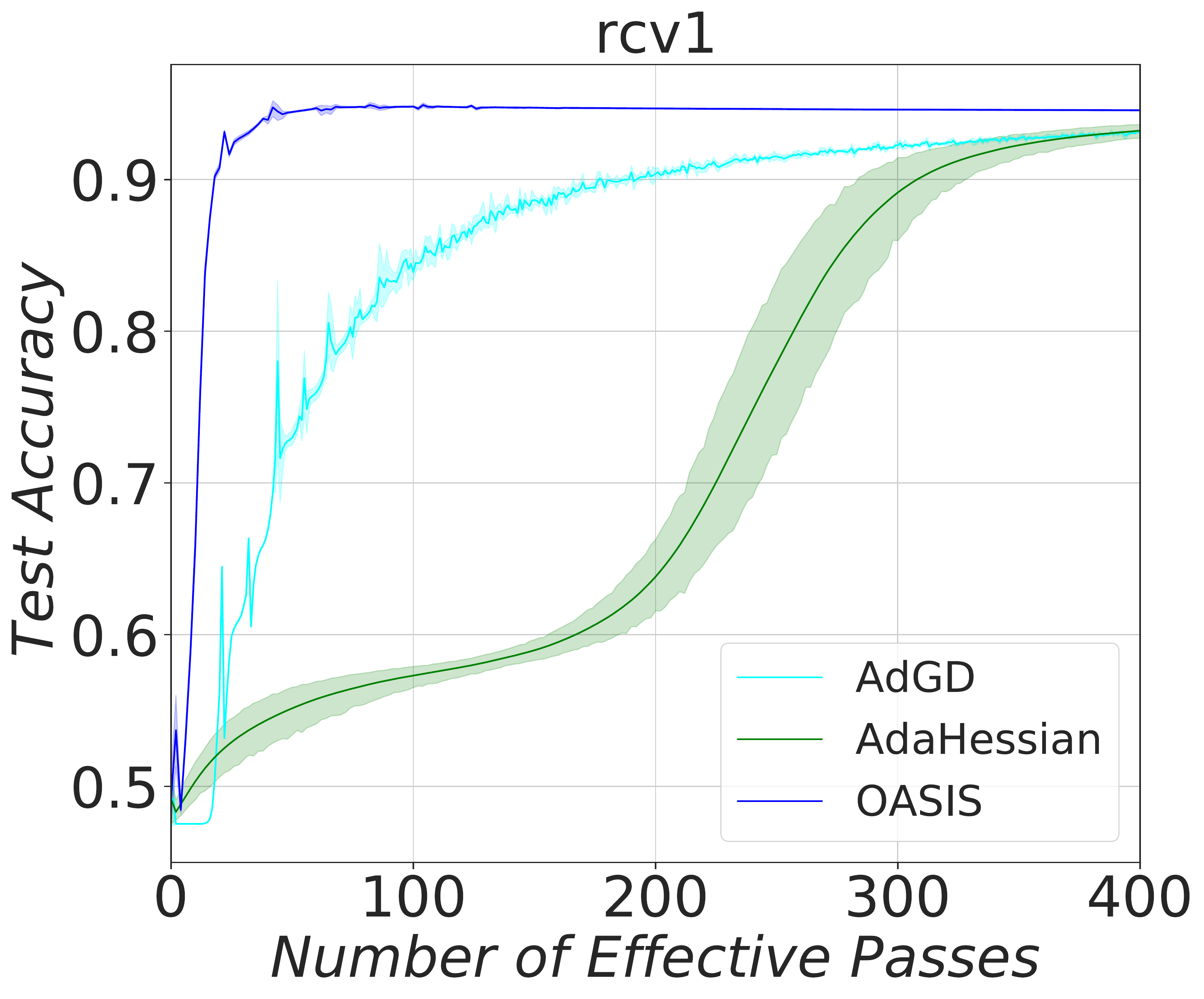}
  \includegraphics[height=1.0in]{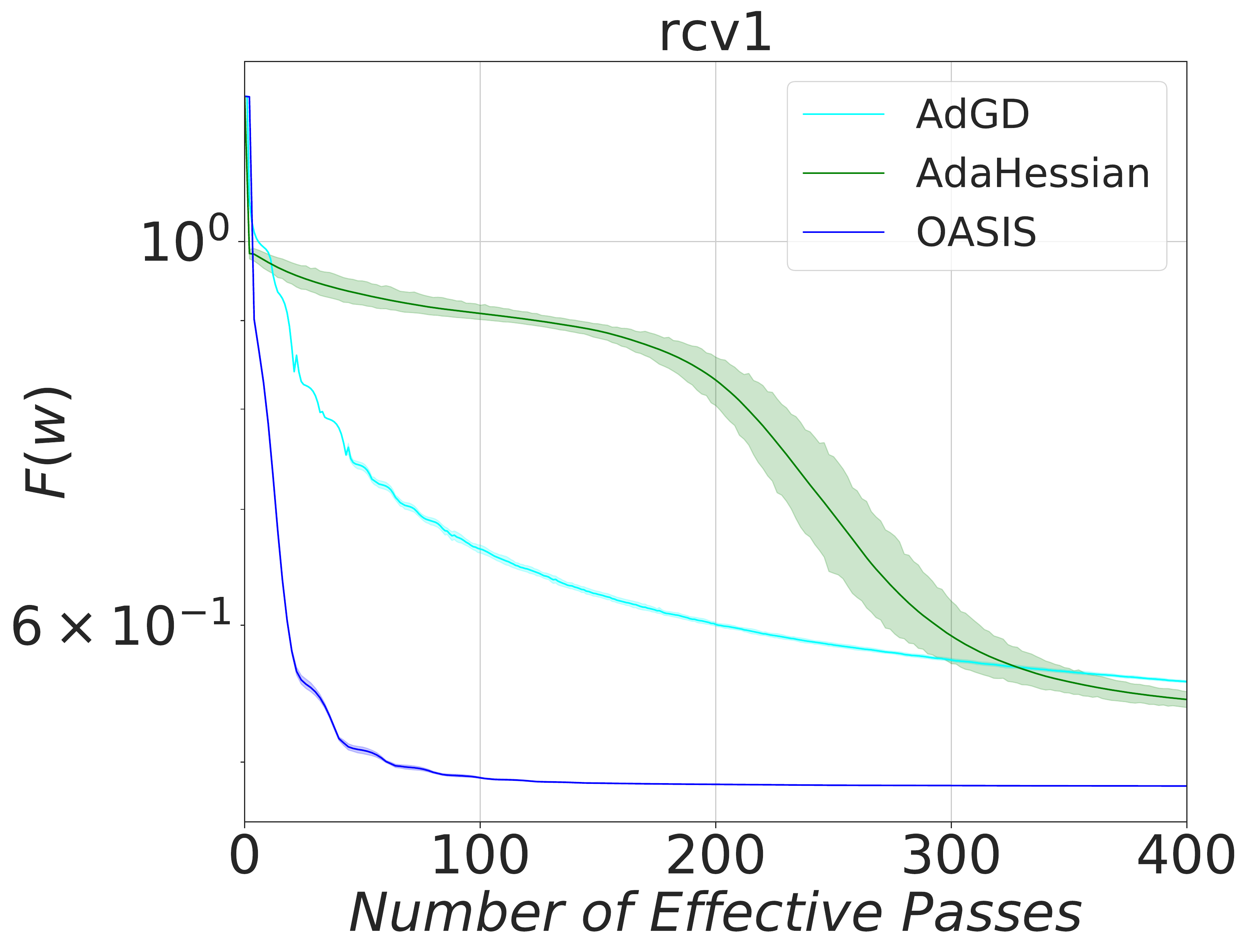}
    \caption{Comparison of objective function ($F(w)$) and Test Accuracy for different algorithms on Non-linear Least Square Problems.}
    \label{fig:detNLS}
\end{minipage}
\end{figure}
In this section, we present empirical results for several machine learning problems to show that our \ALG methodology outperforms state-of-the-art first- and second-order methods in both deterministic and stochastic regimes. 
We considered: $(1)$ deterministic $\ell_2$-regularized logistic regression (strongly convex); $(2)$ deterministic nonlinear least squares (nonconvex), and we report results on 2 standard machine learning datasets \texttt{rcv1} and \texttt{ijcnn1}\footnote{ \href{https://www.csie.ntu.edu.tw/~cjlin/libsvmtools/datasets/}{https://www.csie.ntu.edu.tw/~cjlin/libsvmtools/datasets/}}; and (3) image  classification tasks on \texttt{MNIST}, \texttt{CIFAR10}, and \texttt{CIFAR100} datasets on standard network structures.
In the interest of space, we report only a subset of the results in this section.
The rest can be found in Appendix \ref{sec:addExp}.

To be clear, we compared the empirical performance of \ALG with algorithms with diagonal preconditioners. In the deterministic regime, we compared the performance of \ALG with AdGD \cite{mishchenko2020adaptive} and AdaHessian \cite{yao2020adahessian}. Further, for the stochastic regime, we provide experiments comparing SGD\cite{robbins1951stochastic}, Adam \cite{kingma2014adam}, AdamW \cite{loshchilov2017decoupled}, and AdaHessian. For the logistic regression problems, the regularization parameter was chosen from the set $\lambda \in \{\frac{1}{10n},\frac{1}{n},\frac{10}{n}\}$. 
It is worth highlighting that we ran each method for each of the following experiments from 10 different random initial points. 
Moreover, we separately tuned the hyperparameters for each algorithm, if needed.
See Appendix \ref{sec:addExp} for details. 
\textcolor{black}{The proposed \ALG{} is robust with respect to different choices of hyperparameters, and it has a narrow spectrum of changes (see Appendix \ref{sec:addExp}).}
\subsection{Logistic Regression}
We considered $\ell_2$-regularized logistic regression problems,
    $F(w) = \tfrac{1}{n}\textstyle{\sum}_{i=1}^n \log  (1+e^{-y_ix_i^Tw} ) + \tfrac{\lambda}{2}\|w\|^2.$
Figure \ref{fig:detLogReg}  shows the performance of the methods in terms of optimality gap and test accuracy versus number of effective passes (number of gradient and Hessian-vector evaluations). As is clear, the performance of \ALG (with adaptive learning rate and without any hyperparameter tuning) is on par or better than that of the other methods.
\subsection{Non-linear Least Square}
We considered non-linear least squares problems (described in \cite{xu2020second}):
$F(w) = \tfrac{1}{n}\textstyle{\sum}_{i=1}^n   (y_i - 1/(1+e^{-x_i^Tw})  )^2.
$ 
Figure \ref{fig:detNLS} shows that our \ALG Algorithm always outperforms the other methods in terms of training loss function and test accuracy. 
Moreover, the behaviour of \ALG is robust with respect to the different initial points.
\subsection{Image Classification}
We illustrate the performance of \ALG on standard bench-marking neural network training tasks: \texttt{MNIST}, \texttt{CIFAR10}, and \texttt{CIFAR100}. The results for \texttt{MNIST} and the details of the problems are given in Appendix \ref{sec:addExp}. We present the results regarding \texttt{CIFAR10} and \texttt{CIFAR100}.
\\
{\bf CIFAR10.}
We use standard \texttt{ResNet-20} and \texttt{ResNet-32} \cite{He2015resnet} architectures for comparing the performance of \ALG with SGD, Adam, AdamW and AdaHessian\footnote{Note that we follow the same Experiment Setup as in \cite{yao2020adahessian}, and the codes for other algorithms and structures are brought from \href{https://github.com/amirgholami/adahessian}{https://github.com/amirgholami/adahessian}.}. Specifically, we report 3 variants of \ALG: $(i)$ adaptive learning rate, $(ii)$ fixed learning rate (without first moment) and $(iii)$ fixed learning rate with gradient momentum tagged with ``Adaptive LR,'' ``Fixed LR,'' and ``Momentum,'' respectively. For settings with fixed learning rate, no warmstarting is used in order to obtain an initial $D_{0}$ approximation. For the case with adaptive learning case, we used the warmstarting strategy to approximate the initial Hessian diagonal. More details regarding the exact parameter values and hyperparameter search can be found in the Appendix \ref{sec:addExp}.
The  results  on  \texttt{CIFAR10}  are  shown  in  the Figure \ref{fig:CIFAR_10And_100} (the left and middle columns) and Table \ref{tab:cifar10}.
As is clear, the simplest variant of \ALG with fixed learning rate achieves significantly better results, as compared to Adam, and a performance comparable to SGD. 
For the variant with an added momentum, we get similar accuracy as AdaHessian, while getting better or the same loss values, highlighting the advantage of using different preconditioning schema. 
Another important observation is that \ALG-Adaptive LR, without too much tuning efforts, has better performance than Adam with sensitive hyperparameters. All in all, the performance of \ALG variants is on par or better than the other state-of-the-art methods especially SGD. As we see from Figure \ref{fig:CIFAR_10And_100} (left and middle columns), the lack of momentum produces a slow, noisy training curve in the initial stages of training, while \ALG with momentum works better than the other two variants in the early stages. All three variants of \ALG get satisfactory results in the end of training. 
More results and discussion regarding \texttt{CIFAR10} dataset are in Appendix \ref{sec:addExp}.
\begin{figure*}
    \centering
        \includegraphics[width=0.32\textwidth]{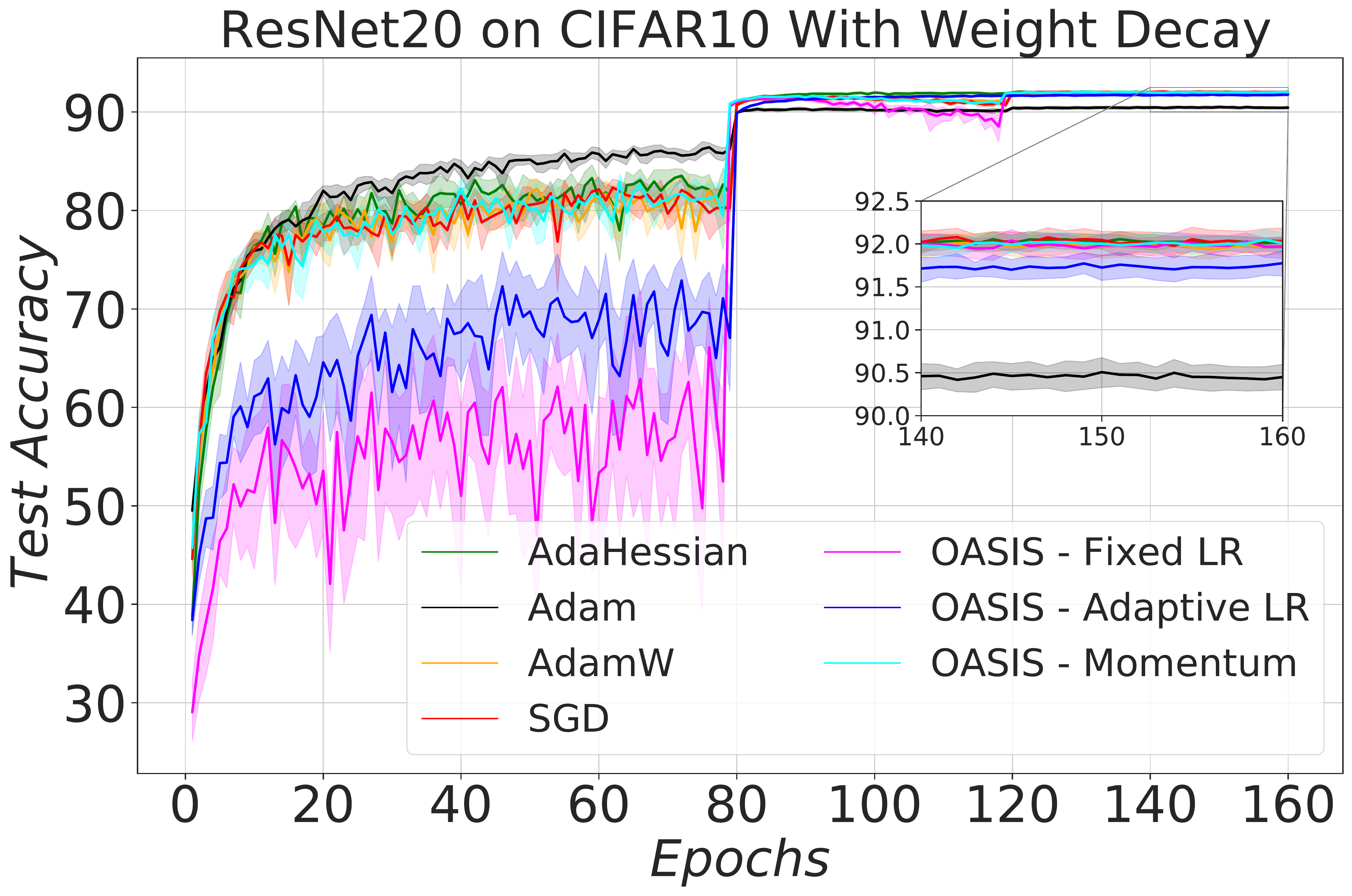}
        \includegraphics[width=0.32\textwidth]{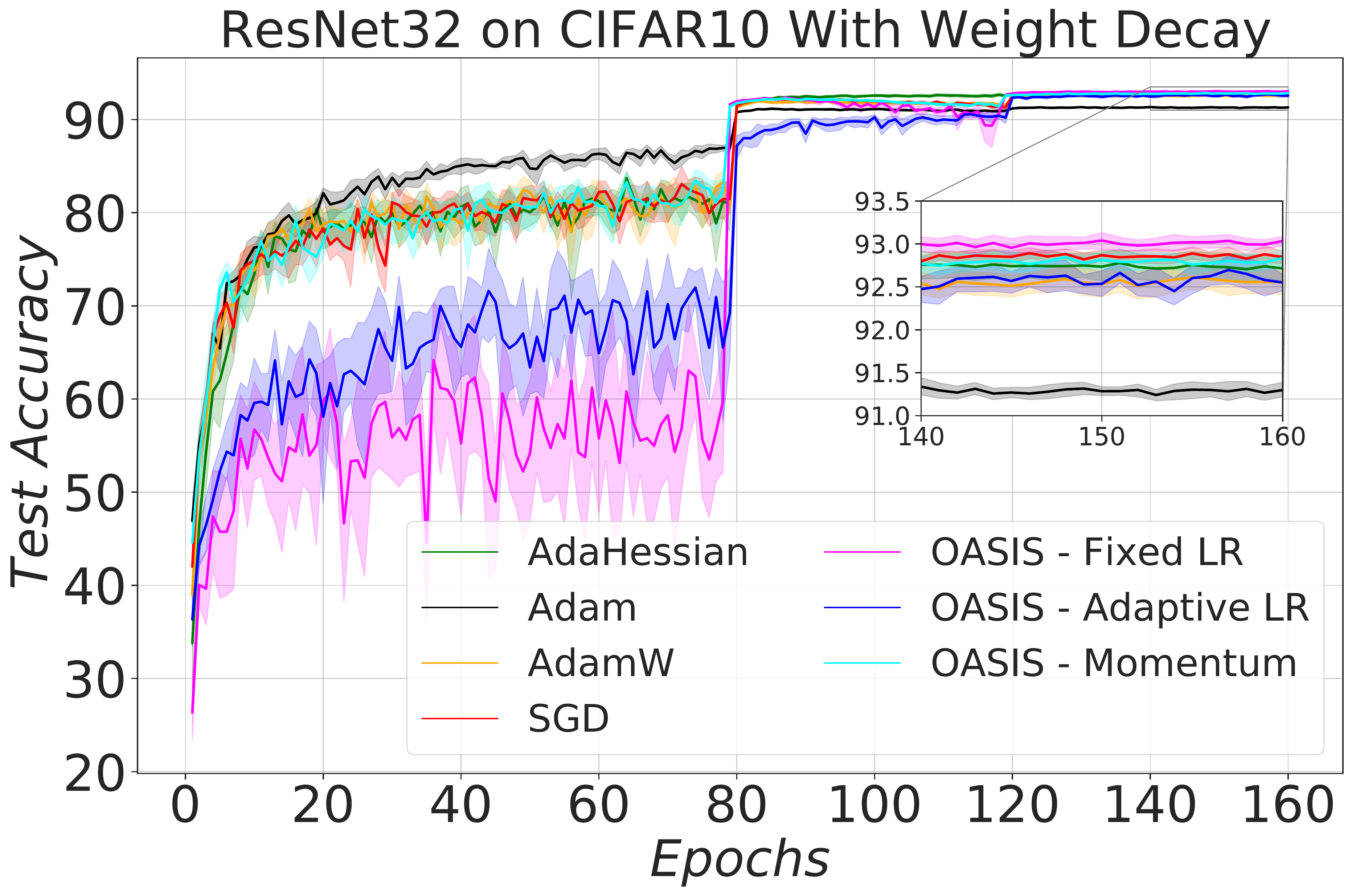}
         \includegraphics[width=0.32\textwidth]{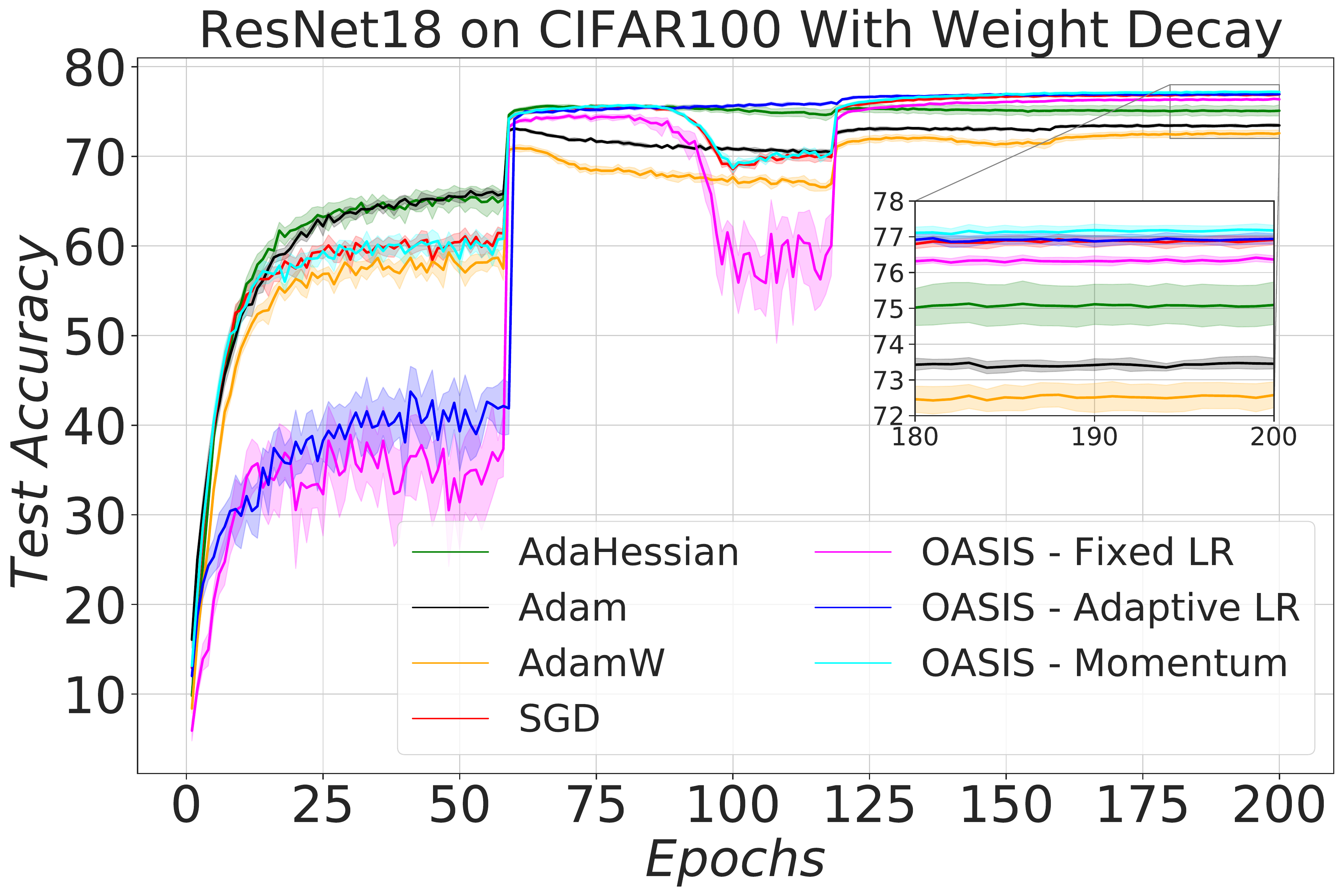}

        \includegraphics[width=0.32\textwidth]{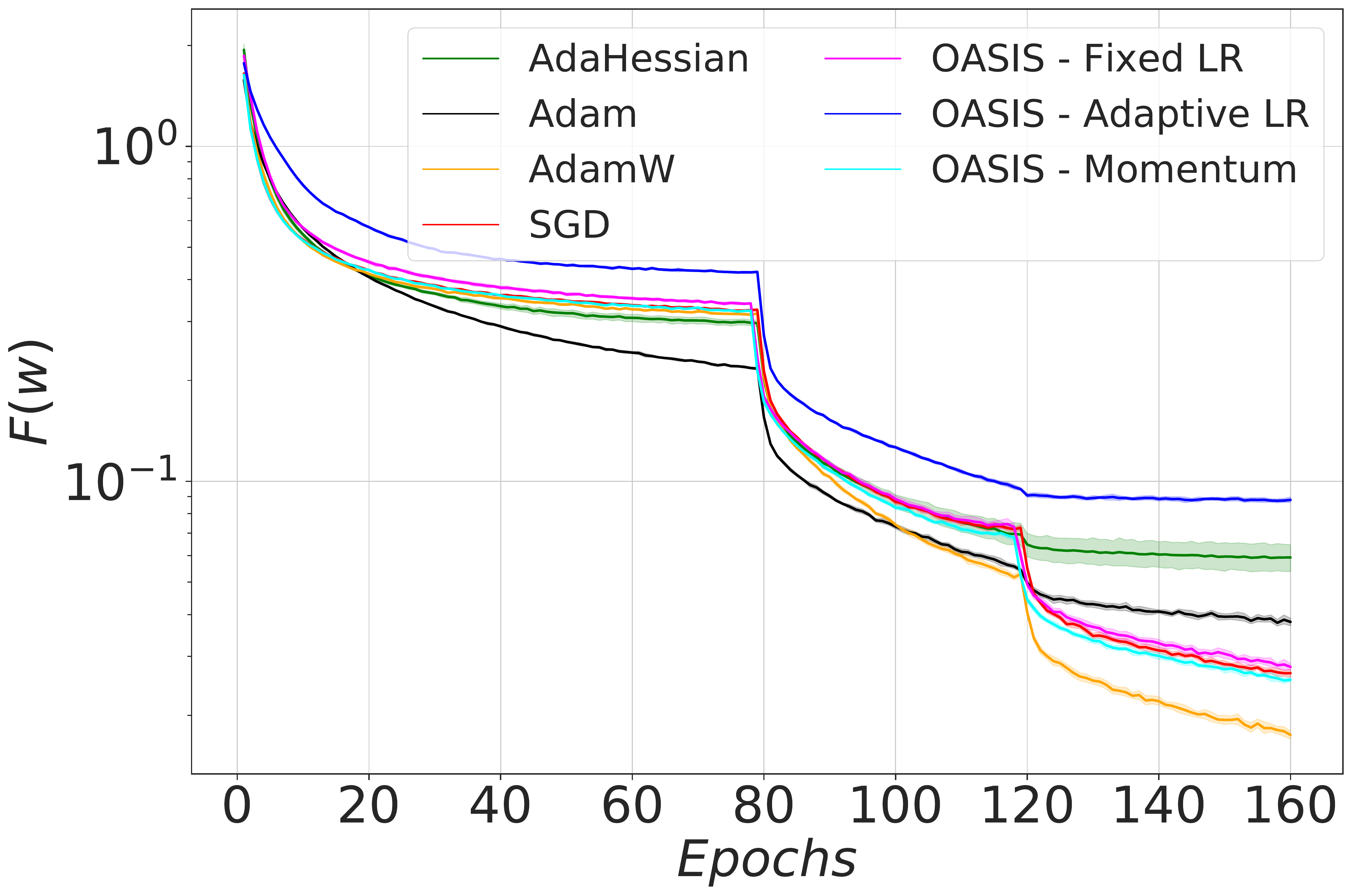}
        \includegraphics[width=0.32\textwidth]{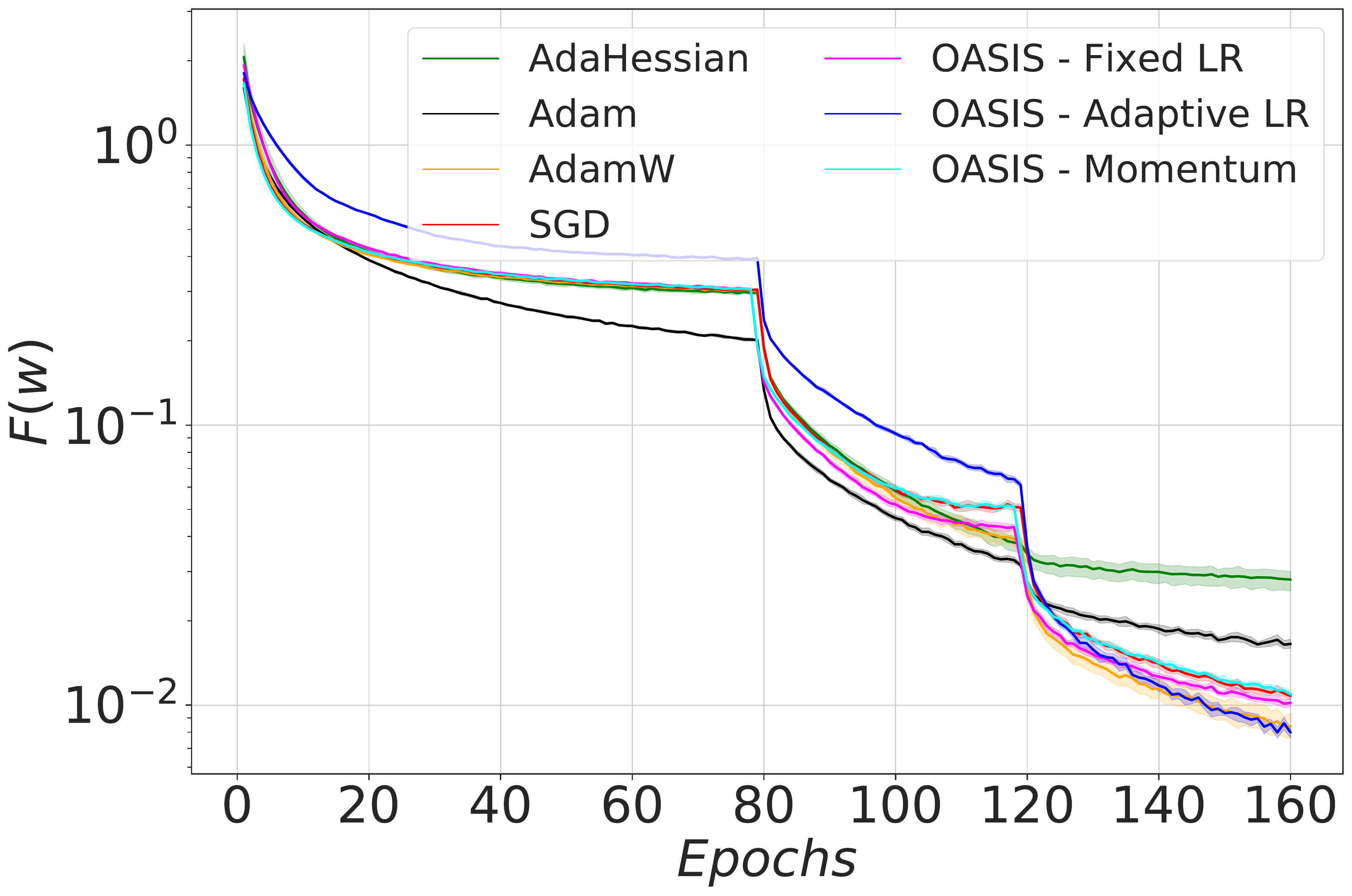}
        \includegraphics[width=0.32\textwidth]{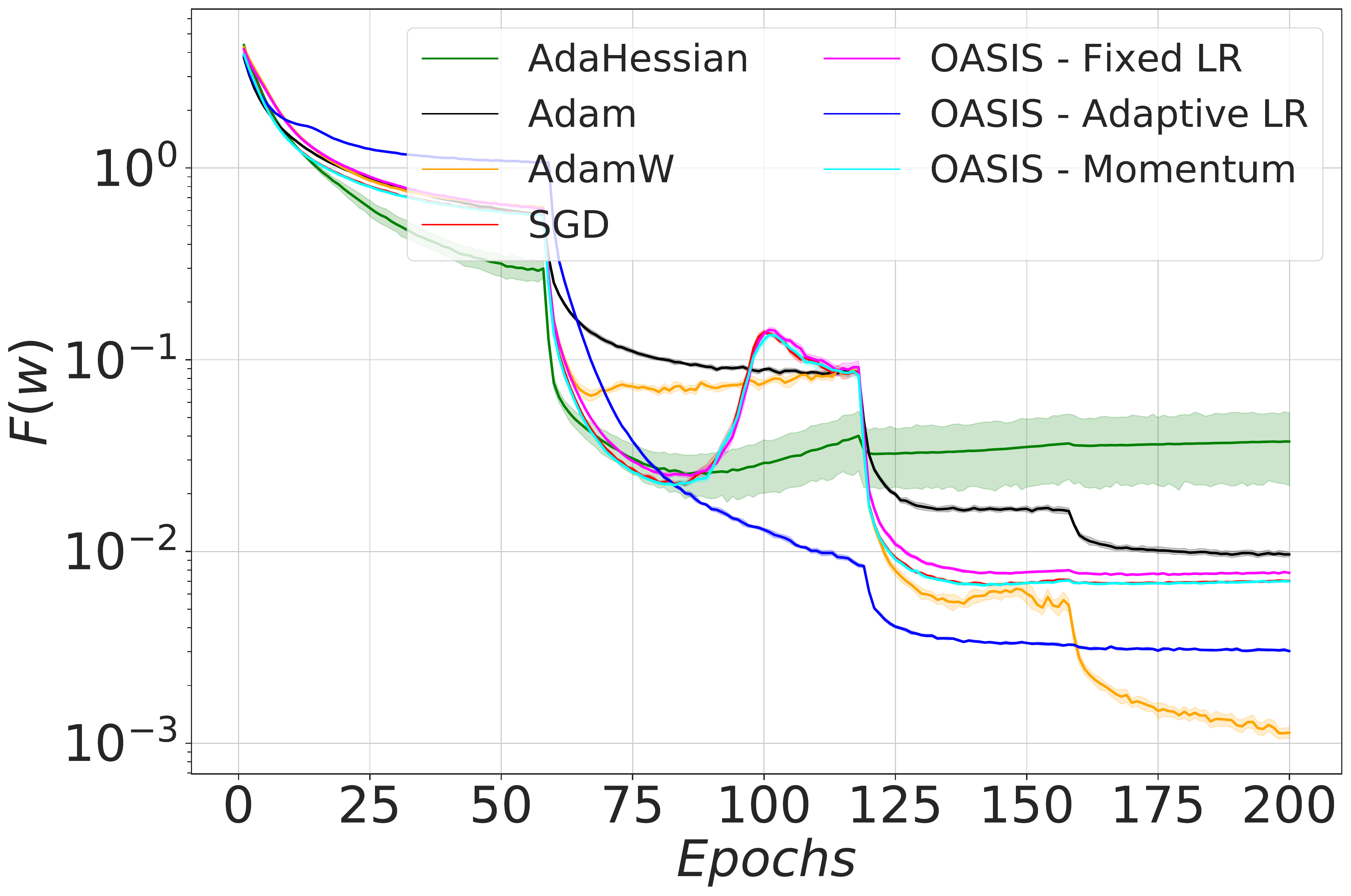}
    \caption{Performance of SGD, Adam, AdamW, Adehessian and different variants of \ALG on \texttt{CIFAR10} (left and middle columns) and \texttt{CIFAR100} (right column) problems on \texttt{ResNet-20} (left column), \texttt{ResNet-32} (middle column) and \texttt{ResNet-18} (right column).}
    \label{fig:CIFAR_10And_100}
\end{figure*}
 \begin{minipage}{\linewidth}
      \centering
\begin{minipage}[t]{0.53\linewidth}
\begin{table}[H] 
	\centering\small 
	\caption{Results of \texttt{ResNet-20/32} on \texttt{CIFAR10}}
	\begin{tabular}{lcc}
		\toprule
		\textbf{Setting} & \texttt{ResNet-20} & \texttt{ResNet-32}\\
		\midrule
		SGD & 92.02 $\pm$ 0.22 & 92.85 $\pm$ 0.12 \\\hdashline
		Adam & 90.46 $\pm$ 0.22 & 91.30 $\pm$ 0.15 \\\hdashline
		AdamW & 91.99 $\pm$ 0.17 & 92.58 $\pm$ 0.25\\\hdashline
		AdaHessian & \bf{92.03 $\pm$ 0.10} & 92.71 $\pm$ 0.26 \\
		\midrule
		\ALG - Adaptive LR & 91.20 $\pm$ 0.20 & 92.61 $\pm$ 0.22 \\
		\ALG - Fixed LR & 91.96 $\pm$ 0.21 & \bf{93.01 $\pm$ 0.09}\\
		\ALG - Momentum & 92.01 $\pm$ 0.19 & 92.77 $\pm$ 0.18 \\
		\bottomrule	\end{tabular}
	\label{tab:cifar10}
\end{table}
\end{minipage}
\begin{minipage}[t]{0.45\linewidth}
\begin{table}[H] 
	\centering\small
	\caption{Results of \texttt{ResNet-18} on \texttt{CIFAR100}.}
	\begin{tabular}{lc}
		\toprule
		\textbf{Setting} & \texttt{ResNet-18}\\
		\midrule
		SGD & 76.57 $\pm$ 0.24 \\\hdashline
		Adam & 73.40 $\pm$ 0.31 \\\hdashline
		AdamW & 72.51 $\pm$ 0.76 \\\hdashline
		AdaHessian & 75.71 $\pm$ 0.47 \\
		\midrule
		\ALG - Adaptive LR & \bf{76.93 $\pm$ 0.22} \\
		\ALG - Fixed LR & 76.28 $\pm$ 0.21 \\
		\ALG - Momentum & 76.89 $\pm$ 0.34 \\
		\bottomrule
		\\
	\end{tabular}
	\label{tab:cifar100}
\end{table}
\end{minipage}
\end{minipage}
\\[2pt]

{\bf CIFAR-100.}
We use the hyperparameter settings obtained by training on \texttt{CIFAR10}  on \texttt{ResNet-20/32} to train \texttt{CIFAR100} on \texttt{ResNet-18} network structure.\footnote{\href{https://github.com/uoguelph-mlrg/Cutout}{https://github.com/uoguelph-mlrg/Cutout}.} 
We similarly compare the performance of our method and its variants with SGD, Adam, AdamW and AdaHessian. 
The results are shown in Figure \ref{fig:CIFAR_10And_100} (right column) and Table \ref{tab:cifar100}. 
In this setting, without any hyperparameter tuning, fully adaptive version of our algorithm immediately produces results surpassing other state-of-the-art methods especially SGD.
\section{Final Remarks}
\label{sec:finalRemarks}

This paper presents a fully adaptive optimization algorithm for empirical risk minimization. 
The search direction uses the gradient information, well-scaled with a novel Hessian diagonal approximation, which itself can be calculated and stored efficiently. 
In addition, we do not need to tune the learning rate, which instead is automatically updated based on a low-cost approximation of the Lipschitz smoothness parameter. We provide comprehensive theoretical results covering standard optimization settings, including convex, strongly convex and nonconvex; and our empirical results highlight the efficiency of \ALG in large-scale machine learning problems.

Future research avenues include: $(1)$ deriving the theoretical results for stochastic regime with adaptive learning rate; $(2)$ employing variance reduction schemes in order to reduce further the noise in the gradient and Hessian diagonal estimates; and $(3)$ providing a more extensive empirical investigation on other demanding machine learning problems such as those from natural language processing and recommendation system (such as those from the original AdaHessian paper~\cite{yao2020adahessian}).

\section*{\bf{Acknowledgements}} Martin Tak\'a\v c's work was partially supported by the U.S. National Science Foundation, under award numbers NSF:CCF:1618717 and NSF:CCF:1740796. Peter Richt\'arik’s work was supported by the KAUST Baseline Research Funding Scheme.
Michael Mahoney would like to acknowledge the US NSF and ONR via its BRC on RandNLA for providing partial support of this work. 
Our conclusions do not necessarily reflect the position or the policy of our sponsors, and no official endorsement should be inferred.

\bibliographystyle{plain}
\bibliography{references}

\clearpage
\appendix

\section{Theoretical Results and Proofs}\label{sec:ApndxTheorem}
\subsection{Assumptions}
\begin{customassum}{\ref{assum:convex}}(Convex). The function $F$ is convex, i.e.,
$\forall w,\,w' \in \mathbb{R}^d$,
\begin{equation}\label{eq:ConvexApp}
    F(w) \geq F(w') + \langle \nabla F(w'),w-w'\rangle.
\end{equation}
\end{customassum}
\begin{customassum}{\ref{assum:smooth}}
($L-$smooth). The gradients of $F$ are $L-$Lipschitz continuous for all $w \in \mathbb{R}^d$, i.e., there exists a constant $L>0$ such that $\forall w,\,w' \in \mathbb{R}^d$,
\begin{equation}\label{eq:smoothApp}
    F(w) \leq F(w') + \langle\nabla F(w'),w-w'\rangle+\dfrac{L}{2} \|w-w'\|^2.
\end{equation}
\end{customassum}

\begin{customassum}{\ref{assum:2ndDif}}
    The function $F$ is twice continuously differentiable.
\end{customassum}

\begin{customassum}{\ref{assum:StrConvex}}
($\mu-$strongly convex). The function $F$ is $\mu-$strongly convex, i.e., there exists a constant $\mu>0$ such that $\forall w,\,w' \in \mathbb{R}^d$,
\begin{equation}\label{eq:strConvexApp}
    F(w) \geq F(w') + \langle \nabla F(w'),w-w'\rangle+\dfrac{\mu}{2} \|w-w'\|^2.
\end{equation}
\end{customassum}

\begin{customassum}{\ref{assum:bndFun}}
    The function $F(.)$ is bounded below by a scalar $\hat{F}$.
\end{customassum}

\begin{customassum}{ \ref{assum:boundedStochGrad}} There exist a  constant $\gamma$ such that $\mathbb{E}_{\mathcal{I}}[\|\nabla F_{\mathcal{I}}(w) - \nabla F(w)\|^2] \leq \gamma^2$.
\end{customassum}
\begin{customassum}{ \ref{assum:unbiasedEstimGrad}} $\nabla F_{\mathcal{I}}(w)$ is an unbiased estimator of the gradient, i.e., $\mathbb{E}_{\mathcal{I}}[\nabla F_{\mathcal{I}}(w)] = \nabla F(w)$, where the samples $\mathcal{I}$ are drawn independently.
\end{customassum}

\subsection{Proof of Lemma~\ref{lem:bndVk}}
\begin{customlemma}{\ref{lem:bndVk}}(Bound on change of $D_k$).
Suppose that Assumption \ref{assum:smooth} holds, i.e.,
 $\forall w: \nabla^2 f(w) \preceq LI$,
then 
\begin{enumerate}
    \item $|(v_k)_i| \leq \Gamma \leq \sqrt{d} L$,
    where $v_k = z_k \odot \nabla^2 F(w_k) z_k$.
    \label{asdfasfafa1}
    \item there $\exists \delta \leq 2(1-\beta_2) \Gamma$
such that
$$\| D_{k+1} - D_{k}\|_\infty \leq \delta,\quad \forall k.$$
\end{enumerate}
\end{customlemma}
\begin{proof}
By Assumption \ref{assum:smooth}, we have that
$\|\nabla^2 F(w)\|_2 \leq L$
and hence
\begin{align*}
\|v_k\|_\infty 
\leq 
\| \nabla^2 F(w) \|_\infty 
\leq \sqrt{d} \| \nabla^2 F(w) \|_2 \leq \sqrt{d}L ,
\end{align*}
which finishes the proof of case \ref{asdfasfafa1}.

Now, from \eqref{eq:updateD}
we can derive 
\begin{align*}
D_{k+1}-D_k &\overset{\eqref{eq:updateD}}{=} (\beta_2-1) D_k + (1-\beta_2)\  z_k \odot \nabla^2 F(w_k) z_k
\end{align*}
and hence
\begin{align*}
\|D_{k+1}-D_k\|_\infty 
&=(1-\beta_2) \| D_k - \  z_k \odot \nabla^2 F(w_k) z_k
\|_\infty 
\leq (1-\beta_2) \| D_k - v_k
\|_\infty 
\\&\leq (1-\beta_2) 
\left(\| D_k\|_\infty +\| v_k
\|_\infty \right)
\leq 2(1-\beta_2) \Gamma. \qedhere
\end{align*}
\end{proof}

\subsection{Lemma Regarding Smoothness with Weighted Norm}
\begin{lemma}{\label{lem:smoothDifNorm}}(Smoothness with norm $D$) 
Suppose that Assumptions \ref{assum:convex} and  \ref{assum:smooth} hold, and $D \succ 0$, then we have:
\begin{equation}
\label{eq:lipDifNorm}
\|\nabla F(x)-\nabla F(y)\|_D^* 
\leq \tilde{L} \|x-y\|_D,
\end{equation}
where $\tilde{L} = \dfrac{L}{\lambda_{\min}(D)}$.
\end{lemma}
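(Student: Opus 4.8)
The plan is to reduce the weighted-norm Lipschitz bound to the ordinary (Euclidean) Lipschitz bound from Assumption \ref{assum:smooth} by controlling the distortion introduced by the positive-definite matrix $D$. The key observation is that for any vector $u$ one has the two-sided bound $\sqrt{\lambda_{\min}(D)}\,\|u\| \le \|u\|_D \le \sqrt{\lambda_{\max}(D)}\,\|u\|$, and correspondingly for the dual norm $\|u\|_D^* = \|u\|_{D^{-1}}$ one has $\|u\|_D^* \le \tfrac{1}{\sqrt{\lambda_{\min}(D)}}\,\|u\|$. So I would first record these elementary norm-equivalence facts (they follow from diagonalizing $D$).

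Next I would chain the inequalities. Start from the left side and pass to the Euclidean dual norm:
\begin{equation*}
\|\nabla F(x)-\nabla F(y)\|_D^* \;\le\; \frac{1}{\sqrt{\lambda_{\min}(D)}}\,\|\nabla F(x)-\nabla F(y)\|.
\end{equation*}
Then apply Assumption \ref{assum:smooth} in its gradient form, $\|\nabla F(x)-\nabla F(y)\| \le L\|x-y\|$, to obtain $\tfrac{L}{\sqrt{\lambda_{\min}(D)}}\|x-y\|$. Finally bound $\|x-y\|$ from above by $\tfrac{1}{\sqrt{\lambda_{\min}(D)}}\|x-y\|_D$, which gives
\begin{equation*}
\|\nabla F(x)-\nabla F(y)\|_D^* \;\le\; \frac{L}{\lambda_{\min}(D)}\,\|x-y\|_D \;=\; \tilde L\,\|x-y\|_D,
\end{equation*}
as claimed.

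There is no real obstacle here; the only point requiring a little care is the identification of the dual norm $\|\cdot\|_D^*$ with $\|\cdot\|_{D^{-1}}$ and the direction of the eigenvalue bounds (one wants $\lambda_{\min}$, not $\lambda_{\max}$, in both the dual-norm step and the step converting $\|x-y\|$ back to $\|x-y\|_D$, and these happen to compound to give the factor $\lambda_{\min}(D)$ rather than its square root squared in the wrong place). I would also note in passing that Assumptions \ref{assum:convex} and \ref{assum:2ndDif} are not strictly needed for this algebraic estimate — only $L$-smoothness and $D\succ0$ — so the statement could be slightly weakened, but I will simply invoke the hypotheses as given.
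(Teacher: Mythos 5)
Your proof is correct, but it takes a genuinely different and more elementary route than the paper's. You reduce everything to three norm-equivalence facts — $\|u\|_D^* = \|u\|_{D^{-1}} \le \lambda_{\min}(D)^{-1/2}\|u\|$, the Euclidean Lipschitz bound $\|\nabla F(x)-\nabla F(y)\|\le L\|x-y\|$, and $\|x-y\|\le \lambda_{\min}(D)^{-1/2}\|x-y\|_D$ — and chain them, which is clean, requires no convexity, and yields exactly the constant $\tilde L = L/\lambda_{\min}(D)$. The paper instead mimics the proof of Theorem~2.1.5 in Nesterov: it observes that $\tilde L D \succeq LI$ gives a descent lemma in the $D$-metric, applies it to $\phi(y)=F(y)-\langle\nabla F(x),y\rangle$ (whose minimizer is $x$, which is where Assumption~\ref{assum:convex} enters), and derives the weighted co-coercivity inequality $\tfrac{1}{\tilde L}\left(\|\nabla F(y)-\nabla F(x)\|_D^*\right)^2 \le \langle \nabla F(y)-\nabla F(x),\,y-x\rangle$ before finishing with Cauchy--Schwarz. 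What the paper's longer route buys is that strictly stronger co-coercivity inequality as a byproduct (the weighted analogue of the classical $\tfrac1L\|\nabla F(x)-\nabla F(y)\|^2\le\langle\nabla F(x)-\nabla F(y),x-y\rangle$); since only the Lipschitz bound \eqref{eq:lipDifNorm} is recorded and used downstream (in Lemma~\ref{lem:bndOnEtak}), your shorter argument suffices for the stated lemma, and your observation that convexity is not needed for it is accurate. One small caveat: you invoke the gradient form $\|\nabla F(x)-\nabla F(y)\|\le L\|x-y\|$, which is licensed by the prose of Assumption~\ref{assum:smooth} (and is equivalent to the displayed quadratic upper bound under convexity), so the step is sound, but it is worth stating explicitly which form of the assumption you are using since the displayed inequality in \eqref{eq:smooth} is the functional upper bound rather than the gradient bound.
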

\begin{proof}
By the equality $\tilde{L} = \dfrac{L}{\lambda_{\min}(D)}$, we conclude that $\tilde{L} D \succeq LI$ which results in: 
\begin{align}
F(y) \leq F(x)
 + \langle \nabla F(x),
  y-x \rangle
 +\dfrac{\tilde{L}}{2} \|x-y\|_D^2.
 \label{eq:afasdfa}
\end{align}

Extending proof of Theorem 2.1.5. in
\cite{nesterov2013introductory}
we define
$\phi(y) = F(y) - 
\langle\nabla F(x), y\rangle $.
Then, clearly,
$x \in \arg\min \phi(y)$
and
\eqref{eq:afasdfa} is still valid for $\phi(y)$.

Therefore
\begin{align*}
\phi(x) &\leq 
  \phi(y-\frac {D^{-1}}{\tilde{L}} \nabla \phi(y) ),
\\
\phi(x) & \overset{\eqref{eq:afasdfa}}{\leq}
  \phi(y) + 
   \langle
   \nabla   \phi(y),
   -\frac {D^{-1}}{\tilde{L}} \nabla \phi(y)
   \rangle
   +
   \frac {\tilde{L}}2 \| -\frac {D^{-1}}{\tilde{L}} \nabla \phi(y) \|_D^2,
\\
F(x) -\langle\nabla F(x), x \rangle & 
\leq 
  F(y) -\langle\nabla F(x), y \rangle
    -\frac1{\tilde{L}}
   \langle
   \nabla   \phi(y),
     D^{-1}  \nabla \phi(y)
   \rangle
   +
   \frac 1{2\tilde{L}} \|  D^{-1}  \nabla \phi(y) \|_D^2,   
\\
F(x)  & 
\leq 
  F(y)  
  +\langle\nabla F(x), x-y \rangle
    -\frac1{2\tilde{L}}
   (\|\nabla  \phi(y)\|_D^*)^2,
\\
F(x)  & 
\leq 
  F(y)  
  +\langle\nabla F(x), x-y \rangle
    -\frac1{2\tilde{L}}
   (\| \nabla F(y) -\nabla F(x) \|_D^*)^2.
\end{align*}
Thus
\begin{equation}
F(x)
+
\langle\nabla F(x), y-x \rangle
+
\frac1{2\tilde{L}}
   (\| \nabla F(y) -\nabla F(x) \|_D^*)^2
\leq  
  F(y).
  \label{afdafas}
\end{equation}
Adding \eqref{afdafas}
with itself with $x$ swapped with $y$ we obtain
\begin{align*}
\frac1{\tilde{L}}
   (\| \nabla F(y) -\nabla F(x) \|_D^*)^2
&\leq  
\langle\nabla F(y)-\nabla F(x), y-x \rangle
\\
&=
\langle D^{-1} (\nabla F(y)-\nabla F(x)), D(y-x) \rangle
\\
& \leq
\|\nabla F(y)-\nabla F(x))\|_D^*
\|y-x\|_D,
\end{align*}
which implies that 
\begin{align}
\label{wLipCont}
\|\nabla F(x)-\nabla F(y)\|_D^* 
\leq \tilde{L} \|x-y\|_D.
\end{align}

\end{proof}

\subsection{Proof of Theorem~\ref{thm:Adaconx}}
\begin{lemma}\label{lem:bndConv}
Let $f:\mathbb{R}^d\to \mathbb{R}$ be a convex function, and $x^{*}$ is one of the optimal solutions for \eqref{eq:prob}. Then, for the sequence of $\{w_k\}$ generated by Algorithm \ref{alg:OASIS} we have:
\begin{align*}
\|w_{k+1}-w^*\|_{\D{k}}^2 +&\frac12
 \|w_k-w_{k+1}\|^2_{\D{k}}+2\eta_k(1+\theta_k)(F(w_k)-F(w^*))\\
 &\leq\|w_k-w^*\|_{\D{k-1}}^2+\frac12 \|
     w_k -  w_{k-1}  \|^2_{\D{k-1}}
      +2 \eta_k \theta_k
(F(w_{k-1})-F(w^*))+\\
 &2 (1 - \beta_2)\Gamma \Big((\dfrac{\eta_k\theta_k}{\alpha} + \frac12) \|w_{k - 1} - w_{k}\|^2 +( \dfrac{L^2\eta_k\theta_k}{\alpha}+1)\|w_{k} - w^{*}\|^2\Big).\tagthis \label{eq:bndConvLem}
\end{align*}
\end{lemma}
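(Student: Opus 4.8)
The plan is to start from the update rule $w_{k+1} = w_k - \eta_k \D{k}^{-1}\nabla F(w_k)$ and expand the squared weighted norm $\|w_{k+1}-w^*\|_{\D{k}}^2$. Writing $w_{k+1}-w^* = (w_k - w^*) - \eta_k \D{k}^{-1}\nabla F(w_k)$ and using $\|\cdot\|_{\D{k}}^2 = \langle \cdot, \D{k}\,\cdot\rangle$, I get
\[
\|w_{k+1}-w^*\|_{\D{k}}^2 = \|w_k-w^*\|_{\D{k}}^2 - 2\eta_k\langle \nabla F(w_k), w_k - w^*\rangle + \eta_k^2 \|\nabla F(w_k)\|_{\D{k}}^{*2}.
\]
The cross term is handled by convexity (Assumption~\ref{assum:convex}): $\langle \nabla F(w_k), w_k - w^*\rangle \ge F(w_k)-F(w^*)$. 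The quadratic term $\eta_k^2\|\nabla F(w_k)\|^{*2}_{\D{k}}$ is the term that must be absorbed; the idea, following \cite{mishchenko2020adaptive}, is to write $\nabla F(w_k) = (\nabla F(w_k)-\nabla F(w_{k-1})) + \nabla F(w_{k-1})$, use the step rule $\eta_k \le \tfrac{\|w_k-w_{k-1}\|_{\D{k}}}{2\|\nabla F(w_k)-\nabla F(w_{k-1})\|^*_{\D{k}}}$ from \eqref{eq:adLR} to control the difference part by $\tfrac14\|w_k-w_{k-1}\|_{\D{k}}^2$ (hence the $\tfrac12\|w_k-w_{k+1}\|^2_{\D{k}}$ and $\tfrac12\|w_k-w_{k-1}\|^2_{\D{k-1}}$-type terms in the statement), and relate $\eta_k\nabla F(w_{k-1})$ back to $w_{k-1}-w_k$ via the previous step's update together with $\theta_k = \eta_k/\eta_{k-1}$, producing the $2\eta_k\theta_k(F(w_{k-1})-F(w^*))$ term again by convexity at $w_{k-1}$.

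The second ingredient is to pass from norms weighted by $\D{k}$ to norms weighted by $\D{k-1}$, since the right-hand side of the claimed inequality is stated with $\D{k-1}$. Here I would invoke Lemma~\ref{lem:bndVk}: $\|D_k - D_{k-1}\|_\infty \le 2(1-\beta_2)\Gamma$, which by \eqref{eq:DkhatOurAlg} also bounds $\|\D{k}-\D{k-1}\|_\infty$ by the same $\delta \le 2(1-\beta_2)\Gamma$ (truncation by $\max\{\cdot,\alpha\}$ is $1$-Lipschitz componentwise, and $\D{k}$ is diagonal so $\|\cdot\|_\infty = \|\cdot\|_2$ for it). Consequently for any vector $u$, $\big|\|u\|_{\D{k}}^2 - \|u\|_{\D{k-1}}^2\big| \le 2(1-\beta_2)\Gamma\|u\|^2$, and similarly for the dual norms one uses $\alpha I \preceq \D{k}$ (Remark~\ref{rem:posDefBnd}) to get $\|u\|^{*2}_{\D{k}} \le \tfrac{1}{\alpha}\|u\|^2$ and to convert the perturbation $\delta$ in the dual norm into a factor $\delta/\alpha$. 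Applying these swaps to each of the three $\D{k}$-weighted quantities on the left (the $w_k-w^*$ term, the $w_k-w_{k-1}$ term inside the step-size bound estimate, and an $L^2$-type bound on $\|\nabla F(w_k)-\nabla F(w_{k-1})\|^2 \le L^2\|w_k-w_{k-1}\|^2$ coming from $L$-smoothness, Assumption~\ref{assum:smooth}) generates exactly the three error terms $(\tfrac{\eta_k\theta_k}{\alpha}+\tfrac12)\|w_{k-1}-w_k\|^2$ and $(\tfrac{L^2\eta_k\theta_k}{\alpha}+1)\|w_k-w^*\|^2$ multiplied by $2(1-\beta_2)\Gamma$, after also replacing $\eta_k$ by $\eta_k\theta_k$ where the previous-step coupling demands it.

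The main obstacle I anticipate is the careful bookkeeping in the norm-conversion step: the adaptive rule \eqref{eq:adLR} is stated in the $\D{k}$-norm, but chaining inequalities telescopically (as Theorem~\ref{thm:Adaconx} will do) requires the descent quantity to reappear with the \emph{same} preconditioner one index earlier, so every place where I use $\eta_k \le \tfrac{\|w_k-w_{k-1}\|_{\D{k}}}{2\|\nabla F(w_k)-\nabla F(w_{k-1})\|^*_{\D{k}}}$ or where the previous update $w_{k-1}-w_k = -\eta_{k-1}\D{k-1}^{-1}\nabla F(w_{k-1})$ enters, I must track whether the natural norm is $\D{k}$ or $\D{k-1}$ and pay the $2(1-\beta_2)\Gamma\,(\cdot)$ penalty accordingly, using $\eta_k = \theta_k \eta_{k-1}$ to rewrite the coefficients. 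Getting the constants $\tfrac{\eta_k\theta_k}{\alpha}$, $\tfrac12$, $\tfrac{L^2\eta_k\theta_k}{\alpha}$, $1$ to come out exactly as stated — rather than merely up to universal constants — is where the bulk of the routine-but-delicate computation lives; everything else (convexity, $L$-smoothness, Young's inequality to split the quadratic term) is standard.
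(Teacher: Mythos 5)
Your plan follows essentially the same route as the paper's proof: expand $\|w_{k+1}-w^*\|_{\D{k}}^2$, use convexity for the cross term, split $\nabla F(w_k)$ into $(\nabla F(w_k)-\nabla F(w_{k-1}))+\nabla F(w_{k-1})$, control the first piece via the step-size rule \eqref{eq:adLR} and Young's inequality, pull $\nabla F(w_{k-1})$ back through the previous update (which is exactly where the $\D{k-1}\D{k}^{-1}-I$ perturbation and the $\eta_k\theta_k$ coefficients enter), and finally swap $\D{k}$-norms for $\D{k-1}$-norms via Lemma~\ref{lem:bndVk}, paying the $2(1-\beta_2)\Gamma$ penalties. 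One small correction: the $L^2\|w_k-w^*\|^2$ error term does not arise from $\|\nabla F(w_k)-\nabla F(w_{k-1})\|\le L\|w_k-w_{k-1}\|$ as you state, but from bounding $\|\nabla F(w_k)\|=\|\nabla F(w_k)-\nabla F(w^*)\|\le L\|w_k-w^*\|$ after applying Young's inequality to the $\|w_{k-1}-w_k\|\cdot\|\nabla F(w_k)\|$ cross term; as written your bound would yield an extra $\|w_k-w_{k-1}\|^2$ contribution rather than the $\|w_k-w^*\|^2$ term appearing in the statement.
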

 \begin{proof}
We extend the proof in \cite{mishchenko2020adaptive}.
We have 
\begin{align*}
\|w_{k+1}-w^*\|_{\D{k}}^2
&=
\|w_{k+1}-w_k+w_k-w^*\|_{\D{k}}^2
\\
&=
\|w_{k+1}-w_k\|_{\D{k}}^2
+
\|w_k-w^*\|_{\D{k}}^2
+
2\langle w_{k+1}-w_k,{\D{k}}(w_k-w^*) \rangle
\\
&\overset{}{=}
\|w_{k+1}-w_k\|_{\D{k}}^2
+
\|w_k-w^*\|_{\D{k}}^2
+
2\eta_k \langle   \nabla F(w_k), w^*-w_k  \rangle
\\
&\leq \tagthis \label{eq:asdfsafafda1}
\|w_{k+1}-w_k\|_{\D{k}}^2
+
\|w_k-w^*\|_{\D{k}}^2
+
2\eta_k (F(w^*) - F(w_k))
\\
&=
\textcolor{black}{\|w_{k+1}-w_k\|_{\D{k}}^2}
+
\|w_k-w^*\|_{\D{k}}^2
-
2\eta_k (F(w_k)-F(w^*)),
\end{align*}
where the third equality comes from the \ALG's step and the inequality follows from convexity of $F(w)$. Now, let's focus on $\|w_{k+1}-w_k\|_{\D{k}}^2$.
We have
\begin{align*}
\textcolor{black}{\|w_{k+1}-w_k\|_{\D{k}}^2}
=&
2\|w_{k+1}-w_k\|_{\D{k}}^2
-\|w_{k+1}-w_k\|_{\D{k}}^2
\\
=&
2\langle
 -\eta_k {\D{k}}^{-1}  \nabla F(w_k), 
 {\D{k}}(w_{k+1}-w_k)
 \rangle
-\|w_{k+1}-w_k\|_{\D{k}}^2
\\
=&
-2 \eta_k
\langle
     \nabla F(w_k), 
 w_{k+1}-w_k 
 \rangle
-\|w_{k+1}-w_k\|_{\D{k}}^2
\\
=&
-2 \eta_k
\langle
     \nabla F(w_k)-\nabla F(w_{k-1}), 
 w_{k+1}-w_k 
 \rangle
-2 \eta_k
\langle
     \nabla F(w_{k-1}), 
 w_{k+1}-w_k 
 \rangle
-\\&\|w_{k+1}-w_k\|_{\D{k}}^2
\\
=&
\textcolor{black}{2 \eta_k
\langle
     \nabla F(w_k)-\nabla F(w_{k-1}), 
 w_k-w_{k+1} 
 \rangle}
+\textcolor{black}{2 \eta_k
\langle
     \nabla F(w_{k-1}), 
 w_k-w_{k+1} 
 \rangle}
-\\&\|w_{k+1}-w_k\|_{\D{k}}^2.
\end{align*}
Now, 
\begin{align*}
\textcolor{black}{2 \eta_k
\langle
     \nabla F(w_k)-\nabla F(w_{k-1}), 
 w_k-w_{k+1} 
 \rangle}
&\overset{}{\leq} 
2 \eta_k
\|
     \nabla F(w_k)-\nabla F(w_{k-1}) \|_{\D{k}}^* 
 \|w_k-w_{k+1}\|_{\D{k}}
\\
&\overset{\eqref{eq:adLR}}{\leq}
\|
     w_k -  w_{k-1}  \|_{\D{k}} 
 \|w_k-w_{k+1}\|_{\D{k}}
\\
&\leq
\frac12 \|
     w_k -  w_{k-1}  \|^2_{\D{k}}
     +\frac12
 \|w_k-w_{k+1}\|^2_{\D{k}},
\end{align*}
where the first inequality comes from Cauchy-Schwarz and the third one follows Young's inequality. Further,
\begin{align*}
\langle
     \nabla F(w_{k-1}), 
 w_k-w_{k+1} 
 \rangle
 \overset{}{=}&
\frac{1}{\eta_{k-1}}
\langle
\D{k-1} (w_{k-1}-w_{k}), 
 w_k-w_{k+1} 
 \rangle
\\
 \overset{}{=}&
\frac{1}{\eta_{k-1}}
\langle
\D{k-1} (w_{k-1}-w_{k}), 
 \eta_k
      \D{k}^{-1} \nabla F(w_k) 
 \rangle
\\
 =&
\frac{\eta_k}{\eta_{k-1}}
(w_{k-1}-w_{k})^T
\D{k-1}   
      \D{k}^{-1} \nabla F(w_k) 
\\
 =&
\frac{\eta_k}{\eta_{k-1}}
(w_{k-1}-w_{k})^T
 \nabla F(w_k) 
+\\&
\frac{\eta_k}{\eta_{k-1}}
(w_{k-1}-w_{k})^T
\left(
\D{k-1}   
      \D{k}^{-1}-I \right) \nabla F(w_k),\tagthis\label{eq:tmp22}
\end{align*}
where the first two qualities are due the \ALG's update step.
The second term in the above equality can be upperbounded as follows:

\begin{align*}
    (w_{k-1}-w_{k})^T
\left(
\D{k-1}   
      \D{k}^{-1}-I \right) \nabla F(w_k)
&\leq (1 - \beta_2) \frac{2 \Gamma}{\alpha} \cdot \|w_{k-1}-w_{k}\| \cdot \|\nabla F(w_k)\|,\tagthis\label{eq:tmp3_main}     
\end{align*}

where multiplier on the left is obtained via the bound on $\|\cdot\|_\infty$ norm of the diagonal matrix $\D{k-1} \D{k}^{-1}-I$:

\begin{align*}
    \| \D{k-1} \D{k}^{-1} - I \|_\infty
    = 
    \max_{i} \left| \left( \D{k-1} \D{k}^{-1} - I \right)_i \right|
    =
    \max_{i} \big| \big( \D{k-1} - \D{k} \big)_{i} \big| \big| \big( \D{k}^{-1} \big)_i \big|
    \leq 
    (1 - \beta_2) \frac{2 \Gamma}{\alpha},
\end{align*}
which also represents a bound on the operator norm of the same matrix difference. Next we can use the Young's inequality and $\nabla F(w^{*}) = 0$ to get
\begin{align*}
    (1 - \beta_2) \frac{2 \Gamma}{\alpha} \cdot \|w_{k-1}-w_{k}\| \cdot \|\nabla F(w_k)\|
    \leq
    (1 - \beta_2) \frac{\Gamma}{\alpha} \big( \|w_{k - 1} - w_{k}\|^2 + L^2 \|w_{k} - w^{*}\|^2 \big).\tagthis\label{eq:tmp44} 
\end{align*}

Therefore, we have
\begin{align*}
\langle
     \nabla F(w_{k-1}), 
 w_k-w_{k+1} 
 \rangle \leq & \frac{\eta_k}{\eta_{k-1}}
(w_{k-1}-w_{k})^T
 \nabla F(w_k) +\\& (1 - \beta_2) \frac{\Gamma}{\alpha} \big( \|w_{k - 1} - w_{k}\|^2 + L^2 \|w_{k} - w^{*}\|^2).
 \end{align*}
 Also,
 \begin{align*}
\textcolor{black}{\|w_{k+1}-w_k\|_{\D{k}}^2} &\leq \frac12 \|
     w_k -  w_{k-1}  \|^2_{\D{k}}
     +\frac12
 \|w_k-w_{k+1}\|^2_{\D{k}} +2 \eta_k \theta_k
(w_{k-1}-w_{k})^T
 \nabla F(w_k) +\\
 &2 \eta_k \theta_k(1 - \beta_2) \frac{\Gamma}{\alpha} \big( \|w_{k - 1} - w_{k}\|^2 + L^2 \|w_{k} - w^{*}\|^2)-\|w_{k+1}-w_k\|_{\D{k}}^2\\
 &\leq \frac12 \|
     w_k -  w_{k-1}  \|^2_{\D{k}}
     +\frac12
 \|w_k-w_{k+1}\|^2_{\D{k}} +2 \eta_k \theta_k
(F(w_{k-1})-F(w_{k}))+\\
 &2 \eta_k \theta_k(1 - \beta_2) \frac{\Gamma}{\alpha} \big( \|w_{k - 1} - w_{k}\|^2 + L^2 \|w_{k} - w^{*}\|^2)-\|w_{k+1}-w_k\|_{\D{k}}^2.
\end{align*}
 
 Finally, we have
 \begin{align*}
\|w_{k+1}-w^*\|_{\D{k}}^2 &\leq \frac12 \|
     w_k -  w_{k-1}  \|^2_{\D{k}}
     +\frac12
 \|w_k-w_{k+1}\|^2_{\D{k}} +2 \eta_k \theta_k
(F(w_{k-1})-F(w_{k}))+\\
 &2 \eta_k \theta_k(1 - \beta_2) \frac{\Gamma}{\alpha} \big( \|w_{k - 1} - w_{k}\|^2 + L^2 \|w_{k} - w^{*}\|^2)-\|w_{k+1}-w_k\|_{\D{k}}^2\\
&+\|w_k-w^*\|_{\D{k}}^2
-
2\eta_k (F(w_k)-F(w^*)).
\end{align*}

By simplifying the above inequality, we have:
 \begin{align*}
\|w_{k+1}-w^*\|_{\D{k}}^2 +&\frac12
 \|w_k-w_{k+1}\|^2_{\D{k}}+2\eta_k(1+\theta_k)(F(w_k)-F(w^*))\\
 &\leq\|w_k-w^*\|_{\D{k}}^2+\frac12 \|
     w_k -  w_{k-1}  \|^2_{\D{k}}
      +2 \eta_k \theta_k
(F(w_{k-1})-F(w^*))+\\
 &2 \eta_k \theta_k(1 - \beta_2) \frac{\Gamma}{\alpha} \big( \|w_{k - 1} - w_{k}\|^2 + L^2 \|w_{k} - w^{*}\|^2)\\&=
\|w_k-w^*\|_{\D{k-1}}^2+\frac12 \|
     w_k -  w_{k-1}  \|^2_{\D{k-1}}
      +2 \eta_k \theta_k
(F(w_{k-1})-F(w^*))+\\
 &2 \eta_k \theta_k(1 - \beta_2) \frac{\Gamma}{\alpha} \big( \|w_{k - 1} - w_{k}\|^2 + L^2 \|w_{k} - w^{*}\|^2)+\\
 &\|w_k-w^*\|_{\D{k}-\D{k-1}}^2+\frac12 \|
     w_k -  w_{k-1}  \|^2_{\D{k}-\D{k-1}} \\
 &\leq \|w_k-w^*\|_{\D{k-1}}^2+\frac12 \|
     w_k -  w_{k-1}  \|^2_{\D{k-1}}
      +2 \eta_k \theta_k
(F(w_{k-1})-F(w^*))+\\
 &2 \eta_k \theta_k(1 - \beta_2) \frac{\Gamma}{\alpha} \big( \|w_{k - 1} - w_{k}\|^2 + L^2 \|w_{k} - w^{*}\|^2)+\\
 &2(1-\beta_2)\Gamma(\|w_k-w^*\|^2+\frac12 \|
     w_k -  w_{k-1}  \|^2)\\
 &=\|w_k-w^*\|_{\D{k-1}}^2+\frac12 \|
     w_k -  w_{k-1}  \|^2_{\D{k-1}}
      +2 \eta_k \theta_k
(F(w_{k-1})-F(w^*))+\\
 &2 (1 - \beta_2)\Gamma \Big(\dfrac{\eta_k\theta_k}{\alpha} \|w_{k - 1} - w_{k}\|^2 + \dfrac{L^2\eta_k\theta_k}{\alpha}\|w_{k} - w^{*}\|^2+\|w_k-w^*\|^2+\\&\frac12 \|
     w_k -  w_{k-1}  \|^2 \Big).
\end{align*}

\end{proof}

\begin{customthm}{\ref{thm:Adaconx}}
Suppose that Assumptions \ref{assum:convex}, \ref{assum:smooth} and \ref{assum:2ndDif} hold. Let $\{w_k\}$ be the iterates generated by Algorithm \ref{alg:OASIS}, then we have:
$$F(\hat{w}_k)-F^* \leq \dfrac{LC}{k} + 2L(1-\beta_2)\Gamma\dfrac{Q_k}{k},$$
where \begin{align*}
    C&=\|w_1-w^*\|_{\D{0}}^2+\frac12 \|
     w_1 -  w_{0}  \|^2_{\D{0}}
      +2 \eta_1 \theta_1
(F(w_{0})-F(w^*))\\
Q_k &= \sum_{i=1}^{k}\Big((\dfrac{\eta_i\theta_i}{\alpha} + \frac12) \|w_{i - 1} - w_{i}\|^2 +( \dfrac{L^2\eta_i\theta_i}{\alpha}+1)\|w_{i} - w^{*}\|^2\Big).
\end{align*}
\end{customthm}

\begin{proof}
By telescoping inequality \eqref{eq:bndConvLem} in Lemma \ref{lem:bndConv} we have:
\begin{align*}
\|w_{k+1}-w^*\|_{\D{k}}^2 +&\frac12
 \|w_k-w_{k+1}\|^2_{\D{k}}+2\eta_k(1+\theta_k)(F(w_k)-F(w^*))\\
 & +2 \sum_{i=1}^{k-1}[\eta_i(1+\theta_i) - \eta_{i+1}\theta_{i+1}](F(w_k)-F(w^*))\\
 &\leq\underbrace{\|w_1-w^*\|_{\D{0}}^2+\frac12 \|
     w_1 -  w_{0}  \|^2_{\D{0}}
      +2 \eta_1 \theta_1
(F(w_{0})-F(w^*))}_{C}+\\
 &2 (1 - \beta_2)\Gamma \underbrace{\sum_{i=1}^{k}\Big((\dfrac{\eta_i\theta_i}{\alpha} + \frac12) \|w_{i - 1} - w_{i}\|^2 +( \dfrac{L^2\eta_i\theta_i}{\alpha}+1)\|w_{i} - w^{*}\|^2\Big)}_{Q_k}.
\end{align*}
Moreover, by the rule for adaptive learning rule we know $\eta_i(1+\theta_i) - \eta_{i+1}\theta_{i+1} \geq 0,\,\, \forall i$. Therefore, we have
\begin{align*}
2\eta_k(1+\theta_k)(F(w_k)-F(w^*))+&2 \sum_{i=1}^{k-1}[\eta_i(1+\theta_i) - \eta_{i+1}\theta_{i+1}](F(w_k)-F(w^*))\\
 &\leq C+
 2 (1 - \beta_2)\Gamma Q_k.
\end{align*}

By setting $\hat{w} = \dfrac{\eta_k(1+\theta_k)w_k+ \sum_{i=1}^{k-1}(\eta_i(1+\theta_i) - \eta_{i+1}\theta_{i+1})w_i}{S_k}$, where $S_k := \eta_k(1+\theta_k) + \sum_{i=1}^{k-1}(\eta_i(1+\theta_i) - \eta_{i+1}\theta_{i+1})$, and by using Jensens's inequality, we have:
$$F(\hat{w}_k)-F^* \leq \dfrac{C}{2S_k} + (1-\beta_2)\Gamma\dfrac{Q_k}{S_k}.$$

By the fact that $\eta_k \geq \dfrac{1}{2L}$ thus $\dfrac{1}{S_k} \leq \dfrac{2L}{k}$, we have
$$F(\hat{w}_k)-F^* \leq \dfrac{LC}{k} + 2L(1-\beta_2)\Gamma\dfrac{Q_k}{k}.$$
\end{proof}
\subsection{Proof of Lemma~\ref{lem:bndOnEtak}}
\begin{customlemma}{\ref{lem:bndOnEtak}} Suppose that Assumptions \ref{assum:smooth}, \ref{assum:2ndDif} and \ref{assum:StrConvex}  hold, then $\eta_k \in \Big[\dfrac{\alpha}{2L},\dfrac{\Gamma}{2\mu}\Big]$.
\end{customlemma}
\begin{proof}
By \eqref{eq:lipDifNorm} and the point that $\tilde{L} = \dfrac{L}{\lambda_{\min}(\D{k})}$, we conclude that:
\begin{align*}
    \frac{\|w_k - w_{k-1}\|_{\D{k}}}
        {2\|
         \nabla F(w_k)-\nabla F(w_{k-1}) \|^*_{\D{k}}} &\geq \dfrac{1}{2\tilde{L}}  = \dfrac{\lambda_{\min}(\D{k})}{2L}  \geq \dfrac{\alpha}{2L}.
\end{align*}
Now, in order to find the upperbound for $\frac{\|w_k - w_{k-1}\|_{\D{k}}}
        {2\|
         \nabla F(w_k)-\nabla F(w_{k-1}) \|^*_{\D{k}}}$, we use the following inequality which comes from Assumption \ref{assum:StrConvex}: 
 \begin{equation}
    F(w_k) \geq F(w_{k-1}) + \langle \nabla F(w_{k-1}),(w_k-w_{k-1}) \rangle+\dfrac{\mu}{2} \|w-w_{k-1}\|^2.
\end{equation}
By setting $\tilde{\mu} = \dfrac{\mu}{\lambda_{\max}(\D{k})}$, we conclude that $\tilde{\mu} \D{k} \preceq \mu I$ which results in:
\begin{equation}
    F(w_k) \geq F(w_{k-1}) + \langle \nabla F(w_{k-1}),(w_k-w_{k-1}) \rangle+\dfrac{\tilde{\mu}}{2} \|w-w_{k-1}\|^2_{\D{k}}.
\end{equation}
The above inequality results in: 
\begin{align*}
\tilde{\mu} \|w-w_{k-1}\|^2_{\D{k}} &\leq \langle \nabla F(w_{k}) - \nabla F(w_{k-1}),(w_k-w_{k-1})\rangle \\
&= \langle \D{k}^{-1} (\nabla F(w_{k})-\nabla F(w_{k-1})), \D{k}(w_{k}-w_{k-1}) \rangle
\\
& \leq
\|\nabla F(w_{k})-\nabla F(w_{k-1}))\|^*_\D{k}
\|w_{k}-w_{k-1}\|_\D{k}.
\tagthis \label{eq:asdasfas}
\end{align*}
Therefore, we obtain that 
\begin{align*}
    \frac{\|w_k - w_{k-1}\|_{\D{k}}}
        {2\|
         \nabla F(w_k)-\nabla F(w_{k-1}) \|^*_{\D{k}}} &\overset{\eqref{eq:asdasfas}}{\leq} \dfrac{1}{2\tilde{\mu}}  = \dfrac{\lambda_{\max}(\D{k})}{2\mu}  \leq \dfrac{\Gamma}{2\mu},
\end{align*}
and therefore, by the update rule for $\eta_k$, we conclude that $\eta_k \in \Big[\dfrac{\alpha}{2L},\dfrac{\Gamma}{2\mu}\Big]$.
\end{proof}

\subsection{Proof of Theorem~\ref{thm:strConvxAda}}
\begin{lemma}\label{lem:decEnergy}.
Suppose Assumptions \ref{assum:smooth}, \ref{assum:2ndDif} and \ref{assum:StrConvex} hold and let $w^{*}$ be the unique solution of \eqref{eq:prob}. Then for $(w_{k})$ generated by Algorithm \ref{alg:OASIS} we have:

\begin{align*}
    &\|w_{k + 1} - w^{*}\|_{\D{k}}^2 + \frac12 \|w_{k + 1} - w_{k}\|_{\D{k}}^2 + 2\eta_{k}(1 + \theta_{k}) (F(w_{k}) - F(w^{*}))
    \\
    &\leq 
    \|w_{k} - w^{*}\|_{\D{k - 1}}^2 + \frac12 \|w_{k} - w_{k - 1}\|_{\D{k - 1}}^2 + 2 \eta_{k} \theta_{k}(F(w_{k - 1}) - F(w^{*}))
    \\
    &\quad    + \left(
        (1 - \beta_2) \Gamma \bigg( 1 + \frac{2 \theta_{k} \eta_{k}}{\alpha} \bigg) 
        - \mu \eta_{k} \theta_{k} 
    \right) \|w_{k} - w_{k - 1}\|^2
    \\
    &\quad + \left(
        (1 - \beta_2) \Gamma \bigg( 2 + \frac{2 L^{2} \theta_{k} \eta_{k}}{\alpha} \bigg) 
        - \mu \eta_{k}  
    \right)\|w_{k} - w^{*}\|^2.
\end{align*}
\end{lemma}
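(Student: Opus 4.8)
The plan is to reprise the proof of Lemma~\ref{lem:bndConv} essentially line for line, the only substantive change being that at the two points where that proof uses plain convexity (Assumption~\ref{assum:convex}) I would instead use $\mu$-strong convexity (Assumption~\ref{assum:StrConvex}); the two extra curvature terms produced there are precisely the $-\mu\eta_k\|w_k-w^*\|^2$ and $-\mu\eta_k\theta_k\|w_k-w_{k-1}\|^2$ that appear in the statement.

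Concretely, I would begin from the expansion $\|w_{k+1}-w^*\|_{\D{k}}^2 = \|w_{k+1}-w_k\|_{\D{k}}^2 + \|w_k-w^*\|_{\D{k}}^2 + 2\eta_k\langle\nabla F(w_k),w^*-w_k\rangle$, obtained from the update step of \ALG. For the inner product I would apply strong convexity to get $2\eta_k\langle\nabla F(w_k),w^*-w_k\rangle \le 2\eta_k\bigl(F(w^*)-F(w_k)\bigr) - \mu\eta_k\|w_k-w^*\|^2$, which already contributes the first extra term. I would then treat $\|w_{k+1}-w_k\|_{\D{k}}^2$ exactly as in Lemma~\ref{lem:bndConv}: write it via the update, apply Cauchy--Schwarz, the step-size rule \eqref{eq:adLR}, and Young's inequality to the part involving $\nabla F(w_k)-\nabla F(w_{k-1})$; and rewrite the part involving $\nabla F(w_{k-1})$ as $\frac{\eta_k}{\eta_{k-1}}(w_{k-1}-w_k)^T\nabla F(w_k)$ plus the preconditioner-mismatch term $\frac{\eta_k}{\eta_{k-1}}(w_{k-1}-w_k)^T(\D{k-1}\D{k}^{-1}-I)\nabla F(w_k)$. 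The mismatch term is bounded exactly as in \eqref{eq:tmp3_main}--\eqref{eq:tmp44}: use $\|\D{k-1}\D{k}^{-1}-I\|_\infty \le 2(1-\beta_2)\Gamma/\alpha$ (from Lemma~\ref{lem:bndVk}), $\nabla F(w^*)=0$ with $L$-smoothness so $\|\nabla F(w_k)\|\le L\|w_k-w^*\|$, and Young's inequality, yielding $2\eta_k\theta_k(1-\beta_2)\tfrac{\Gamma}{\alpha}\bigl(\|w_{k-1}-w_k\|^2+L^2\|w_k-w^*\|^2\bigr)$. The single genuine modification is the term $2\eta_k\theta_k(w_{k-1}-w_k)^T\nabla F(w_k)$: instead of bounding it by $2\eta_k\theta_k(F(w_{k-1})-F(w_k))$ via convexity, I would use strong convexity to bound it by $2\eta_k\theta_k(F(w_{k-1})-F(w_k)) - \mu\eta_k\theta_k\|w_{k-1}-w_k\|^2$, which gives the second extra term.

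It remains to convert the $\D{k}$-weighted norms of $w_k-w^*$ and $w_k-w_{k-1}$ on the right-hand side into $\D{k-1}$-weighted norms via $\|x\|_{\D{k}}^2 = \|x\|_{\D{k-1}}^2 + \|x\|_{\D{k}-\D{k-1}}^2$ and $\|\D{k}-\D{k-1}\| \le 2(1-\beta_2)\Gamma$ from Lemma~\ref{lem:bndVk}(2), exactly as in the closing display of the proof of Lemma~\ref{lem:bndConv}. Collecting coefficients then gives $2(1-\beta_2)\Gamma\bigl(\tfrac{\eta_k\theta_k}{\alpha}+\tfrac12\bigr)-\mu\eta_k\theta_k = (1-\beta_2)\Gamma\bigl(1+\tfrac{2\theta_k\eta_k}{\alpha}\bigr)-\mu\eta_k\theta_k$ in front of $\|w_k-w_{k-1}\|^2$, and $2(1-\beta_2)\Gamma\bigl(\tfrac{L^2\eta_k\theta_k}{\alpha}+1\bigr)-\mu\eta_k = (1-\beta_2)\Gamma\bigl(2+\tfrac{2L^2\theta_k\eta_k}{\alpha}\bigr)-\mu\eta_k$ in front of $\|w_k-w^*\|^2$, matching the claimed bound, with the $F$-value bookkeeping $2\eta_k(1+\theta_k)(F(w_k)-F^*)$ versus $2\eta_k\theta_k(F(w_{k-1})-F^*)$ carried over unchanged. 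Since all the structural difficulties --- the weighted-norm algebra, the adaptive step-size rule, and the preconditioner mismatch --- are already resolved in Lemma~\ref{lem:bndConv}, I do not expect a real obstacle here; the only care needed is bookkeeping, to ensure the two strong-convexity contributions attach to the correct squared norms with the stated constants and that combining the mismatch bound, the norm-conversion error, and the $-\mu$ terms produces exactly the two coefficients above.
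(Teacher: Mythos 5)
Your proposal is correct and follows essentially the same route as the paper's proof: start from the same expansion of $\|w_{k+1}-w^*\|_{\D{k}}^2$, upgrade the two convexity steps of Lemma~\ref{lem:bndConv} to strong convexity (producing exactly the $-\mu\eta_k\|w_k-w^*\|^2$ and $-\mu\eta_k\theta_k\|w_k-w_{k-1}\|^2$ terms), bound the preconditioner mismatch via \eqref{eq:tmp3}--\eqref{eq:tmp4}, and convert the $\D{k}$-norms to $\D{k-1}$-norms using Lemma~\ref{lem:bndVk}. The collected coefficients match the paper's statement exactly.
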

\begin{proof}
By the update rule in Algorithm \ref{alg:OASIS} we have:
\begin{align*}
\|w_{k+1}-w^*\|_{\D{k}}^2
&=
\|w_{k+1}-w_k+w_k-w^*\|_{\D{k}}^2
\\
&=
\|w_{k+1}-w_k\|_{\D{k}}^2
+
\|w_k-w^*\|_{\D{k}}^2
+
2\langle w_{k+1}-w_k,{\D{k}}(w_k-w^*) \rangle
\\
&\overset{}{=}
\|w_{k+1}-w_k\|_{\D{k}}^2
+
\|w_k-w^*\|_{\D{k}}^2
+
2\eta_k \langle   \nabla F(w_k), w^*-w_k  \rangle
\\
&\leq \tagthis \label{eq:asdfsafafda}
\|w_{k+1}-w_k\|_{\D{k}}^2
+
\|w_k-w^*\|_{\D{k}}^2
+
2\eta_k (F(w^*) - F(w_k))
\\
&=
\|w_{k+1}-w_k\|_{\D{k}}^2
+
\|w_k-w^*\|_{\D{k}}^2
-
2\eta_k (F(w_k)-F(w^*)),
\end{align*}
where the inequality follows from convexity of $F(w)$. By strong convexity of $F(.)$ we have:
\begin{equation}
    \label{eq:sc}
F(w^*) \geq F(w_k) 
   + \langle \nabla F(w_k), w^*-w_k \rangle
   + \frac{\mu}{2} \|w_k - w^*\|^2.
\end{equation} 
In the lights of the strongly convex inequality we can change
\eqref{eq:asdfsafafda}
as follows
\begin{align*}
\|w_{k+1}-w^*\|_{\D{k}}^2
&\overset{\eqref{eq:asdfsafafda},\eqref{eq:sc}}{\leq}
\|w_{k+1}-w_k\|_{\D{k}}^2
+
\|w_k-w^*\|_{\D{k}}^2
+
2\eta_k (F(w^*) - F(w_k) -\frac{\mu}{2} \|w_k-w^*\|^2 )
\\
&=
\|w_{k+1}-w_k\|_{\D{k}}^2
+
\|w_k-w^*\|_{\D{k}}^2
-
2\eta_k (F(w_k)-F(w^*))
- \mu \eta_k \|w_k-w^*\|^2.
\tagthis \label{eq:asdfasfdasf}
\end{align*}

Now, let's focus on $\|w_{k+1}-w_k\|_{\D{k}}^2$.
We have
\begin{align*}
\|w_{k+1}-w_k\|_{\D{k}}^2
=&
2\|w_{k+1}-w_k\|_{\D{k}}^2
-\|w_{k+1}-w_k\|_{\D{k}}^2
\\
=&
2\langle
 -\eta_k {\D{k}}^{-1}  \nabla F(w_k), 
 {\D{k}}(w_{k+1}-w_k)
 \rangle
-\|w_{k+1}-w_k\|_{\D{k}}^2
\\
=&
-2 \eta_k
\langle
     \nabla F(w_k), 
 w_{k+1}-w_k 
 \rangle
-\|w_{k+1}-w_k\|_{\D{k}}^2
\\
=&
-2 \eta_k
\langle
     \nabla F(w_k)-\nabla F(w_{k-1}), 
 w_{k+1}-w_k 
 \rangle
-2 \eta_k
\langle
     \nabla F(w_{k-1}), 
 w_{k+1}-w_k 
 \rangle
-\\&\|w_{k+1}-w_k\|_{\D{k}}^2
\\
=&
2 \eta_k
\langle
     \nabla F(w_k)-\nabla F(w_{k-1}), 
 w_k-w_{k+1} 
 \rangle
+2 \eta_k
\langle
     \nabla F(w_{k-1}), 
 w_k-w_{k+1} 
 \rangle
-\\&\|w_{k+1}-w_k\|_{\D{k}}^2.
 \tagthis\label{eq:asdfasfdasfas2}
\end{align*}
Now, 
\begin{align*}
2 \eta_k
\langle
     \nabla F(w_k)-\nabla F(w_{k-1}), 
 w_k-w_{k+1} 
 \rangle
&\leq 
2 \eta_k
\|
     \nabla F(w_k)-\nabla F(w_{k-1}) \|^*_{\D{k}} 
 \|w_k-w_{k+1}\|_{\D{k}}
\\
&\leq
\|
     w_k -  w_{k-1}  \|_{\D{k}} 
 \|w_k-w_{k+1}\|_{\D{k}}
\\
&\leq
\frac12 \|
     w_k -  w_{k-1}  \|_{\D{k}}^2
     +\frac12
 \|w_k-w_{k+1}\|_{\D{k}}^2,
 \tagthis\label{eq:asdfasfdasfas}
\end{align*}
where the first inequality is due to Cauchy–Schwarz inequality, and the second inequality comes from the choice of learning rate such that $\eta_k \leq \frac{\|w_k - w_{k-1}\|_{\D{k}}}
        {2\|
         \nabla F(w_k)-\nabla F(w_{k-1}) \|^*_{\D{k}}}$. 
By plugging  \eqref{eq:asdfasfdasfas}
into  \eqref{eq:asdfasfdasfas2}
we obtain:
\begin{align*}
\|w_{k+1}-w_k\|_{\D{k}}^2
\overset{\eqref{eq:asdfasfdasfas2}}{\leq}&
\frac12 \|
     w_k -  w_{k-1}  \|_{\D{k}}^2
     +\frac12
 \|w_k-w_{k+1}\|_{\D{k}}^2
+2 \eta_k
\langle
     \nabla F(w_{k-1}), 
 w_k-w_{k+1} 
 \rangle
-\\&\|w_{k+1}-w_k\|^2_{\D{k}}.
\tagthis\label{eq:asfasdfdasfa}
\end{align*}

Now, we can summarize that
\begin{align*}
\|w_{k+1}-w^*\|_{\D{k}}^2
\overset{\eqref{eq:asdfasfdasf}}{\leq}&
\|w_{k+1}-w_k\|_{\D{k}}^2
+
\|w_k-w^*\|_{\D{k}}^2
-
2\eta_k (F(w_k)-F(w^*))
- \mu \eta_k \|w_k-w^*\|^2
\\
\overset{\eqref{eq:asfasdfdasfa}}{\leq}&
\frac12 \|
     w_k -  w_{k-1}  \|_{\D{k}}^2
     +\frac12
 \|w_k-w_{k+1}\|_{\D{k}}^2
+2 \eta_k
\langle
     \nabla F(w_{k-1}), 
 w_k-w_{k+1} 
 \rangle-
 \\&
\|w_{k+1}-w_k\|^2_{\D{k}}
+
\|w_k-w^*\|_{\D{k}}^2
-
2\eta_k (F(w_k)-F(w^*))
- \mu \eta_k \|w_k-w^*\|^2
\\
=& 
\frac12 \|
     w_k -  w_{k-1}  \|_{\D{k}}^2
     -\frac12
 \|w_k-w_{k+1}\|_{\D{k}}^2
+2 \eta_k
\langle
     \nabla F(w_{k-1}), 
 w_k-w_{k+1} 
 \rangle+
 \\&
\|w_k-w^*\|_{\D{k}}^2
-
2\eta_k (F(w_k)-F(w^*))
- \mu \eta_k \|w_k-w^*\|^2
\\
=& 
\frac12 \|
     w_k -  w_{k-1}  \|^2_{\D{k-1}}
+
\|w_k-w^*\|^2_{ \D{k-1} }     
     -\frac12
 \|w_k-w_{k+1}\|_{\D{k}}^2
+ \\&\quad 2 \eta_k
\langle
     \nabla F(w_{k-1}), 
 w_k-w_{k+1} 
 \rangle
-
2\eta_k (F(w_k)-F(w^*))+
\\
& \frac12 \|
     w_k -  w_{k-1}  \|^2_{\D{k}-\D{k-1}}
     +
  \|w_k-w^*\|^2_{  \D{k}-\D{k-1}  } 
  - \mu \eta_k \|w_k-w^*\|^2.\tagthis\label{eq:tmp1}
\end{align*}
Next, let us bound 
$\langle
     \nabla F(w_{k-1}), 
 w_k-w_{k+1} 
 \rangle$.

By the \ALG's update rule, $w_{k+1}=w_k - \eta_k \D{k}^{-1} \nabla F(w_k)$, we have, 
\begin{align*}
\langle
     \nabla F(w_{k-1}), 
 w_k-w_{k+1} 
 \rangle
 =&
\frac{1}{\eta_{k-1}}
\langle
\D{k-1} (w_{k-1}-w_{k}), 
 w_k-w_{k+1} 
 \rangle
\\
 =&
\frac{1}{\eta_{k-1}}
\langle
\D{k-1} (w_{k-1}-w_{k}), 
 \eta_k
      \D{k}^{-1} \nabla F(w_k) 
 \rangle
\\
 =&
\frac{\eta_k}{\eta_{k-1}}
(w_{k-1}-w_{k})^T
\D{k-1}   
      \D{k}^{-1} \nabla F(w_k) 
\\
 =&
\frac{\eta_k}{\eta_{k-1}}
(w_{k-1}-w_{k})^T
 \nabla F(w_k) 
+\\&
\frac{\eta_k}{\eta_{k-1}}
(w_{k-1}-w_{k})^T
\left(
\D{k-1}   
      \D{k}^{-1}-I \right) \nabla F(w_k).\tagthis\label{eq:tmp2}
\end{align*}

The second term in the above equality can be bounded from above as follows:

\begin{align*}
    (w_{k-1}-w_{k})^T
\left(
\D{k-1}   
      \D{k}^{-1}-I \right) \nabla F(w_k)
&\leq (1 - \beta_2) \frac{2 \Gamma}{\alpha} \cdot \|w_{k-1}-w_{k}\| \cdot \|\nabla F(w_k)\|,\tagthis\label{eq:tmp3}     
\end{align*}

where multiplier on the left is obtained via the bound on $\|\cdot\|_\infty$ norm of the diagonal matrix $\D{k-1} \D{k}^{-1}-I$:

\begin{align*}
    \| \D{k-1} \D{k}^{-1} - I \|_\infty
    = 
    \max_{i} \left| \left( \D{k-1} \D{k}^{-1} - I \right)_i \right|
    =
    \max_{i} \big| \big( \D{k-1} - \D{k} \big)_{i} \big| \big| \big( \D{k}^{-1} \big)_i \big|
    \leq 
    (1 - \beta_2) \frac{2 \Gamma}{\alpha},
\end{align*}
which also represents a bound on the operator norm of the same matrix difference. Next we can use the Young's inequality and $\nabla F(w^{*}) = 0$ to get
\begin{align*}
    (1 - \beta_2) \frac{2 \Gamma}{\alpha} \cdot \|w_{k-1}-w_{k}\| \cdot \|\nabla F(w_k)\|
    \leq
    (1 - \beta_2) \frac{\Gamma}{\alpha} \big( \|w_{k - 1} - w_{k}\|^2 + L^2 \|w_{k} - w^{*}\|^2 \big).\tagthis\label{eq:tmp4} 
\end{align*}
By the inequalities \eqref{eq:tmp1}, \eqref{eq:tmp2}, \eqref{eq:tmp3} and \eqref{eq:tmp4}, and the definition $\theta_k = \dfrac{\eta_k}{\eta_{k-1}}$ we have:

\begin{align*}
    &\|w_{k + 1} - w^{*}\|_{\D{k}}^2 + \frac12 \|w_{k + 1} - w_{k}\|_{\D{k}}^2 + 2\eta_{k}(1 + \theta_{k}) (F(w_{k}) - F(w^{*}))
    \\
    &\leq
    \|w_{k} - w^{*}\|_{\D{k - 1}}^2 + \frac12 \|w_{k} - w_{k - 1}\|_{\D{k - 1}}^2 + 2 \eta_{k} \theta_{k}(F(w_{k - 1}) - F(w^{*}))
    \\
    &\quad + \left( 
        \frac12 \|w_{k} - w_{k - 1}\|_{\D{k} - \D{k - 1}}^2
        + (1 - \beta_2) \frac{2 \Gamma \theta_{k} \eta_{k}}{\alpha} \|w_{k} - w_{k - 1}\|^2
        - \mu \eta_{k} \theta_{k} \|w_{k} - w_{k - 1}\|^2 
    \right)
    \\
    &\quad + \left(
        \|w_{k} - w^{*}\|_{\D{k} - \D{k - 1}}^2
        + (1 - \beta_2) \frac{2 \Gamma L^{2} \theta_{k} \eta_{k}}{\alpha} \|w_{k} - w^{*}\|^2
        - \mu \eta_{k} \|w_{k} - w^{*}\|^2
    \right)
    \\
&    \leq 
    \|w_{k} - w^{*}\|_{\D{k - 1}}^2 + \frac12 \|w_{k} - w_{k - 1}\|_{\D{k - 1}}^2 + 2 \eta_{k} \theta_{k}(F(w_{k - 1}) - F(w^{*}))
    \\
 &\quad    + \left(
        (1 - \beta_2) \Gamma \bigg( 1 + \frac{2 \theta_{k} \eta_{k}}{\alpha} \bigg) 
        - \mu \eta_{k} \theta_{k} 
    \right) \|w_{k} - w_{k - 1}\|^2
    \\
&\quad     + \left(
        (1 - \beta_2) \Gamma \bigg( 2 + \frac{2 L^{2} \theta_{k} \eta_{k}}{\alpha} \bigg) 
        - \mu \eta_{k}  
    \right)\|w_{k} - w^{*}\|^2.
\end{align*}

\end{proof}

\begin{customthm}{\ref{thm:strConvxAda}} Suppose that Assumptions \ref{assum:smooth}, \ref{assum:StrConvex}, and \ref{assum:2ndDif} hold and let $w^*$ be the unique solution for \eqref{eq:prob}.  Let $\{w_k\}$ be the iterates generated by Algorithm \ref{alg:OASIS}. Then, for all $k\geq 0$ we have:
If $
    \beta_2 \geq \max\{ 1 - \dfrac{\alpha^4 \mu^4}{4 L^2 \Gamma^2 (\alpha^2 \mu^2 + L \Gamma^2)},  1 - \dfrac{\alpha^3 \mu^3}{4 L \Gamma (2 \alpha^2 \mu^2 + L^3 \Gamma^2)} \}
$
\begin{equation}\label{eq:thmStrConvxAda_appndix}
    \Psi^{k+1}\leq (1-\dfrac{\alpha^2}{2 \Gamma^2 \kappa^{2}})\Psi^{k},
\end{equation}
where
\begin{align*}
    \Psi^{k+1} = \|w_{k + 1} - w^{*}\|_{\D{k}}^2 + \frac12 \|w_{k + 1} - w_{k}\|_{\D{k}}^2 + 2\eta_{k}(1 + \theta_{k}) (F(w_{k}) - F(w^{*})).
\end{align*}
\end{customthm}

\begin{proof}
By Lemma \ref{lem:decEnergy}, we have:
\begin{align*}
    &\|w_{k + 1} - w^{*}\|_{\D{k}}^2 + \frac12 \|w_{k + 1} - w_{k}\|_{\D{k}}^2 + 2\eta_{k}(1 + \theta_{k}) (F(w_{k}) - F(w^{*}))
    \\
    &\leq 
    \|w_{k} - w^{*}\|_{\D{k - 1}}^2 + \frac12 \|w_{k} - w_{k - 1}\|_{\D{k - 1}}^2 + 2 \eta_{k} \theta_{k}(F(w_{k - 1}) - F(w^{*}))
    \\
    &\quad    + \left(
        (1 - \beta_2) \Gamma \bigg( 1 + \frac{2 \theta_{k} \eta_{k}}{\alpha} \bigg) 
        - \mu \eta_{k} \theta_{k} 
    \right) \|w_{k} - w_{k - 1}\|^2
    \\
    &\quad + \left(
        (1 - \beta_2) \Gamma \bigg( 2 + \frac{2 L^{2} \theta_{k} \eta_{k}}{\alpha} \bigg) 
        - \mu \eta_{k}  
    \right)\|w_{k} - w^{*}\|^2.
\end{align*}

Lemma \ref{lem:bndOnEtak} gives us $\eta_k \in \Big[\dfrac{\alpha}{2L},\dfrac{\Gamma}{2\mu}\Big]$, so we can choose large enough $\beta_2 \in (0,1)$ such that
\begin{align*}
\left(
        (1 - \beta_2) \Gamma \bigg( 1 + \frac{2 \theta_{k} \eta_{k}}{\alpha} \bigg) 
        - \mu \eta_{k} \theta_{k} 
    \right) &\leq 0,\\
    \left(
        (1 - \beta_2) \Gamma \bigg( 2 + \frac{2 L^{2} \theta_{k} \eta_{k}}{\alpha} \bigg) 
        - \mu \eta_{k}  
    \right)&\leq 0. 
\end{align*}
In other words, we have the following bound for $\beta_2$:
\begin{align}
    \beta_2 \geq \max\{ 1 - \dfrac{\alpha^4 \mu^4}{2 L^2 \Gamma^2 (\alpha^2 \mu^2 + L \Gamma^2)},  1 - \dfrac{\alpha^3 \mu^3}{2 L \Gamma (2 \alpha^2 \mu^2 + L^3 \Gamma^2)} \}.
\end{align}

However, we can push it one step further, by requiring a stricter inequality to hold, to get a recursion which can allows us to derive a linear convergence rate as follows

\begin{align*}
\left(
        (1 - \beta_2) \Gamma \bigg( 1 + \frac{2 \theta_{k} \eta_{k}}{\alpha} \bigg) 
        - \mu \eta_{k} \theta_{k} 
    \right) &\leq - \dfrac{1}{2} \mu \eta_{k} \theta_{k},\\
    \left(
        (1 - \beta_2) \Gamma \bigg( 2 + \frac{2 L^{2} \theta_{k} \eta_{k}}{\alpha} \bigg) 
        - \mu \eta_{k}  
    \right)&\leq - \dfrac{1}{2} \mu \eta_{k},
\end{align*}

which requires a correspondingly stronger bound on $\beta_2$:
\begin{align}
    \beta_2 \geq \max\{ 1 - \dfrac{\alpha^4 \mu^4}{4 L^2 \Gamma^2 (\alpha^2 \mu^2 + L \Gamma^2)},  1 - \dfrac{\alpha^3 \mu^3}{4 L \Gamma (2 \alpha^2 \mu^2 + L^3 \Gamma^2)} \}.
\end{align}

Combining this with the condition for $\eta_{k}$ and definition for $\theta_{k}$, we obtain

\begin{align*}
    \left(
        (1 - \beta_2) \Gamma \bigg( 1 + \frac{2 \theta_{k} \eta_{k}}{\alpha} \bigg) 
        - \mu \eta_{k} \theta_{k} 
    \right)\|w_k-w_{k-1}\|^2 &\leq -\frac12 \mu \eta_{k} \theta_{k} \|w_k-w_{k-1}\|^2 \\&\leq -\dfrac{\alpha^2}{4 \Gamma^2 \kappa^2} \|w_k-w_{k-1}\|^2_{\D{k-1}},
\end{align*}
and
\begin{align*}
\left(
        (1 - \beta_2) \Gamma \bigg( 2 + \frac{2 L^{2} \theta_{k} \eta_{k}}{\alpha} \bigg) 
        - \mu \eta_{k}  
    \right)\|w_{k} - w^{*}\|^2 &\leq -\frac12 \mu \eta_{k} \|w_{k} - w^{*}\|^2 \\&\leq -\dfrac{\alpha}{4 \Gamma \kappa} \|w_{k} - w^{*}\|^2_{\D{k-1}}.
\end{align*}
where $\kappa = \frac{L}{\mu}$. Using it to supplement the main statement of the lemma, we get

\begin{align*}
    &\|w_{k + 1} - w^{*}\|_{\D{k}}^2 + \frac12 \|w_{k + 1} - w_{k}\|_{\D{k}}^2 + 2\eta_{k}(1 + \theta_{k}) (F(w_{k}) - F(w^{*}))\\
&    \leq 
    (1-\dfrac{\alpha}{4 \Gamma \kappa})\|w_{k} - w^{*}\|_{\D{k - 1}}^2 + \frac12(1-\dfrac{\alpha^2}{2 \Gamma^2 \kappa^{2}}) \|w_{k} - w_{k - 1}\|_{\D{k - 1}}^2 + 2 \eta_{k} \theta_{k}(F(w_{k - 1}) - F(w^{*})).
\end{align*}

Mirroring the result in \cite{mishchenko2020adaptive} we get a contraction in all terms, since for function value differences we also can show

\begin{align*}
    \dfrac{\eta_k \theta_k}{\eta_k(1+\theta_k)} = 1 - \dfrac{\eta_k }{\eta_k(1 + \theta_k)} \leq 1 - \dfrac{\alpha}{2 \Gamma \kappa}.
\end{align*}

\end{proof}

\subsection{Proof of Theorem~\ref{thm:fixLRDetStrConv}}
\begin{customthm}{\ref{thm:fixLRDetStrConv}}
Suppose that Assumptions \ref{assum:smooth}, \ref{assum:StrConvex}, and \ref{assum:2ndDif}  hold, and let $F^*=F(w^*)$ where $w^*$ is the unique minimizer.  Let $\{w_k\}$ be the iterates generated by Algorithm \ref{alg:OASIS}, where $0<\eta_k = \eta \leq \dfrac{\alpha^2}{L\Gamma}$, and $w_0$ is a starting point. Then, for all $k\geq 0$ we have:

\begin{align}
    F(w_k) - F^* \leq (1-\dfrac{\eta \mu}{\Gamma})^k[F(w_0)-F^*].
\end{align}
\end{customthm}
\begin{proof}
By smoothness of $F(.)$ we have:
\begin{align}
    F(w_{k+1}) &= F(w_k - \eta \hat D_k^{-1} \nabla f(w_k))\nonumber\\
    &\leq F(w_k) + \nabla F(w_k)^T(-\eta \hat D_k^{-1} \nabla f(w_k)) + \dfrac{L}{2} \|\eta \hat D_k^{-1} \nabla f(w_k)\|^2 \nonumber\\
    & \leq F(w_k) - \dfrac{\eta}{\Gamma} \|\nabla F(w_k)\|^2 + \dfrac{\eta^2L}{2\alpha^2}\|\nabla F(w_k)\|^2\nonumber\\
    & = F(w_k) - \eta(\dfrac{1}{\Gamma}-\dfrac{\eta L}{2\alpha^2}) \|\nabla F(w_k)\|^2 \label{eq:det2}\\
    & \leq F(w_k) - \eta \dfrac{1}{2\Gamma} \|\nabla F(w_k)\|^2\label{eq:det1},
\end{align}
where the first inequality comes from Assumption \ref{assum:smooth}, and the second inequality is due to Remark \ref{rem:posDefBnd}, and finally, the last inequality is by the choice of $\eta$. Since $F(.)$ is strongly convex, we have $2\mu(F(w_k) - F^*) \leq \|\nabla F(w_k)\|^2$, and therefore,
\begin{equation*}
    F(w_{k+1}) \leq F(w_k) -\dfrac{\eta \mu}{\Gamma} (F(w_k) - F^*).
\end{equation*}
Also, we have:
\begin{equation*}
    F(w_{k+1})-F^* \leq (1-\dfrac{\eta \mu}{\Gamma}) (F(w_k) - F^*).
\end{equation*}
\end{proof}

\subsection{Proof of Theorem~\ref{thm:fixLRDetNonconv}}

\begin{customthm}{\ref{thm:fixLRDetNonconv}}
Suppose that Assumptions \ref{assum:2ndDif},  \ref{assum:bndFun} and \ref{assum:smooth} hold.  Let $\{w_k\}$ be the iterates generated by Algorithm \ref{alg:OASIS}, where $0<\eta_k = \eta \leq \dfrac{\alpha^2}{L\Gamma}$, and $w_0$ is a starting point. Then, for all $T>1$ we have:
\begin{equation}
    \dfrac{1}{T} \sum_{k=1}^T \|\nabla F(w_k)\|^2 \leq \dfrac{2\Gamma [F(w_0)-\hat{F}]}{\eta T}\xrightarrow[]{T \rightarrow \infty}0.
\end{equation}
\end{customthm}

\begin{proof}
By starting from \eqref{eq:det1}, we have:
\begin{equation}
    F(w_{k+1}\leq F(w_k) -  \dfrac{\eta}{2\Gamma} \|\nabla F(w_k)\|^2.
\end{equation}
By summing both sides of the above inequality from $k=0$ to $T-1$ we have:
\begin{align*} 
	\sum_{k=0}^{T-1}(F(w_{k+1}) - F(w_k)) \leq  - \sum_{k=0}^{T-1}\dfrac{\eta}{2\Gamma} \| \nabla F(w_k) \|^2. 
\end{align*}
By simplifying the left-hand-side of the above inequality, we have
\begin{align*}
	\sum_{k=0}^{T-1} \left[F(w_{k+1})-F(w_k) \right]  &= F(w_{T})-F(w_0)
	\geq \widehat{F} -F(w_0),
\end{align*}
where the inequality is due to $\hat{F} \leq F(w_T)$ (Assumption \ref{assum:bndFun}). Using the above, we have
\begin{align}   \label{eq:int_nonconvex}
    \sum_{k=0}^{T-1} \| \nabla F(w_k) \|^2  &  \leq \frac{2\Gamma[ F(w_0) - \widehat{F}]}{\eta}.
\end{align}
\end{proof}

\subsection{Proof of Theorem~\ref{thm:stochSCNewV2}}
\begin{lemma}[see   Lemma 1 \cite{nguyen2018sgd} or 
Lemma 2.4 \cite{gower2019sgd}]
Assume that $\forall i$ the function $F_i(w)$ is convex and L-smooth.
Then, $\forall w \in R^d$ the following hold:
\begin{equation}
\label{eq:boundOnSecondMomentOfStochasticGrad}
 \mathbb{E}_{\mathcal{I}}[\|\nabla F_{\mathcal{I}}(w) \|^2] \leq 
 4 L (F(w) - F(w^*))+ 2 \sigma^2.
\end{equation}
\end{lemma}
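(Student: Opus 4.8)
The plan is to reduce the second-moment bound to the standard co-coercivity inequality for convex smooth functions, which is already available in the excerpt as \eqref{afdafas} (specialized to $D = I$, so that $\tilde L = L$ and the weighted norm becomes the Euclidean norm). The first move is a variance-style split: for any vectors $a,b$ one has $\|a\|^2 \le 2\|a - b\|^2 + 2\|b\|^2$, so taking $a = \nabla F_{\mathcal{I}}(w)$ and $b = \nabla F_{\mathcal{I}}(w^*)$ and then applying $\mathbb{E}_{\mathcal{I}}[\cdot]$ gives
\begin{equation*}
\mathbb{E}_{\mathcal{I}}[\|\nabla F_{\mathcal{I}}(w)\|^2]
\le 2\,\mathbb{E}_{\mathcal{I}}[\|\nabla F_{\mathcal{I}}(w) - \nabla F_{\mathcal{I}}(w^*)\|^2]
+ 2\,\mathbb{E}_{\mathcal{I}}[\|\nabla F_{\mathcal{I}}(w^*)\|^2].
\end{equation*}
The second term is immediately controlled by Assumption~\ref{assum:boundedStochGradAtOptimum}, contributing the $2\sigma^2$ of the statement, so everything hinges on bounding the first term by $4L(F(w) - F(w^*))$.

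For the first term I would exploit that $F_{\mathcal{I}} = \frac{1}{|\mathcal{I}|}\sum_{i\in\mathcal{I}}F_i$ is itself convex and $L$-smooth, being an average of convex $L$-smooth functions (the lemma's hypothesis), so the derivation behind Lemma~\ref{lem:smoothDifNorm} applies verbatim to $F_{\mathcal{I}}$. Hence the co-coercivity inequality \eqref{afdafas} (with $D=I$) applies to $F_{\mathcal{I}}$ with the base point $x = w^*$ and $y = w$, yielding
\begin{equation*}
\tfrac{1}{2L}\|\nabla F_{\mathcal{I}}(w) - \nabla F_{\mathcal{I}}(w^*)\|^2
\le F_{\mathcal{I}}(w) - F_{\mathcal{I}}(w^*) - \langle \nabla F_{\mathcal{I}}(w^*), w - w^*\rangle.
\end{equation*}

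The key step is then taking the expectation over $\mathcal{I}$ and collapsing the right-hand side. Using the unbiasedness of the sampling, $\mathbb{E}_{\mathcal{I}}[F_{\mathcal{I}}(\cdot)] = F(\cdot)$ and $\mathbb{E}_{\mathcal{I}}[\nabla F_{\mathcal{I}}(w^*)] = \nabla F(w^*)$, the cross term $\langle \mathbb{E}_{\mathcal{I}}[\nabla F_{\mathcal{I}}(w^*)], w - w^*\rangle$ becomes $\langle \nabla F(w^*), w - w^*\rangle = 0$ since $w^*$ minimizes $F$. This leaves $\mathbb{E}_{\mathcal{I}}[\|\nabla F_{\mathcal{I}}(w) - \nabla F_{\mathcal{I}}(w^*)\|^2] \le 2L(F(w) - F(w^*))$, and substituting back into the split produces exactly $4L(F(w)-F(w^*)) + 2\sigma^2$.

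The main obstacle---really the only non-mechanical point---is justifying and correctly orienting the co-coercivity bound: one must invoke \eqref{afdafas} with the minimizer as the base point so that the gradient-difference term carries the right sign and the residual collapses to $F(w)-F(w^*)$ once the cross term vanishes in expectation. Per-sample convexity is precisely what makes \eqref{afdafas} available for each summand, and hence for the average $F_{\mathcal{I}}$; dropping convexity would invalidate the inequality. A minor care point is that the argument requires $\mathbb{E}_{\mathcal{I}}[F_{\mathcal{I}}(w)] = F(w)$, i.e.\ the batch is drawn so as to make the function-value estimate unbiased, which is consistent with the uniform sampling underlying Assumption~\ref{assum:unbiasedEstimGrad}.
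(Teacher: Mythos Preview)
Your proof is correct. The paper does not actually prove this lemma; it merely cites it from \cite{nguyen2018sgd} and \cite{gower2019sgd}, and your argument is precisely the standard one found in those references: split via $\|a\|^2\le 2\|a-b\|^2+2\|b\|^2$ with $b=\nabla F_{\mathcal I}(w^*)$, bound the second term by Assumption~\ref{assum:boundedStochGradAtOptimum}, and control the first via the smoothness--convexity co-coercivity inequality (your use of \eqref{afdafas} with $D=I$) applied to the convex $L$-smooth batch function $F_{\mathcal I}$, after which unbiasedness and $\nabla F(w^*)=0$ kill the cross term in expectation.
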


\begin{theorem} \label{thm:stochSCNew}
Suppose that Assumptions
\ref{assum:smooth}, \ref{assum:2ndDif},  \ref{assum:StrConvex}, \ref{assum:boundedStochGradAtOptimum} and \ref{assum:unbiasedEstimGrad} hold. 
Let parameters $\alpha >0$, $\eta>0$, $\beta_2 \in (0,1]$   are chosen such that  
$\alpha < \frac{2 \Gamma L}{\mu}$,
$\eta \leq \frac{\alpha}{2L}$,
$\beta_2  
  >  1-\frac{\eta\mu \alpha}
   {2\Gamma (\Gamma - \eta\mu)    } > 0$. 
 Let $\{w_k\}$ be the iterates generated by Algorithm \ref{alg:OASIS},  
   then, for all $k \geq 0$, 
\begin{equation}
\Exp[\| w_k - w^*  \|_{\D{k}}^2]    \leq (1-c)^k
 \| r_{0}  \|_{\D{0}}^2
+ 
(1+\tfrac{2(1-\beta_2) \Gamma}{\alpha})
 \tfrac{2 \sigma^2 \eta^2}{\alpha c},  
\end{equation}
where 
$c = \frac{\eta\mu \alpha - (\Gamma - \eta\mu) 2(1-\beta_2)  \Gamma   }{  \Gamma \alpha } \in (0,1)$.

\end{theorem}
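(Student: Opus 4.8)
Write $r_k:=w_k-w^*$ and $F^*:=F(w^*)$, and recall the spectral bounds $\alpha I\preceq\D{k}\preceq\Gamma I$ (Remark~\ref{rem:posDefBnd}). The plan is to derive a one-step inequality for $\Exp[\|r_k\|_{\D{k}}^2]$ and unroll it. Starting from the stochastic update $w_{k+1}=w_k-\eta\D{k}^{-1}\nabla F_{\mathcal{I}_k}(w_k)$ and using $(\D{k}^{-1}v)^{T}\D{k}r=v^{T}r$ and $\|\D{k}^{-1}v\|_{\D{k}}^2=v^{T}\D{k}^{-1}v$, I would expand
\begin{align*}
\|r_{k+1}\|_{\D{k}}^2=\|r_k\|_{\D{k}}^2-2\eta\langle\nabla F_{\mathcal{I}_k}(w_k),r_k\rangle+\eta^2\,\nabla F_{\mathcal{I}_k}(w_k)^{T}\D{k}^{-1}\nabla F_{\mathcal{I}_k}(w_k).
\end{align*}
Since $\mathcal{I}_k$ is sampled independently of $\D{k}$, conditioning on the history up to and including $\D{k}$ and invoking unbiasedness (Assumption~\ref{assum:unbiasedEstimGrad}) replaces the middle term by $-2\eta\langle\nabla F(w_k),r_k\rangle$.

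For this cross term I would use strong convexity (Assumption~\ref{assum:StrConvex}) with $\nabla F(w^*)=0$, giving $\langle\nabla F(w_k),r_k\rangle\geq (F(w_k)-F^*)+\tfrac{\mu}{2}\|r_k\|^2$. For the quadratic term, $\D{k}\succeq\alpha I$ gives $\nabla F_{\mathcal{I}_k}(w_k)^{T}\D{k}^{-1}\nabla F_{\mathcal{I}_k}(w_k)\leq\tfrac1\alpha\|\nabla F_{\mathcal{I}_k}(w_k)\|^2$, and the expected-smoothness lemma cited from \cite{nguyen2018sgd,gower2019sgd} (with Assumption~\ref{assum:boundedStochGradAtOptimum}) bounds $\Exp[\|\nabla F_{\mathcal{I}_k}(w_k)\|^2]\leq 4L(F(w_k)-F^*)+2\sigma^2$. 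Substituting, the two terms proportional to $F(w_k)-F^*$ combine to $-2\eta\bigl(1-\tfrac{2\eta L}{\alpha}\bigr)(F(w_k)-F^*)\leq 0$, the sign being exactly what $\eta\leq\tfrac{\alpha}{2L}$ guarantees, so this may be discarded; then $\|r_k\|^2\geq\tfrac1\Gamma\|r_k\|_{\D{k}}^2$ (from $\D{k}\preceq\Gamma I$) yields, after total expectation,
\begin{align*}
\Exp[\|r_{k+1}\|_{\D{k}}^2]\leq\Bigl(1-\tfrac{\eta\mu}{\Gamma}\Bigr)\Exp[\|r_k\|_{\D{k}}^2]+\tfrac{2\eta^2\sigma^2}{\alpha}.
\end{align*}

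It remains to pass to the $\D{k+1}$-norm. Pathwise, $\|r_{k+1}\|_{\D{k+1}}^2=\|r_{k+1}\|_{\D{k}}^2+r_{k+1}^{T}(\D{k+1}-\D{k})r_{k+1}\leq\bigl(1+\tfrac{2(1-\beta_2)\Gamma}{\alpha}\bigr)\|r_{k+1}\|_{\D{k}}^2$, using Lemma~\ref{lem:bndVk} (so $\|\D{k+1}-\D{k}\|_\infty\leq 2(1-\beta_2)\Gamma$) and $\|r_{k+1}\|^2\leq\tfrac1\alpha\|r_{k+1}\|_{\D{k}}^2$. Combining the last two displays and setting $X_k:=\Exp[\|r_k\|_{\D{k}}^2]$ gives $X_{k+1}\leq(1-c)X_k+\bigl(1+\tfrac{2(1-\beta_2)\Gamma}{\alpha}\bigr)\tfrac{2\eta^2\sigma^2}{\alpha}$, where $1-c=\bigl(1+\tfrac{2(1-\beta_2)\Gamma}{\alpha}\bigr)\bigl(1-\tfrac{\eta\mu}{\Gamma}\bigr)$, which expands to exactly $c=\tfrac{\eta\mu\alpha-(\Gamma-\eta\mu)2(1-\beta_2)\Gamma}{\Gamma\alpha}$. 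The hypothesis $\beta_2>1-\tfrac{\eta\mu\alpha}{2\Gamma(\Gamma-\eta\mu)}$ is precisely $c>0$ (and $\Gamma-\eta\mu>0$ follows from $\eta\leq\tfrac{\alpha}{2L}$, $\alpha\leq\Gamma$, $\mu\leq L$), whereas $\alpha<\tfrac{2\Gamma L}{\mu}$ together with $\eta\leq\tfrac{\alpha}{2L}$ forces $\tfrac{\eta\mu}{\Gamma}<1$ and hence $c<1$. Unrolling and summing $\sum_{j\geq0}(1-c)^j=1/c$ produces the stated bound.

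The main obstacle is the two-norm bookkeeping that couples the descent estimate (which naturally lives in the $\D{k}$-norm) with the preconditioner drift $\D{k}\to\D{k+1}$: the multiplicative loss $1+\tfrac{2(1-\beta_2)\Gamma}{\alpha}$ must be small enough not to swamp the $\tfrac{\eta\mu}{\Gamma}$ contraction, which is exactly what forces the lower bound on $\beta_2$ (and is where Lemma~\ref{lem:bndVk} is essential). A secondary point is that using the sharp expected-smoothness bound — whose $F(w_k)-F^*$ term cancels against $-2\eta(F(w_k)-F^*)$ once $\eta\leq\alpha/(2L)$ — is what keeps the step-size restriction this mild; a cruder $L^{2}\|r_k\|^2$-type bound would demand a considerably smaller $\eta$. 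The remaining items (independence of $\mathcal{I}_k$ and $\D{k}$, and the geometric-series arithmetic) are routine.
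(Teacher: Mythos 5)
Your proposal is correct and follows essentially the same route as the paper's proof: the same expansion of $\|r_{k+1}\|_{\D{k}}^2$, strong convexity for the cross term, the expected-smoothness bound $4L(F(w_k)-F^*)+2\sigma^2$ cancelling the $-2\eta(F(w_k)-F^*)$ term under $\eta\leq\alpha/(2L)$, the drift factor $1+\tfrac{2(1-\beta_2)\Gamma}{\alpha}$ from Lemma~\ref{lem:bndVk}, and the same constant $c$ with geometric unrolling. The only (immaterial) difference is that the paper applies the norm-change factor before expanding the update rather than after.
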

\begin{remark}
If we choose
$\beta_2  
  :=  1-\frac{ \eta\mu \alpha}
   {4\Gamma (\Gamma - \eta\mu)    } > 0$
then
\begin{align*}
\Exp[\| w_k - w^*\|_{\D{k}}^2]
&\leq   
\left(
1 -\tfrac{ \eta\mu }
   {4\Gamma      }
\right)^k
 \| w_0 - w^*  \|_{\D{0}}^2
+ 
4
\tfrac{   2 \Gamma - \eta\mu  }
       {   (\Gamma - \eta\mu) \alpha}
\tfrac{  \Gamma }{\mu   }  \sigma^2 \eta.
\end{align*}
\end{remark}
 
\begin{proof} 
In order to bound $\|r_{k+1}\|^2_{\D{k+1}}$
by
$\|r_{k+1}\|^2_{\D{k}}$
we will use that
\begin{align*}
0 \prec  \D{k+1}
&=
\D{k}+\D{k+1}-\D{k}
\preceq  
\D{k}+  \|\D{k+1}-\D{k}\|_\infty I
\preceq 
\D{k}+  \|\D{k+1}-\D{k}\|_\infty I
\\
&\preceq
\D{k}+  \|D_{k+1}-D_{k}\|_\infty I
\preceq
\D{k}+ 2 (1-\beta_2) \Gamma  I
\\&\preceq
\D{k}+ 2 (1-\beta_2) \Gamma  \tfrac1\alpha \D{k}
\preceq
 (1+\tfrac{2(1-\beta_2) \Gamma}{\alpha})
 \D{k}. \tagthis 
 \label{eq:CoundOnDk}
\end{align*}

Then
\begin{align*}
\| r_{k+1}\|_{\D{k+1}}^2
 &\overset{\eqref{eq:CoundOnDk}}{\leq}
 (1+\tfrac{2(1-\beta_2) \Gamma}{\alpha})
 \| r_{k+1}\|_{\D{k}}^2
=  
(1+\tfrac{2(1-\beta_2) \Gamma}{\alpha})
 \| r_{k} - \eta \D{k}^{-1} \nabla F_{\mathcal{I}_k}(w_k) \|_{\D{k}}^2
\\
&=
(1+\tfrac{2(1-\beta_2) \Gamma}{\alpha})
 \| r_{k}  \|_{\D{k}}^2
-
2\eta (1+\tfrac{2(1-\beta_2) \Gamma}{\alpha})
 \langle r_{k} , \nabla F_{\mathcal{I}_k}(w_k) \rangle    
\\&\quad + 
(1+\tfrac{2(1-\beta_2) \Gamma}{\alpha})
 \eta^2 \left(\|     \nabla F_{\mathcal{I}_k}(w_k) \|_{\D{k}}^*\right)^2 
\\
&\leq 
(1+\tfrac{2(1-\beta_2) \Gamma}{\alpha})
 \| r_{k}  \|_{\D{k}}^2
-
2\eta (1+\tfrac{2(1-\beta_2) \Gamma}{\alpha})
 \langle r_{k} , \nabla F_{\mathcal{I}_k}(w_k) \rangle    
\\&\quad + 
(1+\tfrac{2(1-\beta_2) \Gamma}{\alpha})
 \frac{\eta^2}{\alpha}  \|     \nabla F_{\mathcal{I}_k}(w_k) \| ^2.
 \tagthis \label{eq:boundOnRkpo}
 \end{align*}
Now, taking an expectation with respect to $\mathcal{I}_k$ conditioned on the past, we obtain
\begin{align*}
\Exp[\| r_{k+1}\|_{\D{k+1}}^2]
&\overset{\eqref{eq:boundOnRkpo}}{\leq }
(1+\tfrac{2(1-\beta_2) \Gamma}{\alpha})
 \| r_{k}  \|_{\D{k}}^2
+
2\eta (1+\tfrac{2(1-\beta_2) \Gamma}{\alpha})
 \left(
   F^*-F(w_k) - \frac{\mu}{2} \|w_k - w^*\|^2
 \right)
\\&\quad + 
(1+\tfrac{2(1-\beta_2) \Gamma}{\alpha})
 \frac{\eta^2}{\alpha}  
 \left( 4L (F(w_k)-F^*) + 2 \sigma^2
 \right)
\\ 
& \leq  
(1+\tfrac{2(1-\beta_2) \Gamma}{\alpha})
 \| r_{k}  \|_{\D{k}}^2
+
2\eta (1+\tfrac{2(1-\beta_2) \Gamma}{\alpha})
 \left(
   F^*-F(w_k) - \frac{\mu }{2 \Gamma } \|r_k\|^2_{\D{k}}
 \right)
\\&\quad + 
(1+\tfrac{2(1-\beta_2) \Gamma}{\alpha})
 \frac{\eta^2}{\alpha}  
 \left(
  4L (F(w_k)-F^*) + 2 \sigma^2
 \right)
\\ 
& =  
(1+\tfrac{2(1-\beta_2) \Gamma}{\alpha})
\left(
1 -2\eta \frac{\mu }{2 \Gamma }
\right)
 \| r_{k}  \|_{\D{k}}^2
\\&\quad + 
(1+\tfrac{2(1-\beta_2) \Gamma}{\alpha})
\left(
4L \tfrac{\eta^2}{\alpha} - 2 \eta 
\right)
 \left(
   (F(w_k)-F^*)  
 \right)
+ 
(1+\tfrac{2(1-\beta_2) \Gamma}{\alpha})
 \tfrac{\eta^2}{\alpha}  
  2 \sigma^2
\\ 
&\leq   
(1+\tfrac{2(1-\beta_2) \Gamma}{\alpha})
\left(
1 - \eta \frac{\mu }{  \Gamma }
\right)
 \| r_{k}  \|_{\D{k}}^2
+ 
(1+\tfrac{2(1-\beta_2) \Gamma}{\alpha})
 \tfrac{\eta^2}{\alpha}  
  2 \sigma^2,\tagthis \label{eq:rkfinal}
\end{align*}
where we used the fact that
$2L \tfrac{\eta }{\alpha} - 1    \leq 0
\rightarrow
 \eta \leq \frac{\alpha}{2L}$.
In order to achieve a convergence we need to have  
$$
(1+\tfrac{2(1-\beta_2) \Gamma}{\alpha})
\left(
1 -  \frac{\mu\eta }{  \Gamma }
\right)
=: 1- c < 1.
$$
We have 
\begin{align*}
(1+\tfrac{2(1-\beta_2) \Gamma}{\alpha})
\left(
1 -  \frac{\eta\mu }{  \Gamma }
\right)
& =
1 
-\underbrace{  \frac{\eta\mu \alpha - (\Gamma - \eta\mu) 2(1-\beta_2)  \Gamma   }{  \Gamma \alpha }}_{c}
\end{align*}
Now, we need to choose $\beta_2$ 
such that 
$$
1 >    \beta_2  
  >  1-\frac{\eta\mu \alpha}
   {2\Gamma (\Gamma - \eta\mu)    } > 0.
$$
With such a choice we can conclude that
\begin{align*}
  \Exp[\| r_{k}\|_{\D{k}}^2]
&\overset{\eqref{eq:rkfinal}}{\leq}   
(1-c)
 \| r_{k-1}  \|_{\D{k-1}}^2
+ 
(1+\tfrac{2(1-\beta_2) \Gamma}{\alpha})
 \tfrac{\eta^2}{\alpha}  
  2 \sigma^2, 
\\
&\leq   
(1-c)^k
 \| r_{0}  \|_{\D{0}}^2
+ 
\sum_{i=0}^{k-1}
(1-c)^i
(1+\tfrac{2(1-\beta_2) \Gamma}{\alpha})
 \tfrac{\eta^2}{\alpha}  
  2 \sigma^2, 
\\
&\leq   
(1-c)^k
 \| r_{0}  \|_{\D{0}}^2
+ 
\sum_{i=0}^{\infty}
(1-c)^i
(1+\tfrac{2(1-\beta_2) \Gamma}{\alpha})
 \tfrac{\eta^2}{\alpha}  
  2 \sigma^2,
\\
&\leq   
(1-c)^k
 \| r_{0}  \|_{\D{0}}^2
+ 
(1+\tfrac{2(1-\beta_2) \Gamma}{\alpha})
 \tfrac{2 \sigma^2 \eta^2}{\alpha c}.
\end{align*}
\end{proof}

\begin{customthm}{\ref{thm:stochSCNewV2}}
Suppose that Assumptions
\ref{assum:smooth}, \ref{assum:2ndDif},  \ref{assum:StrConvex}, \ref{assum:boundedStochGradAtOptimum} and \ref{assum:unbiasedEstimGrad} hold. 
 Let $\{w_k\}$ be the iterates generated by Algorithm \ref{alg:OASIS}
 with $  \eta  \in (0, \frac{    \alpha^2\mu}{\Gamma L^2})$,
   then, for all $k \geq 0$, 
\begin{equation}
\label{eq:asfasdfas}
\Exp[F(w_{k})-F^*]
\leq 
 \left(
    1 - c
    \right)^k
 ( F(w_0) -F^* )
    + 
     \frac{\eta^2   L\sigma^2 }{  c \alpha^2},
\end{equation}
where 
$c =  \frac{2 \eta \mu}{\Gamma} 
    -\frac{2 \eta^2 L^2  }{  \alpha^2} 
     \in (0,1)$.
Moreover, if $\eta \in  (0, \frac{    \alpha^2\mu}{2 \Gamma L^2})$     
then      
\begin{equation}
\label{eq:avgarwefaewfa}
\Exp[F(w_{k})-F^*]
\leq 
 \left(
    1 - \frac{ \eta \mu}{ \Gamma}
    \right)^k
 ( F(w_0) -F^* )
    + 
     \frac{\eta  \Gamma  L\sigma^2 }{    \alpha^2 \mu}. 
\end{equation} 
\end{customthm}
\begin{proof}
First, we will upper-bound the $F(w_{k+1})$
using smoothness of $F(w)$
\begin{align*} 
    F(w_{k+1}) & = F(w_k -\eta \D{k}^{-1} \nabla F_{\mathcal{I}_k}(w_k))  \nonumber\\
    & \leq F(w_k) + \nabla F(w_k)^T (-\eta \D{k}^{-1} \nabla F_{\mathcal{I}_k}(w_k)) + \frac{L}{2}\| \eta \D{k}^{-1} \nabla F_{\mathcal{I}_k}(w_k)\|^2  \nonumber\\
    & \leq F(w_k) - \eta\nabla F(w_k)^T  \D{k}^{-1} \nabla F_{\mathcal{I}_k}(w_k) + \frac{\eta^2 L}{2\alpha^2}\|\nabla F_{\mathcal{I}_k}(w_k)\|^2,
\end{align*}
where the first inequality is because of Assumptions \ref{assum:smooth} and \ref{assum:StrConvex}, and the second inequality is due to Remark \ref{rem:posDefBnd}. By taking the expectation over the sample $\mathcal{I}_k$, we have
\begin{align} 
    \mathbb{E}_{\mathcal{I}_k}[F(w_{k+1})] & \leq F(w_k) - \eta \mathbb{E}_{\mathcal{I}_k}[ \nabla F(w_k)^T  \D{k}^{-1} \nabla F_{\mathcal{I}_k}(w_k)] + \frac{\eta^2  L}{2\alpha^2}\mathbb{E}_{\mathcal{I}_k}[\|\nabla F_{\mathcal{I}_k}(w_k)\|^2] \nonumber\\
    & = F(w_k) - \eta  \nabla F(w_k)^T  \D{k}^{-1} \nabla F(w_k) + \frac{\eta^2   L}{2\alpha^2 }\mathbb{E}_{\mathcal{I}_k}[\|\nabla F_{\mathcal{I}_k}(w_k)\|^2] \nonumber\\
    & \leq F(w_k) 
    - \frac{\eta}{\Gamma} \| \nabla F(w_k)\|^2 
    + 
    \frac{\eta^2   L}{2 \alpha^2}
    \big(4L (F(w_k)-F^*) + 2 \sigma^2\big)
    \nonumber
    \\    
    & \leq F(w_k) 
    + \frac{\eta}{\Gamma} 
    2 \mu \left(
    F^* - F(w_k)
    \right)
    + 
    \frac{\eta^2   L}{2 \alpha^2}
    \big(4L (F(w_k)-F^*) + 2 \sigma^2\big)
\label{eq:afwfwflewAAWFCA}
\end{align}
Subtracting $F^*$ from both sides, we obtain
\begin{align} 
\mathbb{E}_{\mathcal{I}_k}[F(w_{k+1})-F^*]  
    & 
    \overset{\eqref{eq:afwfwflewAAWFCA}}{\leq}
    \left(
    1 -
    \frac{\eta}{\Gamma} 
    2 \mu
    +\frac{\eta^2   L}{2 \alpha^2} 4L
    \right)
 ( F(w_k) -F^* )
    + 
    \frac{\eta^2   L\sigma^2 }{  \alpha^2}
    \nonumber
\\ 
  &\leq   \left(
    1 -
\underbrace{     \left(
    \frac{2 \eta \mu}{\Gamma} 
    -\frac{2 \eta^2 L^2  }{  \alpha^2} 
    \right)
    }_{c}
    \right)
 ( F(w_k) -F^* )
    + 
    \frac{\eta^2   L\sigma^2 }{  \alpha^2}.
    \label{eq:aefrawfawfaw}
\end{align}
By taking the total expectation over all batches $\mathcal{I}_0$, $\mathcal{I}_1$, $\mathcal{I}_2$,... and all history starting with $w_0$,  we have
\begin{align*} 
\mathbb{E}[F(w_{k})-F^*]  
&
\overset{\eqref{eq:aefrawfawfaw}}
{\leq}   \left(
    1 -
c
    \right)^k
 ( F(w_0) -F^* )
    + 
    \sum_{i=0}^{k-1}
    \left(
    1 -
    c\right)^i
    \frac{\eta^2   L\sigma^2 }{  \alpha^2}
\\
&\leq    \left(
    1 -
    c\right)^k
 ( F(w_0) -F^* )
    + 
    \sum_{i=0}^{\infty}
    \left(
    1 -
    c\right)^i
    \frac{\eta^2   L\sigma^2 }{  \alpha^2}
\\
& =     \left(
    1 -
    c\right)^k
 ( F(w_0) -F^* )
    + \frac{\eta^2   L\sigma^2 }{  c\alpha^2}
\end{align*}    
and the \eqref{eq:asfasdfas}
is obtained.
Now, if
$\eta \leq\frac{    \alpha^2\mu}{2 \Gamma L^2}$
then
\begin{align*}
c &= 
 \frac{2 \eta \mu}{\Gamma} 
    -\frac{2 \eta^2 L^2  }{  \alpha^2} 
\geq  
2 \eta
\left(
 \frac{ \mu}{\Gamma} 
    -\frac{    \alpha^2\mu}{2 \Gamma L^2}\frac{   L^2  }{  \alpha^2} 
    \right)
=   
2 \eta
\left(
 \frac{ \mu}{\Gamma} 
    -\frac{     \mu}{2 \Gamma  } 
    \right)
=
 \frac{\eta \mu}{\Gamma} 
\end{align*}
and \eqref{eq:avgarwefaewfa} follows.
\end{proof}

\subsection{Proof of Theorem~\ref{thm:stochNonConv}}
\begin{customthm}{\ref{thm:stochNonConv}}
Suppose that Assumptions \ref{assum:2ndDif}, \ref{assum:bndFun}, \ref{assum:smooth}, \ref{assum:boundedStochGrad} and \ref{assum:unbiasedEstimGrad} hold, and let $F^{\star} = F(w^{\star})$, where $w^{\star}$ is the minimizer of $F$. Let $\{w_k\}$ be the iterates generated by Algorithm \ref{alg:OASIS}, where $0<\eta_k = \eta \leq \dfrac{\eta^2}{L\Gamma}$, and $w_0$ is the starting point. Then, for all $k\geq 0$, 
\begin{align*}   
  \mathbb{E}\left[\frac{1}{T}\sum_{k=0}^{T-1} \| \nabla F(w_k) \|^2 \right]  &  \leq \frac{2\Gamma[ F(w_0) - \widehat{F}]}{\eta  T} + \frac{\eta \Gamma  \gamma^2 L }{\alpha^2} \xrightarrow[]{T \rightarrow \infty} \frac{\eta \Gamma \gamma^2 L }{\alpha^2}.
\end{align*}
\end{customthm}

\begin{proof} 
We have that
\begin{align*} 
    F(w_{k+1}) & = F(w_k -\eta \D{k}^{-1} \nabla F_{\mathcal{I}_k}(w_k))  \nonumber\\
    & \leq F(w_k) + \nabla F(w_k)^T (-\eta \D{k}^{-1} \nabla F_{\mathcal{I}_k}(w_k)) + \frac{L}{2}\| \eta \D{k}^{-1} \nabla F_{\mathcal{I}_k}(w_k)\|^2  \nonumber\\
    & \leq F(w_k) - \eta\nabla F(w_k)^T  \D{k}^{-1} \nabla F_{\mathcal{I}_k}(w_k) + \frac{\eta^2 L}{2\alpha^2}\|\nabla F_{\mathcal{I}_k}(w_k)\|^2,
\end{align*}
where the first inequality is because of Assumptions \ref{assum:smooth} and \ref{assum:StrConvex}, and the second inequality is due to Remark \ref{rem:posDefBnd}. By taking the expectation over the sample $\mathcal{I}_k$, we have
\begin{align} 
    \mathbb{E}_{\mathcal{I}_k}[F(w_{k+1})] & \leq F(w_k) - \eta \mathbb{E}_{\mathcal{I}_k}[ \nabla F(w_k)^T  \D{k}^{-1} \nabla F_{\mathcal{I}_k}(w_k)] + \frac{\eta^2  L}{2\alpha^2}\mathbb{E}_{\mathcal{I}_k}[\|\nabla F_{\mathcal{I}_k}(w_k)\|^2] \nonumber\\
    & = F(w_k) - \eta  \nabla F(w_k)^T  \D{k}^{-1} \nabla F(w_k) + \frac{\eta^2  L}{2\alpha^2}\mathbb{E}_{\mathcal{I}_k}[\|\nabla F_{\mathcal{I}_k}(w_k)\|^2] \nonumber\\
    &\leq F(w_k) - \eta \left( \dfrac{1}{\Gamma} - \frac{\eta L}{2\alpha^2}\right) \|\nabla F(w_k)\|^2 + \frac{\eta^2 \gamma^2 L}{2\alpha^2} \nonumber\\
    &\leq F(w_k) - \frac{\eta }{2\Gamma} \|\nabla F(w_k)\|^2 + \frac{\eta^2 \gamma^2 L}{2\alpha^2}, \label{eq.key1}
\end{align}
where the second inequality is due to Remark \ref{rem:posDefBnd} and Assumption \ref{assum:boundedStochGrad}, and the third inequality is due to the choice of the step length.

By inequality \eqref{eq.key1} and taking the total expectation over all batches $\mathcal{I}_0$, $\mathcal{I}_1$, $\mathcal{I}_2$,... and all history starting with $w_0$, we have 
\begin{align*} 
    \mathbb{E}[F(w_{k+1}) - F(w_k)] 
    &\leq  - \frac{\eta}{2\Gamma} \mathbb{E}[\|\nabla F(w_k)\|^2] + \frac{\eta^2  \gamma^2 L}{2\alpha^2}.
\end{align*}
By summing both sides of the above inequality from $k=0$  to $T -1$,
\begin{align*} 
    \sum_{k=0}^{T-1}\mathbb{E}[F(w_{k+1}) - F(w_k)] 
    &\leq  - \frac{\eta }{2\Gamma}\sum_{k=0}^{T-1} \mathbb{E}[\|\nabla F(w_k)\|^2] + \frac{\eta^2 \gamma^2 L T}{2\alpha^2} \\
    & = - \frac{\eta }{2\Gamma}\mathbb{E}\left[\sum_{k=0}^{T-1} \|\nabla F(w_k)\|^2\right] + \frac{\eta^2 \gamma^2 L T}{2\alpha^2}.
\end{align*}
By simplifying the left-hand-side of the above inequality, we have
\begin{align*}
	\sum_{k=0}^{T-1} \mathbb{E}\left[F(w_{k+1})-F(w_k) \right]  &= \mathbb{E}[F(w_{T})]-F(w_0)
	\geq \widehat{F} -F(w_0),
\end{align*}
where the inequality is due to $\hat{F} \leq F(w_T)$ (Assumption \ref{assum:bndFun}). Using the above, we have
\begin{align*}  
    \mathbb{E}\left[\sum_{k=0}^{T-1} \| \nabla F(w_k) \|^2 \right]  &  \leq \frac{2\Gamma[ F(w_0) - \widehat{F}]}{\eta } + \frac{\eta  \Gamma\gamma^2 L T}{\alpha^2}.
\end{align*}
\end{proof}
\subsection{Proofs with Line Search}
Given the current iterate $w_k$, the steplength is chosen to satisfy the following sufficient decrease condition
\begin{align}   \label{eq:armijo}
    F(w_{k} + \eta_k p_k) \leq F(w_k) - c_1 \eta_k \nabla F(w_k)^T \D{k}^{-1} \nabla F(w_k),
\end{align}
where $p_k = -\D{k}^{-1}\nabla F(w_k)$ and $c_1 \in (0,1)$. The mechanism works as follows. Given an initial steplength (say $\eta_k = 1$), the function is evaluated at the trial point $w_{k} + \alpha_k p_k$ and condition \eqref{eq:armijo} is checked. If the trial point satisfies \eqref{eq:armijo}, then the step is accepted. If the trial point does not satisfy \eqref{eq:armijo}, the steplength is reduced (e.g., $\eta_k = \tau \eta_k$ for $\tau \in (0,1)$). This process is repeated until a steplength that satisfies \eqref{eq:armijo} is found.\vspace{-16pt}
\begin{center}
\begin{minipage}{.6\linewidth}
\begin{algorithm}[H]
 {\small 
\caption{Backtracking Armijo Linesearch \cite{nocedal_book}}
  \label{alg:armijo}
 {\bf Input:} $w_{k}$, $p_{k}$
  \begin{algorithmic}[1]
  \STATE Select $\eta_{\text{initial}}$, $c_1 \in (0,1)$ and $\tau \in (0,1)$
  \STATE $\eta^0 =\eta_{\text{initial}} $, $j=0$
  \WHILE {$F(w_k + \eta_k p_k) > F(w_k) + c_1 \eta_k\nabla F(w_k)^Tp_k$}
  \STATE Set $\eta^{j+1} = \tau\eta^{j} $
  \STATE Set $j = j+1$
\ENDWHILE
  \end{algorithmic}
  }
   {\bf Output:} $\eta_{k} = \eta^j$
\end{algorithm}

\end{minipage}
\end{center}

By following from the study \cite{berahas2019quasi}, we have the following theorems.
\subsubsection{Determinitic Regime - Strongly Convex}

\begin{theorem} \label{thm:linesearch_strongly}
Suppose that Assumptions \ref{assum:2ndDif} and \ref{assum:smooth} and \ref{assum:StrConvex} hold. Let $\{w_k\}$ be the iterates generated by Algorithm \ref{alg:OASIS}, where $\eta_k$ is the maximum value in $\{ \tau^{-j}:j=0,1,\dots\}$ satisfying \eqref{eq:armijo} with $0<c_1<1$, and $w_0$ is the starting point.
Then for all $k\geq 0$,
\begin{align*}   
 F(w_k) - F^{\star}& \leq \left( 1-\frac{4\mu \alpha^2 c_1(1-c_1)\tau}{\Gamma^2L} \right)^k \left[ F(w_0) - F^{\star} \right].
\end{align*}
\end{theorem}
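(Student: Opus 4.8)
The plan is to follow the classical backtracking-line-search argument (as in \cite{nocedal_book,berahas2019quasi}), adapted to the preconditioned search direction $p_k=-\D{k}^{-1}\nabla F(w_k)$. I would first record two elementary facts that hold at every iteration, both immediate consequences of Remark~\ref{rem:posDefBnd}, i.e.\ $\alpha I\preceq \D{k}\preceq\Gamma I$: namely
\[
\nabla F(w_k)^T\D{k}^{-1}\nabla F(w_k)\ \ge\ \tfrac1\Gamma\|\nabla F(w_k)\|^2,
\qquad
\|p_k\|^2=\|\D{k}^{-1}\nabla F(w_k)\|^2\ \le\ \tfrac1{\alpha^2}\|\nabla F(w_k)\|^2 .
\]
In particular $\nabla F(w_k)^Tp_k=-\nabla F(w_k)^T\D{k}^{-1}\nabla F(w_k)<0$ whenever $\nabla F(w_k)\neq0$ (if it is zero then $w_k=w^\star$ and there is nothing to prove), so $p_k$ is a genuine descent direction and, by the $L$-smoothness descent lemma (Assumption~\ref{assum:smooth}), condition \eqref{eq:armijo} holds for all sufficiently small step sizes; hence Algorithm~\ref{alg:armijo} terminates.

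Next I would derive a uniform lower bound on the accepted step $\eta_k$. Either $\eta_k$ equals the initial trial value $\eta_{\text{initial}}$ (which we assume is at least the quantity $\hat\eta$ below), or some previous trial $\bar\eta=\eta_k/\tau$ failed \eqref{eq:armijo}. In the latter case, combining the failure $F(w_k+\bar\eta p_k)>F(w_k)+c_1\bar\eta\,\nabla F(w_k)^Tp_k$ with the descent lemma
\[
F(w_k+\bar\eta p_k)\ \le\ F(w_k)+\bar\eta\,\nabla F(w_k)^Tp_k+\tfrac{L\bar\eta^2}{2}\|p_k\|^2 ,
\]
cancelling the common linear term and dividing by $\bar\eta>0$ gives
$(1-c_1)\,\nabla F(w_k)^T\D{k}^{-1}\nabla F(w_k)<\tfrac{L\bar\eta}{2}\|p_k\|^2$. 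Substituting the two elementary inequalities above yields $\bar\eta>\tfrac{2(1-c_1)\alpha^2}{\Gamma L}$, hence in all cases
\[
\eta_k\ \ge\ \hat\eta\ :=\ \frac{2(1-c_1)\alpha^2\tau}{\Gamma L}.
\]

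Finally I would feed this back into \eqref{eq:armijo}: using $\eta_k\ge\hat\eta$ together with $\nabla F(w_k)^T\D{k}^{-1}\nabla F(w_k)\ge\tfrac1\Gamma\|\nabla F(w_k)\|^2$ gives $F(w_{k+1})\le F(w_k)-\tfrac{c_1\hat\eta}{\Gamma}\|\nabla F(w_k)\|^2$, and the Polyak--\L{}ojasiewicz inequality $\|\nabla F(w_k)\|^2\ge 2\mu\,(F(w_k)-F^\star)$ — a consequence of $\mu$-strong convexity (Assumption~\ref{assum:StrConvex}) — yields
\[
F(w_{k+1})-F^\star\ \le\ \Bigl(1-\tfrac{2\mu c_1\hat\eta}{\Gamma}\Bigr)\bigl(F(w_k)-F^\star\bigr)
\ =\ \Bigl(1-\tfrac{4\mu\alpha^2 c_1(1-c_1)\tau}{\Gamma^2 L}\Bigr)\bigl(F(w_k)-F^\star\bigr),
\]
and unrolling the recursion over $k$ gives the claim; the contraction factor lies in $(0,1)$ since $\alpha\le\Gamma$, $\mu\le L$, $\tau<1$ and $c_1(1-c_1)\le\tfrac14$. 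The main obstacle is not any individual estimate but the bookkeeping around the step-size lower bound: one must treat the two cases (backtracking triggered or the first trial accepted) uniformly, which forces the mild standing assumption that $\eta_{\text{initial}}$ is not smaller than $\hat\eta$ (equivalently, write $\eta_k\ge\min\{\eta_{\text{initial}},\hat\eta\}$ and note the stated rate corresponds to the regime $\hat\eta\le\eta_{\text{initial}}$), and one must verify the preconditioned direction remains a uniformly good descent direction via the $\alpha$/$\Gamma$ spectral bounds rather than the identity.
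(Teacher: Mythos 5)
Your proposal is correct and follows essentially the same route as the paper's proof: the descent lemma combined with the spectral bounds $\alpha I \preceq \D{k} \preceq \Gamma I$ yields the step-size lower bound $\eta_k \geq \tfrac{2\alpha^2(1-c_1)\tau}{\Gamma L}$, which is fed back into the Armijo condition and combined with the Polyak--\L{}ojasiewicz inequality. The only (immaterial) difference is that you obtain the step-size bound via the contrapositive (a failed trial must be large) whereas the paper argues that any sufficiently small trial is accepted, and you are in fact slightly more careful than the paper about the edge case where the initial trial is accepted without backtracking.
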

\vspace{-5pt}
\begin{proof} Starting with \eqref{eq:det2} we have
\begin{align} \label{eq:det2New}
F(w_{k} - \eta_k \D{k}^{-1} \nabla F(w_k)) 
 & \leq F(w_k) - \eta_k \left( \dfrac{1}{\Gamma} - \eta_k \frac{ L}{2 \alpha^2} \right)\| \nabla F(w_k) \|^2. 
\end{align}
From the Armijo backtracking condition \eqref{eq:armijo}, we have \begin{align}
    F(w_{k} - \eta_k \D{k}^{-1} \nabla F(w_k)) &\leq F(w_k) - c_1 \eta_k \nabla F(w_k)^T \D{k}^{-1} \nabla F(w_k) \nonumber\\
    & \leq F(w_k) - \dfrac{c_1 \eta_k}{\Gamma} \|\nabla F(w_k) \|^2. \label{eq:proof_armijo1}
\end{align}
Looking at \eqref{eq:det2New} and \eqref{eq:proof_armijo1}, it is clear that the Armijo condition is satisfied when 
\begin{align} \label{eq.step_length_ls}
    \eta_k \leq \frac{2\alpha^2(1-c_1)}{\Gamma L}.
\end{align}
Thus, any $\eta_k$ that satisfies \eqref{eq.step_length_ls} is guaranteed to satisfy the Armijo condition \eqref{eq:armijo}. Since we find $\eta_k$ using a constant backtracking factor of $\tau < 1$, we have
that 
\begin{align}\label{eq.eq.step_length_ls_lower}
    \eta_k \geq \frac{2\alpha^2(1-c_1)\tau}{\Gamma L}.
\end{align}
Therefore, from \eqref{eq:det2New} and by \eqref{eq.step_length_ls} and \eqref{eq.eq.step_length_ls_lower} we have
\begin{align}   \label{eq:proof_step_linesearch}
    F(w_{k+1}) &\leq F(w_k) - \eta_k \left(\dfrac{1}{\Gamma} -  \frac{ \eta_kL}{2\alpha^2} \right)\| \nabla F(w_k) \|^2\nonumber\\
    & \leq F(w_k) - \dfrac{\eta_k c_1}{\Gamma}  \| \nabla F(w_k) \|^2 \nonumber\\
    & \leq F(w_k) - \frac{2\alpha^2c_1(1-c_1)\tau}{\Gamma^2L}\|\nabla F(w_k) \|^2.
\end{align}
By strong convexity, we have $2\mu(F(w)-F^\star) \leq \| \nabla F(w)\|^2$, and thus
\begin{align}   \label{eq:sc_linesearch}
    F(w_{k+1})
    & \leq F(w_k) - \frac{4\mu \alpha^2 c_1(1-c_1)\tau}{\Gamma^2L}(F(w)-F^\star).
\end{align}
Subtracting $F^\star$ from both sides, and applying \eqref{eq:sc_linesearch} recursively yields the desired result.  
\end{proof}\vspace{-10pt}
\subsubsection{Deterministic Regime - Nonconvex}
\begin{theorem} \label{thm_non_MB_linesearch}
Suppose that Assumptions \ref{assum:2ndDif},  \ref{assum:bndFun} and \ref{assum:smooth} hold. Let $\{w_k\}$ be the iterates generated by Algorithm \ref{alg:OASIS},  where $\eta_k$ is the maximum value in $\{ \tau^{-j}:j=0,1,\dots\}$ satisfying \eqref{eq:armijo} with $0<c_1<1$, and where $w_0$ is the starting point. Then, 
\begin{align}   \label{eq:lim_res_non}
    \lim_{k\rightarrow \infty} \| \nabla F(w_k) \| = 0,
\end{align}
and, moreover, for any $T > 1$,
\begin{align*}	
\frac{1}{T}\sum_{k=0}^{T-1} \| \nabla F(w_k) \|^2  &  \leq \frac{ \Gamma^2 L[ F(w_0) - \widehat{F}]}{ 2\alpha^2c_1(1-c_1)\tau T} \xrightarrow[]{\tau\rightarrow \infty}0.
\end{align*}
\end{theorem}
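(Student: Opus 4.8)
The plan is to mirror, almost verbatim, the argument used for Theorem~\ref{thm:linesearch_strongly}, replacing its final invocation of strong convexity by a telescoping sum. First I would record the one-step decrease implied by $L$-smoothness together with Remark~\ref{rem:posDefBnd}: since $\alpha I \preceq \D{k} \preceq \Gamma I$, the search direction $p_k = -\D{k}^{-1}\nabla F(w_k)$ satisfies $\nabla F(w_k)^T \D{k}^{-1}\nabla F(w_k) \geq \tfrac1\Gamma\|\nabla F(w_k)\|^2$ and $\|\D{k}^{-1}\nabla F(w_k)\|^2 \leq \tfrac1{\alpha^2}\|\nabla F(w_k)\|^2$, so exactly as in \eqref{eq:det2}--\eqref{eq:det2New} one obtains
\[
F(w_k + \eta_k p_k) \leq F(w_k) - \eta_k\Bigl(\tfrac1\Gamma - \tfrac{\eta_k L}{2\alpha^2}\Bigr)\|\nabla F(w_k)\|^2 .
\]

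Next I would analyze the backtracking procedure of Algorithm~\ref{alg:armijo}. Comparing the above inequality with the Armijo requirement \eqref{eq:armijo} (equivalently \eqref{eq:proof_armijo1}), the Armijo condition is automatically satisfied whenever $\eta_k \leq \tfrac{2\alpha^2(1-c_1)}{\Gamma L}$ (cf.\ \eqref{eq.step_length_ls}); since each rejection multiplies the trial step by $\tau\in(0,1)$, the loop terminates after finitely many steps and the accepted step length obeys $\eta_k \geq \tfrac{2\alpha^2(1-c_1)\tau}{\Gamma L}$, which is precisely \eqref{eq.eq.step_length_ls_lower}. Substituting this lower bound into the Armijo decrease \eqref{eq:proof_armijo1} gives, as in \eqref{eq:proof_step_linesearch},
\[
F(w_{k+1}) \leq F(w_k) - \frac{2\alpha^2 c_1(1-c_1)\tau}{\Gamma^2 L}\,\|\nabla F(w_k)\|^2 .
\]

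Then I would sum over $k=0,\dots,T-1$, telescope the left-hand side, and use $F(w_T)\geq \hat F$ from Assumption~\ref{assum:bndFun} to get
\[
\frac{2\alpha^2 c_1(1-c_1)\tau}{\Gamma^2 L}\sum_{k=0}^{T-1}\|\nabla F(w_k)\|^2 \leq F(w_0) - F(w_T) \leq F(w_0) - \hat F ,
\]
which, after dividing by $T$ and the leading constant, is exactly the claimed averaged bound, and it tends to $0$ as $T\to\infty$. For the pointwise statement $\lim_{k\to\infty}\|\nabla F(w_k)\| = 0$, I would let $T\to\infty$ in the displayed inequality: since $\{F(w_k)\}$ is nonincreasing and bounded below, the partial sums $\sum_{k=0}^{T-1}\|\nabla F(w_k)\|^2$ are uniformly bounded, hence $\sum_{k=0}^{\infty}\|\nabla F(w_k)\|^2 < \infty$, so its general term tends to $0$, i.e.\ $\|\nabla F(w_k)\|\to 0$.

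As for the main obstacle: there is essentially no hard step, because all the nontrivial content --- controlling the preconditioner spectrum via Remark~\ref{rem:posDefBnd} and, above all, showing that the Armijo backtracking terminates with a step bounded below by a constant depending only on $L,\alpha,\Gamma,c_1,\tau$ --- has already been carried out in the proof of Theorem~\ref{thm:linesearch_strongly}. The only points requiring mild care are that $\tau$ now enters the convergence rate, and that the finiteness of $\sum_k\|\nabla F(w_k)\|^2$ (rather than merely $\liminf_k \|\nabla F(w_k)\| = 0$) is what yields the full limit in \eqref{eq:lim_res_non} directly, without passing to a subsequence.
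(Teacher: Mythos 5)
Your proposal is correct and follows essentially the same route as the paper: it re-derives the per-iteration decrease \eqref{eq:proof_step_linesearch} from the smoothness bound, the spectrum bounds of Remark~\ref{rem:posDefBnd}, and the backtracking lower bound $\eta_k \geq \tfrac{2\alpha^2(1-c_1)\tau}{\Gamma L}$, then telescopes against Assumption~\ref{assum:bndFun} and deduces both the averaged bound and, via summability of $\|\nabla F(w_k)\|^2$, the full limit \eqref{eq:lim_res_non}. Your phrasing of the last step is in fact cleaner than the paper's, which contains a typo reusing $\tau$ as the limiting index where $T$ is meant.
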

\vspace{-15pt}
\begin{proof} We start with \eqref{eq:proof_step_linesearch} \vspace{-10pt}
\begin{align*} 
F(w_{k+1}) & \leq F(w_k) - \frac{2\alpha^2c_1(1-c_1)\tau}{\Gamma^2L}\|\nabla F(w_k) \|^2.
\end{align*}

Summing both sides of the above inequality from $k=0$  to $T -1$,
\begin{align*} 
	\sum_{k=0}^{T-1}(F(w_{k+1}) - F(w_k)) \leq  - \sum_{k=0}^{T-1}\frac{2\alpha^2c_1(1-c_1)\tau}{\Gamma^2L} \| \nabla F(w_k) \|^2. 
\end{align*}
The left-hand-side of the above inequality is a telescopic sum and thus,
\begin{align*}
	\sum_{k=0}^{T-1} \left[F(w_{k+1})-F(w_k) \right]  &= F(w_{T})-F(w_0)
	\geq \widehat{F} -F(w_0),
\end{align*}
where the inequality is due to $\hat{F} \leq F(w_T)$ (Assumption \ref{assum:bndFun}). Using the above, we have
\begin{align}   \label{eq:int_nonconvex1}
    \sum_{k=0}^{T-1} \| \nabla F(w_k) \|^2  &  \leq \frac{\Gamma^2 L[ F(w_0) - \widehat{F}]}{2\alpha^2c_1(1-c_1)\tau}.
\end{align}
Taking limits we obtain,
\begin{align*}
    \lim_{\tau \rightarrow \infty} \sum_{k=0}^{\tau-1} \| \nabla F(w_k) \|^2 < \infty,
\end{align*}
which implies \eqref{eq:lim_res_non}. Dividing \eqref{eq:int_nonconvex1} by $T$ we conclude
\begin{align*}
    \frac{1}{T}\sum_{k=0}^{T-1} \| \nabla F(w_k) \|^2  &  \leq \frac{\Gamma^2 L[ F(w_0) - \widehat{F}]}{ 2\alpha^2c_1(1-c_1)\tau T}.
\end{align*}
\end{proof}

\section{Additional Algorithm Details}\label{sec:addAlgDet}
\subsection{Related Work}
As was mentioned in Section \ref{sec:relatedWrok}, we follow the generic iterate update:
\begin{equation*}
    w_{k+1} = w_k - \eta_k \D{k}^{-1} m_k.
\end{equation*}

Table \ref{tab:sumAlgs} summarizes the methodologies discussed in Section \ref{sec:relatedWrok}.
\begin{table}[htb]
	\centering
	\caption{Summary of Algorithms Discussed in Section \ref{sec:relatedWrok}}
	\begin{tabular}{lcc}
		\toprule
		\textbf{Algorithm} & \textbf{$m_k$ } & \textbf{$\D{k}$}\\
		\midrule
		\texttt{SGD} \cite{robbins1951stochastic}& $\beta_1 m_{t-1}+(1-\beta_1)g_k$& 1 \\\hdashline
		\texttt{Adagrad} \cite{duchi2011adaptive}& $g_k$ & $\sqrt{\sum_{i=1}^k \texttt{diag}(g_i\odot g_i)}$\\\hdashline
		\texttt{RMSProp} \cite{tieleman2012lecture}& $g_k$ & $\sqrt{\beta_2\D{k-1}^2 +(1-\beta_2)\texttt{diag}(g_k\odot g_k)}$\\\hdashline
		\texttt{Adam} \cite{kingma2014adam}& $\dfrac{(1-\beta_1)\sum_{i=1}^k \beta_1^{k-i}g_i}{1-\beta_1^k}$& $ \sqrt{\dfrac{(1-\beta_2)\sum_{i=1}^k \beta_2^{k-i}\texttt{diag}(g_i\odot g_i)}{1-\beta_2^k}} $\\\hdashline
		\texttt{AdaHessian} \cite{yao2020adahessian}& $\dfrac{(1-\beta_1)\sum_{i=1}^k \beta_1^{k-i}g_i}{1-\beta_1^k}$ & $\sqrt{\dfrac{(1-\beta_2)\sum_{i=1}^k \beta_2^{k-i}v_i^{2}}{1-\beta_2^k}}$\\\bottomrule \rowcolor{blue!30!white}
		\ALG & $g_k$ & $|\beta_2 D_{k-1} + (1-\beta_2)v_k|_{\alpha}$\\
		\bottomrule\vspace{-13pt}
	\end{tabular}
	\begin{flalign*}
	   \qquad&^*\,\,v_i = \texttt{diag}(z_i \odot \nabla^2 F(w_i) z_i) \text{   and   } z_i \sim \text{Rademacher}(0.5)\,\,\, \forall i \geq 1,&\\
	    &^{**}(|A|_{\alpha})_{ii} = \max \{|A|_{ii}, \alpha\}&\\
	    &^{***} D_k=\beta_2 D_{k-1} + (1-\beta_2)v_k&
	\end{flalign*}
	\label{tab:sumAlgs}
\end{table}

\subsection{Stochastic \ALG}

Here we describe a stochastic variant of \ALG in more detail. As mentioned in Section \ref{sec:ourAlg}, to estimate gradient and Hessian diagonal, the choices of sets $\mathcal{I}_k,\,\mathcal{J}_k\subset [n]$, are independent and correspond to only a fraction of data. This change results in Algorithm \ref{alg:st_OASIS}.

\begin{center}
\begin{minipage}{.65\linewidth}
\begin{algorithm}[H] 
  {\small 
    \caption{Stochastic \ALG}\label{alg:st_OASIS}}
\textbf{Input:} $w_{0}$, $\eta_{0}$, $\mathcal{I}_{k}$, $D_0$, $\theta_0 = +\infty$, $\beta_2$, $\alpha$
\begin{algorithmic}[1] 
    \STATE $w_1 = w_0 - \eta_0 \D{0}^{-1} \nabla F(w_0)_{\mathcal{I}_{k}}$
     \FOR {$k=1,2,\dots$}
         \STATE Calculate $D_k = \beta_2 D_{k-1} + (1-\beta_2)\  \texttt{diag}(z_k \odot \nabla^2 F_{\mathcal{J}_{k}}(w_k) z_k)$
         \STATE Calculate $\D{k}$ by setting $(\D{k})_{i,i} = \max\{ |D_k|_{i,i}, \alpha\},\,\,\, \forall i \in[d]$
        \STATE Update $\eta_{k} = \min \{\sqrt{1 +\theta_{k-1}}\eta_{k-1},\tfrac{\|w_k - w_{k-1}\|_{\D{k}}}
        {2\|
         \nabla F_{\mathcal{I}_{k}}(w_k)-\nabla F_{\mathcal{I}_{k}}(w_{k-1}) \|^*_{\D{k}}}\}$
        \STATE Set $w_{k+1}=w_k - \eta_k \D{k}^{-1} \nabla F_{\mathcal{I}_{k}}(w_k)$
        \STATE Set $\theta_k = \dfrac{\eta_k}{\eta_{k-1}}$
    \ENDFOR 
\end{algorithmic}
\end{algorithm}
\end{minipage}
\end{center}

Additionally, in order to compare the performance of our preconditioner schema independent of the adaptive learning-rate rule, we also consider variants of \ALG with fixed $\eta$. We explore two modifications, denoted as ``Fixed LR'' and ``Momentum'' in Section \ref{sec:expResults}. ``Fixed LR'' is obtained from Algorithm \ref{alg:st_OASIS} by simply having a fixed scalar $\eta$ for all iterations, which results in Algorithm \ref{alg:st_fixed_lr_OASIS}. ``Momentum'' is obtained from ``Fixed LR'' by considering a simple form of first-order momentum with a parameter $\beta_{1}$, and this results in Algorithm \ref{alg:st_momentum_OASIS}. In Section \ref{sec:sensAnalysis}, we show that \ALG{} is \textbf{robust} with respect to the different choices of learning rate, obtaining a narrow spectrum of changes.

\begin{center}
\begin{minipage}{.65\linewidth}
\begin{algorithm}[H] 
  {\small 
    \caption{\ALG - Fixed LR}\label{alg:st_fixed_lr_OASIS}}
   \textbf{Input:} $w_{0}$, $\eta$, $\mathcal{I}_{k}$, $D_0$, $\beta_2$, $\alpha$
\begin{algorithmic}[1] 
    \STATE $w_1 = w_0 - \eta \D{0}^{-1} \nabla F_{\mathcal{I}_{k}}(w_0)$
     \FOR {$k=1,2,\dots$}
         \STATE Calculate $D_k = \beta_2 D_{k-1} + (1-\beta_2)\  \texttt{diag}(z_k \odot \nabla^2 F_{\mathcal{J}_{k}}(w_k) z_k)$
         \STATE Calculate $\D{k}$ by setting $(\D{k})_{i,i} = \max\{ |D_k|_{i,i}, \alpha\},\,\,\, \forall i \in[d]$
        \STATE Set $w_{k+1}=w_k - \eta \D{k}^{-1} \nabla F_{\mathcal{I}_{k}}(w_k)$
    \ENDFOR 
\end{algorithmic}
\end{algorithm}
\end{minipage}
\end{center}

\begin{center}
\begin{minipage}{.65\linewidth}
\begin{algorithm}[H] 
  {\small 
    \caption{\ALG - Momentum}\label{alg:st_momentum_OASIS}}
\textbf{Input:} $w_{0}$, $\eta$, $\mathcal{I}_{k}$, $D_0$, $\beta_1$, $\beta_2$ $\alpha$
\begin{algorithmic}[1] 
   \STATE Set $m_{0} = \nabla F_{\mathcal{I}_{k}}(w_0)$
    \STATE $w_1 = w_0 - \eta \D{0}^{-1} m_{0}$
     \FOR {$k=1,2,\dots$}
         \STATE Calculate $D_k = \beta_2 D_{k-1} + (1-\beta_2)\  \texttt{diag}(z_k \odot \nabla^2 F_{\mathcal{J}_{k}}(w_k) z_k)$
         \STATE Calculate $\D{k}$ by setting $(\D{k})_{i,i} = \max\{ |D_k|_{i,i}, \alpha\},\,\,\, \forall i \in[d]$
         \STATE Calculate $m_{k} = \beta_{1} m_{k - 1} + (1 - \beta_{1}) \nabla F_{\mathcal{I}_{k}}(w_k)$
        \STATE Set $w_{k+1}=w_k - \eta \D{k}^{-1} m_{k}$
    \ENDFOR 
\end{algorithmic}
\end{algorithm}
\end{minipage}
\end{center}

In our experiments, we used a biased version of the algorithm with $\mathcal{I}_{k} = \mathcal{J}_{k}$. One of the benefits of this choice is to compute the Hessian-vector product efficiently. By reusing the computed gradient, the overhead of computing gradients with respect to different samples is significantly reduced (see Section \ref{sec:eff_Hv}).

The final remark is related to a strategy to obtain $D_{0}$, which is required at the start of \ALG Algorithm. One option is to do warmstarting, i.e., spend some time before training in order to sample some number of Hutchinson's estimates. The second option is to use bias corrected rule for $D_{k}$ similar to the one used in Adam and AdaHessian
$$
D_{k} = \beta_{2}D_{k - 1} + (1 - \beta_{2})\texttt{diag}(z_k \odot \nabla^2 F_{\mathcal{J}_{k}}(w_k) z_k),
$$
$$
D_{k}^{cor} = \dfrac{D_{k}}{1 - \beta_{2}^{k + 1}},
$$
which allows to obtain $D_{0}$ by defining $D_{-1}$ to be a zero diagonal.

\subsection{Efficient Hessian-vector Computation} \label{sec:eff_Hv}
In the \ALG Algorithm, similar to AdaHessian \cite{yao2020adahessian} methodology, the calculation of the Hessian-vector product in Hutchinson's method is the main overhead. In this section, we present how this product can be calculated efficiently. First, lets focus on two popular machine learning problems: $(i)$ logistic regression; and $(ii)$ non-linear least squares problems. Then, we show the efficient calculation of this product in deep learning problems.
\paragraph{Logistic Regression.} One can note that we can show the $\ell_2$-regularized logistic regression as:
\begin{equation}
    F(w) = \dfrac{1}{n} \left( \mathbbm{1}_n * \log\left(1 + e^{-Y\odot X^Tw}\right)   \right) +\dfrac{\lambda}{2} \|w\|^2,
\end{equation}\label{eq:objlogReg}

where $\mathbbm{1}_n$ is the vector of ones with size $1\times n$, $*$ is the standard multiplication operator between two matrices, and $X$ is the feature matrix and $Y$ is the label matrix. Further, the Hessian-vector product for logistic regression problems can be calculated as follows (for any vector $v \in \mathbb{R}^d$):
\begin{equation}
{\scriptsize
   \nabla^2 F(w)*v = \dfrac{1}{n} \left( \underbrace{X^T *\underbrace{\left[\dfrac{Y\odot Y \odot e^{-Y\odot X^Tw}}{\big( 1 + e^{-Y\odot X^Tw}\big)^2}\right]\odot \underbrace{X*v}_{\textcircled{\raisebox{-0.9pt}{1}}}}_{\textcircled{\raisebox{-0.9pt}{2}}}}_{\textcircled{\raisebox{-0.9pt}{3}}} \right) + \lambda v.
   }
   \label{eq:hess_vec_logReg}
\end{equation}
The above calculation shows the order of computations in order to compute the Hessian-vector product without constructing the Hessian explicitly.

\paragraph{Non-linear Least Squares.} The non-linear least squares problems can be written as following:
\begin{equation}
   F(w) = \dfrac{1}{2n}\|Y - \phi(X^Tw)\|^2.
\end{equation}\label{eq:objNLLS}
The Hessian-vector product for the above problem can be efficiently calculated as:
\begin{equation}
{\scriptsize
\begin{split}\label{eq:hess_mat_NLLS}
       &\nabla^2 F(w)*v =\\ &\dfrac{1}{n} \left(\underbrace{-X^T *\underbrace{\left[\phi(X^Tw)\odot(1-\phi(X^Tw)) \odot(Y - 2(1+Y)\odot\phi(X^Tw)  +
        3\phi(X^Tw)\odot\phi(X^Tw))\right]\odot \underbrace{X * v}_{\textcircled{\raisebox{-0.9pt}{1}}}}_{\textcircled{\raisebox{-0.9pt}{2}}}}_{\textcircled{\raisebox{-0.9pt}{3}}} \right). 
\end{split}
}
\end{equation}

\paragraph{Deep Learning.} In general, the Hessian-vector product can be efficiently calculated by:

\begin{equation}\label{eq:hessVec}
    \nabla^2F(w)*v = \dfrac{\partial^2 F(w) }{\partial w \partial w}*v=\dfrac{\partial}{\partial w } (\frac{\partial F(w)^T}{\partial w }v). 
\end{equation}

As is clear from \eqref{eq:hessVec}, two rounds of back-propagation are needed. In fact, the round regarding the gradient evaluation is already calculated in the corresponding iteration; and thus, one extra round of back-propagation is needed in order to evaluate the Hessian-vector product. This means that the extra cost for the above calculation is almost equivalent to one gradient evaluation. 
\clearpage
\section{Details of Experiments}\label{sec:addExp}

\subsection{Table of Algorithms}
Table \ref{tab:descAlgs} summarizes the algorithms implemented in Section \ref{sec:expResults}.
\begin{table}[H]\scriptsize
	\centering
	\begin{tabular}{ll} \toprule
		\textbf{Algorithm} & \textbf{Description and Reference}  \\ \midrule
	{\textcolor{red}{SGD}} &  {Stochastic gradient method \cite{robbins1951stochastic}} \\\hdashline	{\textcolor{gray}{Adam}} &  {Adam method \cite{kingma2014adam}} \\\hdashline		{\textcolor{orange}{AdamW}}  &  {Adam with decoupled weight decay \cite{loshchilov2017decoupled}}     \\\hdashline
	{\textcolor{cyan}{AdGD}} &  {Adaptive Gradient Descent \cite{mishchenko2020adaptive}} \\\hdashline
		{\textcolor{green!50!black}{AdaHessian}}  & {AdaHessian method \cite{yao2020adahessian}} \\\midrule
   	{\textcolor{purple}{\ALG-Adaptive LR}} &  {Our proposed method with adaptive learning rate} \\
       {\textcolor{blue}{\ALG-Fixed LR}} &  {Our proposed method with fixed learning rate} \\
      {\textcolor{cyan}{\ALG-Momentum}} &  {Our proposed method with momentum} \\
		\midrule
	\end{tabular}
	\normalsize
	\caption{Description of implemented algorithms}
\label{tab:descAlgs}
\end{table}
To display the optimality gap for logistic regression problems, we used Trust Region (TR) Newton Conjugate Gradient (Newton-CG) method \cite{nocedal_book} to find a $w$ such that $\|\nabla F(w)\|^2 < 10^{-19}$. Hereafter, we denote $F(w) - F(w^*)$ the optimality gap, where we refer $w^*$ to the solution found by TR Newton-CG.

\subsection{Problem Details}
Some metrics for the image classification problems are given in Table \ref{tab:networks}.

\begin{table}
\caption{Deep Neural Networks used in the experiments.\protect }
\label{tab:networks}
\centering
\begin{small}
\begin{tabular}{lllllr}\toprule
{\bf Data} &{\bf Network} & \# Train & \# Test & \# Classes & {\bf$d$\quad}
 \\ \midrule
{\bf MNIST}&{\bf Net DNN} & 60K & 10K & 10 & 21.8K
\\ \hdashline  
\multirow{2}{*}{{\bf CIFAR10}} & {\bf ResNet20} & 50K & 10K & 10 & 272K
\\ 
&{\bf ResNet32} & 50K & 10K & 10 & 467K
\\ \hdashline
{\bf CIFAR100}&{\bf ResNet18} & 50K & 10K & 100 & 11.22M
\\ 
\bottomrule 
\end{tabular}
\end{small}
\end{table}

The number of parameters for ResNet architectures is particularly important, showcasing that \ALG is able to operate in very high-dimensional problems, contrary to the widespread belief that methods using second-order information are fundamentally limited by the dimensionality of the parameter~space.

\subsection{Binary Classification}
In Section \ref{sec:expResults} of the main paper, we studied the empirical performance of \ALG on binary classification problems in a deterministic setting, and compared it with AdGD and AdaHessian.  Here, we present the extended details on the experiment.

\subsubsection{Problem and Data}
As a common practice for the empirical research of the optimization algorithms, \textit{LIBSVM} datasets\footnote{Datasets are available at \url{https://www.csie.ntu.edu.tw/~cjlin/libsvmtools/datasets/}.} are chosen for the exercise. Specifically, we chose $5$ popular binary class datasets, \textit{ijcnn1, rcv1, news20, covtype} and \textit{real-sim}. Table \ref{tab:binary_dataset} summarizes the basic statistics of the datasets.
\begin{table}
\centering 
\caption{Summary of Datasets.}
\label{tab:binary_dataset}
\begin{threeparttable}
\begin{tabular}{c c c c c}
\toprule
 Dataset    &  \# feature & $n$ (\# Train) & \# Test & \% Sparsity \\
 \bottomrule
 \textit{ijcnn1}\tnote{1}   & 22 & 49,990 & 91,701 & 40.91 \\
 \hdashline
 \textit{rcv1}\tnote{1}   & 47,236 & 20,242 & 677,399 & 99.85 \\ 
\hdashline
 \textit{news20}\tnote{2}  & 1,355,191 & 14,997 & 4,999 & 99.97 \\ 
 \hdashline
 \textit{covtype}\tnote{2}  & 54 & 435,759 & 145,253 & 77.88 \\
 \hdashline
  \textit{real-sim}\tnote{2}  & 20,958 & 54,231 & 18,078 & 99.76 \\ 
 \bottomrule
\end{tabular}
\begin{tablenotes}\footnotesize
\item[1] dataset has default training/testing samples.
\item[2] dataset is randomly split by 75\%-training \& 25\%-testing.
\end{tablenotes}
\end{threeparttable}
\end{table} 

Let $(x_i,y_i)$ be a training sample indexed by $i \in [n]:=  \{1,2,...,n\}$, where $x_i \in \mathbb{R}^d$ is a feature vector and $y_i \in \{-1,+1\}$ is a label. The loss functions are defined in the forms
\begin{align*}
    f_i(w) &= \log(1+e^{-y_ix_i^Tw}) + \frac{\lambda}{2}\|w\|^2, \tagthis \label{eq:logloss_app}\\
    f_i(w) &=   \left(y_i - \frac{1}{1+e^{-x_i^Tw}}  \right)^2, \tagthis \label{eq:nlsloss_app}
\end{align*}
where \eqref{eq:logloss_app} is a regularized logistic regression of a particular choice of $\lambda>0$ (and we used $\lambda \in \{0.1/n,1/n,10/n\}$ in the experiment), and hence a strongly convex function; and \eqref{eq:nlsloss_app} is a non-linear least square loss, which is apparently non-convex.

The problem we aimed to solve is then defined in the form
\begin{align*}
    \min_{w \in \mathbb{R}^d} \left\{F(w) := \frac{1}{n}\sum_{i=1}^n f_i(w)\right\}, \tagthis \label{eq:binary_problem_app}
\end{align*}
and we denote $w^*$ the global optimizer of \eqref{eq:binary_problem_app} for logistic regression.

\subsubsection{Configuration of Algorithm}
To better evaluate the performance of the algorithms, we configured \ALG, AdGD and AdaHessian with different choices of parameters.
\begin{itemize}[noitemsep,nolistsep,topsep=0pt,leftmargin=10pt]
    \item \ALG was configured with different values of $\beta_2$ and $\alpha$, where $\beta_2$ can be any values in $ \{0.95,0.99,0.995,0.999\}$ and $\alpha$ can be any value in the set $ \{10^{-3},10^{-5},10^{-7}\}$ (see Section \ref{sec:sensAnalysis} where \ALG{} has a narrow spectrum of changes with respect to different values of $\alpha$ and $\beta_2$).
    In addition, we adopted a warmstarting approach to evaluate the diagonal of Hessian at the starting point $w_0$.
    \item AdGD was configured with $12$ different values of the initial learning rate, i.e., $\eta_0 \in \{10^{-11},10^{-10},...,0.1,1.0\}$.
    \item AdaHessian was configured with different values of the fixed learning rate: for logistic regression, we used $12$ values between $0.1$ and $5$; for non-linear least square, we used $0.01,0.05,0.1,0.5,1.0$ and $2.0$.
\end{itemize}
To take into account the randomness of the performance, we used $10$ distinct random seeds to initialize $w_0$ for each algorithm, dataset and problem.

\subsubsection{Extended Experimental Results}
This section presents the extended results on our numerical experiments. For the whole experiments in this paper, we ran each method 10 times starting from different initial points. 

For \textbf{logistic regression}, we show the evolution of the optimality gap in Figure \ref{fig:logloss_app} and the ending gap in Figure \ref{fig:logloss_box_app}; the evolution of the testing accuracy and the maximum achieved ones are shown in Figure \ref{fig:logtest_app} and Figure \ref{fig:logtest_box_app} respectively.

For \textbf{non-linear least square}, the evolution of the objective and its ending values are shown in Figure \ref{fig:nlsloss_app}; the evolution of the testing accuracy along with the maximum achived ones are shown in Figure~\ref{fig:nlstest_app}.
  
\begin{figure}
    \centering
 \includegraphics[width=0.9\textwidth]{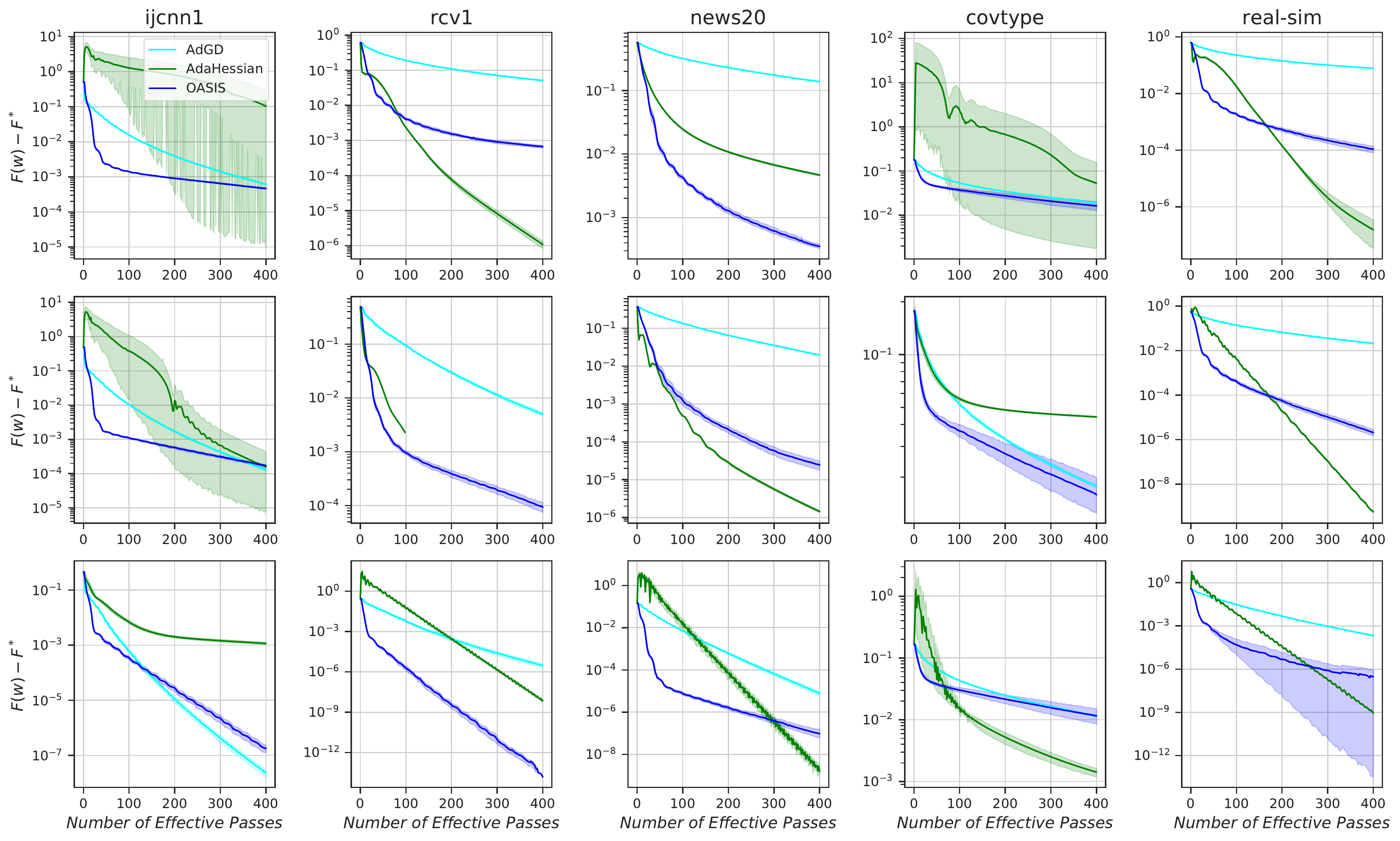}
    \caption{Evolution of the optimality gaps of \ALG, AdGD and AdaHessian for \mbox{$\ell_2$-regularized} Logistic regression: $\lambda=\frac{0.1}{n}$ (top row), $\lambda=\frac{1}{n}$ (middle row), and $\lambda=\frac{10}{n}$ (bottom row). From left to right: \textit{ijcnn1, rcv1, news20, covtype} and \textit{real-sim}.}
    \label{fig:logloss_app}
\end{figure}

\begin{figure}
    \centering
 \includegraphics[width=0.9\textwidth]{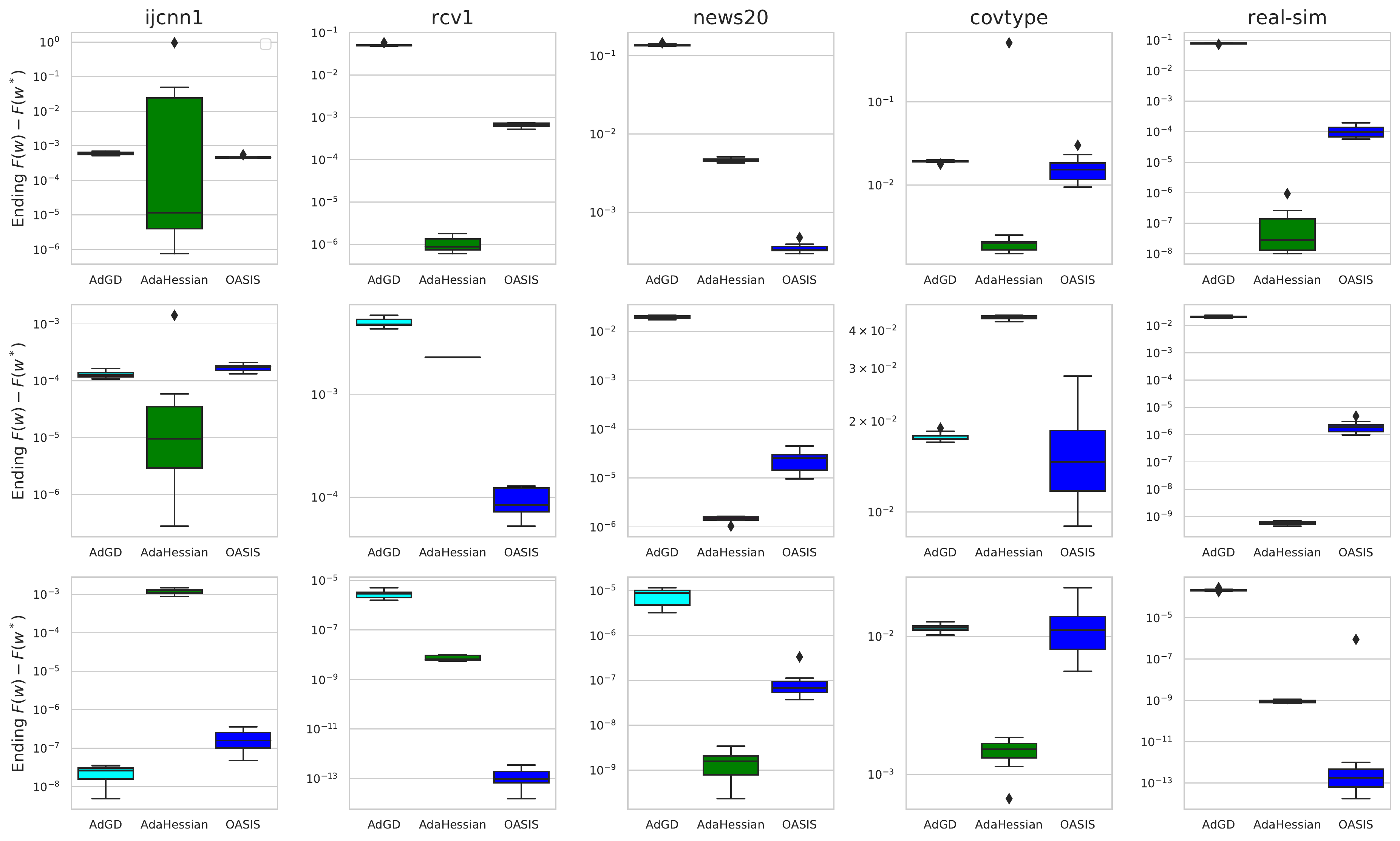}
    \caption{Ending optimality gaps of \ALG, AdGD and AdaHessian for \mbox{$\ell_2$-regularized} Logistic regression: $\lambda=\frac{0.1}{n}$ (top row), $\lambda=\frac{1}{n}$ (middle row), and $\lambda=\frac{10}{n}$ (bottom row). From left to right: \textit{ijcnn1, rcv1, news20, covtype} and \textit{real-sim}.}
    \label{fig:logloss_box_app}
\end{figure}

\begin{figure}
    \centering
 \includegraphics[width=0.9\textwidth]{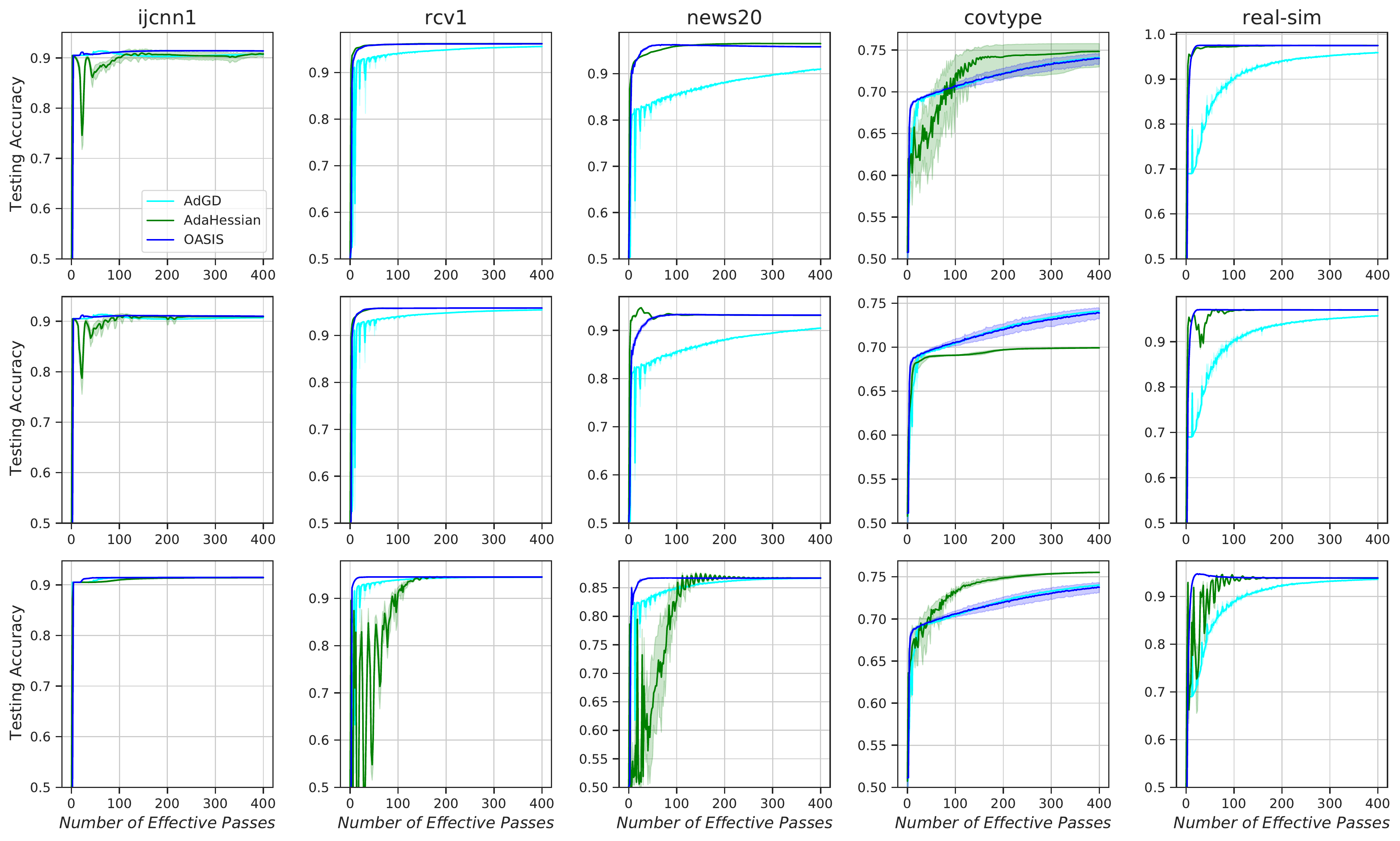}
    \caption{Evolution of the testing accuracy of \ALG, AdGD and AdaHessian for \mbox{$\ell_2$-regularized} Logistic regression: $\lambda=\frac{0.1}{n}$ (top row), $\lambda=\frac{1}{n}$ (middle row), and $\lambda=\frac{10}{n}$ (bottom row). From left to right: \textit{ijcnn1, rcv1, news20, covtype} and \textit{real-sim}.}
    \label{fig:logtest_app}
\end{figure}

\begin{figure}
    \centering
 \includegraphics[width=0.9\textwidth]{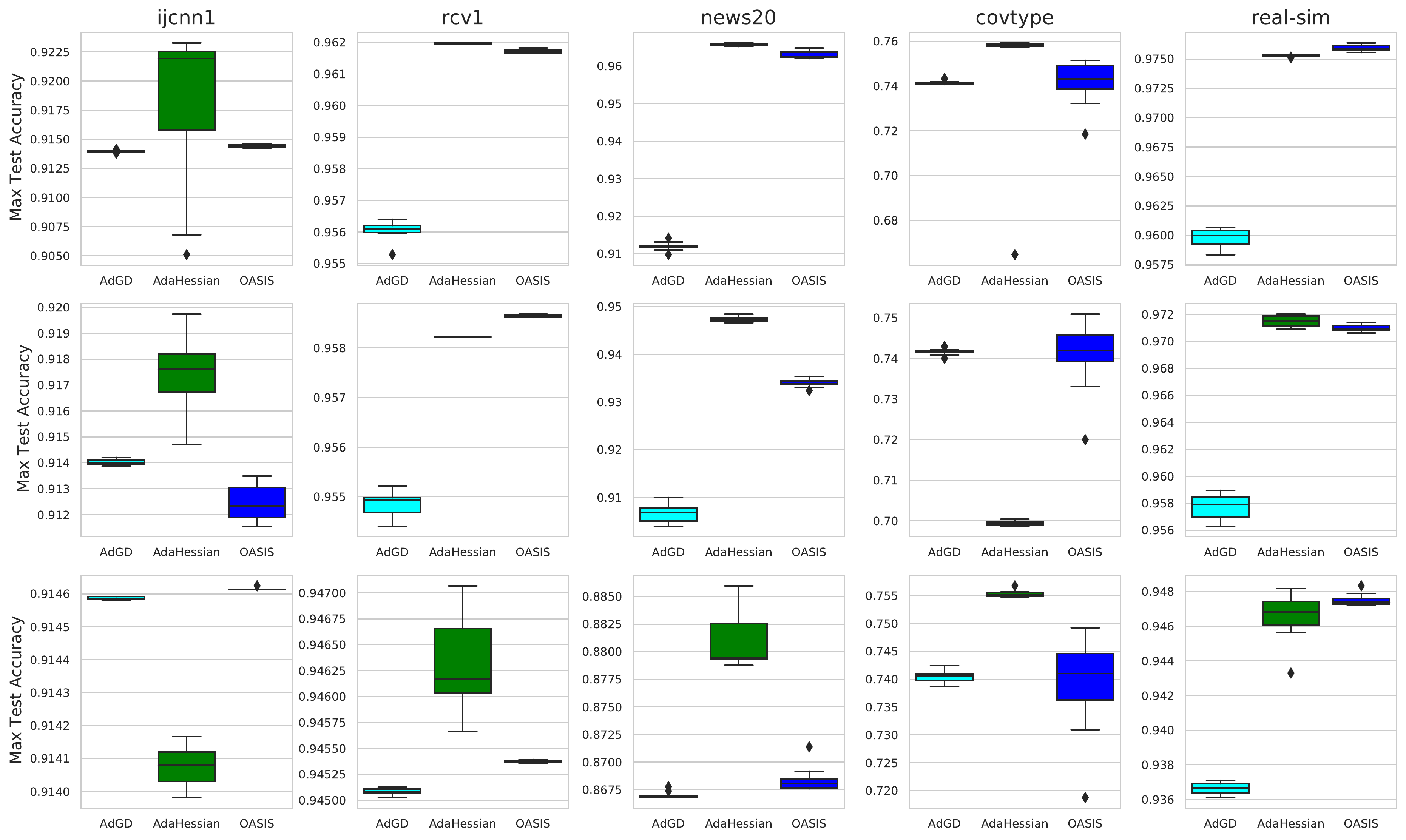}
    \caption{Maximum testing accuracy of \ALG, AdGD and AdaHessian for \mbox{$\ell_2$-regularized} Logistic regression: $\lambda=\frac{0.1}{n}$ (top row), $\lambda=\frac{1}{n}$ (middle row), and $\lambda=\frac{10}{n}$ (bottom row). From left to right: \textit{ijcnn1, rcv1, news20, covtype} and \textit{real-sim}.}
    \label{fig:logtest_box_app}
\end{figure}

\begin{figure}
    \centering
 \includegraphics[width=0.9\textwidth]{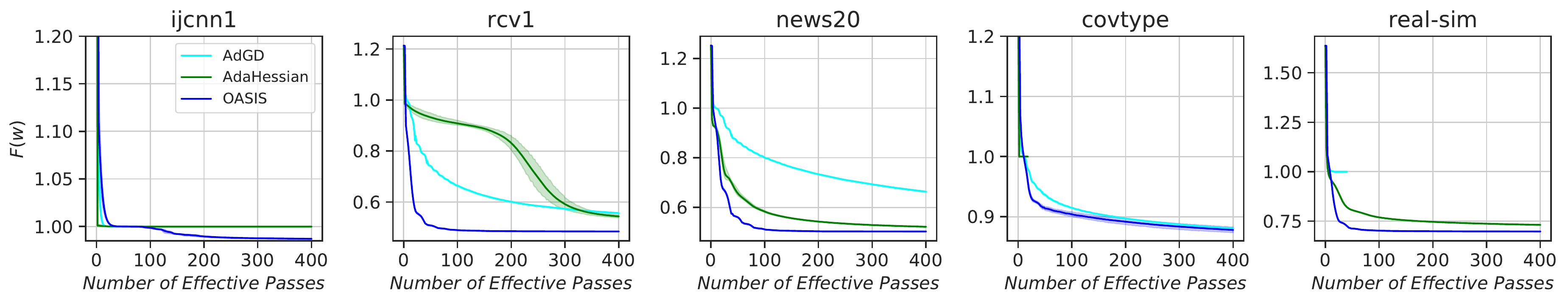}
 \includegraphics[width=0.9\textwidth]{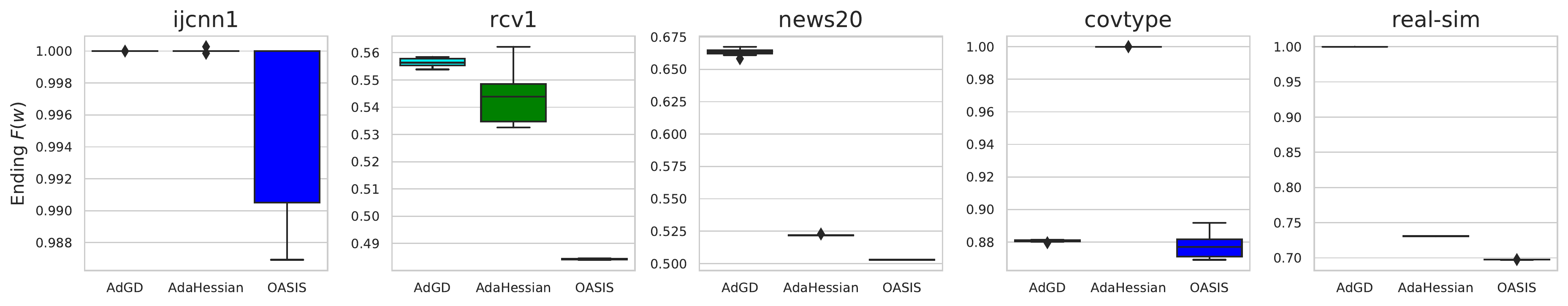}
    \caption{Evolution of the objective $F(w)$ (top row) and the ending $F(w)$ (bottom row) of \ALG, AdGD and AdaHessian for non-linear least square. From left to right: \textit{ijcnn1, rcv1, news20, covtype} and \textit{real-sim}.}
    \label{fig:nlsloss_app}

\vskip+40pt
    \centering
 \includegraphics[width=0.9\textwidth]{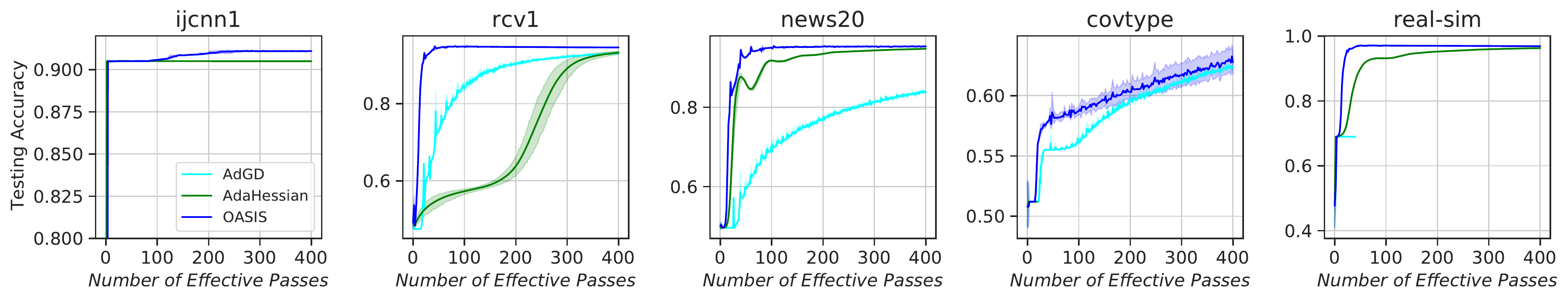}
 \includegraphics[width=0.9\textwidth]{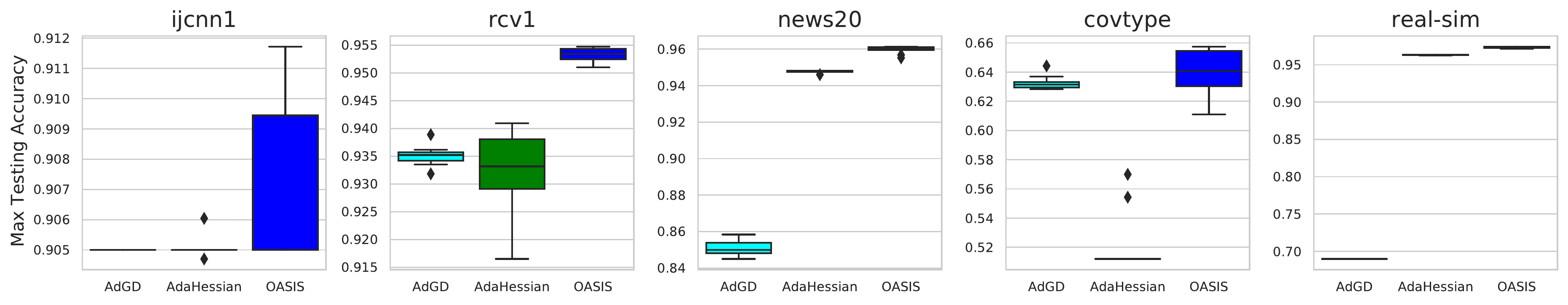}
    \caption{Evolution of the testing accuracy (top row) and the maximum accuracy (bottom row) of \ALG, AdGD and AdaHessian for non-linear least square. From left to right: \textit{ijcnn1, rcv1, news20, covtype} and \textit{real-sim}.}
    \label{fig:nlstest_app}
\end{figure}

\clearpage
\subsection{Image Classification}
In the following sections, we provide the results on standard bench-marking neural network training tasks: \texttt{CIFAR10}, \texttt{CIFAR100}, and \texttt{MNIST}.

\subsubsection{CIFAR10}
In our experiments, we compared the performance of the algorithm described in Table \ref{tab:descAlgs} in 2 settings - with and without weight decay. The procedure for choosing parameter values differs slightly for these two cases, so they are presented separately. Implementations of ResNet20/32 architectures are taken from AdaHessian repository.\footnote{\href{https://github.com/amirgholami/adahessian}{https://github.com/amirgholami/adahessian}}


Analogously to \cite{mishchenko2020adaptive}, we also modify terms in the update formula for $\eta_{k}$
$$
\eta_{k} = \min \{\sqrt{1 +\theta_{k-1}}\eta_{k-1},\tfrac{\|w_k - w_{k-1}\|_{\D{k}}}
        {2\|
         \nabla F(w_k)-\nabla F(w_{k-1}) \|^*_{\D{k}}}\}. 
$$
Specifically, we incorporate parameter $\gamma$ into $\sqrt{1 + \gamma \theta_{k-1}}$ and use more optimistic bound of $1/L_{k}$ instead of $1/2L_{k}$, which results in a slightly modified rule
$$
\eta_{k} = \min \{\sqrt{1 +\gamma \theta_{k-1}}\eta_{k-1},\tfrac{\|w_k - w_{k-1}\|_{\D{k}}}
        {\|
         \nabla F(w_k)-\nabla F(w_{k-1}) \|^*_{\D{k}}}\}. 
$$

\begin{figure*}
    \centering
        \includegraphics[width=0.49\textwidth]{DL_classification/test_weight_decay_ResNet_20_results_v4.pdf}
        \includegraphics[width=0.49\textwidth]{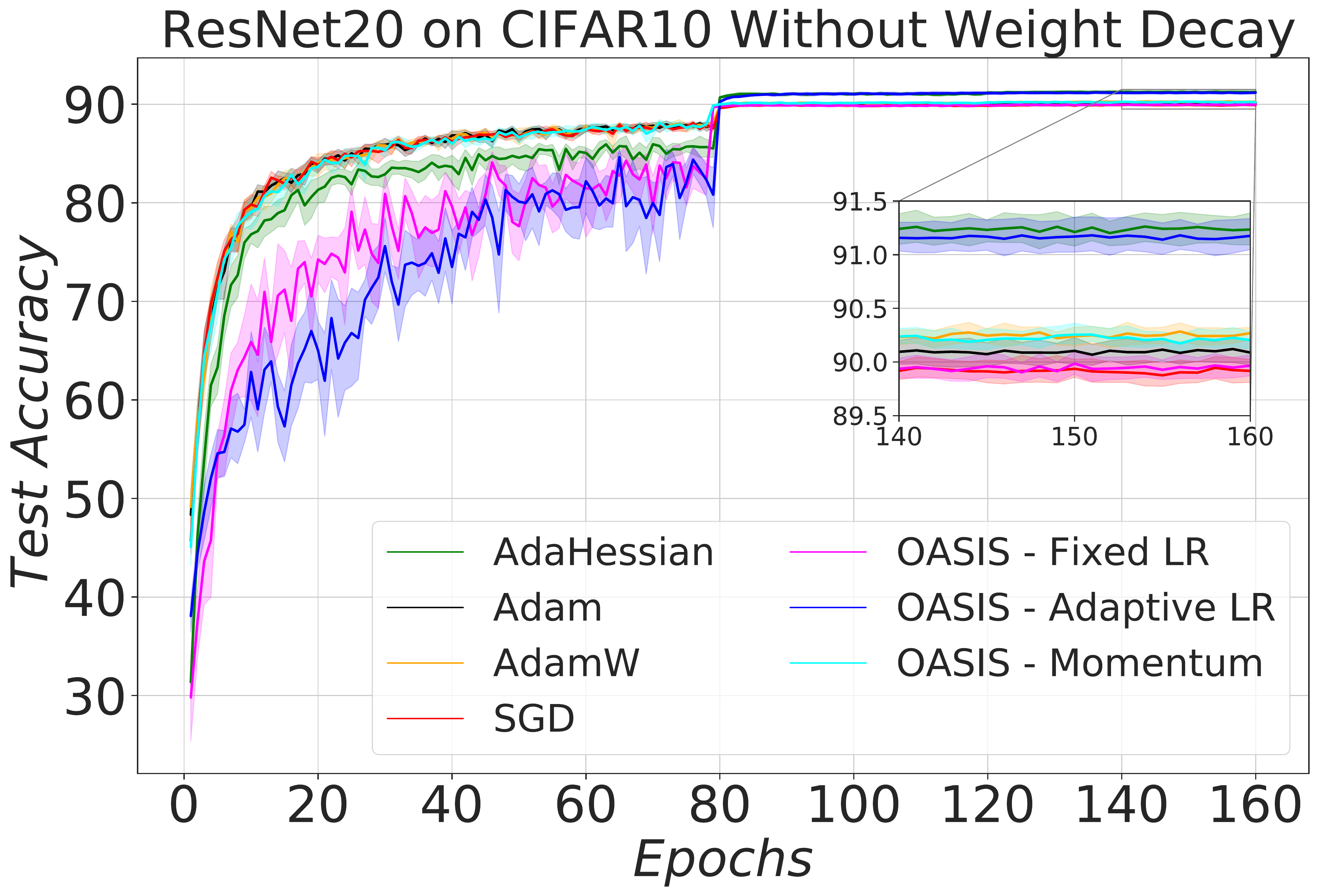}

        \includegraphics[width=0.49\textwidth]{DL_classification/train_weight_decay_ResNet_20_results_v4.pdf}
        \includegraphics[width=0.49\textwidth]{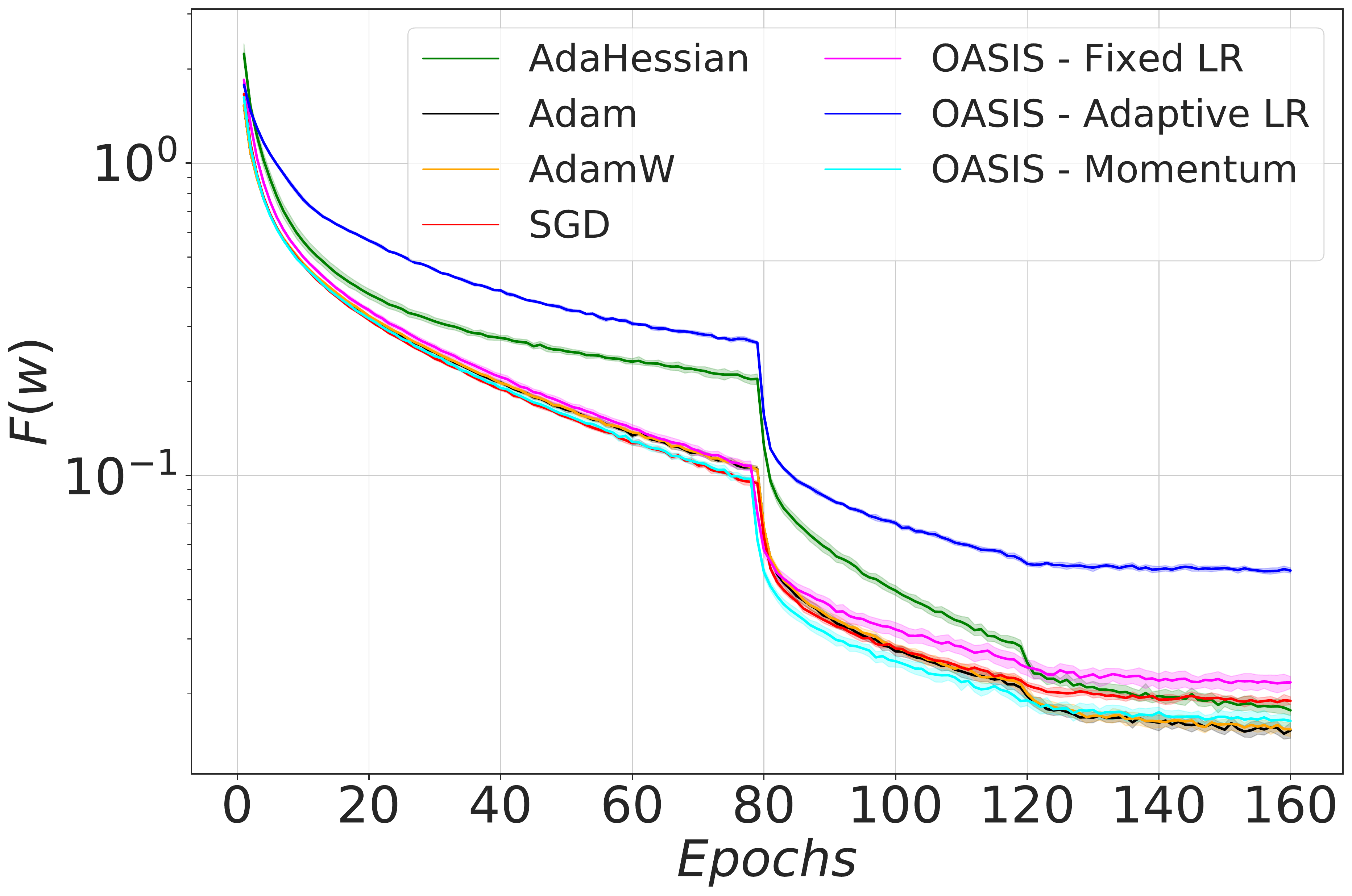}
    \caption{ResNet20 on CIFAR10 with and without weight decay. Final accuracy results can be found in Table \ref{tab:CIFAR10_app}.}
    \label{fig:CIFAR_10_ResNet20_full}
\end{figure*}

\begin{figure*}
    \centering
        \includegraphics[width=0.49\textwidth]{DL_classification/test_weight_decay_ResNet_32_results_v4.pdf}
        \includegraphics[width=0.49\textwidth]{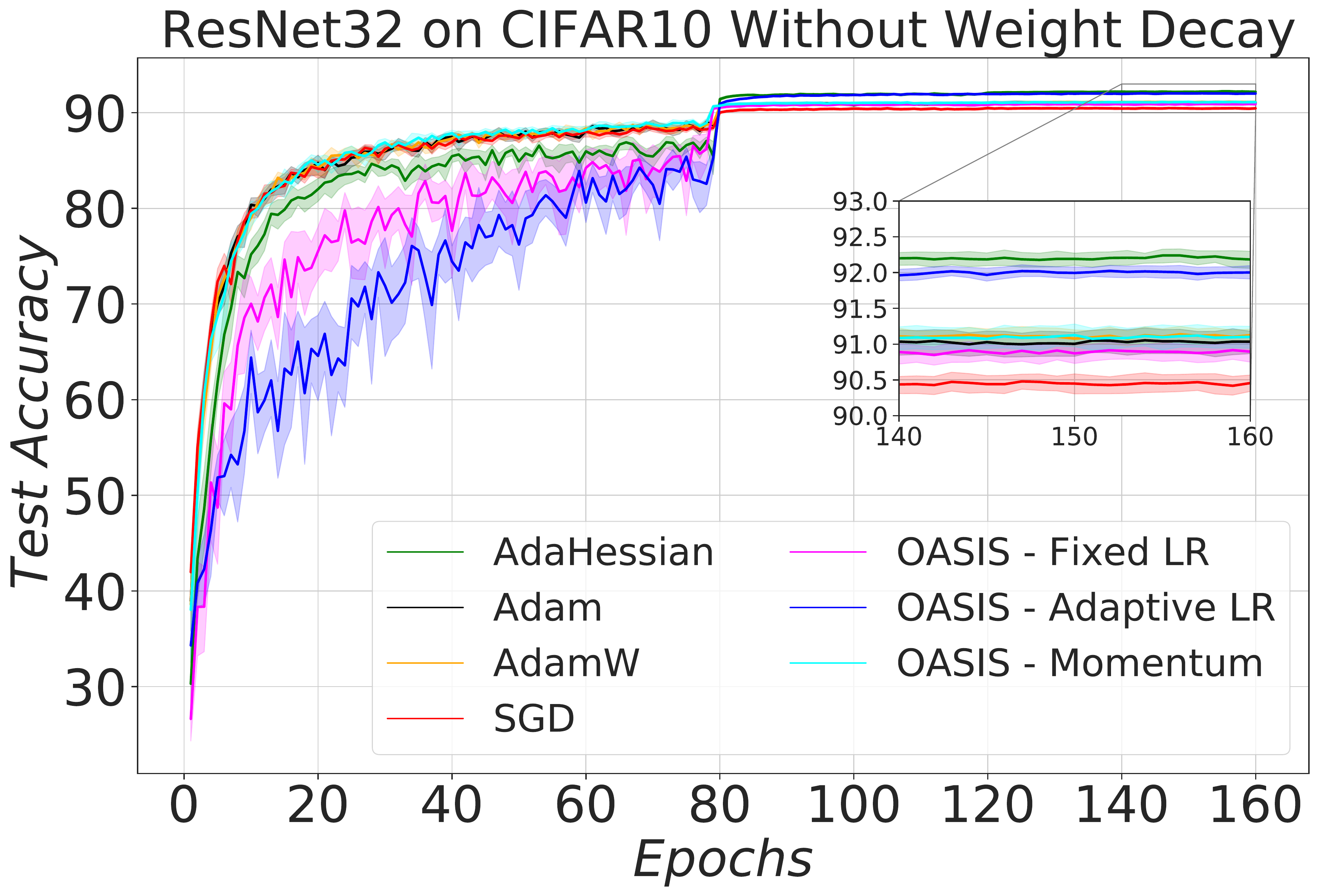}

        \includegraphics[width=0.49\textwidth]{DL_classification/train_weight_decay_ResNet_32_results_v4.pdf}
        \includegraphics[width=0.49\textwidth]{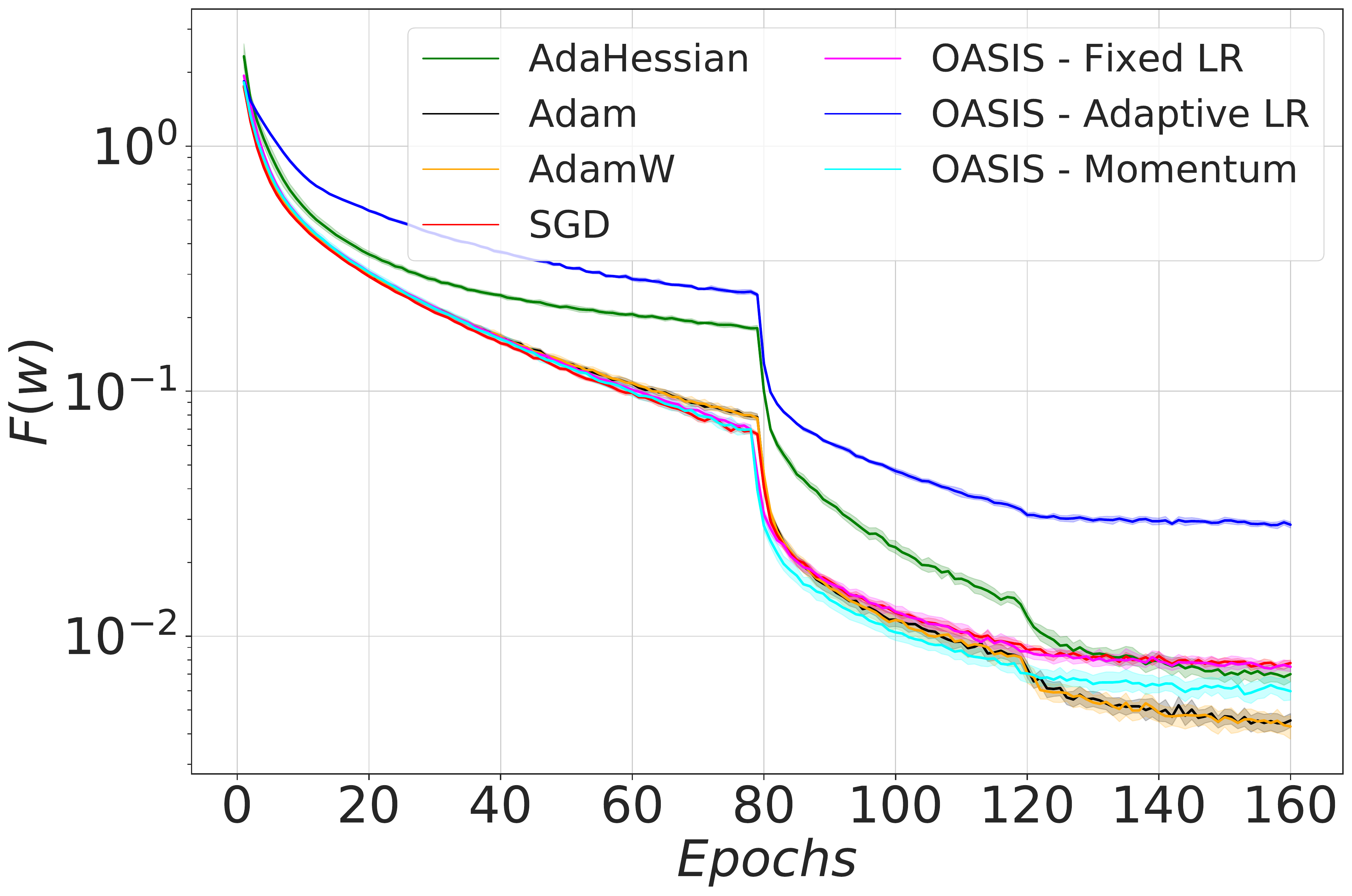}
    \caption{ResNet32 on CIFAR10 with and without weight decay. Final accuracy results can be found in Table \ref{tab:CIFAR10_app}.}
    \label{fig:CIFAR_10_ResNet32_full}
\end{figure*}

\begin{table}
	\centering
	\caption{Results of ResNet20/32 on CIFAR10 with and without weight decay. Variant of our method with fixed learning rate beats or is on par with others in the weight decay setting; while \ALG with adaptive learning rate produces consistent results showing a close second best performance without any learning rate tuning (for without weight decay setting).}
	\begin{tabular}{lcccc}
		\toprule
		\textbf{Setting} & ResNet20, WD & ResNet20, no WD & ResNet32, WD & ResNet32, no WD\\
		\midrule
		SGD & 92.02 $\pm$ 0.22 & 89.92 $\pm$ 0.16 & 92.85 $\pm$ 0.12 & 90.55 $\pm$ 0.21\\\hdashline
		Adam & 90.46 $\pm$ 0.22 & 90.10 $\pm$ 0.19 & 91.30 $\pm$ 0.15 & 91.03 $\pm$ 0.37\\\hdashline
		AdamW & 91.99 $\pm$ 0.17 & 90.25 $\pm$ 0.14 & 92.58 $\pm$ 0.25 & 91.10 $\pm$ 0.16\\\hdashline
		AdaHessian & \textbf{92.03 $\pm$ 0.10} & \bf{91.22 $\pm$ 0.24} & 92.71 $\pm$ 0.26 & \textbf{92.19 $\pm$ 0.14}\\
		\midrule
		\ALG-Adaptive LR & 91.20 $\pm$ 0.20 & 91.19 $\pm$ 0.24 & 92.61 $\pm$ 0.22 & 91.97 $\pm$ 0.14\\
		\ALG-Fixed LR & 91.96 $\pm$ 0.21 & 89.94 $\pm$ 0.16 & \bf{93.01 $\pm$ 0.09} & 90.88 $\pm$ 0.21\\
		\ALG-Momentum & 92.01 $\pm$ 0.19 & 90.23 $\pm$ 0.14 & 92.77 $\pm$ 0.18 & 91.11 $\pm$ 0.25\\
		\bottomrule\vspace{-10pt}\\
		WD := Weight decay
	\end{tabular}
	\label{tab:CIFAR10_app}
\end{table}

In order to find the best value for $\gamma$, we include it into hyperparameter tuning procedure, ranging the values in the set $\{1, 0.1, 0.05, 0.02, 0.01\}$ (see Section \ref{sec:sensAnalysis} where \ALG{} has a narrow spectrum of changes with respect to different values of $\gamma$).

Moreover, we used a learning-rate decaying strategy as considered in \cite{yao2020adahessian} to have the same and consistent settings. In the aforementioned setting, $\eta_{k}$ would be decreased by a multiplier on some specific epochs common for every algorithms (the epochs 80 and 120). 

\textit{With weight decay}: It is important to mention that due to variance in stochastic setting, the learning rate needs to decay (regardless of the learning rate rule is adaptive or fixed) to get convergence.  One can apply adaptive learning rate rule without decaying by using variance-reduced methods (future work). Experimental setup is generally taken to be similar to the one used in \cite{yao2020adahessian}. Parameter values for SGD, Adam, AdamW and AdaHessian are taken from the same source, meaning that learning rate is set to be $0.1/0.001/0.01/0.15$ and, where they are used, $\beta_{1}$ and $\beta_{2}$ are set to be $0.9$ and $0.999$. For \ALG, due to a significantly different form of the preconditioner aggregation, we conducted a small scale search for the best value of $\beta_{2}$, choosing from the set $\{0.999, 0.99, 0.98, 0.97, 0.96, 0.95\}$, which produced values of $0.99/0.95/0.98$ for ``Fixed LR,'' ``Momentum'' and adaptive variants correspondingly for ResNet20 and $0.99/0.99/0.98$ for ``Fixed LR,'' ``Momentum'' and adaptive variants correspondingly for ResNet32, while $\beta_{1}$ for momentum variant is still set to $0.9$. Additionally, in the experiments with fixed learning rate, it is tuned for each architecture, producing values $0.1/0.1$ for ResNet20 and $0.1/0.1$ for ResNet32 for ``Fixed LR'' and ``Momentum'' variants of \ALG correspondingly. For the adaptive variant, $\gamma$ is set to $0.1/1.0$ for ResNet20/32. $\alpha$ is set to $0.1$ for all \ALG variants. We train all methods for 160 epochs and for all methods. With fixed learning rate we employ identical scheduling, reducing step size by a factor of 10 at epochs 80 and 120. For the adaptive variant scheduler analogue is implemented, multiplying step size by $\rho$ at epochs 80 and 120, where $\rho$ is set to be $0.1/0.5$ for ResNet20/32. Weight decay value for all optimizers with fixed learning rate is $0.0005$ and decoupling for \ALG is done similarly to AdamW and AdaHessian. For adaptive \ALG weight decay is set to $0.001$ without decoupling. Batch size for all optimizers is $256$.
    
\textit{Without weight decay}: 
In this setting we tune learning rate for SGD, Adam, AdamW and AdaHessian, obtaining the values $0.15/0.005/0.005/0.25$ and $0.125/0.01/0.01/0.25$ for ResNet20/32 accordingly. Where relevant, $\beta_{1}, \beta_{2}$ are taken to be $0.9, 0.999$. For \ALG we still try to tune $\beta_{2}$, which produces values of $0.999$ for ResNet20 and adaptive case of ResNet32 and $0.99$ for ``Fixed LR,'' ``Momentum'' in the case of ResNet32; for the momentum variant $\beta_{1} = 0.9$. Learning rates are $0.025/0.05$ for ResNet20 and $0.025/0.1$ for ResNet32 for ``Fixed LR'' and ``Momentum'' variants of \ALG correspondingly. For adaptive variant $\gamma$ is set to $0.01/0.01$ for ResNet20/32. $\alpha$ is set to $0.1$ for adaptive \ALG variant, while hyperoptimization showed, that for ``Fixed LR'' and ``Momentum'' a value of $0.01$ can be used. We train all methods for 160 epochs and for all methods with fixed learning rate we employ identical scheduling, reducing step size by a factor of 10 at epochs 80 and 120. For the adaptive variant scheduler analogue is implemented, multiplying step size by $\rho$ at epochs 80 and 120, where $\rho$ is set to be $0.1/0.1$ for ResNet20/32. Batch size for all optimizers is $256$.

Finally, all of the results presented are based on 10 runs with different seeds, where parameters are chosen amongst the best runs produced at the preceding tuning phase.

All available results can be seen in Table \ref{tab:CIFAR10_app} and Figures \ref{fig:CIFAR_10_ResNet20_full} and \ref{fig:CIFAR_10_ResNet32_full}.  We ran our experiments on an NVIDIA V100 GPU.

\subsubsection{CIFAR100}

Analogously to CIFAR10, we do experiments in settings with and without weight decay. In both cases we took best learning rates from the corresponding CIFAR10 experiment with ResNet20 architecture without any additional tuning, which results in $0.1$ for all. $\beta_2$ is set to $0.99$ in the case with weight decay and to $0.999$ in the case without it. ResNet18 architecture implementation is taken from a github repository.\footnote{\href{https://github.com/uoguelph-mlrg/Cutout}{https://github.com/uoguelph-mlrg/Cutout}} We train all methods for 200 epochs. For all optimizers learning rate is decreased (or effectively decreased in case of fully adaptive \ALG) by a factor of 5 at epochs 60, 120 and 160. 

All of the results presented are based off of 10 runs with different seeds. All available results can be seen in Table \ref{tab:CIFAR100_app} and Figure \ref{fig:CIFAR_100_ResNet18_full}. We ran our experiments on an NVIDIA V100 GPU.

\begin{figure*}
    \centering
        \includegraphics[width=0.49\textwidth]{DL_classification/test_weight_decay_ResNet_18_results_v4.pdf}
        \includegraphics[width=0.49\textwidth]{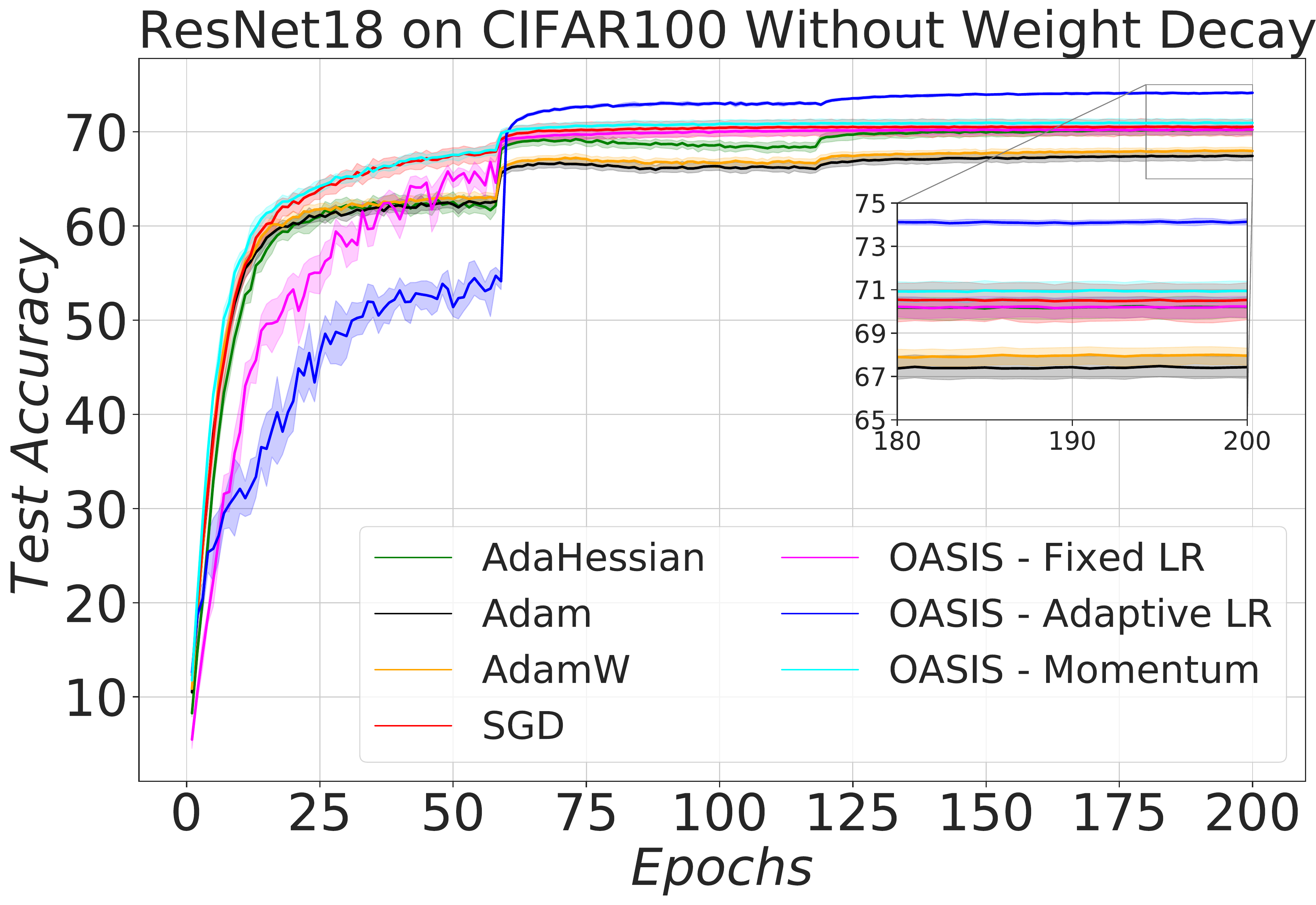}

        \includegraphics[width=0.49\textwidth]{DL_classification/train_weight_decay_ResNet_18_results_v4.pdf}
        \includegraphics[width=0.49\textwidth]{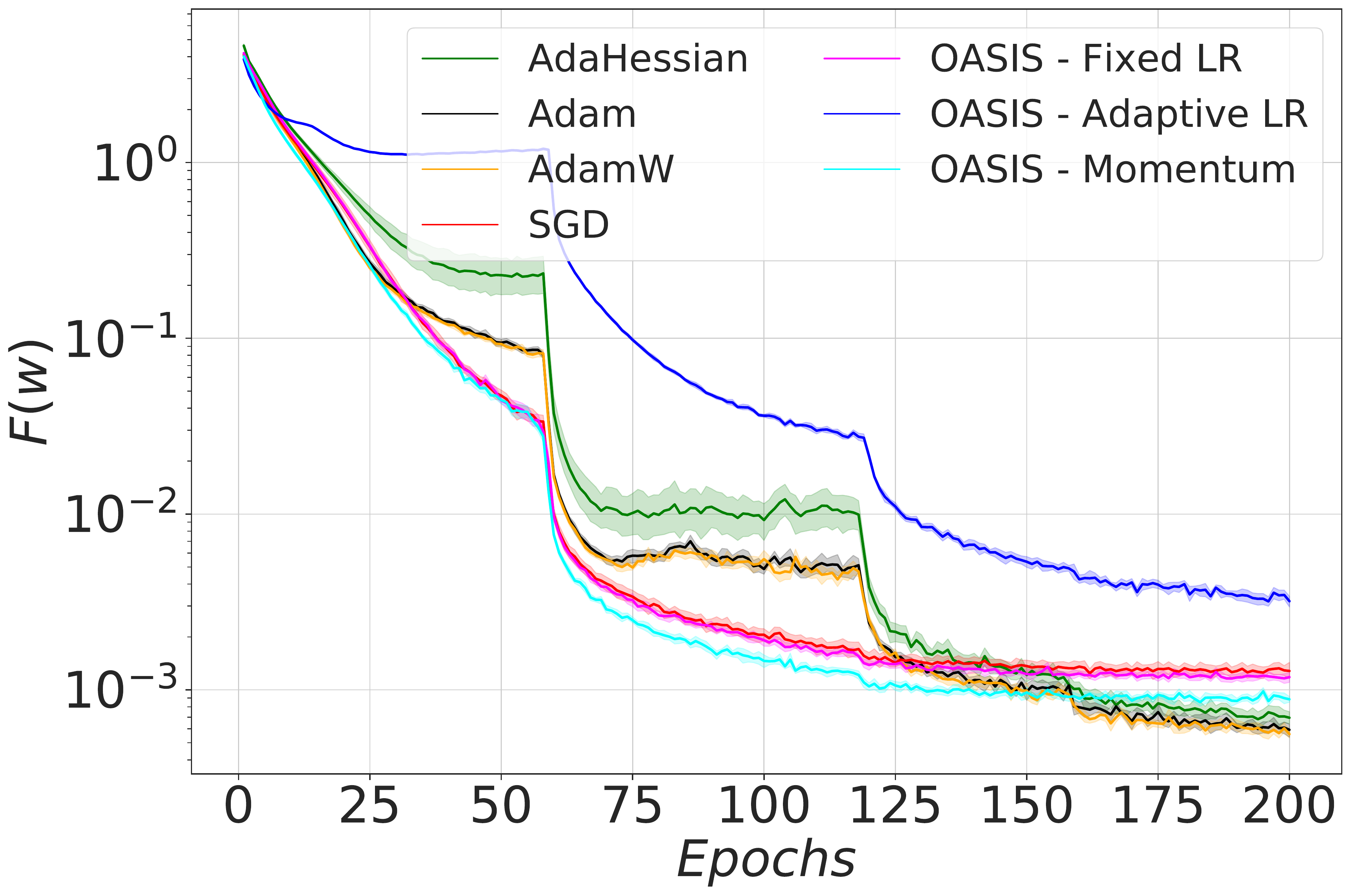}
    \caption{ResNet18 on CIFAR100 with and without weight decay. Final accuracy results can be found in Table \ref{tab:CIFAR100_app}.}
    \label{fig:CIFAR_100_ResNet18_full}
\end{figure*}

\begin{table}
	\centering
	\caption{Results of ResNet18 on CIFAR100. Simply transferring parameter values from similar task with ResNet20 on CIFAR10 predictably damages performance of optimizers compared to their heavily tuned versions. Notably, in the setting without weight decay performance of SGD and Adam became unstable for different initializations, while adaptive variant of \ALG produces behaviour robust to the choice of the random seed.}
	\begin{tabular}{lcc}
		\toprule
		\textbf{Setting} & ResNet18, WD & ResNet18, no WD\\
		\midrule
		SGD & 76.57 $\pm$ 0.24 & 70.50 $\pm$ 1.51\\\hdashline
		Adam & 73.40 $\pm$ 0.31 & 67.40 $\pm$ 0.91\\\hdashline
		AdamW & 72.51 $\pm$ 0.76 & 67.96 $\pm$ 0.69\\\hdashline
		AdaHessian & 75.71 $\pm$ 0.47 & 70.16 $\pm$ 0.82\\
		\midrule
		\ALG-Adaptive LR & \bf{76.93 $\pm$ 0.22} & \textbf{74.13 $\pm$ 0.20}\\
		\ALG-Fixed LR & 76.28 $\pm$ 0.21 & 70.18 $\pm$ 0.76\\
		\ALG-Momentum & 76.89 $\pm$ 0.34 & 70.93 $\pm$ 0.77\\
		\bottomrule\vspace{-10pt}\\
		WD := Weight decay
	\end{tabular}
	\label{tab:CIFAR100_app}
\end{table}

\subsubsection{MNIST}
Similar to the previous results, we ran the methods with 10 different random seeds. We considered \textbf{Net DNN} which has 2 convolutional layers and 2 fully connected layers with ReLu non-linearity, mentioned in the Table \ref{tab:networks}. In order to tune the hyperparameters for other algorithms, we considered the set of learning rates $\{10^0, 10^{-1}, 10^{-2}, 10^{-3}\}$. For \ALG, we used the set of $\{10^{-1}, 10^{-2}\}$ and the set for truncation parameter $\alpha \in\{10^{-1}, 10^{-2}\}$.
As is clear from the results shown in Figure \ref{fig:MNIST_WitAndWithoutWD} and Table \ref{tab:MNIST}, \ALG with momentum has the best performance in terms of training (lowest loss function), and and it is comparable with the best test accuracy for the cases with and without weight decay. It is worth mentioning that \ALG with adaptive learning rate got satisfactory results with lower number of parameters, which require tuning, which is vividly important, especially in comparison to first-order methods that are sensitive to the choice of learning rate. We ran our experiments on a Tesla K80 GPU.

\begin{figure*}
    \centering            \includegraphics[width=0.49\textwidth]{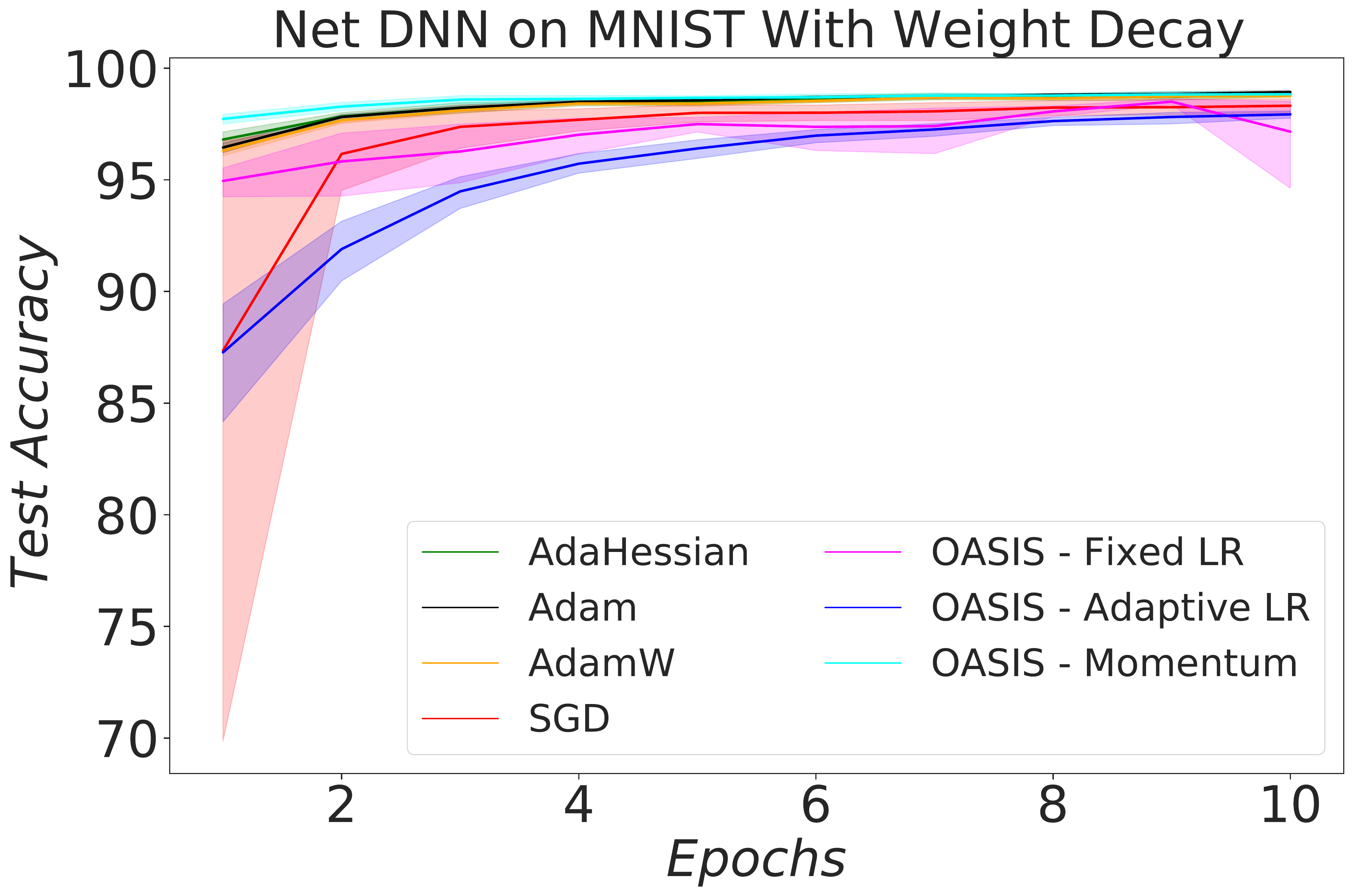}
        \includegraphics[width=0.49\textwidth]{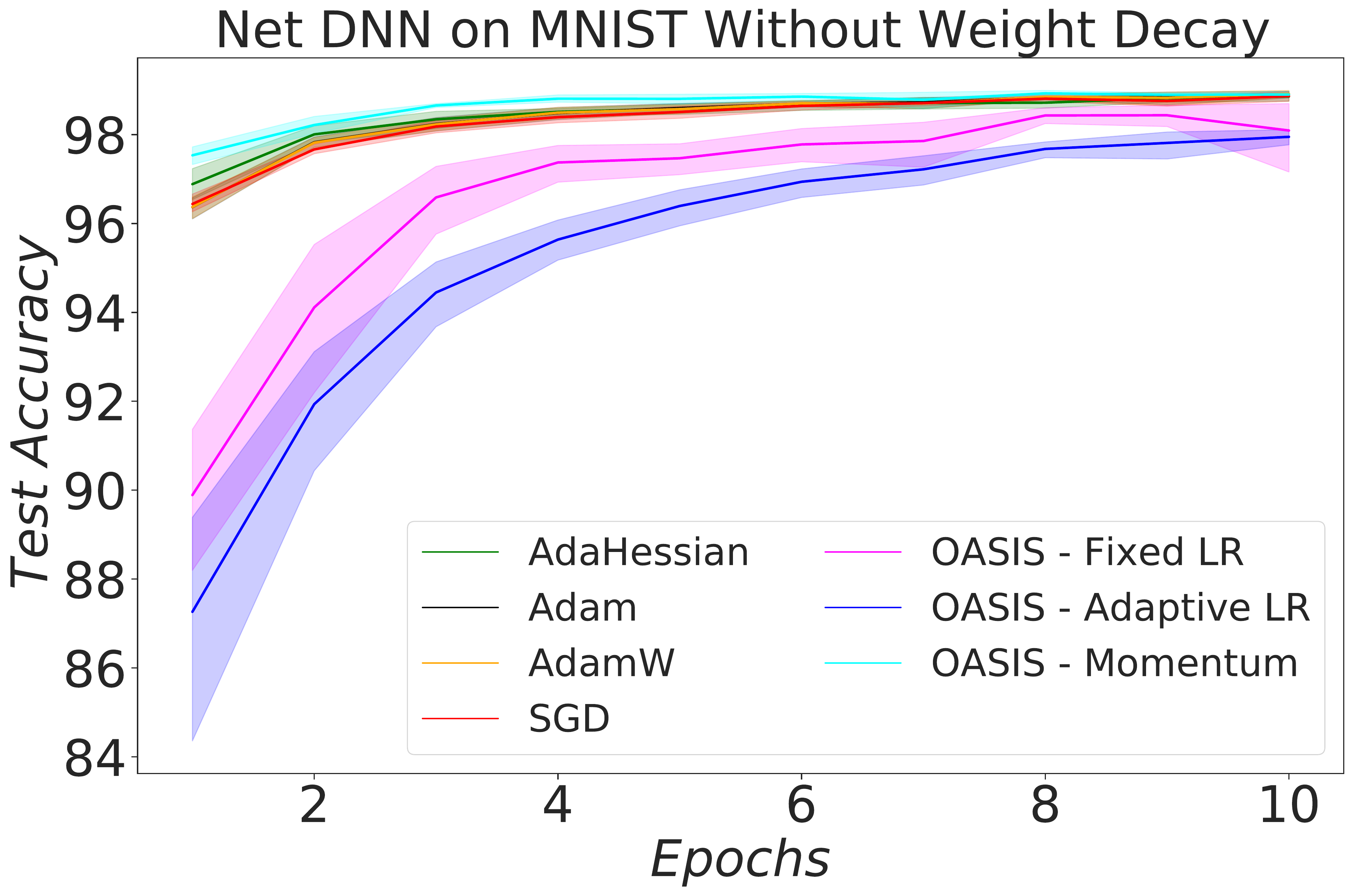}
        
       \includegraphics[width=0.49\textwidth]{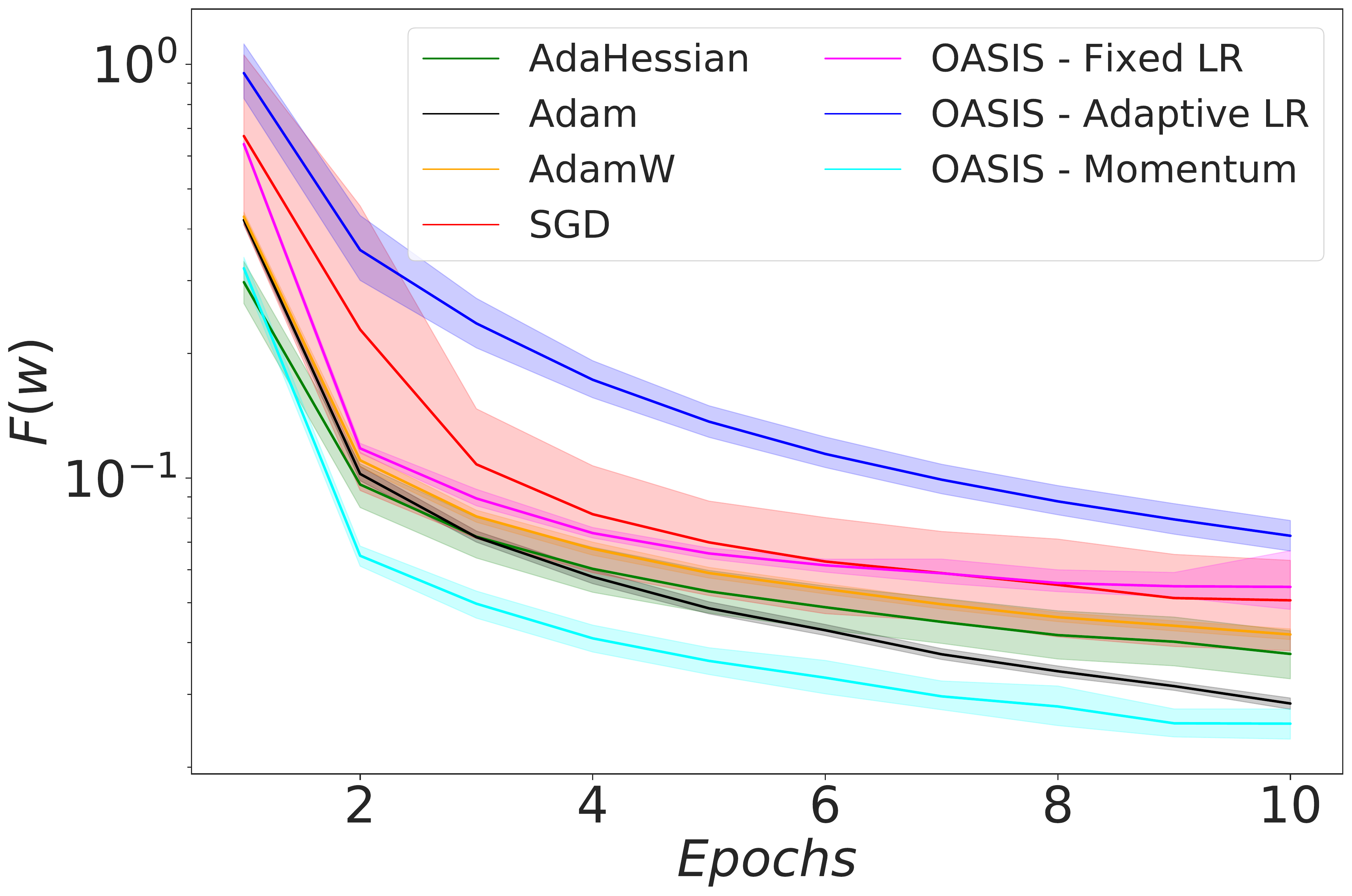}       
      \includegraphics[width=0.49\textwidth]{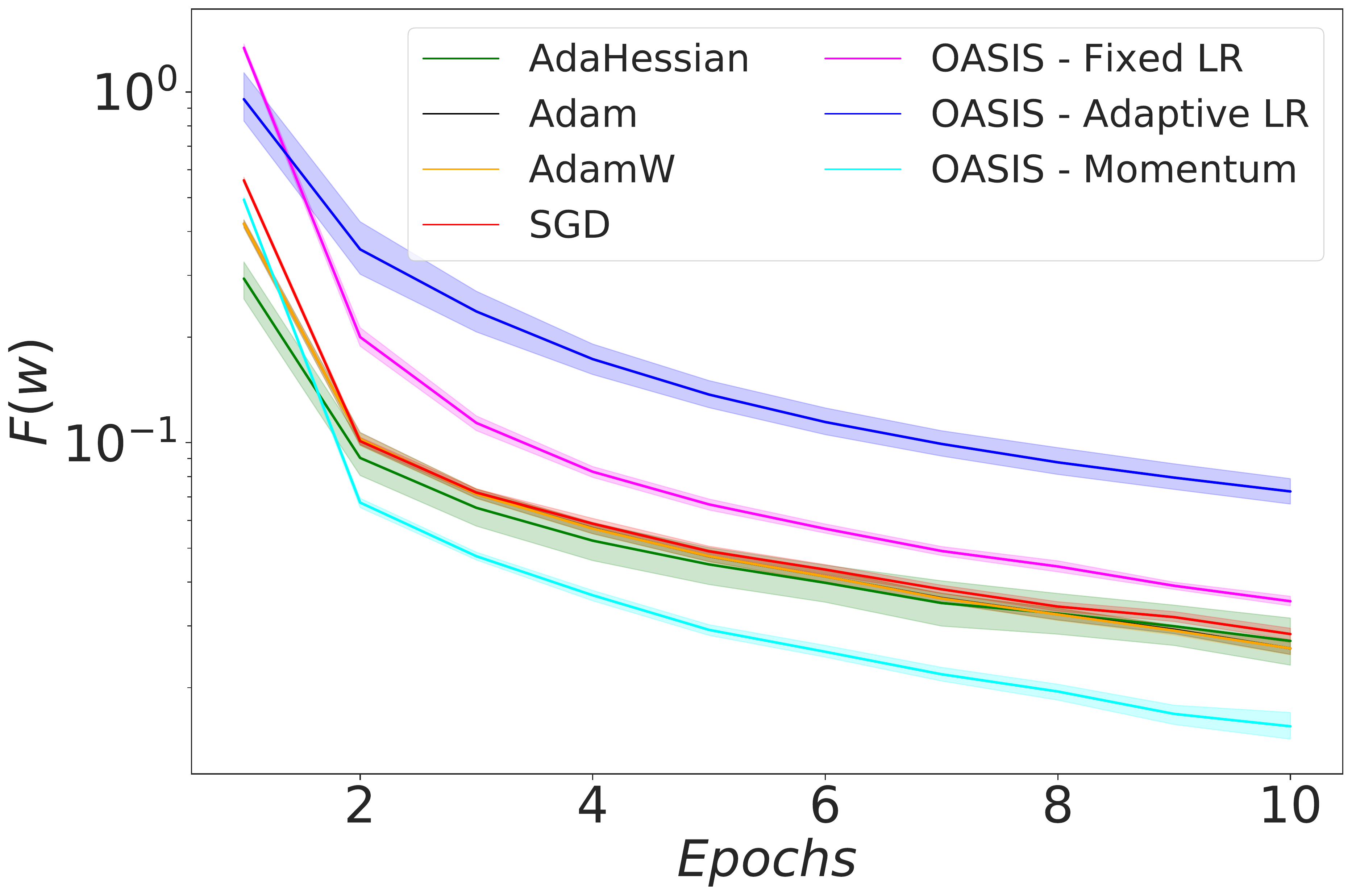}

    \caption{Net DNN on MNIST with and without weight decay. Final accuracy results can be found in Table \ref{tab:MNIST}.}
    \label{fig:MNIST_WitAndWithoutWD}
\end{figure*}

\begin{table}
	\centering
	\caption{Results of Net DNN on MNIST. Gradient momentum usage seems improve results on short trajectories for all methods, including \ALG.}
	\begin{tabular}{lcc}
		\toprule
		\textbf{Setting} & Net DNN, WD & Net DNN, no WD\\
		\midrule
		SGD & 98.37 $\pm$ 0.45 & 98.86 $\pm$ 0.18 \\\hdashline
		Adam & \textbf{98.92 $\pm$ 0.14} & 98.90 $\pm$ 0.13 \\\hdashline
		AdamW & 98.76 $\pm$ 0.13 & \textbf{98.92 $\pm$ 0.13} \\\hdashline
		AdaHessian & 98.82 $\pm$ 0.16 & 98.86 $\pm$ 0.16 \\
		\midrule
		\ALG-Adaptive LR & 97.93 $\pm$ 0.27 & 97.95 $\pm$ 0.29 \\
		\ALG-Fixed LR & 97.24 $\pm$ 3.97 & 98.09 $\pm$ 1.36 \\
		\ALG-Momentum & 98.78 $\pm$ 0.22 & 98.89 $\pm$ 0.10 \\
		\bottomrule\vspace{-10pt}\\
		WD := Weight decay
	\end{tabular}
	\label{tab:MNIST}
\end{table}

\subsection{Sensitivity Analysis of \ALG{}}\label{sec:sensAnalysis}
It is worth noting that we designed and analyzed \ALG for the deterministic setting with adaptive learning rate. It is safe to state that no sensitive tuning is required in the deterministic setting (the learning rate is updated adaptively, and the performance of \ALG{} is completely robust even if  the rest of the hyperparameters are not perfectly hand-tuned); see Figures \ref{fig:sensanalysisLog} and \ref{fig:sensanalysisNls}. 

\begin{figure*}
    \centering            \includegraphics[width=0.49\textwidth]{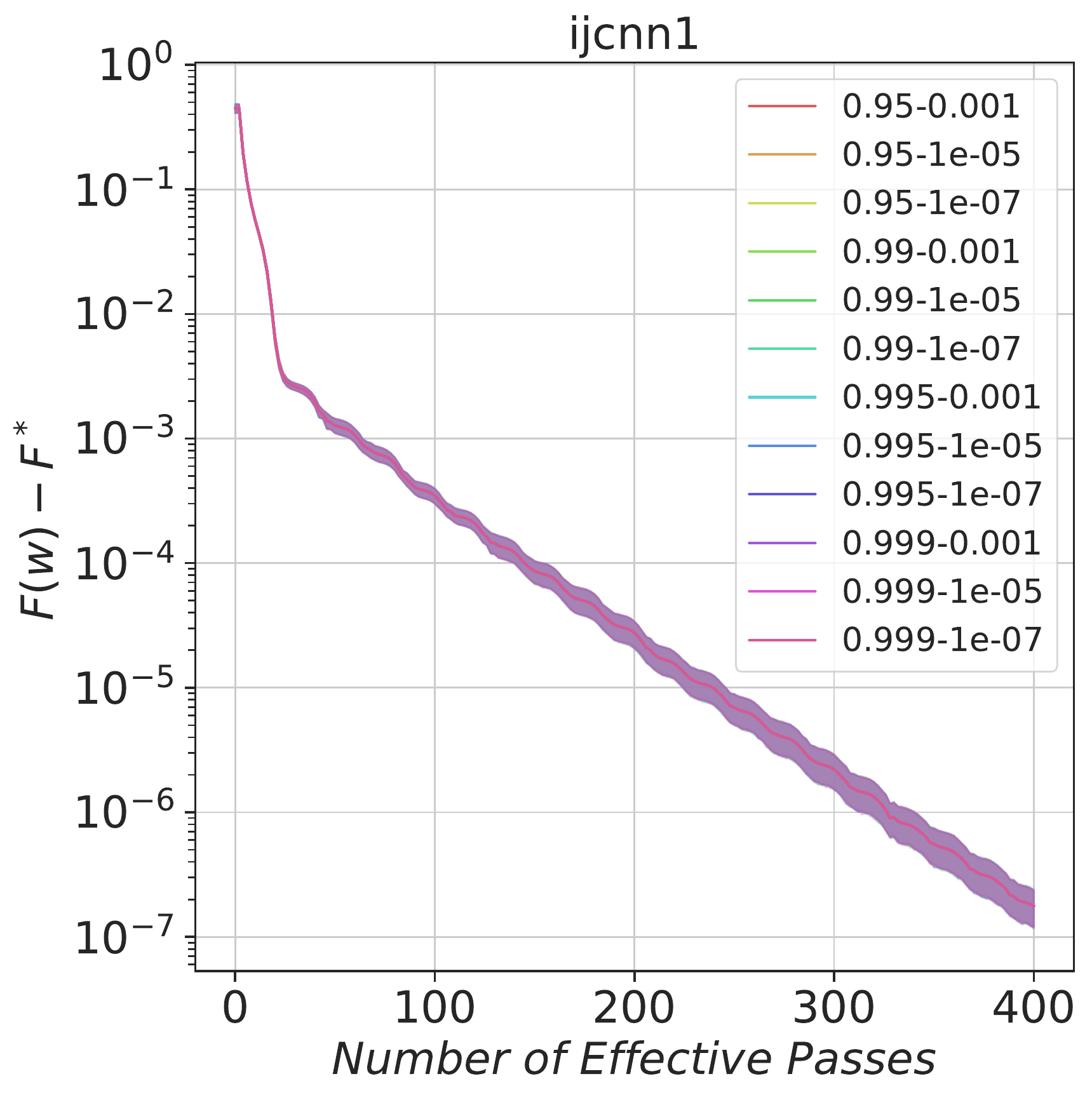}
    \includegraphics[width=0.49\textwidth]{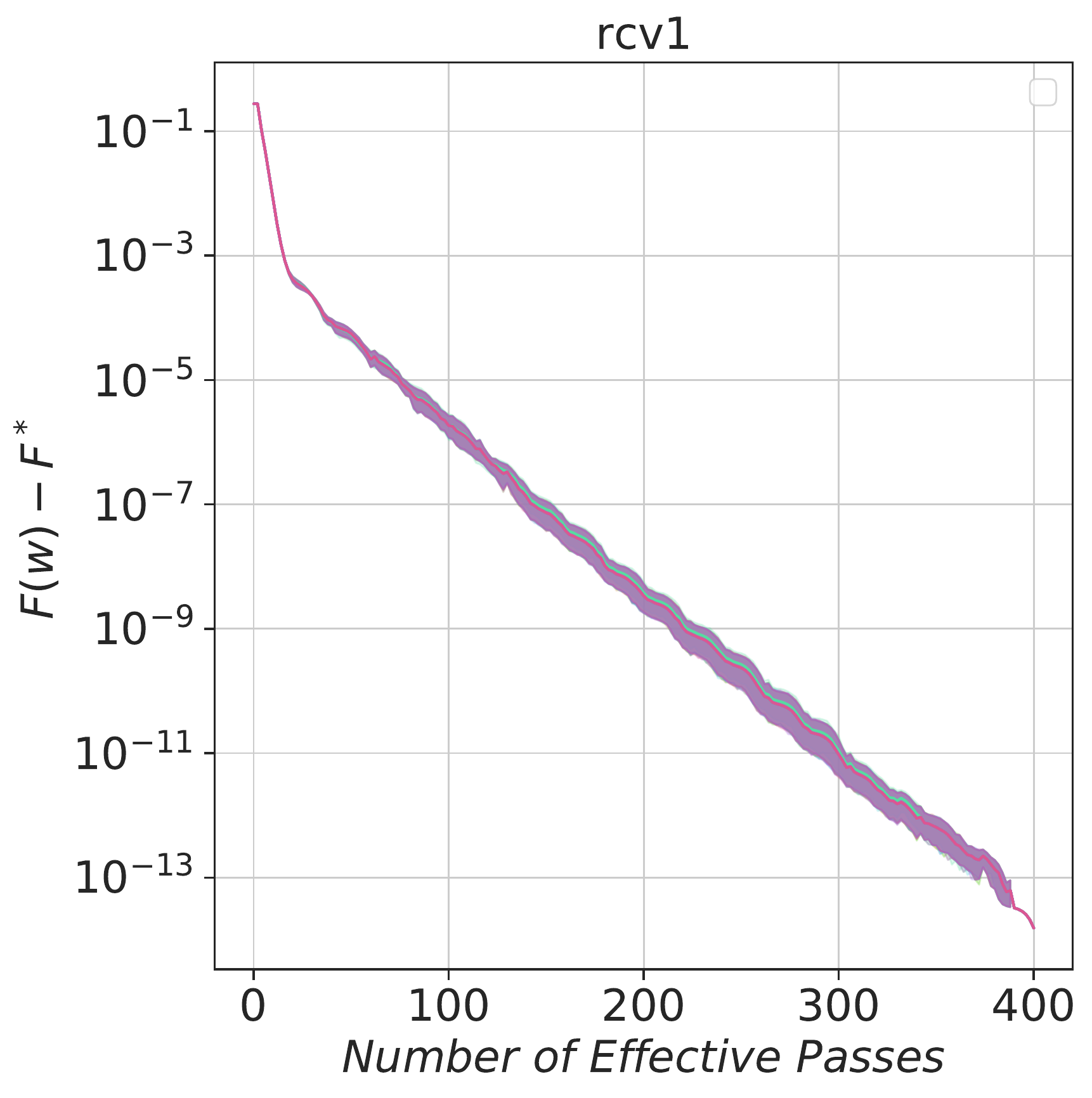}
    
    \includegraphics[width=0.49\textwidth]{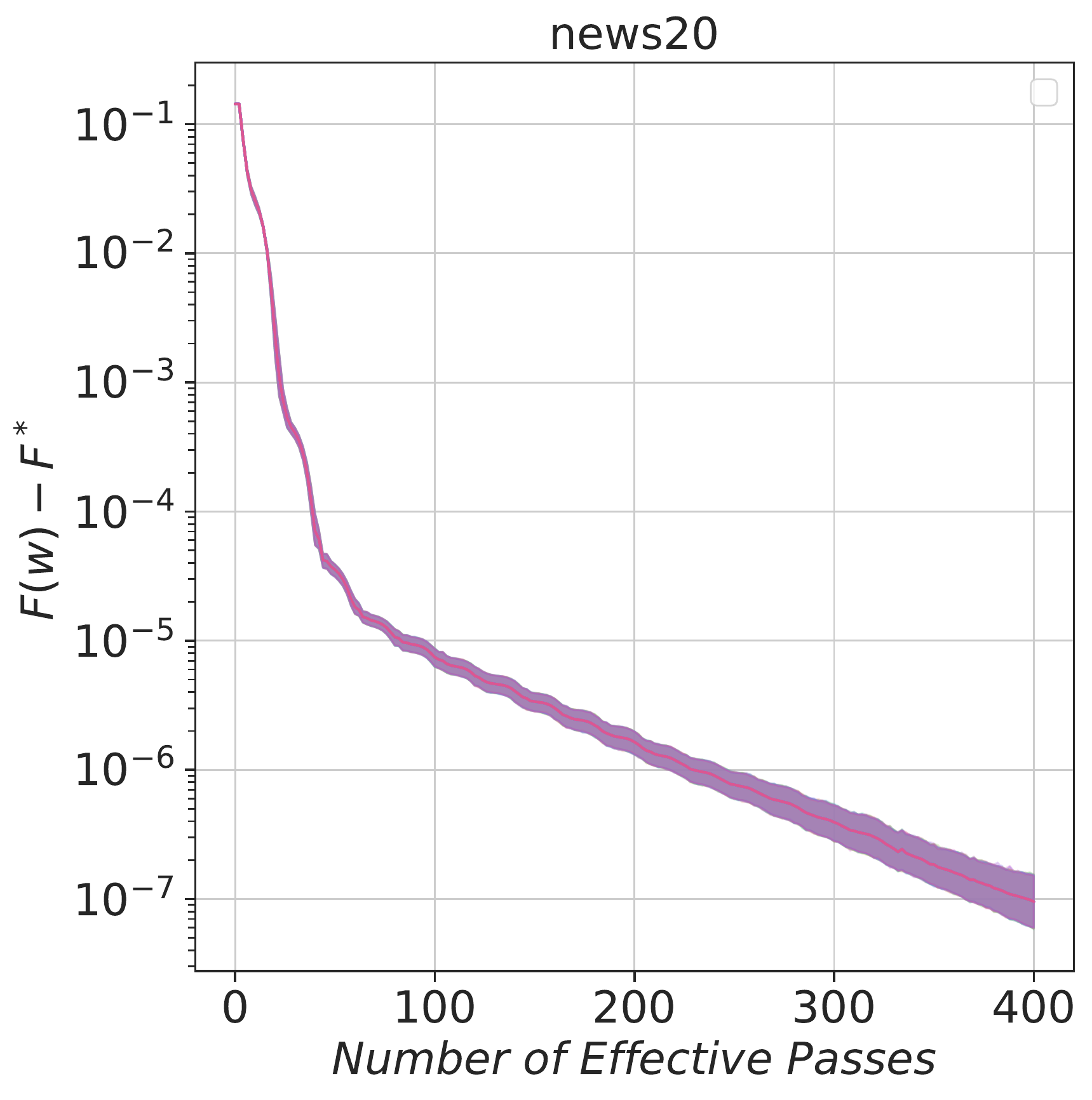}
    \includegraphics[width=0.49\textwidth]{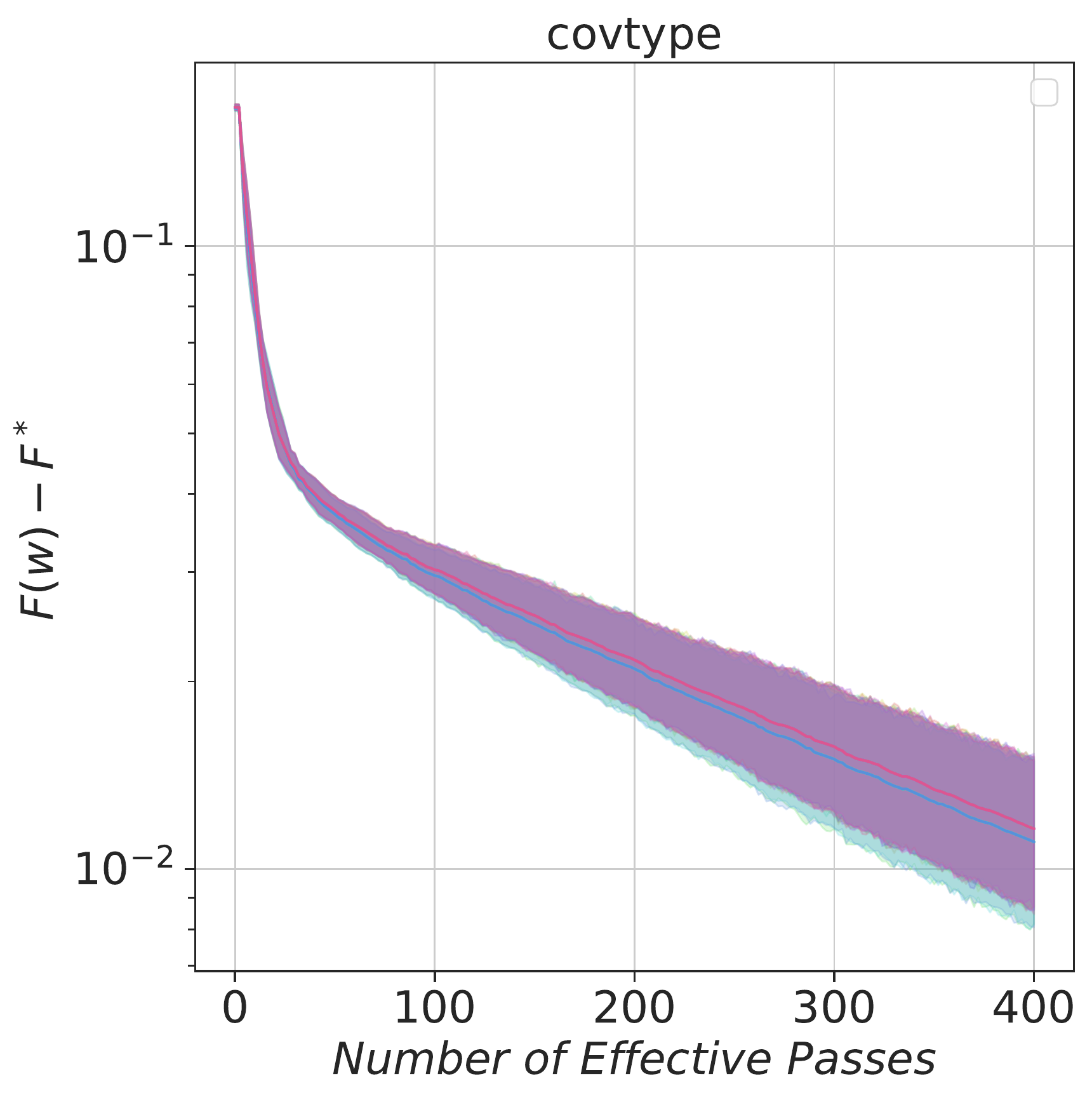}

    \caption{Sensitivity of \ALG{} w.r.t. $(\beta_2, \alpha)$, Deterministic Logistic regression.}
    \label{fig:sensanalysisLog}
\end{figure*}

\begin{figure*}
    \centering            \includegraphics[width=0.49\textwidth]{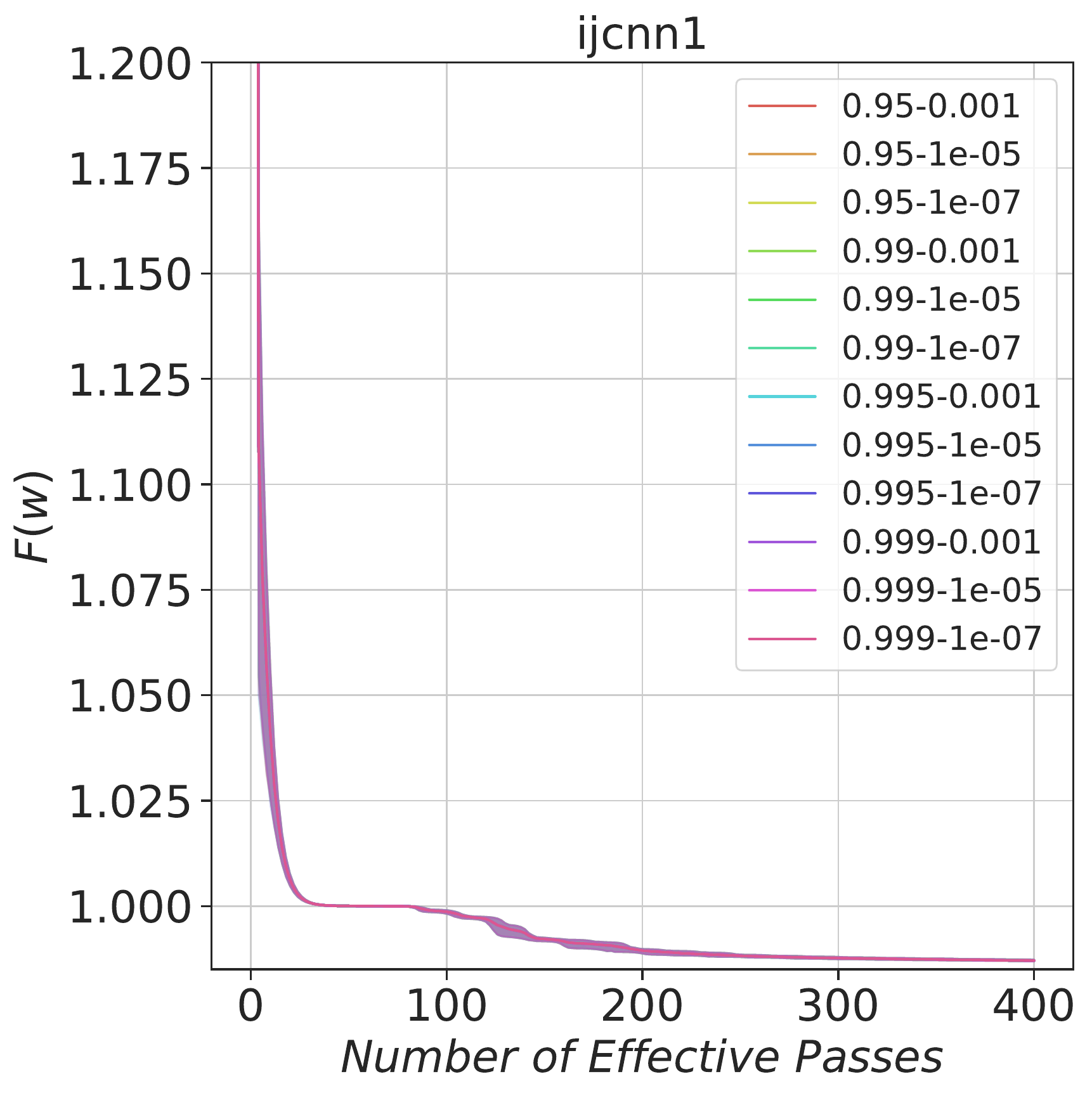}
    \includegraphics[width=0.49\textwidth]{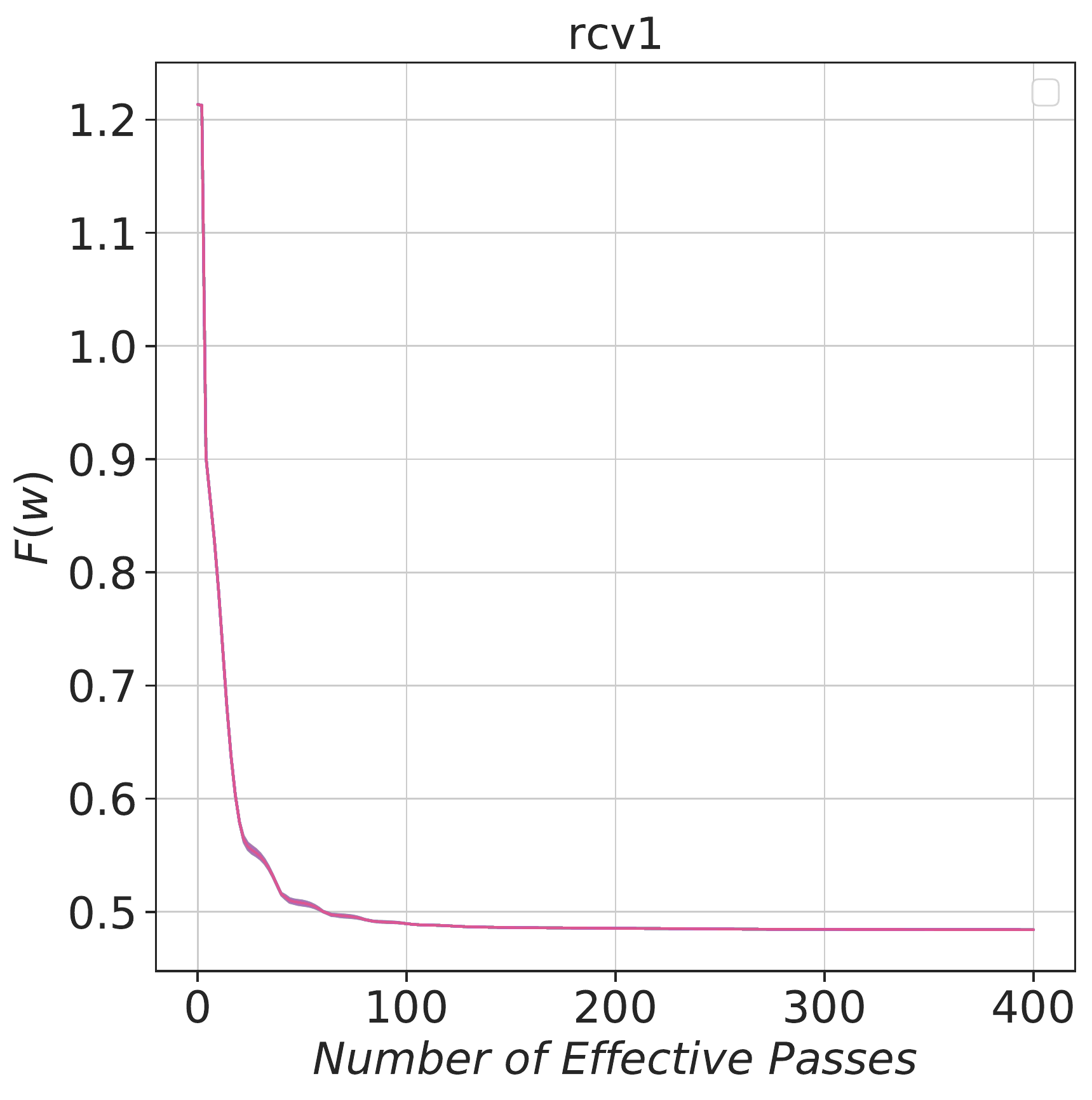}
    
    \includegraphics[width=0.49\textwidth]{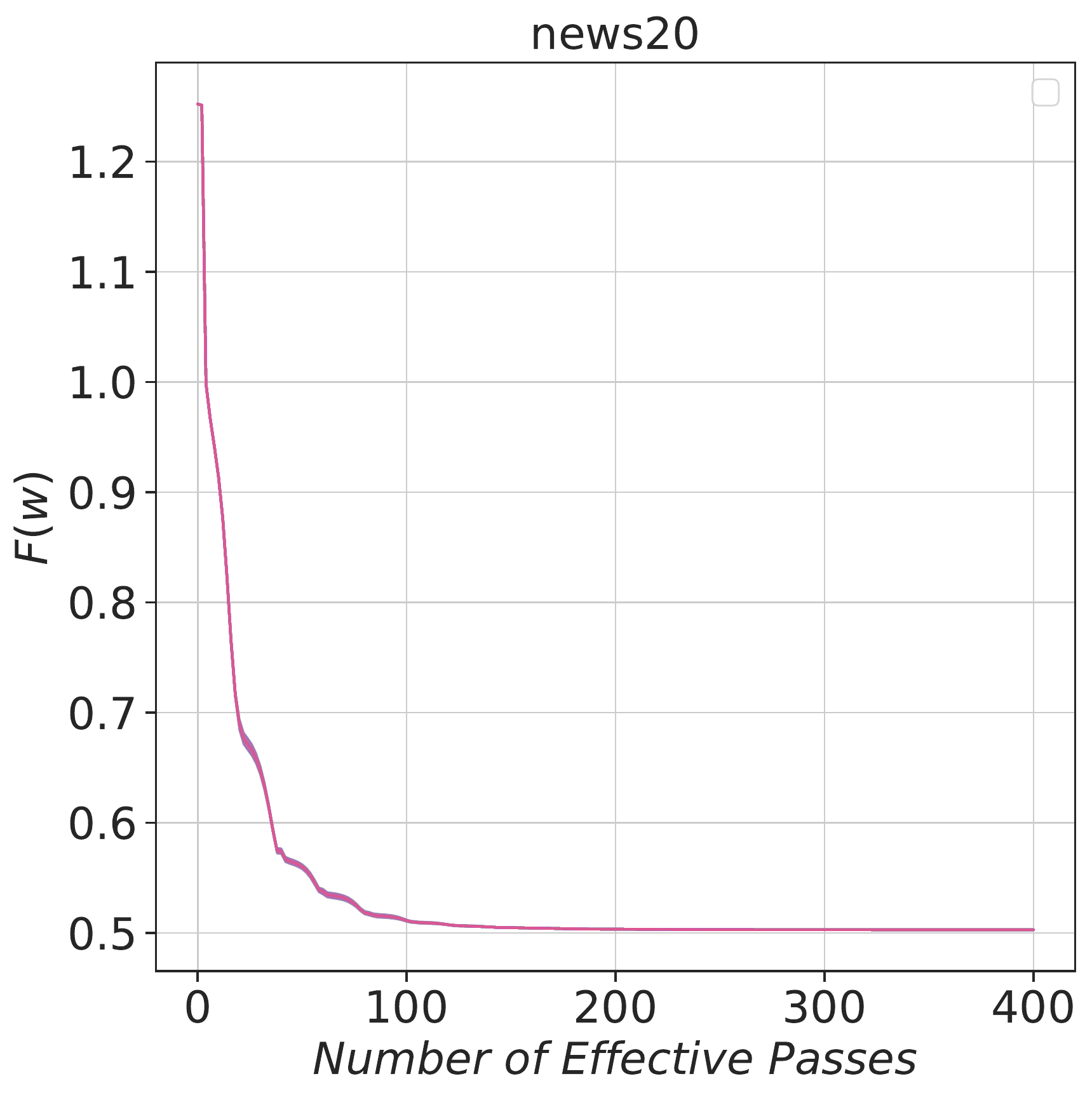}
    \includegraphics[width=0.49\textwidth]{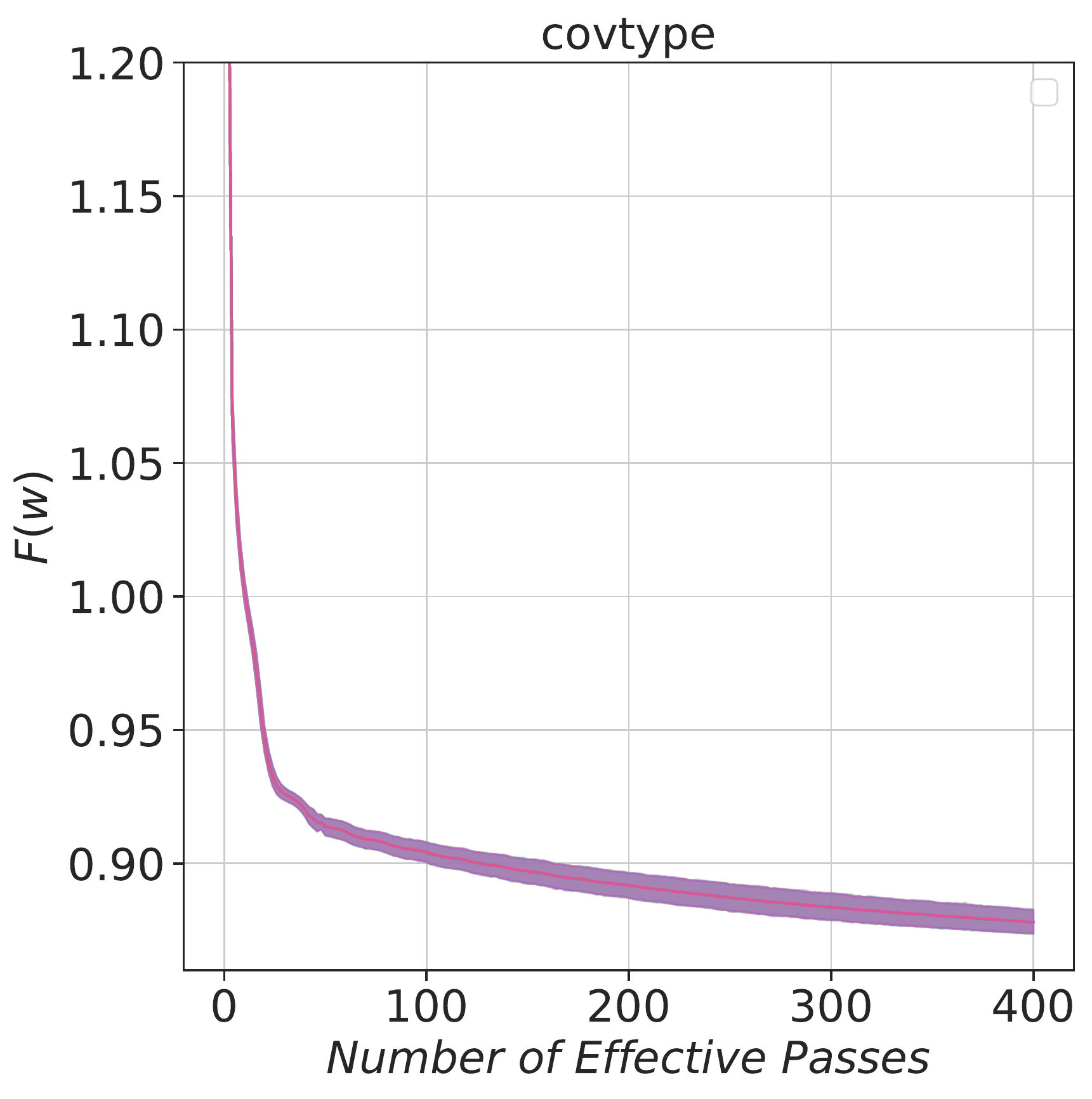}

    \caption{Sensitivity of \ALG{} w.r.t. $(\beta_2, \alpha)$, Deterministic Non-linear Least Square.}
    \label{fig:sensanalysisNls}
\end{figure*}

We also provide sensitivity analysis for the image classification tasks. These problems are in stochastic setting, and we need to tune some of the  hyperparameters in \ALG{}. As is clear from the following figures, \ALG{} is robust with respect to different settings of hyperparameters, and the spectrum of changes is narrow enough; implying that even if the hyperparameters aren't properly tuned, we still get acceptable results that are comparable to other state-of-the-art first- and second-order approaches with less tuning efforts. Figure \ref{fig:sensanalysisRes20_lr} shows that \ALG{} is robust for different values of learning rate (left figure), while SGD is completely sensitive with respect to learning rate choices. One possible reason for this is because OASIS uses well-scaled preconditioning, which scales each gradient component with regard to the local curvature at that dimension, whereas SGD treats all components equally. 

Furthermore, in order to use an adaptive learning rate in the stochastic setting, an extra hyperparameter, $\gamma$, is used in \ALG{} (similar to \cite{mishchenko2020adaptive}). Figure \ref{fig:sensanalysisRes20_gamma} shows that \ALG{} is also robust with respect to different values of $\gamma$ (unlike the study in \cite{mishchenko2020adaptive}). The aforementioned figures are for \texttt{CIFAR10} dataset on the \texttt{ResNet20} architecture; the same behaviour is observed for the other network~architectures. 

Finally, we show here that the performance of \ALG{} is also robust with respect to different values of $\beta_2$. Figure \ref{fig:sensanalysisRes18_beta} shows the robustness of \ALG{} across a range of $\beta_2$ values. This figure is for \texttt{CIFAR100} dataset on the \texttt{ResNet18} architecture. 
 

\begin{figure*}
    \centering            \includegraphics[width=0.49\textwidth]{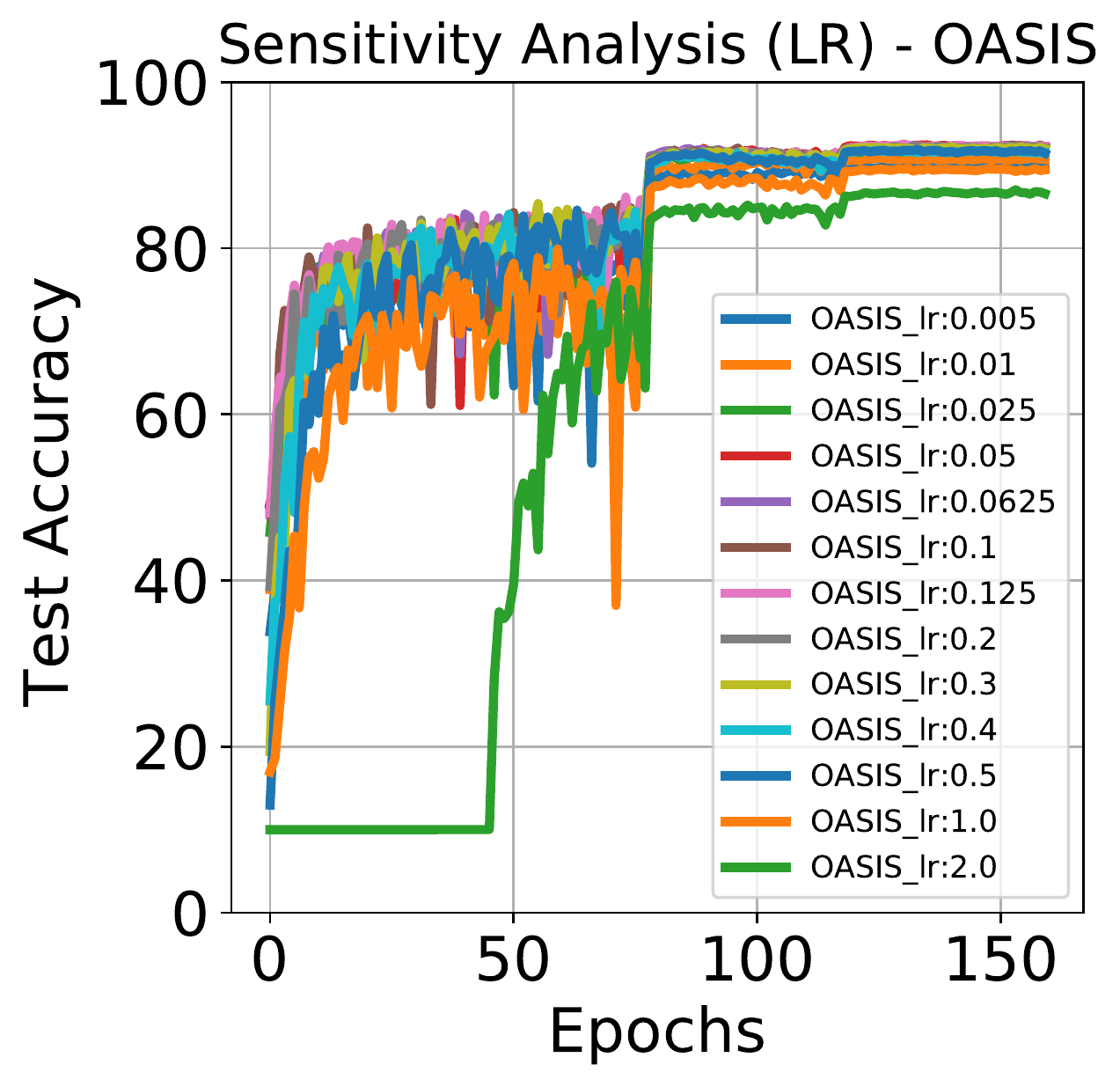}
    \includegraphics[width=0.49\textwidth]{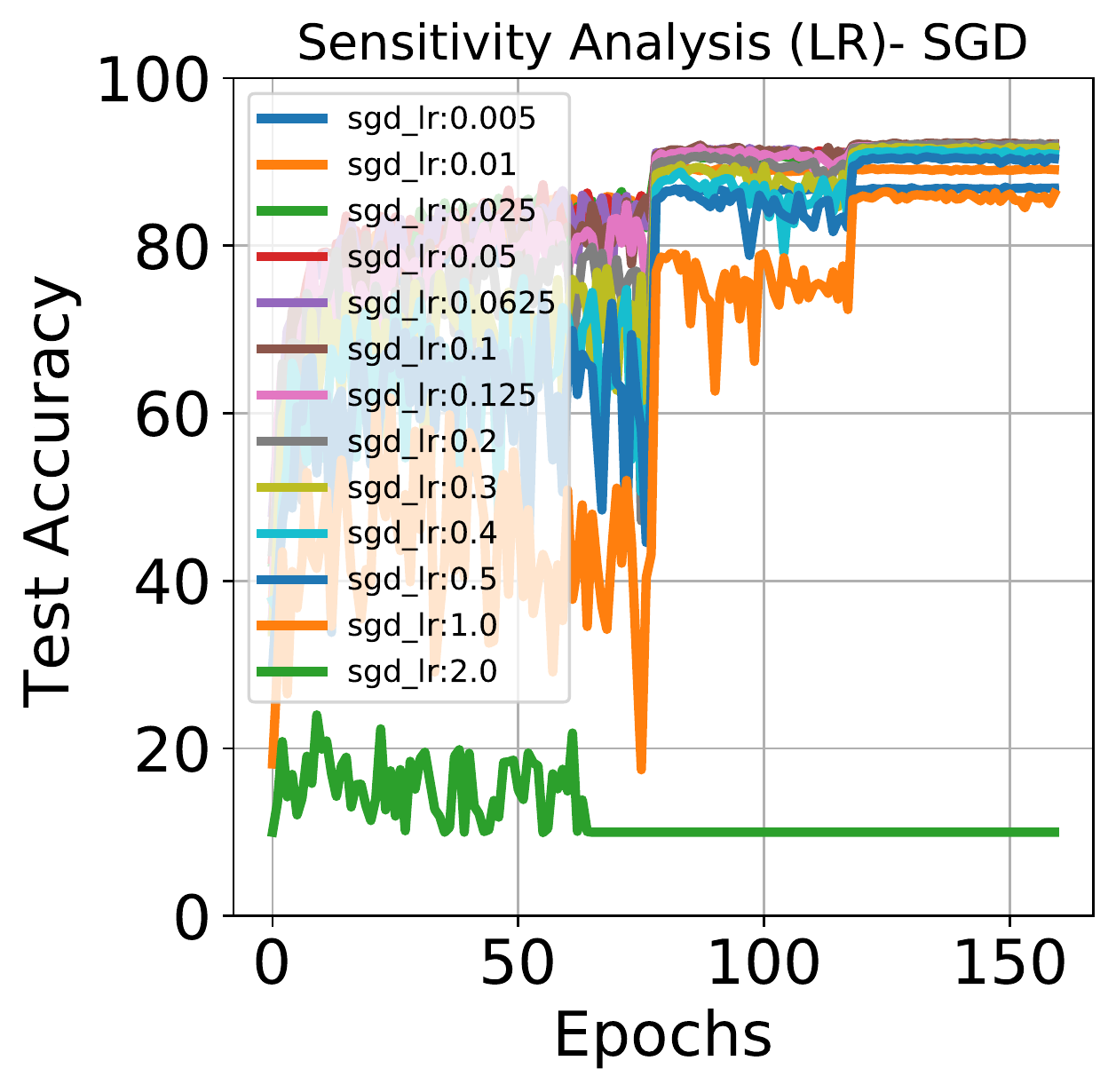}

    \caption{Sensitivity of \ALG{} vs. SGD w.r.t. learning rate $\eta$, \texttt{CIFAR10} on \texttt{ResNet20}.}
    \label{fig:sensanalysisRes20_lr}
\end{figure*}

\begin{figure*}
    \centering            \includegraphics[width=0.49\textwidth]{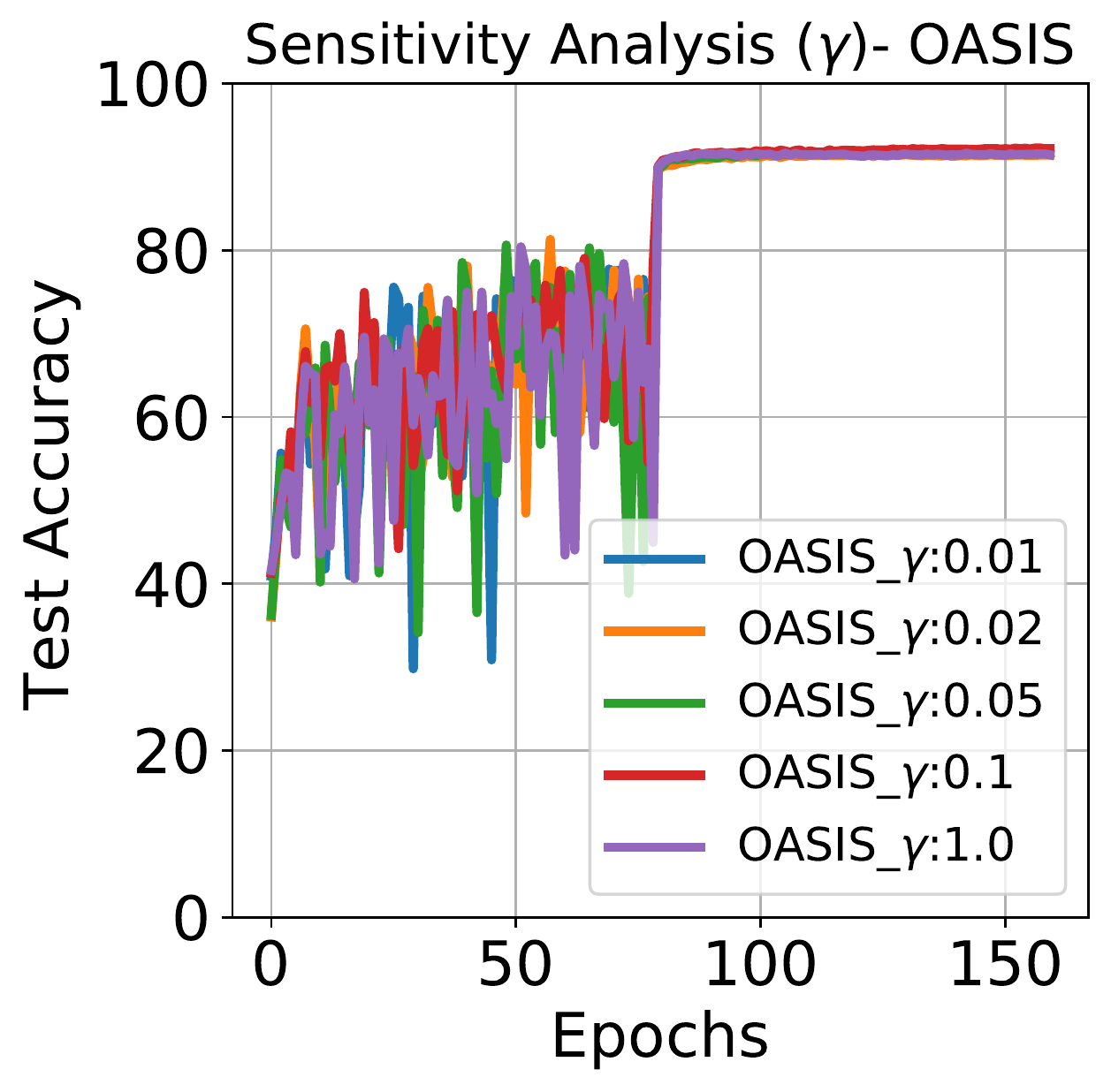}

    \caption{Sensitivity of \ALG{} w.r.t. $\gamma$, \texttt{CIFAR10} on \texttt{ResNet20}.}
    \label{fig:sensanalysisRes20_gamma}
\end{figure*}

\begin{figure*}
    \centering            \includegraphics[width=0.49\textwidth]{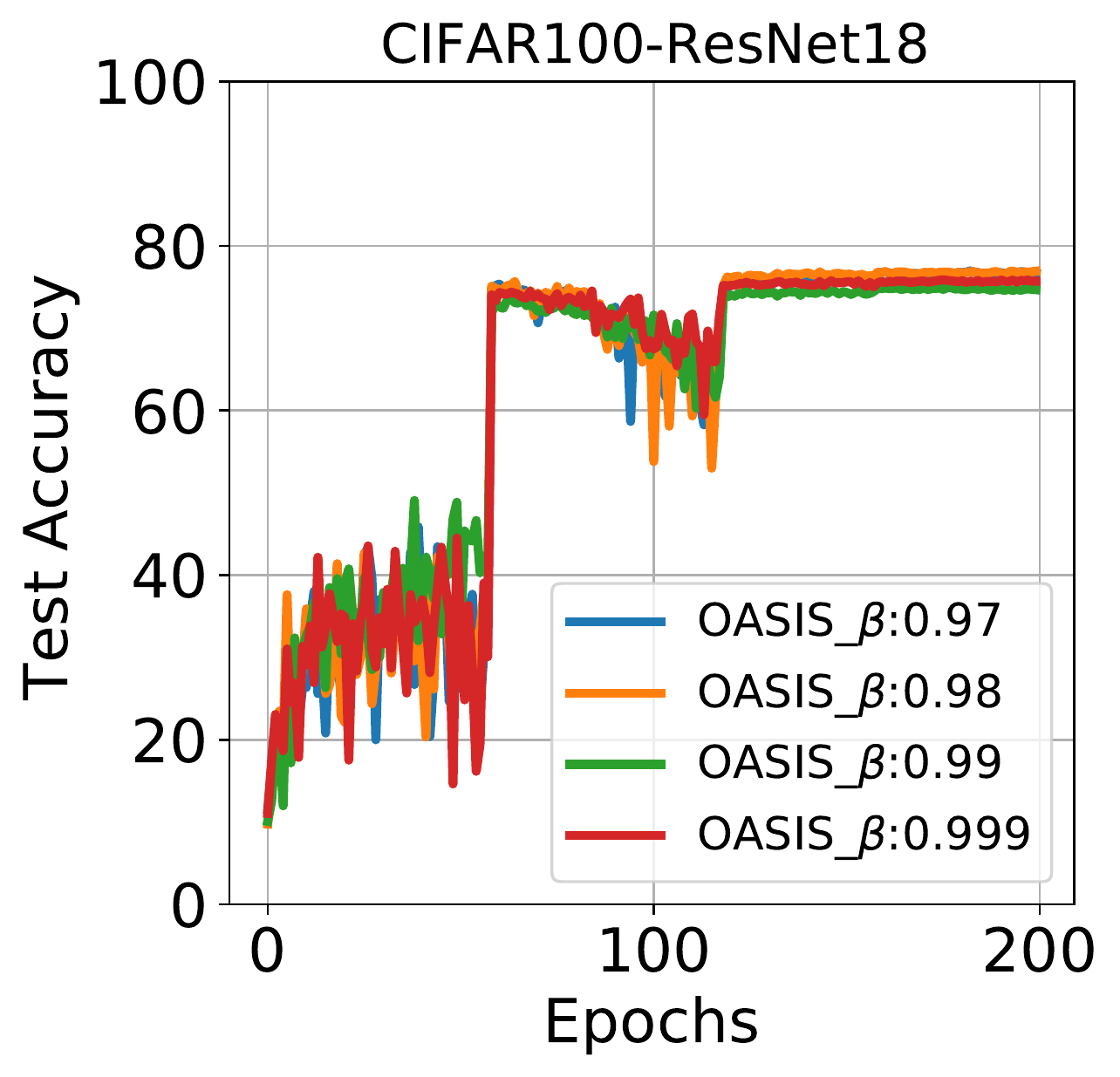}

    \caption{Sensitivity of \ALG{} w.r.t. $\beta_2$, \texttt{CIFAR100} on \texttt{ResNet18}.}
    \label{fig:sensanalysisRes18_beta}
\end{figure*}
\end{document}